\def\Eqref#1{\eqref{#1}}
\def\1{\bm{1}}
\def\mI{{\bm{I}}}
\def\mM{{\bm{M}}}
\DeclareMathAlphabet{\mathsfit}{\encodingdefault}{\sfdefault}{m}{sl}
\SetMathAlphabet{\mathsfit}{bold}{\encodingdefault}{\sfdefault}{bx}{n}
\newcommand{\E}{\mathbb{E}}
\newcommand{\R}{\mathbb{R}}
\DeclareMathOperator*{\argmin}{arg\,min}
\title{Personalized Federated Learning: A Unified Framework and Universal Optimization Techniques}
\author{\name Filip Hanzely\thanks{equal contribution} \email fhanzely@gmail.com \\
\addr Toyota Technological Institute at Chicago\\ 
Chicago, IL 60637, USA
\AND
\name Boxin Zhao\footnotemark[1] \email boxinz@uchicago.edu \\
\name Mladen Kolar \email mkolar@chicagobooth.edu \\
\addr The University of Chicago Booth School of Business\\
Chicago, IL 60637, USA}
\begin{document}

\newcommand{\eqdef}{\coloneqq} 

\newtheorem{assumption}{Assumption}

\newcommand{\cL}{{\cal L}}

\newcommand{\FF}{{\color{blue}\mathbf{F}}} 
\newcommand{\PP}{{\color{blue}\mathbf{P}}} 
\newcommand{\GG}{{\color{blue}\mathbf{G}}} 

\newcommand{\ppsi}{{\color{blue}\mathbf{\psi}}} 
\newcommand{\ones}{{\bf 1}}

\newcommand{\cA}{{\cal A}}

\newcommand{\cO}{{\cal O}}

\newcommand{\diag}{\mathrm{Diag}} 
\newcommand{\trace}{{\rm Trace}} 
\newcommand{\support}{{\rm supp}} 
\newcommand{\TD}{{\rm \eta}} 
\newcommand{\Span}{{\rm Span}} 

\newcommand{\prox}{\mathop{\mathrm{prox}}\nolimits}
\newcommand{\EE}[2]{\mathbb{E}_{#1}\left[#2\right] }
\newcommand{\cD}{{\cal D}}

\newcommand{\cmark}{{\color{green} \ding{96}}}%

\newcommand\numberthis{\addtocounter{equation}{1}\tag{\theequation}}

\theoremstyle{plain}
\newtheorem{lemma}{Lemma}

\theoremstyle{plain}
\newtheorem{theorem}{Theorem}

\theoremstyle{plain}
\newtheorem{proposition}{Proposition}

\maketitle

\begin{abstract}
We investigate the optimization aspects of personalized Federated Learning (FL). We propose general optimizers that can be applied to numerous existing personalized FL objectives, specifically a tailored variant of Local SGD and variants of accelerated coordinate descent/accelerated SVRCD. By examining a general personalized objective capable of recovering many existing personalized FL objectives as special cases, we develop a comprehensive optimization theory applicable to a wide range of strongly convex personalized FL models in the literature. We showcase the practicality and/or optimality of our methods in terms of communication and local computation. Remarkably, our general optimization solvers and theory can recover the best-known communication and computation guarantees for addressing specific personalized FL objectives. Consequently, our proposed methods can serve as universal optimizers, rendering the design of task-specific optimizers unnecessary in many instances.
\end{abstract}

\section{Introduction}
\label{sec:intro}

Modern personal electronic devices, such as mobile phones, wearable devices, and home assistants, can collectively generate and store vast amounts of user data. This data is essential for training and improving state-of-the-art machine learning models for tasks ranging from natural language processing to computer vision. Traditionally, the training process was performed by first collecting all the data into a data center~\citep{dean2012large}, raising serious concerns about user privacy and placing a considerable burden on the storage capabilities of server suppliers. To address these issues, a novel paradigm -- Federated Learning (FL)~\citep{mcmahan2017communication, kairouz2019advances} -- has been proposed. Informally, the main idea of FL is to train a model locally on an individual's device, rather than revealing their data, while communicating the model updates using private and secure protocols.

While the original goal of FL was to search for a single model to be deployed on each device, this objective has been recently questioned. As the distribution of user data can vary greatly across devices, a single model might not serve all devices simultaneously~\citep{hard2018federated}. Consequently, data heterogeneity has become the main challenge in the search for efficient federated learning models. Recently, a range of personalized FL approaches has been proposed to address data heterogeneity~\citep{kulkarni2020survey}, wherein different local models are used to fit user-specific data while also capturing the common knowledge distilled from data of other devices.

Since the motivation and goals of each of these personalized approaches vary greatly, examining them separately can only provide us with an understanding of a specific model. Fortunately, many personalized FL models in the literature are trained by minimizing a specially structured optimization program. In this paper, we analyze the general properties of such an optimization program, which in turn provides us with high-level principles for training personalized FL models. We aim to solve the following optimization problem
\begin{equation}\label{eq:PrimProblem}
\min_{w, \beta} \left\{ F(w,\beta) \eqdef \frac{1}{M}\sum^{M}_{m=1} f_{m}(w,\beta_m) \right\},
\end{equation}
where $w\in \R^{d_0}$ corresponds to the shared parameters, $\beta = (\beta_1, \dots, \beta_M)$ with $\beta_m\in \R^{d_m}$, $\forall m\in [M]$ corresponds to the local parameters, $M$ is the number of devices, and $f_m: \R^{d_0 + d_m}\rightarrow \R$ is the objective that depends on the local data at the $m$-th client.

By carefully designing the local loss $f_{m}(w,\beta_m)$, the objective~\eqref{eq:PrimProblem} can recover many existing personalized FL approaches as special cases. The local objective $f_m$ does not need to correspond to the empirical loss of a given model on the $m$-th device's data. See Section~\ref{sec:special_cases} for details. Consequently, \eqref{eq:PrimProblem} serves as a unified objective that encompasses numerous existing personalized FL approaches as special cases. The primary goal of our work is to explore the problem~\eqref{eq:PrimProblem} from an optimization perspective. By doing so, we develop a universal convex optimization theory that applies to many personalized FL approaches.

\subsection{Contributions}

We outline the main contributions of this work. 

{\bf Single personalized FL objective}. We propose a single
objective~\eqref{eq:PrimProblem} capable of recovering many existing convex personalized FL approaches by carefully constructing the local loss $f_{m}(w,\beta_m)$. 
Consequently, training different personalized FL models is 
equivalent to solving a particular instance of~\eqref{eq:PrimProblem}.

{\bf Recovering best-known complexity and novel guarantees}. We develop algorithms for solving~\eqref{eq:PrimProblem} and prove sharp convergence rates for strongly convex objectives. Specializing our rates from the general setting to the individual personalized FL objectives, we recover the best-known optimization guarantees from the literature or advance beyond the state-of-the-art with a single exception: objective~\eqref{eq:personalized_mixture2} with $\lambda>L'$. Therefore, our results often render optimization tailored to solve a specific personalized FL unnecessary in many cases.

{\bf Universal (convex) optimization methods and theory for personalized FL}. To develop an optimization theory for solving~\eqref{eq:PrimProblem}, we impose particular assumptions on the objective: $\mu-$strong convexity of $F$ and convexity and $(L^w, ML^\beta)$-smoothness of $f_m$ for all $m\in [M]$ (see Assumptions~\ref{as:smooth_sc},~\ref{as:smooth_summands}). These assumptions are naturally satisfied for the vast majority of personalized FL objectives in the literature, with the exception of personalized FL approaches that are inherently nonconvex, such as MAML~\citep{finn2017model}. Under these assumptions, we propose three algorithms for solving the general personalized FL objective~\eqref{eq:PrimProblem}: i) Local Stochastic Gradient Descent for Personalized FL (LSGD-PFL), ii) Accelerated Block Coordinate Descent for Personalized FL (ACD-PFL), and iii) Accelerated Stochastic Variance Reduced Coordinate Descent for Personalized FL (ASVRCD-PFL). The convergence rates of these methods are summarized in Table~\ref{table:complexity_methods}. We emphasize that these optimizers \emph{can be used to solve many (convex) personalized FL objectives from the literature by casting a given objective as a special case of~\eqref{eq:PrimProblem}, oftentimes matching or outperforming algorithms originally designed for the particular scenario.}

{\bf Minimax optimal rates}. We provide lower complexity bounds for solving~\eqref{eq:PrimProblem}. Using the construction of~\citet{hendrikx2020optimal}, we show that to solve~\eqref{eq:PrimProblem}, one requires at least $\cO\left(\sqrt{{L^w}/{\mu}}\log \epsilon^{-1} \right)$ communication rounds. Note that communication is often the bottleneck when training distributed and personalized FL models. Furthermore, one needs at least $\cO\left(\sqrt{{L^w}/{\mu}}\log \epsilon^{-1} \right)$ evaluations of $\nabla_w F$ and at least $\cO\left(\sqrt{{L^\beta}/{\mu}} \log \epsilon^{-1} \right)$ evaluations of $\nabla_\beta F$. Given the $n$-finite sum structure of $f_m$ with $(\cL^w, M\cL^\beta)$-smooth components, we show that one requires at least $\cO\left(n + \sqrt{{n\cL^w}/{\mu}}\log {\epsilon^{-1}}\right)$ stochastic gradient evaluations with respect to $w$-parameters and at least $\cO\left(n + \sqrt{{n\cL^\beta}/{\mu}}\log {\epsilon^{-1}}\right)$ stochastic gradient evaluations with respect to $\beta$-parameters. We show that ACD-PFL is always optimal in terms of communication and local computation when the full gradients are available, while ASVRCD-PFL can be optimal either in terms of the number of evaluations of the $w$-stochastic gradient or the $\beta$-stochastic gradient. However, note that ASVRCD-PFL cannot achieve optimal rate for both evaluations of the $w$-stochastic gradient and the $\beta$-stochastic gradient simultaneously, which we leave for future research.

{\bf Personalization and communication complexity}. Given that a specific FL objective contains a parameter that determines the amount of personalization, we observe that the value of $\sqrt{L^w/\mu}$ is always a non-increasing function of this parameter. Since the communication complexity of~\eqref{eq:PrimProblem} is equal to $\sqrt{L^w/\mu}$ up to constant and log factors, we conclude that personalization has a positive effect on the communication complexity of training an FL model.

{\bf New personalized FL objectives}. The universal personalized FL objective~\eqref{eq:PrimProblem} enables us to obtain a range of novel personalized FL formulations as special cases. While we study various (parametric) extensions of known models, we believe that the objective~\eqref{eq:PrimProblem} can lead to easier development of brand new objectives as well. However, we stress that proposing novel personalized FL models is not the main focus of our work; the paper's primary focus is on providing universal optimization guarantees for (convex) personalized FL.

Despite the aforementioned benefits of our proposed unified framework, we acknowledge that this is neither the only nor the universally best approach for personalized federated learning. However, providing a general framework that can include many existing methods as special cases can help us gain a clear understanding and motivate us to propose new personalized methods.

\begin{table}[t]
\centering
\begin{tabular}{|c|c|c|c|c|}
            \hline
           {\bf Alg.} &   {\bf Communication} & \# $\nabla_w$ &  \# $\nabla_\beta$    \\
            \hline
            \hline 
                LSGD-PFL & \begin{tabular}{c}
                    $ \frac{\max\left(L^\beta\tau^{-1}, L^w \right)  }{\mu}$ \\+  $\frac{\sigma^2  }{M B \tau \mu \epsilon  }$  \\
                    $+ \frac{1}{\mu} \sqrt{\frac{L^w(\zeta_*^2 + \sigma^2 B^{-1})  }{\epsilon}}$
                \end{tabular}  &  \begin{tabular}{c}
                    $ \frac{\max\left(L^\beta, \tau L^w \right)}{\mu}$ \\+  $\frac{\sigma^2}{M B \mu \epsilon }$  \\
                    $+ \frac{\tau}{\mu} \sqrt{\frac{L^w(\zeta_*^2 + \sigma^2 B^{-1})}{\epsilon}}$
                \end{tabular} &  \begin{tabular}{c}
                    $ \frac{\max\left(L^\beta, \tau L^w \right)}{\mu}$ \\+  $\frac{\sigma^2}{M B \mu \epsilon }$  \\
                    $+ \frac{\tau}{\mu} \sqrt{\frac{L^w(\zeta_*^2 + \sigma^2 B^{-1})}{\epsilon}}$
                \end{tabular}    \\
                            \hline 
                ACD-PFL & $\sqrt{L^w/\mu}$ \quad \cmark & $\sqrt{L^w/\mu}$ \quad \cmark&  $\sqrt{L^\beta/\mu}$   \quad \cmark\\
                            \hline 
                ASVRCD-PFL &  $n + \sqrt{n\cL^w/\mu}$ &  $n + \sqrt{n\cL^w/\mu}$\quad \cmark &   $n + \sqrt{n\cL^\beta/\mu}$  \quad \cmark  \\
            \hline
\end{tabular}       
\caption{Complexity guarantees of the proposed methods when ignoring constant and log factors. $\# \nabla_w/ \# \nabla_\beta:$ number of (stochastic) gradient calls with respect to the $w/\beta$-parameters. Symbol \cmark, indicates the minimax optimal complexity. Local Stochastic Gradient Descent (LSGD): Local access to $B$-minibatches of stochastic gradients, each with $\sigma^2$-bounded variance. Each device takes $(\tau-1)$ local steps in between the communication rounds. Accelerated Coordinate Descent (ACD): Access to the full local gradient, yielding both the optimal communication complexity and the optimal computational complexity (both in terms of $\nabla_w$ and $\nabla_\beta$). ASVRCD: Assuming that $f_i$ is an $n$-finite sum, the oracle provides access to a single stochastic gradient with respect to that sum. The corresponding local computation is either optimal with respect to $\nabla_w$ or with respect to $\nabla_\beta$. Achieving both optimal rates simultaneously remains an open problem.}
\label{table:complexity_methods}
\end{table}

\subsection{Assumptions and Notations}\label{sec:assumptions}

\textbf{Complexity Notations.} 
For two sequences $\{a_n\}$ and $\{b_n\}$, 
$a_n={\cal O}(b_n)$ if there exists $C>0$ such that $|a_n/b_n| \leq C$
for all $n$ large enough; 
$a_n=\Theta (b_n)$ if $a_n={\cal O}(b_n)$ and $b_n={\cal O}(a_n)$ simultaneously.
Similarly, $a_n=\tilde{\cal O} (b_n)$ if $a_n={\cal O}(b_n \log^k b_n)$ for some $k\geq 0$; 
$a_n=\tilde{\Theta} (b_n)$ 
if $a_n=\tilde{\cal O} (b_n)$ and $b_n=\tilde{\cal O} (a_n)$ simultaneously.

\textbf{Local Objective.} We assume three different ways to access the gradient of the local objective $f_m$. The first, and the simplest case, corresponds to having access to the full gradient of $f_m$ with respect to either $w$ or $\beta_m$ for all $m\in [M]$ simultaneously. The second case corresponds to a situation where $\nabla f_m(w,\beta_m)$ is the expectation itself, i.e.,
\begin{equation}\label{eq:fm_expect}
\nabla f_m(w,\beta_m) = \EE{\xi\in \cD_m}{ \nabla \hat{f}_m(w,\beta_m ; \xi)},
\end{equation}
while having access to stochastic gradients with 
respect to either $w$ or $\beta_m$ simultaneously for
all $m\in M$, where $\hat{f}_m$ represents the loss function on a single data point.
The third case corresponds to a finite sum $f_m$:
\begin{equation}\label{eq:fm_finitesum}
f_m(w,\beta_m) =\frac{1}{n}\sum_{i=1}^n f_{m,i}(w,\beta_m),
\end{equation}
having access to $\nabla_w f_{m,i}(w,\beta_m)$ or to $\nabla_\beta f_{m,i}(w,\beta_m)$ for all $m\in [M]$ and  $i\in [n]$ 
selected uniformly at random.

\textbf{Assumptions.} We
argue that the objective~\eqref{eq:PrimProblem} is capable 
of recovering virtually any (convex) personalized FL objective.
Since the structure of the individual
personalized FL objectives varies greatly,
it is important to impose reasonable assumptions on 
the problem~\eqref{eq:PrimProblem} in
order to obtain meaningful rates in the special cases.

\begin{assumption}\label{as:smooth_sc}
The function $F(w,\beta)$ is jointly $\mu$-strongly convex 
for $\mu\geq0$, while for all $m\in [M]$, 
function $f_m(w, \beta_m)$ is jointly convex, 
$L^w$-smooth w.r.t.~parameter $w$ 
and $(ML^\beta)$-smooth w.r.t.~parameter $\beta_m$. In the case when
$\mu=0$, assume additionally that~\eqref{eq:PrimProblem} has a
unique solution, denoted as $w^{\star}$ and $\beta^{\star}=(\beta^{\star}_1,\dots,\beta^{\star}_M)$.
\end{assumption}

When $f_m$ is a finite sum~\eqref{eq:fm_finitesum},
we require the smoothness of the finite sum components.

\begin{assumption}\label{as:smooth_summands}
Suppose that for all $m\in [M], i\in [n]$, 
function $f_{m,i}(w, \beta_m)$ is jointly convex, 
$\cL^w$-smooth w.r.t.~parameter $w$ 
and $(M\cL^\beta)$-smooth w.r.t.~parameter $\beta_m$.\footnote{It 
is easy to see that $\cL^w\geq L^w \geq \frac{\cL^w}{n}$ 
and $\cL^\beta\geq L^\beta \geq \frac{\cL^\beta}{n}$.}
\end{assumption}

In Section~\ref{sec:special_cases} 
we justify Assumptions~\ref{as:smooth_sc} and~\ref{as:smooth_summands}
and characterize the constants $\mu$, $L^w$, $L^\beta$,  
$\cL^w$, $\cL^\beta$
for special cases of~\eqref{eq:PrimProblem}.
Table~\ref{table:objectives_smooth_sc} summarizes these parameters.

\textbf{Price of generality.} Since Assumption~\ref{as:smooth_sc} is the only structural assumption we
impose on~\eqref{eq:PrimProblem}, one cannot hope to recover the minimax optimal rates, that is, the rates that match the lower complexity bounds, for all individual personalized FL objectives as a special case of our general guarantees. Note that any given instance of~\eqref{eq:PrimProblem} has a structure that is not covered by Assumption~\ref{as:smooth_sc}, but can be exploited by an optimization algorithm to improve either communication or local computation. Therefore, our convergence guarantees are optimal in light of Assumption~\ref{as:smooth_sc} only. Despite this, our general rates specialize surprisingly well as we show in Section~\ref{sec:special_cases}: our complexities are state-of-the-art in all scenarios with a single exception: the communication/computation complexity of~\eqref{eq:personalized_mixture2}.

\textbf{Individual treatment of $w$ and $\beta$.} Throughout this work, we allow different smoothness of the objective with respect to global parameters $w$ and local parameters $\beta$. At the same time, our algorithm is allowed to exploit the separate access to gradients with respect to $w$ and $\beta$, given that these gradients can be efficiently computed separately. Without such a distinction, one might not hope for the communication complexity better than $\Theta\left(\max\{L^w, L^\beta \}/\mu \log\epsilon^{-1} \right)$, which is suboptimal in the special cases. Similarly, the computational guarantees would be suboptimal as well. See Section~\ref{sec:special_cases} for more details.

\textbf{Data heterogeneity.} While the convergence rate of LSGD-PFL relies on data heterogeneity (See Theorem~\ref{thm:local}), we allow for an arbitrary dissimilarity among the individual clients for analyzing ACD-PFL and ASVRCD-PFL (see Theorem~\ref{thm:ACD-PFL} and Theorem~\ref{thm:asvrcd}). Our experimental results also support that ASCD-PFL (ACD-PFL with stochastic gradient to reduce computation) and ASVRCD-PFL are more robust to data heterogeneity compared to the widely used Local SGD.

The rest of the paper is organized as follows. In Section~\ref{sec:special_cases}, we show how~\eqref{eq:PrimProblem} can be used to recover various personalized federated learning objectives in the literature. In Section~\ref{sec:lsgd}, we propose a local-SGD based algorithm, LSGD-PFL, for solving~\eqref{eq:PrimProblem}. We further establish computational upper bounds for LSGD-PFL in strongly convex, weakly convex, and nonconvex cases. In Section~\ref{sec:minimax}, we discuss the minimax optimal algorithms for solving~\eqref{eq:PrimProblem}. We first show the minimax lower bounds in terms of the number of communication rounds, number of evaluations of the gradient of global parameters, and number of evaluations of the gradient of local parameters, respectively. We subsequently propose two coordinate-descent based algorithms, ACD-PFL and ASVRCD-PFL, which can match the lower bounds. In Section~\ref{sec:experiments} and Section~\ref{sec:appendix_real_data}, we use experiments on synthetic and real data to illustrate the performance of the proposed algorithms and empirically validate the theorems. Finally, we conclude the paper with Section\ref{sec:extensions}. Technical proofs are deferred to the Appendix.

\section{Personalized FL objectives} \label{sec:special_cases}

We recover a range of known personalized FL approaches as special cases of~\eqref{eq:PrimProblem}. In this section, we detail the optimization challenges that arise in each one of the special cases. We discuss the relation to our results, particularly focusing on how Assumptions~\ref{as:smooth_sc},~\ref{as:smooth_summands} and our general rates (presented in Sections~\ref{sec:lsgd} and~\ref{sec:minimax}) behave in the special cases. Table~\ref{table:objectives_smooth_sc} presents the smoothness and strong convexity constants with respect to~\eqref{eq:PrimProblem} for the special cases, while Table~\ref{table:objectives} provides the corresponding convergence rates for our methods when applied to these specific objectives. For the sake of convenience, define
\[
F_i(w,\beta)\eqdef \frac1M\sum_{m=1}^M f_{m,i}(w,\beta_m).
\]
in the case when functions $f_m$ have a finite sum structure~\eqref{eq:fm_finitesum}.

\begin{sidewaystable}

\centering
\begin{tabular}{|c|c|c|c|c|c|c|}
    \hline
    {\bf $F(w, \beta)$} & $\mu$ &  $L^w$ & $L^\beta$ &  $\cL^w$ &  $\cL^\beta$  &  {\bf Rate?} \\
    \hline
    \hline
    {\small \makecell{Traditional FL~(\Eqref{eq:traditional}) \\ \citep{mcmahan2017communication}}} & $\mu'$ &   $L'$  & 0 & $\cL'$ & 0 & recovered \\
    \hline 
    {\small \makecell{Fully Personalized FL \\ (\Eqref{eq:fully}) }}
     & $\frac{\mu'}{M}$ &   0  & $\frac{L'}{M}$ & 0 &  $\frac{\cL'}{M}$ & recovered\\
    \hline
    {\small \makecell{MT2~(\Eqref{eq:multitaskFL2}) \\ \citep{li2020federated} }}
     & $\frac{\lambda}{2M}$ &   $\frac{\Lambda L' + \lambda}{2M}$ & $\frac{ L' + \lambda}{2M}$ & $\frac{\Lambda \cL' + \lambda}{2M}$ & $\frac{ \cL' + \lambda}{2M}$ & new$^\clubsuit$\\
    \hline {\small \makecell{MX2~(\Eqref{eq:personalized_mixture2}) \\ \citep{smith2017federated} }} & $\frac{\mu'}{3M}$ & $\frac{ \lambda}{M}$   & $\frac{ L' + \lambda}{M}$ & $\frac{ \lambda}{M}$ & $\frac{ \cL' + \lambda}{M}$ & recovered$^\spadesuit$ \\
    \hline
    {\small \makecell{APFL2~(\Eqref{eq:APFL2}) \\ \citep{deng2020adaptive}   }}
     & $ \frac{\mu'\left( 1-\alpha_{\max}\right)^2}{M} $ & $ \frac{(\Lambda + \alpha_{\max}^2)L'}{M}$   & $\frac{(1-\alpha_{\min})^2 L'}{M}$ & $ \frac{(\Lambda + \alpha_{\max}^2)\cL'}{M}$ & $\frac{(1-\alpha_{\min})^2 \cL'}{M}$ & new$^\clubsuit$ \\
    \hline
    {\small \makecell{WS2~(\Eqref{eq:weight_sharing2}) \\ \citep{liang2020think}   }}
     & $ \mu' $ & $ L'$   & $L'$ & $ \cL'$ & $\cL'$ & new\\
    \hline
    {\small \makecell{Fed Residual~(\Eqref{eq:residual_fm})  \\ \citep{agarwal2020federated}}  }
     & $ \mu $ & $ L^w_R$   &  $L^\beta_R$ & $  \cL^w_R$ & $\cL^\beta_R$ & new\\
                \hline 
\end{tabular}  
\caption{Parameters in Assumptions~\ref{as:smooth_sc} and~\ref{as:smooth_summands} for personalized FL objectives, with a note about the rate: we either recover the best-known rate for a given objective, or provide a novel rate that is the best under the given assumptions. $^\clubsuit$: Rate for the novel personalized FL objective (extension of a known one). $\spadesuit$: Best-known communication complexity recovered only for $\lambda = \cO(L')$.}        
\label{table:objectives_smooth_sc}

\hfill \break

    \centering
    \begin{tabular}{|c|c|c|c|c|c|}
        \hline
        {\bf $F(w, \beta)$} & {\bf \# Comm} &  \# $\nabla_w F$ &  \# $\nabla_\beta F$ &  \# $\nabla_w F_{i}$  &  \# $\nabla_\beta F_{i}$ \\
        \hline
        \hline 
        {\small \makecell{Traditional FL~(\Eqref{eq:traditional}) \\ \citep{mcmahan2017communication}}} & $\sqrt{\frac{L'}{\mu'}}$ &   $\sqrt{\frac{L'}{\mu'}}$  & 0 & $n + \sqrt{\frac{n\cL'}{\mu'}}$ & 0 \\
        \hline 
        {\small \makecell{Fully Personalized FL \\ (\Eqref{eq:fully}) }} & 0 &   0  & $\sqrt{\frac{L'}{\mu'}}$ & 0 &  $n + \sqrt{\frac{n\cL'}{\mu'}}$ \\
        \hline
        {\small \makecell{MT2~(\Eqref{eq:multitaskFL2}) \\ \citep{li2020federated} }} & $\sqrt{ \frac{\Lambda L'}{\lambda}}$ &  $\sqrt{ \frac{\Lambda L'}{\lambda}}$ &  $\sqrt{ \frac{L'}{\lambda}}$ & $n + \sqrt{ \frac{n \Lambda \cL'}{\lambda}}$ &   $n + \sqrt{ \frac{n\cL'}{\lambda}}$ \\
        \hline
        {\small \makecell{MX2~(\Eqref{eq:personalized_mixture2}) \\ \citep{smith2017federated} }} & $\sqrt{ \frac{\lambda }{\mu'}}$ &  $\sqrt{ \frac{\lambda }{\mu'}}$ &  $\sqrt{ \frac{L' + \lambda}{\mu'}}$ & -  & $n + \sqrt{ \frac{n(\cL' + \lambda)}{\mu'}}$ \\
        \hline
        {\small \makecell{APFL2~(\Eqref{eq:APFL2}) \\ \citep{deng2020adaptive}   }} & $\sqrt{ \frac{(\Lambda + \alpha_{\max}^2)L'}{\left( 1-\alpha_{\max}\right)^2\mu'}}$ &  $\sqrt{ \frac{(\Lambda + \alpha_{\max}^2)L'}{\left( 1-\alpha_{\max}\right)^2\mu'}}$ &  $\sqrt{ \frac{(1-\alpha_{\min})^2 L'}{\left( 1-\alpha_{\max}\right)^2\mu'}}$ & $n + \sqrt{ \frac{n(\Lambda + \alpha_{\max}^2)\cL'}{\left( 1-\alpha_{\max}\right)^2\mu'}}$  & $ n + \sqrt{ \frac{n(1-\alpha_{\min})^2 \cL'}{\left( 1-\alpha_{\max}\right)^2\mu'}}$ \\
                 \hline
        {\small \makecell{WS2~(\Eqref{eq:weight_sharing2}) \\ \citep{liang2020think}   }} & $\sqrt{ \frac{L'}{\mu'}}$ &  $\sqrt{ \frac{L'}{\mu'}}$ &  $\sqrt{ \frac{L'}{\mu'}}$ & $ n + \sqrt{ \frac{n\cL'}{\mu'}}$  & $ n + \sqrt{ \frac{n\cL'}{\mu'}}$  \\
         \hline
         {\small \makecell{Fed Residual~(\Eqref{eq:residual_fm})  \\ \citep{agarwal2020federated}}  } & $\sqrt{ \frac{L^w_R}{\mu}}$ &
        $\sqrt{ \frac{L^w_R}{\mu}}$  & $\sqrt{ \frac{L^\beta_R}{\mu}}$  & $ n + \sqrt{ \frac{n\cL^w_R}{\mu'}}$  & $ n + \sqrt{ \frac{n\cL^\beta_R}{\mu'}}$  \\
        \hline
    \end{tabular}
    \caption{Complexity of solving personalized FL objectives by Algorithms~\ref{Alg:acd} (second, third, and fourth column) and~\ref{Alg:ascrcd} (fifth and sixth column). Constant and log factors are ignored.}
    \label{table:objectives}

\end{sidewaystable}

\subsection{Traditional FL}
The traditional, non-personalized FL objective~\citep{mcmahan2017communication} is given as 
\begin{equation}\label{eq:traditional}
    \min_{w\in \R^d} F'(w) \eqdef \frac1M \sum_{m=1}^M f_m'(w),
\end{equation}
where $f'_m$ corresponds to the loss on the $m$-th client's data. Assume that $f'_m$ is $L'$-smooth and $\mu'-$ strongly convex for all $m \in [M]$. The minimax optimal communication to solve~\eqref{eq:traditional} up to $\epsilon$-neighborhood of the optimum is $\tilde{\Theta}\left( \sqrt{{L'}/{\mu'}} \log{\epsilon^{-1}}\right)$~\citep{scaman2018optimal}. When $f_m'= \frac{1}{n} \sum_{j=1}^n f_{m,j}'(w)$ is an $n$-finite sum with convex and $\cL'-$smooth components, the minimax optimal local stochastic gradient complexity is $\tilde{\Theta}\left( \left(n+ \sqrt{{n\cL'}/{\mu}}\right) \log{\epsilon^{-1}}\right)$ \citep{hendrikx2020optimal}. The FL objective~\eqref{eq:traditional} is a special case of~\eqref{eq:PrimProblem} with $d_1 = \dots = d_M = 0$ and our theory recovers the aforementioned rates.

\subsection{Fully Personalized FL}

At the other end of the spectrum lies the fully personalized FL where the $m$-th client trains their own model without any influence from other clients: 
\begin{equation}\label{eq:fully}
    \min_{\beta_1, \dots, \beta_M\in \R^d} F_{full}(\beta) \eqdef \frac1M \sum_{m=1}^M f_m'(\beta_m).
\end{equation}
The above objective is a special case of~\eqref{eq:PrimProblem} with $d_0= 0$. As the objective is separable in $\beta_1, \dots, \beta_M$, we do not require any communication to train it. At the same time, we need $\tilde{\Theta}\left( \left(n+ \sqrt{{n\cL'}/{\mu}}\right) \log{\epsilon^{-1}}\right)$ local stochastic oracle calls to solve it~\citep{lan2018optimal} -- which is what our algorithms achieve.

\subsection{Multi-Task FL of \citet{li2020federated}}

The objective is given as
\begin{equation}
\label{eq:multitaskFL}
\min_{\beta_1, \dots, \beta_M \in \R^d} F_{MT}(\beta)  = \frac1M \sum_{i=1}^M \left(f'_m(\beta_m) + \frac{\lambda}{2}\|\beta_m - (w')^* \|^2\right), 
\end{equation}
where $(w')^*$ is a solution of the traditional FL in \eqref{eq:traditional} and $\lambda \geq0$. Assuming that $(w')^*$ is known (which \citet{li2020federated} does), the problem~\eqref{eq:multitaskFL} is a particular instance of~\eqref{eq:fully}; thus our approach achieves the optimal complexity.

A more challenging objective (in terms of optimization) is the following relaxed version of~\eqref{eq:multitaskFL}:
\begin{equation}
\label{eq:multitaskFL_single}
\min_{w,\beta} \frac1M \sum_{m=1}^M\left( \Lambda f'_m(w) +  f'_m(\beta_m) + \lambda\|w-\beta_m \|^2\right),
\end{equation}
where $\Lambda \geq 0$ is the relaxation parameter, recovering the original objective for $\Lambda\rightarrow \infty$. Note that, since $\Lambda\rightarrow \infty$, finding a minimax optimal method for the optimization of~\eqref{eq:multitaskFL} is straightforward. First, one has to compute a minimizer $(w')^*$ of the classical FL objective~\eqref{eq:traditional}, which can be done with a minimax optimal complexity. Next, one needs to compute the local solution
$\beta_m^* = \argmin_{\beta_m \in \R^d} f'_m(\beta_m) + \lambda\|w^*-\beta_m \|^2$, which only depends on the local data and thus can also be optimized with a minimax optimal algorithm.

A more interesting scenario is obtained when we do not set $\Lambda \rightarrow \infty$ in~\eqref{eq:multitaskFL_single}, but rather consider a finite $\Lambda> 0$ that is sufficiently large. To obtain the right smoothness/strong convexity parameter (according to Assumption~\ref{as:smooth_sc}), we scale the global parameter $w$ by a factor of $M^{-\frac12}$ and arrive at the following objective:
\begin{align}
\label{eq:multitaskFL2}
\min_{w, \beta_1, \dots, \beta_M \in \R^d} F_{MT2}(w, \beta) 
= \frac1M \sum_{m=1}^M f_m(w,\beta_m),
\end{align}
where 
\[
f_m(w,\beta_m) = \Lambda f'_m(M^{-\frac12}w) +  f'_m(\beta_m) + \frac{\lambda}{2}\|\beta_m - M^{-\frac12}w \|^2.
\]
The next lemma determines parameters $\mu, L^w, L^\beta, \cL^w, \cL^\beta$ in Assumption~\ref{as:smooth_sc}. See the proof in Appendix~\ref{sec:proof-lemma-smooth_sc_multitaks}.

\begin{lemma}\label{lem:smooth_sc_multitaks}
Let $\Lambda \geq {3\lambda}/({2\mu'})$. Then, the objective~\eqref{eq:multitaskFL2} is jointly $\left({\lambda}/({2M})\right)-$strongly convex, while the function $f_m$ is jointly convex, $\left( ({\Lambda L' + \lambda})/{M}\right)$-smooth w.r.t.~$w$ and $\left( L' + \lambda\right)$-smooth w.r.t.~$\beta_m$. Similarly, the function $f_{m,j}$ is jointly convex, $\left( ({\Lambda \cL' + \lambda})/{M}\right)$-smooth w.r.t.~$w$ and $\left( \cL' + \lambda\right)$-smooth w.r.t.~$\beta_m$.
\end{lemma}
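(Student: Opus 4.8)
The statement packages four assertions: joint convexity of each summand $f_m$ and of each finite‑sum component $f_{m,j}$, two smoothness bounds (one with respect to $w$, one with respect to $\beta_m$), and joint $\left(\lambda/(2M)\right)$‑strong convexity of $F_{MT2}$. My plan is to clear the convexity and the two smoothness bounds quickly by arguing term by term, and then to concentrate on the strong convexity, which is the only assertion that uses the hypothesis $\Lambda \ge 3\lambda/(2\mu')$.

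\emph{Convexity and smoothness.} I would split $f_m$ into its three summands: $w \mapsto \Lambda f'_m(M^{-1/2}w)$, $\beta_m \mapsto f'_m(\beta_m)$, and $(w,\beta_m) \mapsto \tfrac{\lambda}{2}\|\beta_m - M^{-1/2}w\|^2$. The first is the convex function $f'_m$ composed with a linear map, hence jointly convex; and precomposing with the scaling $w \mapsto M^{-1/2}w$ multiplies the smoothness constant by $M^{-1}$, so it is $(\Lambda L'/M)$‑smooth in $w$. The second is convex and $L'$‑smooth in $\beta_m$. The third is the squared norm of a linear function of $(w,\beta_m)$, hence jointly convex, and the same scaling remark makes it $(\lambda/M)$‑smooth in $w$ and $\lambda$‑smooth in $\beta_m$. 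Adding the constants yields $f_m$ jointly convex, $\left((\Lambda L'+\lambda)/M\right)$‑smooth in $w$, and $(L'+\lambda)$‑smooth in $\beta_m$. Repeating verbatim with $f'_{m,j}$ (convex, $\cL'$‑smooth) in place of $f'_m$ gives the component bounds $\left((\Lambda \cL'+\lambda)/M\right)$ and $(\cL'+\lambda)$.

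\emph{Strong convexity of $F_{MT2}$ (the crux).} Here I would first extract the $\mu'$‑strong convexity of each $f'_m$ as a quadratic: write $f_m = \phi_m + q_m$ with $\phi_m(w,\beta_m) = \Lambda\big(f'_m(M^{-1/2}w) - \tfrac{\mu'}{2}\|M^{-1/2}w\|^2\big) + \big(f'_m(\beta_m) - \tfrac{\mu'}{2}\|\beta_m\|^2\big)$, which is jointly convex because $f'_m$ is $\mu'$‑strongly convex and $\Lambda\ge 0$, and $q_m(w,\beta_m) = \tfrac{\Lambda\mu'}{2M}\|w\|^2 + \tfrac{\mu'}{2}\|\beta_m\|^2 + \tfrac{\lambda}{2}\|\beta_m - M^{-1/2}w\|^2$, a quadratic. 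Then $F_{MT2} = \tfrac1M\sum_m \phi_m + \tfrac1M\sum_m q_m$; the first term is jointly convex, so it is enough to show that the quadratic $\tfrac1M\sum_m q_m$ exceeds $\tfrac{\lambda}{2M}\big(\|w\|^2 + \sum_m\|\beta_m\|^2\big)$ pointwise, i.e.\ that the difference is a positive‑semidefinite quadratic form on $\R^{d(M+1)}$. Writing the difference out, its matrix is assembled from scalar multiples of $\mI$; reading off the scalars, positive semidefiniteness reduces to that of the $(M+1)\times(M+1)$ arrowhead matrix with diagonal $\big(\tfrac{2\Lambda\mu'+\lambda}{2M},\tfrac{2\mu'+\lambda}{2M},\dots,\tfrac{2\mu'+\lambda}{2M}\big)$ and off‑diagonal entry $-\lambda/M^{3/2}$ in the first row and column. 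A Schur‑complement computation against the $\tfrac{2\mu'+\lambda}{2M}\mI_M$ block reduces this to the scalar inequality $(2\Lambda\mu'+\lambda)(2\mu'+\lambda) \ge 4\lambda^2$; since $\Lambda \ge 3\lambda/(2\mu')$ forces $2\Lambda\mu'+\lambda \ge 4\lambda$ and trivially $2\mu'+\lambda \ge \lambda$, the inequality holds. Hence $F_{MT2}$ equals a jointly convex function plus $\tfrac{\lambda}{2M}\big(\|w\|^2 + \sum_m\|\beta_m\|^2\big)$, which is exactly $\left(\lambda/(2M)\right)$‑strong convexity.

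\emph{Where the work is.} Only the last part is nontrivial; its obstacle is the bookkeeping of the $w$--$\beta_m$ cross terms in $\tfrac1M\sum_m q_m$ and the verification that the threshold $\Lambda \ge 3\lambda/(2\mu')$ is precisely what makes the arrowhead matrix positive semidefinite with the target modulus $\lambda/(2M)$; a looser balancing of the cross terms against the diagonal (equivalently, a suboptimal choice in the $M$‑fold Young's inequality hidden in the Schur complement) would cost either a smaller strong‑convexity modulus or a larger $\Lambda$. One technical remark: since $f'_m$ is only once‑differentiable, I would keep the strong‑convexity argument at the level of the ``jointly convex part plus a quadratic with positive‑semidefinite matrix'' decomposition above, rather than differentiating $F_{MT2}$ twice.
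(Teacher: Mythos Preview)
Your proposal is correct and follows essentially the same route as the paper: both arguments reduce the strong convexity claim to positive semidefiniteness of the same $(M+1)\times(M+1)$ arrowhead matrix (the paper via the Hessian and the bound $\nabla^2 f'_m \succeq \mu' I$, you via the equivalent ``convex part plus quadratic'' decomposition), and then verify that PSD condition---the paper by splitting the arrowhead into $M$ matrices with $2\times 2$ nonzero blocks and checking determinants, you by a Schur complement, which yields exactly the same scalar inequality. Your Hessian-free phrasing is a nice touch since the standing assumptions only give once-differentiability of $f'_m$, but otherwise the two proofs are interchangeable.
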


\textbf{Evaluating gradients.} Note that evaluating $\nabla_w f_m(x,\beta_m)$ under the objective~\eqref{eq:multitaskFL2} can be perfectly decoupled from evaluating $\nabla_\beta f_m(x,\beta_m)$. Therefore, we can make full use of our theory and take advantage of different complexities w.r.t.~$\nabla_w$ and $\nabla_\beta$. Resulting communication and computation complexities for solving~\eqref{eq:multitaskFL2} are presented in Table~\ref{table:objectives}.

\subsection{Multi-Task Personalized FL and Implicit MAML} 

In its simplest form, the multi-task personalized objective~\citep{smith2017federated, wang2018distributed} is given as
\begin{equation}\label{eq:personalized_mixture}
\min_{\beta_1, \dots, \beta_M \in \R^d} F_{MX}(\beta) = \frac{1}{M} \sum_{m=1}^M f_m'(\beta_m) + \frac{\lambda}{2M} \sum_{m=1}^M \|\bar{\beta}-\beta_m \|^2,
\end{equation}
where $\bar{\beta} \eqdef \frac{1}{M} \sum_{m=1}^M \beta_m$ and $\lambda \geq0$~\citep{hanzely2020federated}. On the other hand, the goal of implicit MAML~\citep{rajeswaran2019meta, t2020personalized} is to minimize
\begin{equation}\label{eq:meFL}
    \min_{w \in \R^d} F_{ME}(w) = 
    \frac1M \sum_{i=1}^M \left(\min_{\beta_m\in \R^d} \left(f'_m(\beta_m) + \frac{\lambda}{2}\|w-\beta_m \|^2\right) \right).
\end{equation}
By reparametrizing \eqref{eq:PrimProblem}, we can recover an objective that is simultaneously equivalent to both \eqref{eq:personalized_mixture} and \eqref{eq:meFL}. In particular, by setting 
\[
f_{m}(w,\beta_m) = f'_m(\beta_m) + \lambda\|M^{-\frac12}w-\beta_m \|^2,
\]
the objective~\eqref{eq:PrimProblem} becomes
\begin{equation}
\label{eq:personalized_mixture2}
\min_{w, \beta_1, \dots, \beta_M \in \R^d} F_{MX2}(w,\beta) \\
\eqdef \frac{1}{M} \sum_{m=1}^M f_m'(\beta_m) + \frac{\lambda}{2M} \sum_{m=1}^M\|M^{-\frac12}w-\beta_m \|^2.
\end{equation}
It is a simple exercise to notice the equivalence of~\eqref{eq:personalized_mixture2} to both~\eqref{eq:personalized_mixture} and~\eqref{eq:meFL}.\footnote{To the best of our knowledge, we are the first to notice the equivalence of~\eqref{eq:personalized_mixture} and~\eqref{eq:meFL}.} Indeed, we can always minimize~\eqref{eq:personalized_mixture2} in $w$, arriving at $w^* = M^{\frac12}\bar{\beta}$, and thus recovering the solution of~\eqref{eq:personalized_mixture}. Similarly, by minimizing~\eqref{eq:personalized_mixture2} in $\beta$ we arrive at~\eqref{eq:meFL}.

Next, we establish the parameters in Assumptions~\ref{as:smooth_sc} and~\ref{as:smooth_summands}.

\begin{lemma} \label{lem:mixture}
Let $\mu' \leq {\lambda}/{2}$. Then the objective~\eqref{eq:personalized_mixture2} is jointly $\left({\mu'}/({3M})\right)$-strongly convex, while $f_m$ is $\left({\lambda}/{M}\right)$-smooth w.r.t.~$w$ and $\left(L' + \lambda\right)$-smooth w.r.t.~$\beta$. The function
\[
f_{m,i}(w,\beta_m) = f'_{m,i}(\beta_m) + ({\lambda}/{2})\|M^{-\frac12}w-\beta_m \|^2
\]
is jointly convex, $\left({\lambda}/{M}\right)$-smooth w.r.t.~$w$ and $\left(\cL' + \lambda\right)$-smooth w.r.t.~$\beta$.
\end{lemma}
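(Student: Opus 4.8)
The smoothness statements are immediate once one notices that $w$ enters $f_m$ (and $f_{m,i}$) only through the quadratic penalty. First I would record that $(w,\beta_m)\mapsto M^{-1/2}w-\beta_m$ is linear, so $(w,\beta_m)\mapsto\tfrac{\lambda}{2}\|M^{-1/2}w-\beta_m\|^2$ is jointly convex; since $f'_m$ is convex (being $\mu'$-strongly convex) and $f'_{m,i}$ is convex by assumption, both $f_m$ and $f_{m,i}$ are jointly convex. For the smoothness constants I would compute the relevant Hessian blocks: $\nabla^2_{ww}$ of the penalty equals $\tfrac{\lambda}{M}\mI$ (the factor $M^{-1}$ coming from the rescaling of $w$), giving $(\lambda/M)$-smoothness in $w$ for both $f_m$ and $f_{m,i}$; and $\nabla^2_{\beta_m\beta_m}f_m=\nabla^2 f'_m(\beta_m)+\lambda\mI\preceq(L'+\lambda)\mI$, with the analogous computation using $\nabla^2 f'_{m,i}\preceq\cL'\mI$ for $f_{m,i}$.

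The real content is the $(\mu'/(3M))$-strong convexity of $F_{MX2}$. I would start by splitting off the strongly convex part of each local loss: write $f'_m(\beta_m)=\tilde f_m(\beta_m)+\tfrac{\mu'}{2}\|\beta_m\|^2$ with $\tilde f_m$ convex. Then $F_{MX2}(w,\beta)=\tfrac1M\sum_m\tilde f_m(\beta_m)+Q(w,\beta)$, where the first term is jointly convex and $Q(w,\beta)\eqdef\tfrac{1}{2M}\big(\mu'\sum_m\|\beta_m\|^2+\lambda\sum_m\|M^{-1/2}w-\beta_m\|^2\big)$ is a quadratic form. It therefore suffices to show $Q(w,\beta)\ge\tfrac{\mu'}{6M}\big(\|w\|^2+\sum_m\|\beta_m\|^2\big)$ for all $(w,\beta)$.

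To prove this last inequality I would change variables to $v_m\eqdef M^{-1/2}w-\beta_m$, so $\beta_m=M^{-1/2}w-v_m$, turning the claim into a two-block quadratic inequality in $w$ and $(v_1,\dots,v_M)$: after clearing the factor $2M$ it reduces to showing $\tfrac{2\mu'}{3}\sum_m\|M^{-1/2}w-v_m\|^2+\lambda\sum_m\|v_m\|^2\ge\tfrac{\mu'}{3}\|w\|^2$. Here I would apply the elementary bound $\|M^{-1/2}w-v_m\|^2\ge\tfrac{1}{2M}\|w\|^2-\|v_m\|^2$ (which follows from $2\langle a,b\rangle\le\tfrac12\|a\|^2+2\|b\|^2$), sum over $m$, and invoke $\lambda\ge2\mu'$ (equivalently $\mu'\le\lambda/2$) to see that the leftover term $(\lambda-\tfrac{2\mu'}{3})\sum_m\|v_m\|^2$ is nonnegative, leaving exactly $\tfrac{\mu'}{3}\|w\|^2$ on the right. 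The main obstacle is the bookkeeping in this final step: one must decide how much of the $\lambda$-penalty to spend against $\|w\|^2$ versus keep as slack so that the coefficient of $\|w\|^2$ lands at exactly $\tfrac{\mu'}{3M}$ while the coefficient of $\sum_m\|v_m\|^2$ stays $\ge0$, and this is precisely where the hypothesis $\mu'\le\lambda/2$ is used (in fact any $\lambda\ge\tfrac23\mu'$ suffices, so the constant $3$ is not tight but matches the claimed bound).
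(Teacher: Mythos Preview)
Your argument is correct. The smoothness and joint-convexity parts match the paper's (which also dismisses them as straightforward). For the strong-convexity part you take a genuinely different route from the paper.

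The paper works directly with the Hessian: it writes $\nabla^2 F_{MX2}(w,\beta)-\tfrac{\mu'}{3M}\mI$ as (up to a Kronecker factor of $\mI_d$) a $(M{+}1)\times(M{+}1)$ arrowhead matrix $\mM$, then decomposes $\mM$ as the sum of $M$ matrices each supported on a $2\times2$ principal block indexed by $\{1,m\}$, and checks positive semidefiniteness of each $2\times2$ block using $\mu'\le\lambda/2$. You instead peel off the $\tfrac{\mu'}{2}\|\beta_m\|^2$ part of each $f'_m$, reduce to a pure quadratic-form inequality, change variables to $v_m=M^{-1/2}w-\beta_m$, and finish with Young's inequality. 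Your approach is more elementary (no block-matrix or Kronecker bookkeeping) and, as you observe, makes it transparent that only $\lambda\ge\tfrac{2}{3}\mu'$ is actually needed; the paper's matrix decomposition, on the other hand, is exactly the template it reuses for the neighboring lemmas on $F_{MT2}$ and $F_{APFL2}$, so it trades elementarity for uniformity across objectives.
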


The proof is given in Appendix~\ref{sec:proof-lemma-mixture}. \citet{hanzely2020lower} showed that the minimax optimal communication complexity to solve~\eqref{eq:personalized_mixture} (and therefore to solve~\eqref{eq:meFL} and~\eqref{eq:personalized_mixture2}) is $\Theta\left( \sqrt{{\min(L',\lambda)}/{\mu'}} \log \epsilon^{-1}\right)$. Furthermore, they showed that the minimax optimal number of gradients w.r.t.~$f'$ is $\tilde{\Theta}\left( \left(\sqrt{{L'}/{\mu'}}\right) \log\epsilon^{-1}\right)$ and proposed a method that has the complexity $\Theta\left( \left(n+\sqrt{{n(\cL' + \lambda)}/{\mu'}}\right) \log \epsilon^{-1} \right)$ w.r.t.~the number of $f'_{m,j}$-gradients. We match the aforementioned communication guarantees when $\lambda = \cO(L')$ and computation guarantees when $L' = \cO(\lambda)$. Furthermore, when $\lambda = \cO(L')$, our complexity guarantees are strictly better compared to the guarantees for solving the implicit MAML objective~\eqref{eq:meFL} directly~\citep{rajeswaran2019meta, t2020personalized}.

\subsection{Adaptive Personalized FL~\citep{deng2020adaptive}}

The objective is given as
\begin{equation}\label{eq:APFL}
\min_{\beta_1, \dots, \beta_M} F_{APFL}(\beta) = 
\frac1M \sum_{m=1}^Mf'_m((1-\alpha_m)\beta_m + \alpha_m (w'^*)),
\end{equation}
where $(w')^* = \argmin_{w\in \R^d} F'(w)$ is a solution to \eqref{eq:traditional} and $0< \alpha_1, \dots \alpha_M <1$. Assuming that $(w')^*$ is known, as was done in \citet{deng2020adaptive}, the problem~\eqref{eq:APFL} is an instance of~\eqref{eq:fully}; thus our approach achieves the optimal complexity.

A more interesting case (in terms of optimization) is when considering a relaxed variant of~\eqref{eq:APFL}, given as 
\begin{equation}
\label{eq:APFL_relaxed}
\min_{w,\beta} \frac1M \sum_{m=1}^M\left(   \Lambda f'_m(w) +  f_m'((1-\alpha_m)\beta_m + \alpha_m w)\right)
\end{equation}
where $\Lambda \geq 0$ is the relaxation parameter that allows recovering the original objective when $\Lambda \rightarrow \infty$. Such a choice, alongside with the usual rescaling of the parameter $w$ results in the following objective: 
\begin{equation}
\label{eq:APFL2}
\min_{w, \beta_1, \dots, \beta_M\in \R^d} F_{APFL2}(w, \beta) := 
\frac1M \sum_{i=1}^M f(w,\beta_m),
\end{equation}    
where 
\begin{equation*}
f(w,\beta_m) = \Lambda f'_m(M^{-\frac12}w) + f'_m((1-\alpha_m)\beta_m + \alpha_m M^{-\frac12}w).
\end{equation*}

\begin{lemma}\label{lem:apfl}
Let $\alpha_{\min}\eqdef \min_{1\leq m\leq M} \alpha_m$ and $\alpha_{\max}\eqdef \max_{1\leq m\leq M} \alpha_m$. If 
\begin{equation*}
\Lambda \geq \max_{1\leq m\leq M} ( 3\alpha_m^2 + {(1-\alpha_m)^2}/{2}),
\end{equation*}
then the function $F_{APFL2}$ is jointly
$\left( {\mu'( 1-\alpha_{\max})^2}/{M} \right)$-strongly convex,
$\left( {(\Lambda + \alpha_{\max}^2)L'}/{M} \right)$-smooth w.r.t.~$w$ 
and $\left({(1-\alpha_{\min})^2 L'}/{M} \right)$-smooth w.r.t.~$\beta$.
\end{lemma}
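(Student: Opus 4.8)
The plan is to pass to Hessians, reduce everything to a per-client block computation, and then aggregate over the $M$ clients. Recall from the hypotheses attached to~\eqref{eq:traditional} that each $f'_m$ is $\mu'$-strongly convex and $L'$-smooth, i.e.\ $\mu' I \preceq \nabla^2 f'_m(\cdot) \preceq L' I$ (and each component $f'_{m,i}$ is $\cL'$-smooth). Writing $f(w,\beta_m) = \Lambda f'_m(M^{-1/2}w) + f'_m\!\left(\alpha_m M^{-1/2}w + (1-\alpha_m)\beta_m\right)$ as a sum of $f'_m$ composed with linear maps, the chain rule gives, with $P_m \eqdef \nabla^2 f'_m(M^{-1/2}w)$ and $Q_m \eqdef \nabla^2 f'_m\!\left(\alpha_m M^{-1/2}w + (1-\alpha_m)\beta_m\right)$,
\[
\nabla^2 f(w,\beta_m) = \begin{pmatrix} \Lambda M^{-1} P_m + \alpha_m^2 M^{-1} Q_m & \alpha_m(1-\alpha_m) M^{-1/2} Q_m \\ \alpha_m(1-\alpha_m) M^{-1/2} Q_m & (1-\alpha_m)^2 Q_m \end{pmatrix}.
\]
The smoothness claims are read off the diagonal blocks: the top-left block is $\preceq (\Lambda+\alpha_m^2)M^{-1}L' I \preceq (\Lambda+\alpha_{\max}^2)M^{-1}L' I$ and the bottom-right block is $\preceq (1-\alpha_m)^2 L' I \preceq (1-\alpha_{\min})^2 L' I$; since $\nabla^2 F_{APFL2}=\frac1M\sum_m \nabla^2 f(\cdot,\beta_m)$ (embedded), $F_{APFL2}$ inherits $w$-smoothness $(\Lambda+\alpha_{\max}^2)L'/M$ and $\beta$-smoothness $(1-\alpha_{\min})^2 L'/M$, and replacing $L'$ by $\cL'$ throughout gives the finite-sum constants of Table~\ref{table:objectives_smooth_sc}.

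For strong convexity I would first lower-bound the per-client Hessian by discarding curvature: using $P_m,Q_m\succeq\mu' I$ and factoring $Q_m$ out of its three blocks, $\nabla^2 f(w,\beta_m)\succeq \mu'\,T_m^\top T_m$, where $T_m$ is the linear map $(w,\beta_m)\mapsto\bigl(\sqrt{\Lambda}\,M^{-1/2}w,\ \alpha_m M^{-1/2}w+(1-\alpha_m)\beta_m\bigr)$. Averaging this over $m$ and using $F_{APFL2}=\frac1M\sum_m f(\cdot,\beta_m)$, the quadratic form $\langle\nabla^2 F_{APFL2}\,v,v\rangle$ at $v=(v_w,v_{\beta_1},\dots,v_{\beta_M})$ is bounded below by $\tfrac{\mu'}{M}\bigl(\Lambda\|v_w\|^2+\sum_m\|\alpha_m M^{-1/2}v_w+(1-\alpha_m)v_{\beta_m}\|^2\bigr)$. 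Minimizing the right-hand side over the $v_{\beta_m}$'s --- equivalently, taking a Schur complement of the aggregated matrix with respect to the $w$-block --- collapses the target bound to a single scalar inequality in which the hypothesis $\Lambda\geq\max_m\!\bigl(3\alpha_m^2+\tfrac12(1-\alpha_m)^2\bigr)$ is exactly what makes the $w$-block large enough to absorb the cross term, while the residual $\beta$-curvature contributes the factor $(1-\alpha_{\max})^2$; tracking the $M^{-1/2}$ rescalings then yields the constant $\mu'(1-\alpha_{\max})^2/M$. This mirrors the proofs of Lemmas~\ref{lem:smooth_sc_multitaks} and~\ref{lem:mixture}.

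The Hessian identity and the smoothness bounds are routine; I expect the strong-convexity estimate to be the main obstacle --- specifically, showing that the stated lower bound on $\Lambda$ is enough to dominate the coupling block $\alpha_m(1-\alpha_m)M^{-1/2}Q_m$ between $w$ and $\beta_m$ while still leaving $(1-\alpha_{\max})^2$ worth of curvature in the $\beta$ directions, and checking that the aggregation over the $M$ clients (where the $v_w$-curvature adds up but each $v_{\beta_m}$ occurs only once) does not spoil the $1/M$ factor. The $3\alpha_m^2$ summand in the condition on $\Lambda$ is precisely the slack needed to absorb that coupling.
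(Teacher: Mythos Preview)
Your approach is essentially the same as the paper's: compute the Hessian of $f_m$, read smoothness off the diagonal blocks, and for strong convexity lower-bound $\nabla^2 f'_m\succeq\mu' I$ to reduce to a $2\times 2$ matrix $\mM_m=\begin{pmatrix}(\Lambda+\alpha_m^2)/M & \alpha_m(1-\alpha_m)/\sqrt{M}\\ \alpha_m(1-\alpha_m)/\sqrt{M} & (1-\alpha_m)^2\end{pmatrix}$ tensored with $I_d$. The only difference is in how the diagonal lower bound on $\mM_m$ is extracted. The paper stays per-client and checks directly, via a determinant computation, that $\mM_m\succeq\mathrm{diag}\bigl((1-\alpha_m)^2/(2M),\,(1-\alpha_m)^2/2\bigr)$, using the hypothesis $\Lambda\geq 3\alpha_m^2+\tfrac12(1-\alpha_m)^2$ exactly in that determinant; aggregation over $m$ then gives the claimed constant. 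You instead propose to aggregate first and take the Schur complement in the $w$-block. Since the aggregated matrix is block-diagonal in the $\beta_m$'s, your Schur complement decouples into $M$ independent per-client computations anyway, so the two routes coincide; the paper's per-client determinant check is just the explicit form of that Schur step and is slightly cleaner to write out.
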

The proof is given in Appendix~\ref{sec:proof-lemma-apfl}.

\subsection{Personalized FL with Explicit Weight Sharing}

The most typical example of the weight sharing setting is when parameters $w,\beta$ correspond to different layers of the same neural network. For example, $\beta_1, \dots, \beta_M$ could be the weights of first few layers of a neural network, while $w$ are the weights of the remaining layers~\citep{liang2020think}. Or, alternatively, each of $\beta_1, \dots, \beta_M$ can correspond to the weights of last few layers, while the remaining weights are included in the global parameter $w$~\citep{arivazhagan2019federated}. Overall, we can write the objective as follows:
\begin{equation}
\label{eq:weight_sharing}
\min_{
\small
\begin{matrix}
w \in \R^{d_w}, \\
\beta_1, \dots, \beta_M\in \R^{d_\beta}
\end{matrix}} \text{\hskip -0.7cm}  F_{WS}(w,\beta) = \frac{1}{M} \sum_{m=1}^M f^{\prime}_m([w,\beta_m]),
\end{equation}
where $d_w + d_\beta = d$. Using an equivalent reparameterization of the $w-$space, we aim to minimize 
\begin{equation}\label{eq:weight_sharing2}
\min_{
\small
\begin{matrix}
w\in \R^{d_w}, \\
\beta_1, \dots, \beta_M\in \R^{d_\beta}
\end{matrix}}  \text{\hskip -0.7cm} F_{WS2}(w,\beta) = \frac1M \sum_{m=1}^M f'_m([M^{-\frac12}w,\beta_m]),
\end{equation}
which is an instance of~\eqref{eq:PrimProblem} with $f_m(w,\beta_m) = f'_{m}([M^{-\frac12}w,\beta_m])$.

\begin{lemma}
The function $F_{WS2}$ is jointly $\mu'$-strongly convex, $\left(\tfrac{L'}{M}\right)$-smooth w.r.t.~$w$ and $L'$-smooth w.r.t.~$\beta$. Similarly, the function $f_m$ is jointly convex, $\left(\tfrac{\cL'}{M}\right)$-smooth w.r.t.~$w$ and $\cL'$-smooth w.r.t.~$\beta$.
\end{lemma}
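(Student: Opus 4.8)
The plan is to unwind the reparameterization $w \mapsto M^{-1/2}w$ and reduce everything to the assumed properties of $f'_m$, namely that each $f'_m$ is $\mu'$-strongly convex and $L'$-smooth (and each finite-sum component $f'_{m,j}$ is convex and $\cL'$-smooth). The key observation is that $f_m(w,\beta_m) = f'_m([M^{-1/2}w,\beta_m])$ is obtained from $f'_m$ by a linear change of coordinates that acts as the scalar $M^{-1/2}$ on the $w$-block and as the identity on the $\beta_m$-block. So the whole argument is a bookkeeping exercise about how smoothness and strong convexity transform under a block-diagonal linear map $A = \diag(M^{-1/2}\mI_{d_w}, \mI_{d_\beta})$.

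First I would recall that the joint Hessian (or, without smoothness of the original, the appropriate second-order/variational inequalities) of $f_m$ equals $A^\top \nabla^2 f'_m A$ evaluated at the transformed point. To get the smoothness w.r.t.~$w$, I would fix $\beta_m$ and bound $\nabla^2_{ww} f_m = M^{-1} (\nabla^2_{ww} f'_m)$ in operator norm by $M^{-1} L'$, using that $f'_m$ being $L'$-smooth jointly implies its $w$-block Hessian is $\preceq L'\mI$. For the smoothness w.r.t.~$\beta_m$, the $\beta_m$-block of $f_m$ is literally unchanged, so $\nabla^2_{\beta_m\beta_m} f_m = \nabla^2_{\beta_m\beta_m} f'_m \preceq L'\mI$, giving the claimed $L'$ (recall Assumption~\ref{as:smooth_sc} asks for $(ML^\beta)$-smoothness w.r.t.~$\beta_m$, and here $ML^\beta = M\cdot L'/M$... wait, the lemma states $L'$-smooth w.r.t.~$\beta$ directly, matching $L^\beta = L'$ after the normalization convention, so I would just track that the per-device $\beta_m$-smoothness constant is $L'$ and state it in the form the lemma uses). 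Joint convexity of $f_m$ follows because $A^\top \nabla^2 f'_m A \succeq 0$ whenever $\nabla^2 f'_m \succeq 0$. For strong convexity of $F_{WS2}$, I would average over $m$: $\nabla^2 F_{WS2} = \frac{1}{M}\sum_m A^\top \nabla^2 f'_m([M^{-1/2}w,\beta_m]) A$; since each $\nabla^2 f'_m \succeq \mu'\mI$ and the smallest singular value of $A$ is $M^{-1/2}$ on the $w$-block but $1$ on the $\beta$-block, I need to be careful: $A^\top (\mu'\mI) A = \mu' A^\top A = \mu'\diag(M^{-1}\mI_{d_w},\mI_{d_\beta})$, which is $\succeq (\mu'/M)\mI$ on the $w$-part. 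But the lemma claims $\mu'$-strong convexity, not $\mu'/M$ — so I expect the resolution is that strong convexity is measured here after accounting for the block structure in whatever norm the paper's Assumption~\ref{as:smooth_sc} uses, or that the relevant quadratic-growth-type bound in the $w$-direction is still controlled because the smoothness in $w$ also dropped by a factor $M$; I would double-check against the convention used in the earlier lemmas (e.g.~Lemma~\ref{lem:smooth_sc_multitaks}) and state it consistently.

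The finite-sum part is identical: each $f_{m,j}(w,\beta_m) = f'_{m,j}([M^{-1/2}w,\beta_m])$, so convexity passes through $A^\top(\cdot)A$, the $w$-block Hessian is scaled by $M^{-1}$ giving $\cL'/M$, and the $\beta_m$-block is untouched giving $\cL'$.

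The main obstacle I anticipate is not any hard inequality but rather pinning down exactly which normalization convention Assumption~\ref{as:smooth_sc} imposes — in particular the factor-of-$M$ rescaling of $w$ is precisely engineered so that the constants come out in the ``per-device'' form $(L^w, ML^\beta)$, and I want the stated $\mu'$, $L'/M$, $L'$ (resp. $\cL'/M$, $\cL'$) to be literally the constants that slot into Assumptions~\ref{as:smooth_sc} and~\ref{as:smooth_summands}. Once that convention is fixed, the proof is three one-line applications of ``smoothness/convexity is preserved under linear reparameterization and scales as the squared singular values of the map.''
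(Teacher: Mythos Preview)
Your approach is exactly the intended one: the paper itself omits the proof as ``straightforward,'' and the linear reparameterization argument you sketch (Hessians transform as $A^\top(\cdot)A$ with $A=\diag(M^{-1/2}\mI_{d_w},\mI_{d_\beta})$, so the $w$-block picks up a factor $M^{-1}$ and the $\beta_m$-block is unchanged) is precisely what makes it straightforward. Your derivations of joint convexity, $(L'/M)$- and $(\cL'/M)$-smoothness in $w$, and $L'$- and $\cL'$-smoothness in $\beta_m$ are correct.

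Your hesitation about the strong convexity constant is well-founded and is not a gap in your reasoning but an apparent imprecision in the stated lemma. For a test direction $(u,v_1,\dots,v_M)$ one gets
\[
(u,v)^\top \nabla^2 F_{WS2}\,(u,v)
=\frac{1}{M}\sum_{m=1}^M (M^{-1/2}u,v_m)^\top \nabla^2 f'_m\,(M^{-1/2}u,v_m)
\geq \frac{\mu'}{M}\Bigl(\|u\|^2+\sum_m\|v_m\|^2\Bigr),
\]
which yields $(\mu'/M)$-strong convexity in the standard Euclidean norm used in the proofs of Lemmas~\ref{lem:smooth_sc_multitaks}--\ref{lem:apfl}, not $\mu'$. (A one-line sanity check: with $f'_m(a,b)=\tfrac{\mu'}{2}(a^2+b^2)$ and $M=2$ one computes $F_{WS2}=\tfrac{\mu'}{4}(w^2+\beta_1^2+\beta_2^2)$.) So your plan to ``state it consistently with the earlier lemmas'' is the right move; the computation gives $\mu'/M$, and that is what you should record.
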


The proof is straightforward and, therefore, omitted. A distinctive characteristic of the explicit weight sharing paradigm is that evaluating a gradient w.r.t.~$w$-parameters automatically grants either free or highly cost-effective access to the gradient w.r.t.~$\beta$-parameters (and vice versa).

\subsection{Federated Residual Learning~\citep{agarwal2020federated}}

\citet{agarwal2020federated} proposed federated residual learning:
\begin{equation}\label{eq:resudual}
 \min_{
  \begin{matrix}
    w\in \R^{d_w}, \\ \beta_1, \dots, \beta_M\in \R^{d_\beta}
    \end{matrix}} F_{R}(w,\beta) = \frac1M \sum_{m=1}^M l_m( A^w(w,x^w_{m}),A^\beta(\beta_m,x^\beta_{m})),
\end{equation}
where $(x^w_{m}, x^\beta_{m})$ is a local feature vector (there may be an overlap between $x^w_{m}$ and $x^\beta_{m}$), $A^w(w,x^w_m)$ represents the model prediction using global parameters/features, $A^\beta(\beta,x^\beta_m)$ denotes the model prediction using local parameters/features, and $l(\cdot, \cdot)$ is a loss function. Clearly, we can recover~\eqref{eq:resudual} with
\begin{equation}\label{eq:residual_fm}
f_{m}(w,\beta_m) =  l( A^w(M^{-\frac12}w,x^w_{m}),A^\beta(\beta_m,x^\beta_{m})).
\end{equation}

Unlike the other objectives, here we cannot relate constants $\mu', L', \cL'$ to $F_{R}$, since we do not write $f_m$ as a function of $f'm$. However, it seems natural to assume $L^w_R$ (or $L^\beta_R$)-smoothness of $l( A^w(w,x^w{m}),a^\beta_m)$ (or $l( a^w_m,A^\beta(\beta_m,x^\beta_{m}))$) as a function of $w$ (or $\beta$) for any $a^\beta_m, x^\beta_{m}, a^w_m, x^w_{m}$. Let us define $\cL^w_R$, $\cL^\beta_R$ analogously, given that $l$ has an $n$-finite sum structure. Assuming, furthermore, that $F$ is $\mu$-strongly convex and $f_m$ is convex (for each $m\in M$), we can apply our theory.

\subsection{MAML Based Approaches\label{sec:maml}}

Meta-learning has recently been employed for personalization~\citep{chen2018federated, khodak2019adaptive, jiang2019improving, fallah2020personalized, lin2020collaborative}. Notably, the model-agnostic meta-learning (MAML)~\citep{finn2017model} based personalized FL objective is given as
\begin{equation}\label{eq:maml}
\min_{w\in \R^d} F_{MAML}(w) = \frac1M \sum_{m=1}^M f'_m(w - \alpha \nabla f'_m(w)).
\end{equation}
Although we can recover~\eqref{eq:maml} as a special case of~\eqref{eq:PrimProblem} by setting $f_m(w,\beta_m)= f'_m(w - \alpha \nabla f'_m(w))$, our (convex) convergence theory does not apply due to the inherent non-convex structure of~\eqref{eq:maml}. Specifically, objective $F_{MAML}$ is non-convex even if function $f'_m$ is convex. In this scenario, only our non-convex rates of Local SGD apply.

\section{Local SGD}
\label{sec:lsgd}

The most popular optimizer to train non-personalized FL models is the local SGD/FedAvg \citep{mcmahan2016federated, stich2018local}. We devise a local SGD variant tailored to solve the personalized FL objective~\eqref{eq:PrimProblem} -- LSGD-PFL. See the detailed description in Algorithm~\ref{Alg:PersonalFL}. Specifically, LSGD-PFL can be seen as a combination of local SGD applied on global parameters $w$ and SGD applied on local parameters $\beta$. To mimic the non-personalized setup of local SGD, we assume access to the local objective $f_m(w,\beta_m)$ in the form of an unbiased stochastic gradient with bounded variance.

\begin{algorithm}[t]
\caption{LSGD-PFL}
\label{Alg:PersonalFL}
\begin{algorithmic}
\INPUT{Stepsizes $\{\eta_k\}_{k \geq 0} \in \R$,  starting point $w^0\in \R^{d_0}$, $\beta^{0}_{m}\in \R^{d_m}$ for all $m \in [M]$,  communication period $\tau$.}
\FOR{$k=0,1,2,\dots$}
  \IF{$k \text{ mod } \tau=0$}
\STATE{Send all $w^k_m$'s to server, let $w^{k}=\frac{1}{M} \sum^{M}_{m=1} w^k_m$ \;}
  \STATE{Send $w^k$ to each device, set $w^k_m=w^k, \, \forall m \in [M]$}
  \ENDIF
  \FOR{$m=1,2,\dots,M$ in parallel}
    \STATE{Sample $\xi^k_{1,m}, \dots \xi^k_{B,m} \sim \cD_m$ independently \;}
    \STATE{Compute $g^{k}_{m}=\frac{1}{B}\sum_{j=1}^B\nabla \hat{f}_{m}(w^k_m,\beta^k_m;\xi^k_{j,m})$ \;}
    \STATE{Update the iterates
    $
(
    w^{k+1}_{m}, 
    \beta^{k+1}_{m}
)
    =
    (
     w^k_{m} ,
    \beta^k_{m}
    )
    - \eta_k \cdot g^k_{m}
$
    }
\ENDFOR
\ENDFOR
\end{algorithmic}
\end{algorithm}

Admittedly, LSGD-PFL was already proposed by~\citet{arivazhagan2019federated} and \citet{liang2020think} to solve a particular instance of \eqref{eq:PrimProblem}. However, no optimization guarantees were provided. In contrast, we provide convergence guarantees for LSGD-PFL that recover the convergence rate of LSGD when $d_1 = d_2 = \dots = d_M = 0$ and the rate of SGD when $d_0 = 0$. Next, we demonstrate that LSGD-PFL works best when applied to an objective with rescaled $w$-space, unlike what was proposed in the aforementioned papers.

We will need the following assumption on the stochastic gradients.

\begin{assumption} 
\label{as:variance}
Assume that the stochastic gradients $\nabla_w \hat{f}_{m}(w, \beta_m, \zeta)$ and $\nabla_\beta \hat{f}_{m}(w, \beta_m, \zeta)$ satisfy the following conditions for all $m\in [M]$, $w\in\R^{d_0}$, and $\beta_m\in\R^{d_m}$: 
\begin{align*}
\E{\nabla_w \hat{f}_m(w,\beta_m,\zeta)} &= \nabla_w f_m(w,\beta_m), \\
\E{\nabla_\beta \hat{f}_m(w,\beta_m,\zeta)} &= \nabla_\beta f_m(w,\beta_m),
\\
\E{\|\nabla_w \hat{f}_{m}(w, \beta_m, \zeta) - \nabla_w f_m(w,\beta_m)\|^2} & \leq \sigma^2 , \text{ and} \\ 
\E{ \sum^M_{m=1} \| \nabla_\beta \hat{f}_{m}(w, \beta_m, \zeta) - \nabla_\beta f_m(w,\beta_m)\|^2} & \leq M \sigma^2.	
\end{align*}
\end{assumption}

Let
\begin{equation*}
(\overline{w}^K, \overline{\beta}^K) \eqdef 
\left(\sum\limits_{k=0}^K (1-\eta \mu)^{-k-1}\right)^{-1} \sum_{k=0}^K (1-\eta \mu)^{-k-1} (w^k, \beta^k).
\end{equation*}
We are now ready to state the convergence rate of LSGD-PFL.
\begin{theorem}\label{thm:local} 
Suppose that Assumptions~\ref{as:smooth_sc} and ~\ref{as:variance} hold. Let $\eta_k=\eta$ for all $k \geq 0$, where $\eta$ satisfies 
\begin{equation*}
0<\eta \le \min\left\{\frac{1}{4L^\beta}, \frac{1}{8\sqrt{3e}(\tau-1)L^w} \right\}.
\end{equation*}
Let $\zeta_*^2 \eqdef \frac{1}{M}\sum\limits_{m=1}^M\|\nabla f_m(w^*, \beta^*)\|^2$ be the data heterogeneity parameter at the optimum. The iteration complexity of Algorithm~\ref{Alg:PersonalFL} to achieve $\E{f(\overline{w}^K, \overline{\beta}^K)}- f(w^*, \beta^*) \leq \epsilon$ is 
\begin{equation*}
\tilde{\cO} \left( \frac{\max\left(L^\beta, \tau L^w \right)}{\mu} +  \frac{\sigma^2}{M B \mu \epsilon } + \frac{\tau}{\mu} \sqrt{\frac{L^w(\zeta_*^2 + \sigma^2 B^{-1})}{\epsilon}}\right).
\end{equation*}
\end{theorem}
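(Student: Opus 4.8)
The plan is to analyze LSGD-PFL as stochastic gradient descent on the \emph{virtual} averaged sequence $\bar w^k \eqdef \frac1M\sum_{m=1}^M w_m^k$ --- defined at \emph{every} iteration, not only at synchronization steps --- together with the local iterates $\beta_m^k$, using a perturbed-iterate argument in which the discrepancy between the local copies $w_m^k$ and their mean $\bar w^k$ enters as a controllable ``client-drift'' term. Concretely, I would track a Lyapunov functional $\Phi^k \eqdef \|\bar w^k - w^*\|^2 + c\cdot\frac1M\sum_{m=1}^M\|\beta_m^k - \beta_m^*\|^2$ for a suitable constant $c>0$. Writing $(\bar w^{k+1},\beta^{k+1}) = (\bar w^k,\beta^k) - \eta\hat g^k$, where $\hat g^k$ aggregates the minibatch gradients across devices and its conditional expectation $\bar g^k$ is a ``perturbed gradient'' of $F$ at the (generally inconsistent) points $\{(w_m^k,\beta_m^k)\}_m$, I would expand $\Phi^{k+1}$, take conditional expectation, and use: (i) unbiasedness (Assumption~\ref{as:variance}); (ii) the variance bounds of Assumption~\ref{as:variance} with minibatch size $B$ and averaging over $M$ devices, giving a noise contribution of order $\eta^2\sigma^2/(MB)$; (iii) convexity and $L^w$-smoothness of each $f_m$ to lower-bound the relevant inner product by $F(\bar w^k,\beta^k)-F^*$ minus a negative smoothness term minus an error proportional to $L^w\mathcal{E}^k$, where $\mathcal{E}^k\eqdef\frac1M\sum_{m}\|w_m^k-\bar w^k\|^2$; and (iv) $\mu$-strong convexity of $F$ (Assumption~\ref{as:smooth_sc}) to obtain the contraction factor $1-\eta\mu$. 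This produces a recursion
\[
\E{\Phi^{k+1}} \le (1-\eta\mu)\E{\Phi^k} - c_1\eta\bigl(\E{F(\bar w^k,\beta^k)} - F^*\bigr) + c_2\frac{\eta^2\sigma^2}{MB} + c_3\eta L^w\,\E{\mathcal{E}^k}.
\]

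\textbf{Controlling the client drift.} The heart of the proof is bounding $\mathcal{E}^k$. Within a communication window all copies $w_m^\cdot$ were equal at the most recent multiple of $\tau$, so unrolling the $w$-updates yields $\|w_m^k-\bar w^k\|^2 \le \eta^2(\tau-1)\sum_{j}\|(g_m^j)_w\|^2$ over the (at most $\tau-1$) steps of the current window; each stochastic-gradient norm is then bounded, via $L^w$-smoothness and convexity of $f_m$, by the \emph{global} suboptimality $F(\bar w^j,\beta^j)-F^*$, the heterogeneity at the optimum $\zeta_*^2$, and the noise level $\sigma^2/B$. The stepsize cap $\eta\le\bigl(8\sqrt{3e}(\tau-1)L^w\bigr)^{-1}$ is precisely what makes this geometric-type accumulation summable --- the $\sqrt{3e}$ stemming from a $\bigl(1+\tfrac{1}{\tau-1}\bigr)^{\tau-1}\le e$ estimate with slack --- and, after summing over $k$ (keeping track, through the geometric weights introduced below, of the fact that the window-$j$ drift feeds into all later steps $k\ge j$), allows the $F(\bar w^j,\beta^j)-F^*$ part of the drift to be reabsorbed into the descent term $-c_1\eta(F(\bar w^k,\beta^k)-F^*)$. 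What is left of the drift is an $\cO\bigl(\eta^3(\tau-1)^2 L^w(\zeta_*^2+\sigma^2 B^{-1})\bigr)$ contribution per iteration.

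\textbf{Telescoping and stepsize tuning.} Multiplying the one-step recursion by $(1-\eta\mu)^{-k-1}$ and summing over $k=0,\dots,K$ telescopes the $(1-\eta\mu)\Phi^k$ terms; applying convexity of $F$ and Jensen's inequality with the definition of $(\overline w^K,\overline\beta^K)$, together with $\sum_{k=0}^K(1-\eta\mu)^{-k-1}=\Theta\bigl((1-\eta\mu)^{-K-1}/(\eta\mu)\bigr)$, gives
\[
\E{F(\overline w^K,\overline\beta^K)} - F^* \;\le\; C_1\,\mu\,\Phi^0\,(1-\eta\mu)^{K} \;+\; C_2\,\frac{\eta\sigma^2}{MB} \;+\; C_3\,\eta^2(\tau-1)^2 L^w\bigl(\zeta_*^2+\tfrac{\sigma^2}{B}\bigr).
\]
Then I would invoke the standard strongly-convex SGD stepsize lemma: to make the right-hand side at most $\epsilon$ it suffices to set $\eta$ to the minimum of $(4L^\beta)^{-1}$, $\bigl(8\sqrt{3e}(\tau-1)L^w\bigr)^{-1}$, a quantity $\Theta(MB\epsilon/\sigma^2)$, and a quantity $\Theta\bigl((\tau-1)^{-1}\sqrt{\epsilon/(L^w(\zeta_*^2+\sigma^2/B))}\bigr)$, and then to take $K=\tilde\cO(1/(\eta\mu))$. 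Substituting $1/\eta$ reproduces exactly the three summands $\tfrac{\max(L^\beta,\tau L^w)}{\mu}$, $\tfrac{\sigma^2}{MB\mu\epsilon}$ and $\tfrac{\tau}{\mu}\sqrt{\tfrac{L^w(\zeta_*^2+\sigma^2B^{-1})}{\epsilon}}$, up to constant and logarithmic factors, which is the claimed $\tilde\cO(\cdot)$ bound.

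\textbf{Main obstacle.} I expect the second step --- setting up the recursion for $\mathcal{E}^k$, bounding local stochastic-gradient norms by global suboptimality plus $\zeta_*^2$ plus noise (rather than by an iterate-dependent dissimilarity), and verifying that the prescribed stepsize cap closes the loop with the descent term under the geometric weighting --- to be the delicate part of the argument; the strongly-convex telescoping and the final stepsize tuning are routine.
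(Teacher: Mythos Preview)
Your proposal is correct and follows essentially the same route as the paper. The paper proceeds by verifying the parameters of Assumption~2.3 in the local-SGD framework of \citet{gorbunov2020local} --- your one-step recursion and the bound $\frac1M\sum_m\|\nabla_w f_m(w_m^k,\beta_m^k)\|^2\le 6L^w(F(\bar w^k,\beta^k)-F^*)+3(L^w)^2\mathcal{E}^k+3\zeta_*^2$ is their Lemma~\ref{lem:dhkabdsbhja}, your drift accumulation with geometric weights and reabsorption into the descent term is their Lemma~\ref{lem:dhkabdsbhja_4}, and the final telescoping and stepsize tuning is delegated to Theorem~2.1 and Corollary~D.1 of that reference --- so you are spelling out in detail what the paper packages as a black-box citation.
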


The iteration complexity of LSGD-PFL can be seen as a sum of two complexities: the complexity of minibatch SGD to minimize a problem with a condition number $L^\beta/\mu$ and the complexity of local SGD to minimize a problem with a condition number $L^w/\mu$. Note that the key reason why we were able to obtain such a rate for LSGD-PFL is the rescaling of the $w$-space by a constant $M^{-\frac12}$. \citet{arivazhagan2019federated} and \citet{liang2020think}, where LSGD-PFL was introduced without optimization guarantees, did not consider such a reparametrization.

We also have the following result for weakly convex objectives.

\begin{theorem}
\label{thm:local2}
Suppose that the conditions of Theorem~\ref{thm:local} hold.
Let $\mu=0$. 
The iteration complexity of Algorithm~\ref{Alg:PersonalFL} to
achieve 
$\E{f(\overline{w}^K, \overline{\beta}^K)}- f(w^*, \beta^*) \leq \epsilon$ is 
 \[
\tilde{O} \left( \frac{\max\left(L^\beta, \tau L^w \right)}{\epsilon} +  \frac{\sigma^2}{M B  \epsilon^2 } + \frac{\tau\sqrt{L^w(\zeta_*^2 + \sigma^2 B^{-1})}}{\epsilon^{\frac32}} \right).
\]
\end{theorem}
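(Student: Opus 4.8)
The plan is to reuse the one-step progress inequality established inside the proof of Theorem~\ref{thm:local} and to re-tune the step size for the case $\mu=0$. Introduce the virtual averaged iterate $\bar{w}^k \eqdef \frac1M\sum_{m=1}^M w^k_m$ (equal to the communicated model on synchronization rounds) and the potential $V^k \eqdef \|\bar{w}^k - w^\star\|^2 + \frac1M\sum_{m=1}^M\|\beta^k_m-\beta^\star_m\|^2$. The core estimate, which the proof of Theorem~\ref{thm:local} already derives, is a one-step bound of the shape
\begin{equation*}
\E{V^{k+1}} \le (1-\eta\mu)\,\E{V^k} - c_0\,\eta\left(\E{F(\bar{w}^k,\beta^k)}-F(w^\star,\beta^\star)\right) + c_1\frac{\eta^2\sigma^2}{MB} + c_2\,\eta^2 D^k,
\end{equation*}
for absolute constants $c_0,c_1,c_2>0$, where $D^k \eqdef \frac1M\sum_{m=1}^M\E{\|w^k_m-\bar{w}^k\|^2}$ is the client-drift term and $F(\bar{w}^k,\beta^k)$ denotes the objective value at the virtual averaged model $(\bar{w}^k,\beta^k_1,\dots,\beta^k_M)$. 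The step-size ceiling $\eta\le\min\{1/(4L^\beta),1/(8\sqrt{3e}(\tau-1)L^w)\}$ is precisely what makes this inequality hold; combined with $L^w$-smoothness, bounded variance, and the heterogeneity constant $\zeta_*^2$, it also lets $D^k$ be controlled over a synchronization window of at most $\tau-1$ local steps by $\cO\!\left(\eta^2(\tau-1)^2\left(L^w\left(\E{F(\bar{w}^k,\beta^k)}-F(w^\star,\beta^\star)\right)+\zeta_*^2+\sigma^2 B^{-1}\right)\right)$. Absorbing the function-gap part of $D^k$ into the $-c_0\,\eta(\cdots)$ term (again via the ceiling on $\eta$) and setting $\mu=0$ leaves
\begin{equation*}
\E{V^{k+1}} \le \E{V^k} - \tfrac{c_0}{2}\,\eta\left(\E{F(\bar{w}^k,\beta^k)}-F(w^\star,\beta^\star)\right) + c_1\frac{\eta^2\sigma^2}{MB} + c_3\,\eta^3(\tau-1)^2 L^w\left(\zeta_*^2+\sigma^2 B^{-1}\right).
\end{equation*}

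From here the argument is routine. Telescoping this inequality from $k=0$ to $K$, dividing by $\tfrac{c_0}{2}\eta(K+1)$, and applying joint convexity of $F$ and Jensen's inequality to the uniform average $(\overline{w}^K,\overline{\beta}^K)$ of the virtual iterates $\{(\bar{w}^k,\beta^k)\}_{k=0}^K$ (which is what the definition preceding the theorem reduces to when $\mu=0$, since then $(1-\eta\mu)^{-k-1}\equiv 1$) gives
\begin{equation*}
\E{F(\overline{w}^K,\overline{\beta}^K)}-F(w^\star,\beta^\star) \le \frac{2V^0}{c_0\,\eta(K+1)} + \frac{2c_1}{c_0}\cdot\frac{\eta\sigma^2}{MB} + \frac{2c_3}{c_0}\,\eta^2(\tau-1)^2 L^w\left(\zeta_*^2+\sigma^2 B^{-1}\right).
\end{equation*}
I then invoke the standard step-size tuning lemma, choosing $\eta$ as the minimum of the admissible ceiling above and the two values that balance the $1/(\eta K)$ term against, respectively, $\eta\sigma^2/(MB)$ and $\eta^2(\tau-1)^2 L^w(\zeta_*^2+\sigma^2 B^{-1})$; the resulting iteration count is the sum of the three contributions $\cO(\max(L^\beta,\tau L^w)/\epsilon)$, $\cO(\sigma^2/(MB\epsilon^2))$, and $\cO((\tau-1)\sqrt{L^w(\zeta_*^2+\sigma^2 B^{-1})}/\epsilon^{3/2})$, which matches the claimed rate once $V^0$ and the other $\|\cdot\|^2$ quantities are absorbed into $\tilde{O}(\cdot)$.

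Essentially all of the analytic work — in particular the client-drift bound, which is the technical heart of any local-SGD argument — is inherited verbatim from the proof of Theorem~\ref{thm:local}; the only genuinely new ingredient is the final step-size optimization under $\mu=0$. The main obstacle therefore lies in that reused machinery: tracking how the consensus error $D^k$ accumulates across a synchronization window without a contraction factor to lean on, and in particular pinning down the exact power of $(\tau-1)$ — a single factor survives into the $\epsilon^{-3/2}$ term because the drift enters the averaged bound at order $\eta^2(\tau-1)^2$ and balancing it against $1/(\eta K)$ halves that exponent. One should also verify that with $\mu=0$ the absorption of the function-gap part of $D^k$ still goes through, i.e.\ that the step-size ceiling alone, rather than strong convexity, suffices there — which it does, since that step only uses $L^w$-smoothness.
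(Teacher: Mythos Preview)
Your proposal is correct and follows essentially the same route as the paper. The paper verifies the parameters of Assumption~2.3 in the local-SGD framework of \citet{gorbunov2020local} via Lemmas~\ref{lem:dhkabdsbhja}--\ref{lem:dhkabdsbhja_4}, invokes their Theorem~2.1 to obtain the bound~\eqref{eq:main_result_2} (which is exactly your telescoped inequality, with $(\tau-1)\tau$ in place of your $(\tau-1)^2$), and then cites their Corollary~D.2 for the step-size tuning; you have simply unpacked that machinery into the explicit one-step-plus-drift-plus-telescope argument it encodes.
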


The proofs of Theorem~\ref{thm:local} and Theorem~\ref{thm:local2} can be found in Section~\ref{sec:proof-thmlocal-thmlocal2}. The reparametrization of $w$ plays a key role in proving the iteration complexity bound. Unlike the convergence guarantees for ACD-PFL and ASVRCD-PFL, which are introduced in Section~\ref{sec:minimax}, we do not claim any optimality properties for the rates obtained in Theorem~\ref{thm:local} and Theorem~\ref{thm:local2}. However, given the popularity of Local SGD as an optimizer for non-personalized FL models, Algorithm~\ref{Alg:PersonalFL} is a natural extension of Local SGD to personalized FL models, and the corresponding convergence rate is an important contribution.

\subsection{Nonconvex Theory for LSGD-PFL}
\label{subsec:nonconvex2LSGD-PFL}

To demonstrate that LSGD-PFL works in the nonconvex setting as well, we develop a non-convex theory for it. Therefore, the algorithm is also applicable, for example, for solving the explicit MAML objective. Note that we do not claim the optimality of our results.

Before stating the results, we need to make the following assumptions, which are slightly different from the rest of the paper. First, we need a smoothness assumption on local objective functions.

\begin{assumption}
\label{assump:Smoothness}
The local objective function $f_m(\cdot)$ is differentiable and $L$-smooth, that is, $\Vert \nabla f_m(u) - \nabla f_m(v) \Vert \leq L \Vert u - v \Vert$ for all $u,v \in \mathbb{R}^{d_0+d_m}$ and $m \in [M]$.
\end{assumption}

This condition is slightly different compared to the smoothness assumption on the objective stated in Assumption~\ref{as:smooth_sc}. Next, we need local stochastic gradients to have bounded variance.

\begin{assumption}
\label{assump:BoundedLocalVariance}

The stochastic gradients 
$\nabla_w \hat{f}_{m}(w, \beta_m, \zeta)$, 
$\nabla_\beta \hat{f}_{m}(w, \beta_m, \zeta)$
satisfy for all $m\in  [M]$, $w\in\R^{d_0}$, $\beta_m\in\R^{d_m}$:
\begin{align*}
\mathbb{E}_{\zeta}\left[ \Vert \nabla_w \hat{f}_{m}(w, \beta_m, \zeta) - \nabla_w f_m (w, \beta_m) \Vert^2 \right] &\leq C_1 \Vert \nabla_w f_m (w, \beta_m) \Vert^2 + \frac{\sigma^2_1}{B}, \\
\mathbb{E}_{\zeta}\left[ \Vert \nabla_\beta \hat{f}_{m}(w, \beta_m, \zeta) - \nabla_{\beta} f_m (w, \beta_m) \Vert^2 \right] &\leq C_2 \Vert \nabla_{\beta} f_m (w, \beta_m) \Vert^2 + \frac{\sigma^2_2}{B},
\end{align*}
for all $m \in [M]$, where 
$C_1,C_2,\sigma^2_1,\sigma^2_2$ are all positive constants.
\end{assumption}

This assumption is common in the literature. See, for example, Assumption 3 in~\cite{haddadpour2019convergence}. Note that this assumption is weaker than Assumption~\ref{as:variance}. We also need an assumption on data heterogeneity.

\begin{assumption}[Bounded Dissimilarity]
\label{assump:BoundedDissimilarity}
There is a positive constant $\lambda>0$ such that for
all $w \in \mathbb{R}^{d_0}$ and $\beta_m \in \mathbb{R}^{d_m}$, $m \in [M]$, we have
\begin{equation*}
\frac{1}{M}\sum^{M}_{m=1}\Vert \nabla f_m (w, \beta_m) \Vert^2 \leq \lambda \left\Vert \frac{1}{M} \sum^{M}_{m=1} \nabla f_m (w, \beta_m) \right\Vert^2 + \sigma^2_{\text{dif}} \, .
\end{equation*}
\end{assumption}

This way of characterizing data heterogeneity was used in \cite{haddadpour2019convergence} -- see Definition 1 therein. Given the above assumptions, we can establish the following convergence rate of LSGD-PFL for general non-convex objectives.

\begin{theorem}
\label{thm:GeneralNonConvex}
Suppose that Assumptions~\ref{assump:Smoothness}-\ref{assump:BoundedDissimilarity} hold. Let $\eta_k=\eta$, for all $k \geq 0$, where $\eta$ is small enough to satisfy
\begin{equation}\label{eq:GeneralNonConvex-eq-1}
-1 + \eta L \lambda \left(\frac{C_1}{M} + C_2 + 1 \right) + \lambda \eta^2 L^2 (\tau-1)\tau (C_1+1) \leq 0.
\end{equation}
We have
\begin{multline*}
\frac{1}{K} \sum^{K-1}_{k=0} \mathbb{E} \left[ \left\Vert \frac{1}{M} \sum^{M}_{m=1} \nabla f_m (w^k, \beta^k_m) \right\Vert^2 \right] \\
\leq \frac{2 \mathbb{E} \left[ \frac{1}{M} \sum^{M}_{m=1} f_m (w^0, \beta^0_m) - f^* \right]}{\eta K} + \eta L \lambda \left\{\left( \frac{C_1}{M} + C_2 + 1 \right)\sigma^2_{\text{dif}} + \frac{\sigma^2_1}{MB} + \frac{\sigma^2_2}{B} \right\} \\
+ \eta^2 L^2 \sigma^2_{\text{dif}}(\tau-1)^2 (C_1+1) + \frac{\eta^2 L^2 \sigma^2_1 (\tau-1)^2}{B},
\end{multline*}
where $w^k\eqdef \frac1M \sum_{m=1}^M w_m^k$ is a sequence of so-called virtual iterates.
\end{theorem}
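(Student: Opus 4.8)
The plan is to run the standard ``virtual iterate plus perturbed descent'' template for Local SGD in the nonconvex regime, adapted to the fact that only the global block $w$ is averaged while the blocks $\beta_1,\dots,\beta_M$ always stay local. Write $g^k_{w,m}\eqdef\nabla_w\hat f_m(w^k_m,\beta^k_m;\xi^k_m)$, $g^k_{\beta,m}\eqdef\nabla_\beta\hat f_m(w^k_m,\beta^k_m;\xi^k_m)$, and $\bar g^k_w\eqdef\frac1M\sum_m g^k_{w,m}$. Since averaging an average is idempotent, the virtual sequence $w^k=\frac1M\sum_m w^k_m$ obeys $w^{k+1}=w^k-\eta\bar g^k_w$ on \emph{every} step, communication or not, while $\beta^{k+1}_m=\beta^k_m-\eta g^k_{\beta,m}$. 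As each $f_m$ is $L$-smooth (Assumption~\ref{assump:Smoothness}), so is $(w,\beta)\mapsto\frac1M\sum_m f_m(w,\beta_m)$; applying its descent inequality at the virtual point and taking the conditional expectation over the step-$k$ sampling gives, schematically, $\mathbb{E}[F_{\mathrm{virt}}^{k+1}]\le F_{\mathrm{virt}}^k-\eta\langle G^k_\star,G^k\rangle+\tfrac{L\eta^2}{2}\,\mathbb{E}[\|\bar g^k_w\|^2+\tfrac1M\sum_m\|g^k_{\beta,m}\|^2]$, where $F_{\mathrm{virt}}^k\eqdef\frac1M\sum_m f_m(w^k,\beta^k_m)$, $G^k\eqdef\frac1M\sum_m\nabla f_m(w^k_m,\beta^k_m)$ is the average of the true local gradients at the actual iterates, and $G^k_\star\eqdef\frac1M\sum_m\nabla f_m(w^k,\beta^k_m)$ is the same object at the virtual point -- i.e.\ exactly the quantity the theorem controls.

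Next I would isolate the two error sources. First, $G^k\ne G^k_\star$ only because device gradients are taken at $w^k_m$ rather than $w^k$; the polarization identity $-\langle a,b\rangle=-\tfrac12\|a\|^2-\tfrac12\|b\|^2+\tfrac12\|a-b\|^2$, combined with $\|G^k-G^k_\star\|^2\le\frac{L^2}{M}\sum_m\|w^k_m-w^k\|^2$ from $L$-smoothness, turns the cross term into $-\tfrac\eta2\|G^k_\star\|^2-\tfrac\eta2\|G^k\|^2+\tfrac{\eta L^2}{2M}\sum_m\|w^k_m-w^k\|^2$. I would \emph{keep} $-\tfrac\eta2\|G^k_\star\|^2$ as the good term (this $\tfrac12$ is where the factor $2$ in the statement comes from) and use $-\tfrac\eta2\|G^k\|^2$ to cancel $\|G^k\|^2$-type terms produced later. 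Second, Assumption~\ref{assump:BoundedLocalVariance} gives $\mathbb{E}\|\bar g^k_w\|^2\le\|\tfrac1M\sum_m\nabla_w f_m\|^2+\tfrac{C_1}{M}\cdot\tfrac1M\sum_m\|\nabla_w f_m\|^2+\tfrac{\sigma_1^2}{MB}$ (the $\tfrac1M$ improvement on the $w$-variance coming from independence across devices) and $\tfrac1M\sum_m\mathbb{E}\|g^k_{\beta,m}\|^2\le(1+C_2)\tfrac1M\sum_m\|\nabla_\beta f_m\|^2+\tfrac{\sigma_2^2}{B}$; summing, the quadratic term is bounded by $\tfrac{L\eta^2}{2}\big[(\tfrac{C_1}{M}+C_2+1)\tfrac1M\sum_m\|\nabla f_m(w^k_m,\beta^k_m)\|^2+\tfrac{\sigma_1^2}{MB}+\tfrac{\sigma_2^2}{B}\big]$, using $\|\tfrac1M\sum_m\nabla_w f_m\|^2\le\tfrac1M\sum_m\|\nabla_w f_m\|^2$.

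The crux is the consensus/client-drift term $\tfrac1M\sum_m\|w^k_m-w^k\|^2$. Let $k_0\le k$ be the last communication step, so $w^{k_0}_m=w^{k_0}$ for all $m$, $k-k_0\le\tau-1$, and $w^k_m-w^k=-\eta\sum_{j=k_0}^{k-1}(g^j_{w,m}-\bar g^j_w)$; Cauchy--Schwarz over the at most $\tau-1$ summands and dropping the centering gives $\tfrac1M\sum_m\|w^k_m-w^k\|^2\le\eta^2(\tau-1)\sum_{j=k_0}^{k-1}\tfrac1M\sum_m\mathbb{E}\|g^j_{w,m}\|^2$, and one more application of Assumption~\ref{assump:BoundedLocalVariance} replaces $\mathbb{E}\|g^j_{w,m}\|^2$ by $(1+C_1)\|\nabla_w f_m\|^2+\tfrac{\sigma_1^2}{B}$. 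Finally, wherever $\tfrac1M\sum_m\|\nabla f_m(w^j_m,\beta^j_m)\|^2$ occurs (both in the quadratic term and inside the drift bound), I would invoke the bounded-dissimilarity Assumption~\ref{assump:BoundedDissimilarity} to replace it by $\lambda\|G^j\|^2+\sigma_{\mathrm{dif}}^2$, so that all remaining ``gradient mass'' on the right-hand side is carried by $\|G^j\|^2$ -- precisely the quantity that the reserved $-\tfrac\eta2\|G^j\|^2$ terms can absorb.

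It remains to assemble. Summing the one-step inequality over $k=0,\dots,K-1$, the descent terms telescope to $\mathbb{E}[F_{\mathrm{virt}}^0]-f^*$; the double sum $\sum_k\sum_{j=k_0}^{k-1}$ in the drift bound is reindexed so that each $j$ is counted at most $\tau$ times, turning $(\tau-1)\sum_k\sum_{j=k_0}^{k-1}(\cdot)_j$ into $(\tau-1)\tau\sum_j(\cdot)_j$. Collecting the total coefficient of $\sum_k\|G^k\|^2$ coming from the quadratic term and the drift term -- it is $\tfrac{L\eta^2}{2}\lambda(\tfrac{C_1}{M}+C_2+1)+\tfrac{L^2\eta^3}{2}\lambda(\tau-1)\tau(C_1+1)$ -- the reserved $-\tfrac\eta2\|G^k\|^2$ dominates it \emph{exactly} when $\eta L\lambda(\tfrac{C_1}{M}+C_2+1)+\eta^2L^2\lambda(\tau-1)\tau(C_1+1)\le1$, i.e.\ under the stepsize condition~\eqref{eq:GeneralNonConvex-eq-1}; the surviving negative term is then $-\tfrac\eta2\sum_k\|G^k_\star\|^2$. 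Moving it to the left, dividing by $\tfrac{\eta K}{2}$, and tracking the noise ($\sigma_{\mathrm{dif}}^2$, $\sigma_1^2/(MB)$, $\sigma_2^2/B$ at order $\eta$ from the quadratic term; $\sigma_{\mathrm{dif}}^2(\tau-1)^2$ and $\sigma_1^2(\tau-1)^2/B$ at order $\eta^2$ from the drift term) yields the stated bound. The main obstacle is the drift bookkeeping -- matching the powers of $\eta$, the factor $(\tau-1)\tau$, and the re-indexing of the double sum, while applying Assumption~\ref{assump:BoundedLocalVariance} exactly once each in the cross term, the quadratic term, and the drift term so nothing is double-counted -- which is what makes condition~\eqref{eq:GeneralNonConvex-eq-1} drop out cleanly.
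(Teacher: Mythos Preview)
Your approach is essentially identical to the paper's: it packages your four steps as Lemmas~\ref{lemma:NonCLemma1}--\ref{lemma:NonCLemma3} and~\ref{lemma:NonCLemma5} (descent inequality for each $f_m$ at the virtual point, variance bound combined with Assumption~\ref{assump:BoundedDissimilarity}, polarization of the cross term, and the consensus/drift estimate), then sums over communication periods and telescopes exactly as you outline, with the stepsize condition~\eqref{eq:GeneralNonConvex-eq-1} arising for the same reason. One small bookkeeping slip: your re-indexing of the double sum yields a factor $(\tau-1)\tau$ on the drift noise terms, but the stated bound has $(\tau-1)^2$; to get the sharper constant, bound the constant part of $\sum_{j=k_0}^{k-1}(\cdot)$ by $(\tau-1)$ \emph{before} summing over $k$ (as the paper does), while reserving the re-indexing only for the $\|G^j\|^2$ contribution.
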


The following assumption is commonly used to characterize non-convex objectives in the literature.

\begin{assumption}[$\mu$-Polyak-Łojasiewicz (PL)]
\label{assump:PL}
There exists a positive constant $\mu>0$, such that for all $w \in \mathbb{R}^{d_0}$ and $\beta_m \in \mathbb{R}^{d_m}$, $m \in [M]$, we have
\begin{equation*}
\frac{1}{2} \left\Vert \frac{1}{M} \sum^{M}_{m=1} \nabla f_m (w,\beta_m) \right\Vert^2 \geq \mu \left( \frac{1}{M}\sum^{M}_{m=1}f_m (w,\beta_m) - f^{*} \right).
\end{equation*}
\end{assumption}

When the local objective functions satisfy the PL-condition, we obtain a faster convergence rate stated in the theorem below.

\begin{theorem}
\label{thm:PLNonConvex}
Suppose that Assumptions~\ref{assump:Smoothness}-\ref{assump:PL} hold.
Suppose $\eta_k=1 / (\mu(k+\beta \tau + 1))$, where $\beta$ is a positive constant satisfying
\begin{equation*}
\beta > \max \left\{ \frac{2 \lambda L}{\mu} \left( \frac{C_1}{M} + C_2 + 1 \right) - 2,  \frac{2 L^2 \lambda (C_1 + 1)}{\mu^2}, 1 \right\}
\end{equation*}
and $\tau$ is large enough such that
\begin{equation*}
\tau \geq \sqrt{\frac{\max\left\{ (2 L^2 \lambda (C_1 + 1) / \mu^2) e^{1/\beta} - 4, 0 \right\}}{\beta^2 - (2 L^2 \lambda (C_1 + 1) / \mu^2) e^{\frac{1}{\beta}}}}.
\end{equation*}
Then
\begin{multline*}
\mathbb{E} \left[ \frac{1}{M} \sum^{M}_{m=1} f_m \left( w^K, \beta^k_m \right) - f^* \right] \\
\leq \frac{b^3}{(K+\beta \tau)^3} \mathbb{E} \left[ \frac{1}{M} \sum^{M}_{m=1} f_m \left( w^0, \beta^0_m \right) - f^* \right] + \frac{2 L^2 (\tau-1)^2 K }{\mu^3 (K + \beta \tau)^3} \left\{ \sigma^2_{\text{dif}} (C_1 +1 ) + \frac{\sigma^2_1}{B} \right\} \\
+ \frac{L K (K + 2\beta \tau + 2)}{4 \mu^2 (K + \beta \tau )^3 } \left\{ \sigma^2_{\text{dif}} \left( \frac{C_1}{M} + C_2 + 1 \right) + \frac{\sigma^2_1}{MB} + \frac{\sigma^2_2}{B} \right\}.
\end{multline*}
\end{theorem}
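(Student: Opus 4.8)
The plan is to redo the one-step analysis underlying Theorem~\ref{thm:GeneralNonConvex}, this time retaining the descent term, converting it into a geometric contraction via the PL inequality, and unrolling the resulting scalar recursion with the prescribed harmonic stepsize. Write $\Phi^k\eqdef\frac1M\sum_{m=1}^M f_m(w^k,\beta_m^k)-f^*$, where $w^k\eqdef\frac1M\sum_{m=1}^M w_m^k$ is the virtual averaged iterate and $g^k\eqdef\frac1M\sum_{m=1}^M\nabla f_m(w^k,\beta_m^k)$ is the full gradient evaluated at the virtual point. The target bound has the familiar three-term shape of a Local-SGD-under-PL estimate: a fast-decaying initial-condition term, an intermediate drift term scaling with $(\tau-1)^2$, and the dominant $1/K$ variance term; each hypothesis on $\beta$ and $\tau$ serves to close one piece of the recursion.

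First I would derive the one-step inequality. Applying $L$-smoothness (Assumption~\ref{assump:Smoothness}) to the update $(w^{k+1},\beta^{k+1})$ and taking conditional expectation, using unbiasedness and the relative-variance bounds of Assumption~\ref{assump:BoundedLocalVariance}, gives
\begin{equation*}
\mathbb{E}_k[\Phi^{k+1}]\le\Phi^k-\tfrac{\eta_k}{2}\|g^k\|^2+c_1 L^2\eta_k\cdot\frac1M\sum_{m=1}^M\|w_m^k-w^k\|^2+\eta_k^2 V_k+\eta_k\rho_k,
\end{equation*}
where $V_k$ collects the additive-noise contributions $\tfrac{\sigma_1^2}{MB},\tfrac{\sigma_2^2}{B}$ and $\rho_k$ collects the $C_1,C_2$-weighted gradient-norm contributions produced by the relative-variance model; this is exactly the inequality from the proof of Theorem~\ref{thm:GeneralNonConvex} before the descent term was dropped. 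The second ingredient is a consensus/drift estimate for $\tfrac1M\sum_m\mathbb{E}\|w_m^k-w^k\|^2$: since the local copies are re-synchronized whenever $k\equiv0\pmod\tau$ and then drift for at most $\tau-1$ further steps, unrolling the local updates and invoking Assumption~\ref{assump:BoundedDissimilarity} yields a bound of the form $\tfrac1M\sum_m\mathbb{E}\|w_m^k-w^k\|^2\le c_2(\tau-1)\sum_{j}\eta_j^2\bigl(\lambda\mathbb{E}\|g^j\|^2+\sigma_{\mathrm{dif}}^2(C_1+1)+\tfrac{\sigma_1^2}{B}\bigr)$, the sum taken over the (at most $\tau-1$) indices $j$ in the current communication block. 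Because the stepsizes vary, one must control $\eta_j/\eta_k$ over a window of length $\le\tau$; with $\eta_k=1/(\mu(k+\beta\tau+1))$ this ratio is at most $e^{1/\beta}$, which is precisely the source of the $e^{1/\beta}$ factors in the hypotheses on $\beta$ and $\tau$.

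Substituting the drift estimate into the one-step inequality, applying the PL inequality (Assumption~\ref{assump:PL}) in the form $\|g^k\|^2\ge2\mu\Phi^k$, and using the lower bounds on $\beta$ and $\tau$ — which ensure that the positive multiples of $\mathbb{E}\|g^k\|^2$ generated by $\rho_k$ and by the drift term are absorbed into the $-\tfrac{\eta_k}{2}\|g^k\|^2$ margin, in the same spirit as condition~\eqref{eq:GeneralNonConvex-eq-1} — produces a contraction recursion
\begin{equation*}
\mathbb{E}[\Phi^{k+1}]\le(1-c\,\mu\eta_k)\,\mathbb{E}[\Phi^k]+\eta_k^2 A_1+\eta_k^3(\tau-1)^2 A_2,
\end{equation*}
valid for every $k$ and some absolute constant $c>0$, with $A_1\propto\sigma_{\mathrm{dif}}^2(\tfrac{C_1}{M}+C_2+1)+\tfrac{\sigma_1^2}{MB}+\tfrac{\sigma_2^2}{B}$ and $A_2\propto\sigma_{\mathrm{dif}}^2(C_1+1)+\tfrac{\sigma_1^2}{B}$. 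The remainder is bookkeeping: set $b\eqdef\beta\tau+1$, multiply the recursion by a polynomial weight $\omega_k$ matched to the contraction (so that $\omega_{k+1}(1-c\,\mu\eta_k)\le\omega_k$), telescope from $0$ to $K-1$, and divide by $\omega_K$. The homogeneous part collapses to the $\tfrac{b^3}{(K+\beta\tau)^3}\Phi^0$ term; the weighted sum of the $\eta_k^3(\tau-1)^2A_2$ contributions gives the $\tfrac{2L^2(\tau-1)^2K}{\mu^3(K+\beta\tau)^3}$ term; and the weighted sum of the $\eta_k^2A_1$ contributions gives the dominant $\tfrac{LK(K+2\beta\tau+2)}{4\mu^2(K+\beta\tau)^3}$ term after elementary estimates on $\sum_k(k+b)$ and $\sum_k(k+b)^2$.

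I expect the main obstacle to be closing the recursion under the \emph{relative} variance model of Assumption~\ref{assump:BoundedLocalVariance}: because the stochastic-gradient noise is proportional to $\|\nabla f_m\|^2$ rather than uniformly bounded, both $\rho_k$ and the drift estimate feed gradient-norm terms back into the main inequality, so the argument closes only if $\eta_k$ is quantitatively small enough — equivalently, $\beta$ and $\tau$ large enough — to absorb all of these terms uniformly in $k$ once the bounded-dissimilarity constant $\lambda$ and the window-dependent stepsize ratios are accounted for; isolating a single pair of conditions on $\beta$ and $\tau$ that works for every $k$ is the delicate step. Once the recursion constants are pinned down, the PL substitution and the weighted telescoping are routine, and the weakly coupled $w/\beta$ structure causes no extra difficulty since each $f_m$ depends only on its own block $\beta_m$.
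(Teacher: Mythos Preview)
Your high-level plan and the weighted-telescoping endgame match the paper, but the central step has a real gap. The drift bound you state correctly involves $\sum_{j}\eta_j^2\,\mathbb{E}\|g^j\|^2$ with the sum over \emph{earlier} indices $j$ in the current communication block. You then claim that ``the positive multiples of $\mathbb{E}\|g^k\|^2$ generated by $\rho_k$ and by the drift term are absorbed into the $-\tfrac{\eta_k}{2}\|g^k\|^2$ margin,'' leading to a clean per-step recursion $\mathbb{E}[\Phi^{k+1}]\le(1-c\mu\eta_k)\mathbb{E}[\Phi^k]+\eta_k^2A_1+\eta_k^3(\tau-1)^2A_2$ with \emph{constants} $A_1,A_2$. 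But the drift feeds back $\|g^j\|^2$ for $j<k$, not $\|g^k\|^2$; there is no way to cancel past gradient norms against the current descent term in a single step, and in the nonconvex setting you have no upper bound $\|g^j\|^2\lesssim\Phi^j$ to convert them into Lyapunov values either. So the per-step recursion you write down simply does not follow from the ingredients you list.

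The paper does not attempt a per-step recursion. Instead it keeps the cross-block term explicitly, writing
\[
a_{k+1}\le\Delta_k a_k+\tfrac{\eta_k}{2}(-1+D\eta_k)e_k+B_k\sum_{t=k_p}^{k-1}\eta_t^2 e_t+c_k,
\]
with $e_t=\mathbb{E}\|\tfrac1M\sum_m\nabla f_m(\theta_m^t)\|^2$, and then proves a separate lemma (Lemma~\ref{lemma:NonCLemma6}) that unrolls this inequality over one full block $k_p\le k\le v_p$: under the cascade of stepsize conditions $\eta_{k}\le\bigl(D+2(\cdots)/\prod\Delta_i\bigr)^{-1}$, every $e_t$ coefficient becomes nonpositive when the block is telescoped, yielding $a_{v_p+1}\le(\prod_k\Delta_k)a_{k_p}+\sum_k(\prod_i\Delta_i)c_k$. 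A companion lemma (Lemma~\ref{lemma:NonCLemma7}) then shows the harmonic schedule $\eta_k=1/(\mu(k+\beta\tau+1))$ meets this cascade precisely when $\beta$ and $\tau$ satisfy the stated bounds; the $e^{1/\beta}$ factor you mention arises here, from bounding $(1-\eta_1\mu)^{-(\tau-1)}$. Only after the $e_t$ terms have been eliminated block-by-block does the paper introduce the weight $z_k=(k+b)^2$ and telescope across blocks, using $\Delta_k z_k/\eta_k\le z_{k-1}/\eta_{k-1}$.

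In short: your telescoping and the final arithmetic are the same as the paper's, but the passage from ``drift bound plus PL'' to the clean recursion is not a substitution---it requires a block-level cancellation argument that you have not supplied.
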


The proofs of Theorem~\ref{thm:GeneralNonConvex} and Theorem~\ref{thm:PLNonConvex} can be found in Appendix~\ref{sec:proof-nonconvex-LSGD-PFL}.

\subsection{Proof of Theorem~\ref{thm:local} and Theorem~\ref{thm:local2}}
\label{sec:proof-thmlocal-thmlocal2}

The main idea consists of invoking the framework for analyzing local SGD methods introduced in \citet{gorbunov2020local} with several minor modifications. In particular, Algorithm~\ref{Alg:PersonalFL} is an intriguing method that runs a local SGD on $w$-parameters and SGD on $\beta$-parameters. Therefore, we shall treat these parameter sets differently. Define $V_k\eqdef \frac1M \sum_{m=1}^M \|w^k - w_m^k\|^2$ where $w^k\eqdef \frac1M \sum_{m=1}^M w_m^k$ is defined as in Theorem~\ref{thm:GeneralNonConvex}.

The first step towards the convergence rate is to figure out the parameters of Assumption 2.3 from~\citet{gorbunov2020local}. The following lemma is an analog of Lemma G.1 in~\citet{gorbunov2020local}.

\begin{lemma}\label{lem:dhkabdsbhja}
Let Assumptions~\ref{as:smooth_sc} and~\ref{as:variance} hold. 
Let $L = \max\{L^w, L^\beta\}$. Then, we have:
\begin{equation}
\label{eq:dnaossniadd}
\frac{1}{M}\sum\limits_{m=1}^M\|\nabla_w f_m(w_m^k, \beta_m^k)\|^2 
\leq 	6L^w\left(f(w^k, \beta_m^k) - f(w^*, \beta^*)\right) + 3(L^w)^2 V_k + 3\zeta_*^2, 
\end{equation}
and
\begin{multline}
\label{eq:vdgasvgda}
\left\|\frac{1}{M}\sum\limits_{m=1}^M\nabla_w f_m(w_m^k, \beta_m^k)\right\|^2
+ \frac{1}{M^2} \sum\limits_{m=1}^M \left\|\nabla_\beta f_m(w_m^k, \beta_m^k)\right\|^2 
\leq  4L\left(f(w^k, \beta_m^k) - f(w^*, \beta^*)\right) + 2(L^w)^2 V_k.
\end{multline}
\end{lemma}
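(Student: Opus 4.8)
Both bounds \eqref{eq:dnaossniadd} and \eqref{eq:vdgasvgda} are ``expected-smoothness''-type estimates: a squared (partial) gradient evaluated at the \emph{local} iterates $(w_m^k,\beta_m^k)$ is controlled by the functional suboptimality $f(w^k,\beta^k)-f(w^\star,\beta^\star)$ at the virtual iterate, plus the consensus error $V_k$, plus --- for \eqref{eq:dnaossniadd} only --- the heterogeneity constant $\zeta_\star^2$. The plan is to combine three ingredients: (i) Young's inequality to split each $\nabla_w f_m(w_m^k,\beta_m^k)$ into its value at $(w^k,\beta_m^k)$ plus a drift term that is $O\bigl((L^w)^2 V_k\bigr)$ via the $L^w$-smoothness of $f_m$ in $w$ and the definition of $V_k$; (ii) a block co-coercivity lemma for convex, block-smooth functions; and (iii) the first-order optimality conditions at the joint minimizer, $\tfrac1M\sum_m\nabla_w f_m(w^\star,\beta_m^\star)=\nabla_w F(w^\star,\beta^\star)=0$ and $\nabla_{\beta_m}f_m(w^\star,\beta_m^\star)=M\nabla_{\beta_m}F(w^\star,\beta^\star)=0$ for every $m$. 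These conditions are exactly what makes $\zeta_\star^2=\tfrac1M\sum_m\|\nabla_w f_m(w^\star,\beta_m^\star)\|^2$ the right quantity (the $\beta$-part of $\nabla f_m$ vanishes at the optimum, so $\|\nabla_w f_m(w^\star,\beta_m^\star)\|=\|\nabla f_m(w^\star,\beta^\star)\|$), and what lets the Bregman cross terms cancel after averaging over $m$.

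\textbf{Block co-coercivity.} First I would establish the auxiliary fact: if $g(u,v)$ is jointly convex and $L_1$-smooth in $u$, then for all $(u,v),(u',v')$,
\[
\tfrac{1}{2L_1}\bigl\|\nabla_u g(u,v)-\nabla_u g(u',v')\bigr\|^2\le g(u,v)-g(u',v')-\langle\nabla_u g(u',v'),u-u'\rangle-\langle\nabla_v g(u',v'),v-v'\rangle ,
\]
the right side being the full Bregman divergence of $g$ at $(u,v)$ relative to $(u',v')$. This follows by applying the descent lemma to the convex, $L_1$-smooth map $\phi(z):=g(z,v)-\langle\nabla_u g(u',v'),z-u'\rangle$ (whose gradient is $\nabla_u g(z,v)-\nabla_u g(u',v')$, so $\inf_z\phi\le\phi(u)-\tfrac{1}{2L_1}\|\nabla\phi(u)\|^2$) together with the lower bound, valid for every $z$ by joint convexity of $g$, $\phi(z)\ge g(u',v')+\langle\nabla_v g(u',v'),v-v'\rangle$.

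\textbf{Deriving \eqref{eq:dnaossniadd}.} I would use $\|a\|^2\le 3\|a-b\|^2+3\|b-c\|^2+3\|c\|^2$ with $a=\nabla_w f_m(w_m^k,\beta_m^k)$, $b=\nabla_w f_m(w^k,\beta_m^k)$, $c=\nabla_w f_m(w^\star,\beta_m^\star)$, then average over $m$. The $a$--$b$ term is $\le 3(L^w)^2 V_k$ by $L^w$-smoothness of $f_m$ in $w$; the $c$ term equals $3\zeta_\star^2$ because $\nabla_{\beta_m}f_m(w^\star,\beta_m^\star)=0$; for the $b$--$c$ term I apply the co-coercivity lemma to $g=f_m$ at $(w^k,\beta_m^k)$ vs.\ $(w^\star,\beta_m^\star)$ and average, whereupon $\tfrac1M\sum_m\langle\nabla_w f_m(w^\star,\beta_m^\star),w^k-w^\star\rangle=\langle\nabla_w F(w^\star,\beta^\star),w^k-w^\star\rangle=0$ and $\tfrac1M\sum_m\langle\nabla_{\beta_m}f_m(w^\star,\beta_m^\star),\beta_m^k-\beta_m^\star\rangle=0$ termwise, leaving $\tfrac1M\sum_m\|b-c\|^2\le 2L^w\bigl(f(w^k,\beta^k)-f(w^\star,\beta^\star)\bigr)$; multiplying by $3$ finishes it.

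\textbf{Deriving \eqref{eq:vdgasvgda}, and the main obstacle.} Here I would decompose $\tfrac1M\sum_m\nabla_w f_m(w_m^k,\beta_m^k)=\nabla_w F(w^k,\beta^k)+\delta_w$ with $\|\delta_w\|^2\le (L^w)^2 V_k$ (Jensen plus smoothness), and $\nabla_{\beta_m}f_m(w_m^k,\beta_m^k)=\nabla_{\beta_m}f_m(w^k,\beta_m^k)+\delta_{\beta_m}$, noting $\tfrac1{M^2}\sum_m\|\nabla_{\beta_m}f_m(w^k,\beta_m^k)\|^2=\|\nabla_\beta F(w^k,\beta^k)\|^2$. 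Applying the co-coercivity lemma to $F$ itself at $(w^k,\beta^k)$ vs.\ the minimizer $(w^\star,\beta^\star)$ --- where \emph{all} cross terms vanish since $\nabla F(w^\star,\beta^\star)=0$, so no $\zeta_\star^2$ appears --- gives $\|\nabla_w F(w^k,\beta^k)\|^2\le 2L^w(f-f^\star)$ and $\|\nabla_\beta F(w^k,\beta^k)\|^2\le 2L^\beta(f-f^\star)$, hence the virtual-iterate contribution is $\le 2(L^w+L^\beta)(f-f^\star)\le 4L(f-f^\star)$; combining with $\delta_w,\delta_{\beta_m}$ via Young's inequality yields \eqref{eq:vdgasvgda}. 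The delicate part --- and the real content of the lemma --- is the constant bookkeeping. One must route the decomposition through the virtual iterate $w^k$ rather than directly through the optimum: otherwise a residual $\tfrac1M\sum_m\langle\nabla_w f_m(w^\star,\beta_m^\star),w_m^k-w^k\rangle$ survives, forcing a $\zeta_\star^2$ term that \eqref{eq:vdgasvgda} does not allow. One must also verify that the drift of the \emph{$\beta$}-gradients in their $w$-argument stays within the stated $(L^w)^2$-type budget, using that varying $w$ moves $\nabla_{\beta_m}f_m$ at a rate governed by the block-smoothness constants (directly if $L^w$-smoothness is read for the full gradient of $f_m$, otherwise via the cross-block Hessian bound $\|\nabla^2_{w\beta}f_m\|\le\sqrt{L^w\cdot ML^\beta}$) together with the $1/M^2$ weighting that renders this term lower order; tracking the Young constants so the final coefficients land at $3(L^w)^2,6L^w,3\zeta_\star^2$ and $4L,2(L^w)^2$ is where the care is required.
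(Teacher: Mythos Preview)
Your argument for \eqref{eq:dnaossniadd} is exactly the paper's: the same three-term split through $(w^k,\beta_m^k)$ and $(w^\star,\beta_m^\star)$, $L^w$-Lipschitzness of $\nabla_w f_m$ for the drift term, block co-coercivity for the middle term (whose averaged linear part vanishes by precisely the optimality conditions you spell out), and the identification of the last term with $\zeta_\star^2$.

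For \eqref{eq:vdgasvgda} the paper follows the same template but with one packaging difference: it applies block co-coercivity to each $f_m$ at $(w^k,\beta_m^k)$ versus $(w^\star,\beta_m^\star)$ and then uses that
\[
\tfrac1M\sum_{m} D_{f_m}\bigl((w^k,\beta_m^k),(w^\star,\beta_m^\star)\bigr)=F(w^k,\beta^k)-F(w^\star,\beta^\star)
\]
(the linear terms cancel for the same optimality reasons you invoke), whereas you apply co-coercivity directly to $F$. Since the averaged $f_m$-Bregman and the $F$-Bregman coincide at these points, the two routes are equivalent. Your caution about the $\beta$-drift term and the exact constants is well placed: the paper's own derivation of \eqref{eq:vdgasvgda} is terse and visibly typo-ridden at exactly that step, and a clean accounting---whether via the cross-Hessian bound $\|\nabla^2_{w\beta}f_m\|\le\sqrt{L^w\cdot ML^\beta}$ you suggest, or via the individual-$f_m$ route---naturally lands on a $V_k$-coefficient of order $L^w L$ rather than $(L^w)^2$ when $L^\beta>L^w$, and on a functional coefficient $4(L^w+L^\beta)$ before any slackening to $L$. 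None of this affects how the lemma is used downstream (it feeds into the stepsize constraint, not the rate), so the approach is sound even if the displayed constants are optimistic.
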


The proof is given in Appendix~\ref{sec:proof-lemma-dhkabdsbhja}. Using Lemma~\ref{lem:dhkabdsbhja} we recover a set of crucial parameters of Assumption 2.3 from~\citet{gorbunov2020local}.

\begin{lemma}
\label{lem:dhkabdsbhja_2}
Let $g_{w,m}^k \eqdef (g_m^k)_{1:d_0}$ and $g_{\beta,m}^k \eqdef (g_m^k)_{(d_0+1):(d_0+ d_m)}$. 
Then
\begin{align}
   \frac{1}{M}\sum\limits_{m=1}^M\E{\|g_{w,m}^k\|^2}  \leq 	6L^w\left(f(w^k, \beta_m^k) - f(w^*, \beta^*)\right) + 3(L^w)^2 V_k +\frac{\sigma^2}{B} +  3\zeta_*^2, \label{eq:dnaossniadd2}
\end{align}   
and
\begin{equation}
\label{eq:vdgasvgda2}
\E{\left\|\frac{1}{M}\sum\limits_{m=1}^M g_{w,m}^k \right\|^2 + \frac{1}{M^2} \sum\limits_{m=1}^M \left\|g_{\beta,m}^k\right\|^2 }
\leq 4L\left(f(w^k, \beta_m^k) - f(w^*, \beta^*)\right) + 2(L^w)^2 V_k + \frac{2\sigma^2}{BM}.
\end{equation}
\end{lemma}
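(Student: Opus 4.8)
The plan is to derive Lemma~\ref{lem:dhkabdsbhja_2} directly from Lemma~\ref{lem:dhkabdsbhja} by incorporating the minibatch stochastic gradient noise. The key observation is that $g_m^k$ is an unbiased estimator of $\nabla f_m(w_m^k, \beta_m^k)$ (by Assumption~\ref{as:variance}), so for any random vector with mean $\mu_g$ we have the standard bias-variance split $\E{\|g\|^2} = \|\mu_g\|^2 + \E{\|g - \mu_g\|^2}$. First I would split the coordinates of $g_m^k$ into the $w$-block $g_{w,m}^k$ and the $\beta$-block $g_{\beta,m}^k$, which are unbiased estimators of $\nabla_w f_m(w_m^k,\beta_m^k)$ and $\nabla_\beta f_m(w_m^k,\beta_m^k)$ respectively; their variances, using the minibatch of size $B$ in Assumption~\ref{as:variance}, are controlled by $\sigma^2/B$ and (in aggregate over $m$) $M\sigma^2/B$.

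For~\eqref{eq:dnaossniadd2}, I would write
\[
\frac{1}{M}\sum_{m=1}^M \E{\|g_{w,m}^k\|^2} = \frac{1}{M}\sum_{m=1}^M \|\nabla_w f_m(w_m^k,\beta_m^k)\|^2 + \frac{1}{M}\sum_{m=1}^M \E{\|g_{w,m}^k - \nabla_w f_m(w_m^k,\beta_m^k)\|^2},
\]
bound the first sum by the right-hand side of~\eqref{eq:dnaossniadd} from Lemma~\ref{lem:dhkabdsbhja}, and bound the second sum by $\sigma^2/B$ using the variance bound on the $w$-stochastic gradient (the minibatch of size $B$ reduces the per-sample variance $\sigma^2$ by a factor $B$; I will invoke the standard minibatching variance-reduction fact). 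Adding the two bounds gives exactly~\eqref{eq:dnaossniadd2}.

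For~\eqref{eq:vdgasvgda2}, the structure is similar but one has to handle the average of the $w$-gradients and the $\beta$-gradients together. For the first term I would use
\[
\E{\Bigl\|\tfrac{1}{M}\sum_{m=1}^M g_{w,m}^k\Bigr\|^2} = \Bigl\|\tfrac{1}{M}\sum_{m=1}^M \nabla_w f_m(w_m^k,\beta_m^k)\Bigr\|^2 + \E{\Bigl\|\tfrac{1}{M}\sum_{m=1}^M\bigl(g_{w,m}^k - \nabla_w f_m(w_m^k,\beta_m^k)\bigr)\Bigr\|^2},
\]
and since the noises across devices are independent (the $\xi^k_{j,m}$ are sampled independently), the cross terms vanish, so the last term equals $\tfrac{1}{M^2}\sum_m \E{\|g_{w,m}^k - \nabla_w f_m\|^2} \le \tfrac{1}{M^2}\cdot M \cdot \tfrac{\sigma^2}{B} = \tfrac{\sigma^2}{MB}$. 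For the $\beta$-block I would write $\tfrac{1}{M^2}\sum_m \E{\|g_{\beta,m}^k\|^2} = \tfrac{1}{M^2}\sum_m \|\nabla_\beta f_m\|^2 + \tfrac{1}{M^2}\sum_m \E{\|g_{\beta,m}^k - \nabla_\beta f_m\|^2}$, where the variance sum is bounded by $\tfrac{1}{M^2}\cdot M\sigma^2/B = \tfrac{\sigma^2}{MB}$ using the aggregate bound $\E{\sum_m \|\nabla_\beta \hat f_m - \nabla_\beta f_m\|^2}\le M\sigma^2$ together with the $B$-minibatch reduction. Summing the two deterministic pieces and applying~\eqref{eq:vdgasvgda} from Lemma~\ref{lem:dhkabdsbhja}, then adding the two variance contributions $\tfrac{\sigma^2}{MB} + \tfrac{\sigma^2}{MB} = \tfrac{2\sigma^2}{MB}$, yields~\eqref{eq:vdgasvgda2}.

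The only mild subtlety — and the step I would be most careful about — is the bookkeeping of the minibatch variance reduction and the independence across clients: one must confirm that averaging $B$ independent per-sample gradients divides the variance by $B$ (true since the $\xi^k_{j,m}$ are i.i.d.\ within a client), and that Assumption~\ref{as:variance} already states the variance bounds for the full stochastic gradient $\hat f_m$ so that, strictly speaking, the per-minibatch gradient $g_m^k$ inherits variance at most $\sigma^2/B$ in the $w$-block and aggregate $M\sigma^2/B$ in the $\beta$-block. Everything else is a direct substitution of Lemma~\ref{lem:dhkabdsbhja} into the bias-variance decomposition, so the argument is short.
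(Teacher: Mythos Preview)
Your proposal is correct and essentially identical to the paper's own proof: the paper also applies the bias--variance decomposition $\E{\|g\|^2}=\|\E{g}\|^2+\E{\|g-\E{g}\|^2}$ to each block, bounds the deterministic pieces by invoking~\eqref{eq:dnaossniadd} and~\eqref{eq:vdgasvgda} from Lemma~\ref{lem:dhkabdsbhja}, and bounds the variance pieces by $\sigma^2/B$ and $\sigma^2/(MB)$ respectively using Assumption~\ref{as:variance} with the minibatch reduction. Your explicit remark that independence across devices is what turns $\E{\|\tfrac{1}{M}\sum_m(g_{w,m}^k-\nabla_w f_m)\|^2}$ into $\tfrac{1}{M^2}\sum_m\E{\|\cdot\|^2}\le \sigma^2/(MB)$ is a step the paper uses but leaves implicit, so your write-up is, if anything, slightly more careful.
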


The proof is given in Appendix~\ref{sec:proof-lemma-dhkabdsbhja_2}. Finally, the following lemma is an analog of Lemma E.1 in \citet{gorbunov2020local} and gives us the remaining parameters of Assumption 2.3 therein.

\begin{lemma}
\label{lem:dhkabdsbhja_4}
Suppose that Assumptions~\ref{as:smooth_sc} and~\ref{as:variance} hold and
\[
\eta \leq  \frac{1}{8\sqrt{3e}(\tau-1)L^w} .
\]
Then
\begin{multline}
\label{eq:sum_V_k_bounds}
2L^w\sum\limits_{k=0}^K (1-\eta \mu)^{-k-1}\E{V_k} \leq \frac{1}{2}\sum\limits_{k=0}^K (1-\eta \mu)^{-k-1}\E{F(w^k, \beta^k) - F(w^*, \beta^*)} 
+ 2L^wD\eta^2\sum\limits_{k=0}^K (1-\eta \mu)^{-k-1},
\end{multline}
where 
\[D = 2e(\tau-1)\tau \left(3\zeta_*^2  + \frac{\sigma^2}{B} \right).\]
\end{lemma}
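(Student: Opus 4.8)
The plan is to mimic the proof of Lemma E.1 in~\citet{gorbunov2020local}, adapted to the fact that only the $w$-block of the iterates is subject to the consensus distortion $V_k$ (the $\beta$-block is never averaged across devices and so never leaves its ``local'' value). First I would write a one-step recursion for $V_k$. Since between communication rounds each device updates $w_m^{k+1} = w_m^k - \eta g_{w,m}^k$ and the virtual iterate satisfies $w^{k+1} = w^k - \eta \cdot \frac1M\sum_m g_{w,m}^k$, we have $w^{k+1}-w_m^{k+1} = (w^k - w_m^k) - \eta\big(\frac1M\sum_{m'} g_{w,m'}^k - g_{w,m}^k\big)$, and $V_k = 0$ whenever $k$ is a multiple of $\tau$. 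Unrolling over a block of length at most $\tau-1$ and using $\|\sum_{j} a_j\|^2 \le r\sum_j\|a_j\|^2$ together with the variance bound from Assumption~\ref{as:variance}, one bounds $\E{V_k}$ in terms of $\eta^2$ times the sum, over the steps $\ell$ in the current block, of $\frac1M\sum_m \E{\|g_{w,m}^\ell\|^2}$. The factor $2e$ in the definition of $D$ comes from the standard estimate $(1-\eta\mu)^{-\tau} \le e$ (valid under the stepsize restriction, assuming $\eta\mu\tau$ is controlled, as in~\citet{gorbunov2020local}) used to relate terms at different indices within the same block, together with the $(\tau-1)\tau$ from double-counting the block length.

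Next I would substitute the bound~\eqref{eq:dnaossniadd2} from Lemma~\ref{lem:dhkabdsbhja_2} for $\frac1M\sum_m \E{\|g_{w,m}^\ell\|^2}$, which produces terms of three types: a multiple of $\eta^2 (\tau-1)\tau$ times $L^w\big(f(w^\ell,\beta^\ell) - f(w^\star,\beta^\star)\big)$, a multiple of $\eta^2 (\tau-1)\tau (L^w)^2$ times $V_\ell$, and the ``noise floor'' $\eta^2(\tau-1)\tau\big(3\zeta_*^2 + \sigma^2/B\big)$, i.e.\ exactly $\eta^2 D/(2e)\cdot 2e = \eta^2 D$ after collecting constants. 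Multiplying through by $2L^w (1-\eta\mu)^{-k-1}$ and summing over $k=0,\dots,K$, the $V_\ell$ term on the right is a \emph{self-referential} term $c\,\eta^2 (L^w)^2 (\tau-1)\tau \sum_k (1-\eta\mu)^{-k-1}\E{V_k}$ with an absolute constant $c$; this is where the stepsize condition $\eta \le \frac{1}{8\sqrt{3e}(\tau-1)L^w}$ is used — it makes $c\,\eta^2 (L^w)^2 (\tau-1)\tau \le \tfrac12$ (the $3e$ under the square root and the $8$ are tuned precisely so that both this absorption and the $e$-bound on $(1-\eta\mu)^{-\tau}$ go through), so the $V$-term can be moved to the left-hand side and absorbed, leaving at most the original $2L^w\sum_k(1-\eta\mu)^{-k-1}\E{V_k}$ with a harmless coefficient $\le \tfrac12$ of itself, hence a net factor $\le 1$ — or, more carefully, one keeps a full copy on the left and a $\tfrac12$-copy on the right and rearranges. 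The remaining $f$-difference term, again after choosing the constant in $\eta$ small enough, becomes at most $\tfrac12 \sum_k (1-\eta\mu)^{-k-1}\E{F(w^k,\beta^k)-F(w^\star,\beta^\star)}$, which is exactly the first term on the right of~\eqref{eq:sum_V_k_bounds}; and the noise floor, after the $2L^w$ and the geometric weights, becomes $2L^w D \eta^2 \sum_k (1-\eta\mu)^{-k-1}$, the second term.

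The main obstacle is bookkeeping the interaction between the geometric weights $(1-\eta\mu)^{-k-1}$ and the block structure: a term $\E{V_k}$ at index $k$ is controlled by $\E{\|g_{w,m}^\ell\|^2}$ at earlier indices $\ell$ in the same block, so after weighting by $(1-\eta\mu)^{-k-1}$ one must shift indices and pay a factor $(1-\eta\mu)^{-(k-\ell)} \le (1-\eta\mu)^{-(\tau-1)} \le e$ to re-express everything at a common index; doing this uniformly and verifying that the accumulated constants still satisfy the two inequalities forced by $\eta \le \frac{1}{8\sqrt{3e}(\tau-1)L^w}$ (namely the $\le \tfrac12$ absorption of the $V$-term and the $\le \tfrac12$ absorption of the $f$-difference term) is the delicate part. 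Everything else is the routine $\|\sum a_j\|^2 \le r\sum\|a_j\|^2$ / Jensen / tower-rule manipulation already present in~\citet{gorbunov2020local}, specialized so that the $\beta$-coordinates simply do not contribute to $V_k$. I would conclude by citing Lemma E.1 of~\citet{gorbunov2020local} for the template and pointing to Appendix~\ref{sec:proof-lemma-dhkabdsbhja_4} for the constant-tracking.
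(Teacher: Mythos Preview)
Your proposal is correct and matches the paper's approach exactly: the paper's proof of this lemma simply states that it is identical to the proof of Lemma~E.1 in~\citet{gorbunov2020local}, with the single modification of invoking inequality~\eqref{eq:dnaossniadd2} (from Lemma~\ref{lem:dhkabdsbhja_2}) in place of their Assumption~E.1, and you have spelled out precisely this template, including the block-unrolling, the $(1-\eta\mu)^{-\tau}\le e$ shift, the self-referential $V$-term absorption via the stepsize bound, and the observation that only the $w$-coordinates contribute to $V_k$.
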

The proof is given in Appendix~\ref{sec:proof-lemma-dhkabdsbhja_4}. With these preliminary results, we are ready to state the main convergence result for Algorithm~\ref{Alg:PersonalFL}.

\begin{theorem}\label{thm:main_result}
Suppose that Assumptions~\ref{as:smooth_sc} and~\ref{as:variance} hold and the stepsize $\eta$ satisfies 
\[
0< 	\eta \le \min\left\{\frac{1}{4L^\beta}, \frac{1}{8\sqrt{3e}(\tau-1)L^w} \right\}.
\]
Define 
\begin{align*}
(\overline{w}^K, \overline{\beta}^K) &\eqdef 
\left(\sum\limits_{k=0}^K (1-\eta \mu)^{-k-1}\right)^{-1}
\sum_{k=0}^K (1-\eta \mu)^{-k-1} (w^k, \beta^k),\\
\Phi^0 &\eqdef \frac{2 \|w^0 - w^* \|^2  + \sum_{m=1}^M \|\beta_m^0 - \beta_m^*\|^2}{\eta}, \text{ and}\\
\Psi^0 &\eqdef \frac{2\sigma^2}{BM} + 8L^w\eta e(\tau-1)\tau  \left(3\zeta^2_* + \frac{\sigma^2}{B} \right).
\end{align*}
Then, if $\mu>0$, we have
\begin{align}
\E{f(\overline{w}^K, \overline{\beta}^K)}- f(w^*, \beta^*) \le& \left(1-\eta \mu\right)^K \Phi^0+ \eta \Psi^0, \label{eq:main_result_1}
\end{align}
while, in the case when $\mu = 0$, we have
	\begin{align}
		\E{f(\overline{w}^K, \overline{\beta}^K)}- f(w^*, \beta^*)\le& \frac{\Phi^0}{ K}  + \eta \Psi^0. \label{eq:main_result_2} 
	\end{align}
\end{theorem}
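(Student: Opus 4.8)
The plan is to instantiate the general local-SGD analysis of \citet{gorbunov2020local} for Algorithm~\ref{Alg:PersonalFL}, which is the unusual hybrid that runs a \emph{local} SGD on the shared block $w$ and an ordinary minibatch SGD on each personalized block $\beta_m$. I work with the virtual iterate $w^k = \frac1M\sum_{m=1}^M w^k_m$ together with the true local iterates $\beta^k_m$, and track the asymmetrically weighted squared distance
\[
r_k \eqdef 2\|w^k-w^*\|^2 + \sum_{m=1}^M\|\beta^k_m-\beta^*_m\|^2 ,
\]
so that $\eta\Phi^0 = r_0$. The weight $2$ on the $w$-part, together with the $M^{-2}$ weighting inside~\eqref{eq:vdgasvgda2}, is what lets the $w$- and $\beta$-contributions of the one-step recursion combine with matching constants; intuitively the $\beta$-block moves at effective stepsize $\eta M$ relative to $F$ while the $w$-block moves at stepsize $\eta$, and the chosen weighting rebalances the two.

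First I would write out the one-step recursion. Averaging the $w_m$-updates gives $w^{k+1}=w^k-\eta\,\frac1M\sum_m g^k_{w,m}$, and $\beta^{k+1}_m=\beta^k_m-\eta g^k_{\beta,m}$; expanding $r_{k+1}$, taking the conditional expectation, and using unbiasedness from Assumption~\ref{as:variance}, the linear term is a combination of $\langle w^k-w^*,\,\frac1M\sum_m\nabla_w f_m(w^k_m,\beta^k_m)\rangle$ and $\sum_m\langle\beta^k_m-\beta^*_m,\,\nabla_\beta f_m(w^k_m,\beta^k_m)\rangle$, in which the gradients sit at the \emph{local} copies $w^k_m$ rather than at the virtual iterate $w^k$. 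I would swap $w^k_m\to w^k$ using $L^w$-smoothness and Young's inequality, paying an error proportional to $(L^w)^2\eta V_k$ with $V_k=\frac1M\sum_m\|w^k-w^k_m\|^2$, and then use convexity and $\mu$-strong convexity of $F$ to turn the remaining inner product into $-c\,\eta\,(F(w^k,\beta^k)-F^*) - \eta\mu\,r_k$ up to lower-order terms, where $F^*\eqdef F(w^*,\beta^*)$. For the quadratic part, the combination $2\eta^2\|\frac1M\sum_m g^k_{w,m}\|^2+\eta^2\sum_m\|g^k_{\beta,m}\|^2$ produced by the expansion is bounded by~\eqref{eq:vdgasvgda2} of Lemma~\ref{lem:dhkabdsbhja_2}, contributing an extra $O(L)\eta^2(F(w^k,\beta^k)-F^*)$, a further $O((L^w)^2\eta^2)V_k$, and the variance floor $2\sigma^2\eta^2/(BM)$. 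Using both stepsize constraints (in particular $\eta\le\frac1{4L}$ with $L=\max\{L^w,L^\beta\}$) the $F$-value coefficients combine to a strictly negative multiple of $\eta$, and one arrives at a recursion of the shape
\[
\E{r_{k+1}} \le (1-\eta\mu)\,\E{r_k} - c\,\eta\,\E{F(w^k,\beta^k)-F^*} + C\,(L^w)^2\eta\,\E{V_k} + c'\,\frac{\sigma^2\eta^2}{BM},
\]
with absolute constants $c,c',C>0$.

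Next I would multiply this recursion by the geometric weights $(1-\eta\mu)^{-k-1}$ and sum over $k=0,\dots,K$; the $r_k$-terms telescope, leaving $\Phi^0$ (up to the factor $\eta$) on the right. The accumulated consensus error $\sum_{k}(1-\eta\mu)^{-k-1}\E{V_k}$ is precisely what Lemma~\ref{lem:dhkabdsbhja_4} controls --- this is where the second stepsize bound $\eta\le\frac1{8\sqrt{3e}(\tau-1)L^w}$ enters --- bounding it by $\tfrac14\sum_k(1-\eta\mu)^{-k-1}\E{F(w^k,\beta^k)-F^*}$ plus a term proportional to $L^wD\eta^2\sum_k(1-\eta\mu)^{-k-1}$ with $D=2e(\tau-1)\tau(3\zeta_*^2+\sigma^2/B)$. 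Feeding this back absorbs at most half of the negative $F$-value budget, so the surviving $\sum_k(1-\eta\mu)^{-k-1}\E{F(w^k,\beta^k)-F^*}$ is controlled by $\Phi^0$ plus the accumulated variance/heterogeneity terms; after dividing by $\sum_k(1-\eta\mu)^{-k-1}$ the latter collapse (with the weighting constants tracked exactly) to $\eta\Psi^0$ --- the $2\sigma^2/(BM)$ piece coming from the variance floor and the $8L^w\eta e(\tau-1)\tau(3\zeta_*^2+\sigma^2/B)$ piece from $D$. Applying Jensen's inequality to the convex $F$ and the definition of $(\overline{w}^K,\overline{\beta}^K)$ then gives
\[
\E{F(\overline{w}^K,\overline{\beta}^K)}-F^* \le \frac{\Phi^0}{\sum_{k=0}^K(1-\eta\mu)^{-k-1}} + \eta\,\Psi^0 .
\]
For $\mu>0$ we use $\big(\sum_{k=0}^K(1-\eta\mu)^{-k-1}\big)^{-1}\le(1-\eta\mu)^K$ to obtain~\eqref{eq:main_result_1}; for $\mu=0$ every weight equals $1$, the prefactor is $1/(K+1)\le1/K$, and~\eqref{eq:main_result_2} follows.

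The step I expect to be the main obstacle is the one-step inner-product estimate: one must keep the two asymmetrically weighted blocks consistent while simultaneously (i) transferring each gradient evaluation from the local copy $w^k_m$ to the virtual iterate $w^k$ at a cost of only $O((L^w)^2\eta V_k)$ --- crucially \emph{without} any $1/\mu$ factor, which would be fatal in the $\mu=0$ regime --- and (ii) arranging the coefficient of $F(w^k,\beta^k)-F^*$ so that, even after subtracting the Lemma~\ref{lem:dhkabdsbhja_4} contribution and the $O(L)\eta^2$ term from the second moments, a strictly positive multiple of $\E{F(w^k,\beta^k)-F^*}$ survives on the correct side. Once the recursion is in place, the telescoping, the $\mu=0$ versus $\mu>0$ case split, and the derivation of Theorems~\ref{thm:local} and~\ref{thm:local2} by substituting the two admissible stepsize regimes and reading off the iteration count are routine.
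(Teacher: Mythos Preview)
Your high-level route coincides with the paper's: both feed the problem into the local-SGD machinery of \citet{gorbunov2020local}, using Lemma~\ref{lem:dhkabdsbhja_2} for the second moments and Lemma~\ref{lem:dhkabdsbhja_4} for the accumulated consensus error, then telescope with weights $(1-\eta\mu)^{-k-1}$ and finish with Jensen. The paper simply cites Theorem~2.1 of \citet{gorbunov2020local} as a black box once those lemmas are in place; you are unrolling that black box, which is fine in principle.

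Where your execution breaks is precisely the step you flag as ``the main obstacle'', and the fix you propose does not work. With $r_k=2\|w^k-w^*\|^2+\sum_m\|\beta_m^k-\beta_m^*\|^2$, the one-step expansion produces the quadratic term
\[
2\eta^2\Bigl\|\tfrac{1}{M}\sum_m g_{w,m}^k\Bigr\|^2+\eta^2\sum_{m=1}^M\|g_{\beta,m}^k\|^2,
\]
whereas \eqref{eq:vdgasvgda2} controls $\|\tfrac{1}{M}\sum_m g_{w,m}^k\|^2+\tfrac{1}{M^2}\sum_m\|g_{\beta,m}^k\|^2$. The $\beta$-weights differ by $M^2$, not by the constant $2$ carried in $\Phi^0$; putting weight $2$ on the $w$-block cannot ``rebalance'' an $M^2$ discrepancy. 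The same mismatch shows up in the linear term: after passing to the virtual iterate you get $-4\eta\langle w^k-w^*,\nabla_w F\rangle-2\eta\sum_m\langle\beta_m^k-\beta_m^*,\nabla_{\beta_m}f_m\rangle$, with relative weights $(2,1)$ on $(w,\beta)$, while joint convexity/strong convexity of $F$ naturally pairs $\langle w^k-w^*,\nabla_w F\rangle$ with $\tfrac{1}{M}\sum_m\langle\beta_m^k-\beta_m^*,\nabla_{\beta_m}f_m\rangle$, i.e.\ relative weights $(M,1)$. Your intuition that ``the $\beta$-block moves at effective stepsize $\eta M$'' is exactly right, but that intuition calls for a factor-$M$ (equivalently, a $D^{-1}$-norm with $D=\mathrm{diag}(I,MI,\dots,MI)$) reweighting, not a factor-$2$ one. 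The $2$ in $\Phi^0$ is not the rebalancing constant; it is an artefact of the constants in Gorbunov's Theorem~2.1 once the problem has been cast into their Assumption~2.3 with the parameters supplied by Lemmas~\ref{lem:dhkabdsbhja}--\ref{lem:dhkabdsbhja_4}. If you want to reproduce the argument inline rather than cite the black box, you will need either to work in the correct weighted norm and then translate back to the stated $\Phi^0$, or to bound the $\beta$-quadratic term separately using the $(ML^\beta)$-smoothness of each $f_m$ rather than via \eqref{eq:vdgasvgda2}.
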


The proof follows directly from Theorem 2.1 of~\cite{gorbunov2020local}, once the conditions are verified, as has been done in Lemma~\ref{lem:dhkabdsbhja}, Lemma~\ref{lem:dhkabdsbhja_2}, and Lemma~\ref{lem:dhkabdsbhja_4}. Theorem~\ref{thm:local} and Theorem~\ref{thm:local2} then follow from Corollary D.1 and Corollary D.2 of~\cite{gorbunov2020local}.

\section{Minimax Optimal Methods} \label{sec:minimax}

We discuss the complexity of solving~\eqref{eq:PrimProblem} in terms of the number of communication rounds required to reach an $\epsilon$-solution and the amount of local computation, both in terms of the number of (stochastic) gradients with respect to global $w$-parameters and local $\beta$-parameters.

\subsection{Lower Complexity Bounds}\label{sec:lb}

We provide lower complexity bounds for solving~\eqref{eq:PrimProblem} when $f_m$ is a finite sum~\eqref{eq:fm_finitesum}. We show that any algorithm with access to the communication oracle and local (stochastic) gradient oracle with respect to the $w$ or $\beta$ parameters requires at least a certain number of oracle calls to approximately solve~\eqref{eq:PrimProblem}.

\textbf{Oracle.}
The considered oracle allows us at any iteration to compute either:
\begin{itemize}[noitemsep,topsep=0pt,leftmargin=10pt]
\item $\nabla_w f_{m,i}(w_m,\beta_m)$ on each device for a randomly selected $i \in [n]$ and any $w_m, \beta_m$; or
\item $\nabla_\beta f_{m,i}(w_m,\beta_m)$ on each device for a randomly selected $i \in [n]$ and any $w_m, \beta_m$; or
\item the average of $w_m$'s alongside broadcasting the average back to clients (communication step).
\end{itemize}

Our lower bound is provided for iterative algorithms whose iterates lie in the span of historical oracle queries only. Let us denote such a class of algorithms as $\cA$. In particular, for each $m,k$ we must have 
\begin{equation*}
\beta^k_m \in \text{Lin} \left(\beta^0_m, \nabla_\beta f_m(w^0_m, \beta^0_m), \dots,  \nabla_\beta f_m(w^{k-1}_m, \beta^{k-1}_m)   \right)
\end{equation*}
and
\begin{equation*}
w^k_m \in \text{Lin} \left(w^0, \nabla_w f_m(w^0_m, \beta^0_m), \dots,  \nabla_w f_m(w^{k-1}_m, \beta^{k-1}_m), \text{Q}_l(k)   \right),
\end{equation*}
where
\begin{equation*}
Q^k = \bigcup_{m=1}^M\left\{w^0, \nabla_w f_m(w^0_m, \beta^0_m), \dots,  \nabla_w f_m(w^{l(k)}_m, \beta^{l(k)}_m)   \right\},
\end{equation*}
with $l(k)$ being the index of the last communication round until iteration $k$. While such a restriction is widespread in the classical optimization literature~\citep{nesterov2018lectures, scaman2018optimal, hendrikx2020optimal, hanzely2020lower}, it can be avoided by more complex arguments~\citep{nemirovskij1983problem, woodworth2016tight, woodworth2018graph}.

We then have the following theorem regarding the minimal calls of oracles for solving~\eqref{eq:PrimProblem}.

\begin{theorem}\label{thm:lb}
Let $F$ satisfy Assumptions~\ref{as:smooth_sc} and~\ref{as:smooth_summands}. 
Then, any algorithm from the class $\cA$ requires at 
least $\Omega ( \sqrt{{L^w}/{\mu}} \log\epsilon^{-1} )$ communication rounds,
$\Omega \left(n+ \sqrt{{n\cL^w}/{\mu}} \log\epsilon^{-1} \right)$ calls
to $\nabla_w$-oracle and $\Omega \left(n+ \sqrt{{n\cL^\beta}/{\mu}} \log\epsilon^{-1} \right)$
calls to $\nabla_\beta$-oracle to reach the $\epsilon$-solution.
\end{theorem}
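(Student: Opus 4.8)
The plan is to prove each of the three lower bounds separately by exhibiting a concrete hard instance of~\eqref{eq:PrimProblem} that satisfies Assumptions~\ref{as:smooth_sc} and~\ref{as:smooth_summands} with the prescribed constants, and for which the desired bound follows from a known construction. (If a single instance stressing all three bounds simultaneously is wanted, one places the three constructions below on disjoint blocks of $w$- and $\beta$-coordinates and adds them; I will not spell this out, as the three separate statements already imply it up to constants.) Throughout I assume $\mu>0$, which is implicit in the $\sqrt{L/\mu}\log\epsilon^{-1}$ form of the bounds, and I work in sufficiently (or countably) many dimensions $d_0$ and $d_m$ so that the span-based chains below are not prematurely truncated.

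\textbf{Communication lower bound.} I would specialize the decentralized finite-sum construction of~\citet{hendrikx2020optimal} to the parameter-server communication model defined by our oracle. Take an instance with $d_1=\dots=d_M=0$, so that $F$ depends on $w$ only and~\eqref{eq:PrimProblem} becomes distributed minimization of $\frac1M\sum_m f_m(w)$; split Nesterov's worst-case $L^w$-smooth, $\mu$-strongly convex quadratic across the $f_m$'s (odd/even "chain" coordinates on two devices suffices) so that each $f_m$ is convex and $L^w$-smooth in $w$ while $F$ is jointly $\mu$-strongly convex, matching Assumption~\ref{as:smooth_sc}. The combinatorial heart of the argument is tracking the $\Span$ restriction defining $\cA$: a local $\nabla_w f_m$ query extends the coordinate support of device $m$'s iterate by $O(1)$, while the averaging/broadcast step only resets every device's iterate to the linear span of the current supports, so between two consecutive communication rounds the maximal support across devices grows by $O(1)$. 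Since reaching an $\epsilon$-solution of the chain quadratic forces iterates supported on $\Omega(\sqrt{L^w/\mu}\log\epsilon^{-1})$ coordinates (the classical accelerated lower bound, \citet{nesterov2018lectures,NIPS2018_7539}), at least $\Omega(\sqrt{L^w/\mu}\log\epsilon^{-1})$ communication rounds are required.

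\textbf{Local-computation lower bounds.} For the $\nabla_w$-oracle bound, take $M=1$ and $d_1=0$, so that~\eqref{eq:PrimProblem} reduces to $\min_w \frac1n\sum_{i=1}^n f_{1,i}(w)$ with each $f_{1,i}$ convex and $\cL^w$-smooth and $F=f_1$ jointly $\mu$-strongly convex; the $\nabla_w$-oracle is then an incremental first-order oracle, and $\Omega(n+\sqrt{n\cL^w/\mu}\log\epsilon^{-1})$ is the standard finite-sum lower bound (\citet{hendrikx2020optimal,lan2018optimal}) — the $+n$ term because in the $\Span$ model an un-queried component $f_{1,i}$ contributes nothing, forcing all $n$ components to be touched, and the $\sqrt{n\cL^w/\mu}\log\epsilon^{-1}$ term from the accelerated finite-sum construction. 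For the $\nabla_\beta$-oracle bound, take $d_0=0$ and $M=1$ (so the per-component smoothness $M\cL^\beta$ equals $\cL^\beta$ and the joint strong convexity of $F$ is exactly $\mu$), giving $\min_{\beta_1}\frac1n\sum_{i=1}^n f_{1,i}(\beta_1)$ with components $\cL^\beta$-smooth; the same finite-sum lower bound yields $\Omega(n+\sqrt{n\cL^\beta/\mu}\log\epsilon^{-1})$. One point to note: our oracle returns $\nabla f_{m,i}$ for a \emph{randomly} chosen $i$, which is a weaker oracle than the one (algorithm chooses $i$) used in the cited bounds, so those bounds transfer verbatim; equivalently, one uses the versions of the construction that are robust to the index being adversarially or randomly chosen.

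\textbf{Main obstacle.} I expect the delicate part to be the communication bound: one must verify that the $\Span$ restriction in the definition of $\cA$ interacts with the \emph{averaging} step exactly as in the two-node chain argument — that averaging and broadcasting cannot let any device "skip ahead" in the coordinate chain (for two machines this follows since each machine knows its own iterate and the average, hence can reconstruct the other's, reducing to full vector exchange; Hendrikx et al.'s construction handles the averaging operation directly) — and that the explicit splitting of the worst-case quadratic respects the \emph{precise} constants of Assumption~\ref{as:smooth_sc} (convexity of each $f_m$, $L^w$-smoothness in $w$, joint $\mu$-strong convexity of $F$) rather than merely their orders. The two computation bounds are, by contrast, essentially immediate specializations of finite-sum lower bounds once the reduction to $M=1$ and the matching of constants in Assumption~\ref{as:smooth_summands} are in place.
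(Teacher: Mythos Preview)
Your proposal is correct and follows essentially the same strategy as the paper: reduce each of the three bounds to a known hard instance (Nesterov's chain for communication via the distributed construction of \citet{NIPS2018_7539,hendrikx2020optimal}, and the finite-sum constructions of \citet{lan2018optimal,hendrikx2020optimal} for the two gradient-oracle bounds), exploiting separability between the $w$- and $\beta$-parts so that the constants of Assumptions~\ref{as:smooth_sc} and~\ref{as:smooth_summands} match.

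The only notable difference is packaging. The paper builds a \emph{single} separable instance $f_m(w,\beta_m)=g_m(w)+h(\beta_m)$, where $g$ is the distributed finite-sum hard instance of \citet{hendrikx2020optimal} (which simultaneously delivers the communication and the $\nabla_w$ bounds) and $h$ is the finite-sum hard instance of \citet{lan2018optimal} (delivering the $\nabla_\beta$ bound). You instead exhibit three separate instances, taking $M=1$ for the two computation bounds. Both routes are valid; the paper's combined instance is slightly cleaner in that it yields all three bounds at once without the ``place on disjoint blocks and add'' step you mention, and it avoids having to argue separately that the $M=1$ specialization still respects the $ML^\beta$ scaling in Assumption~\ref{as:smooth_sc}. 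Conversely, your modular presentation makes the reductions to the cited results more transparent. The technical content is the same.
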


The proof is given in Appendix~\ref{sec:proof-thm-1b}. In the special case where $n=1$, Theorem~\ref{thm:lb} provides a lower complexity bound for solving~\eqref{eq:PrimProblem} with access to the full gradient locally. Specifically, it shows both the communication complexity and local gradient complexity with respect to $w$-variables of the order $\Omega \left( \sqrt{\frac{L^w}{\mu}} \log\frac1\epsilon \right)$, and the local gradient complexity with respect to $\beta$-variables of the order $\Omega \left( \sqrt{\frac{L^\beta}{\mu}} \log\frac1\epsilon \right)$.

\subsection{Accelerated Coordinate Descent for PFL} \label{sec:acd}

We apply Accelerated Block Coordinate Descent
(ACD)~\citep{allen2016even,nesterov2017efficiency,hanzely2019accelerated} to solve~\eqref{eq:PrimProblem}. We separate the domain into two blocks of coordinates to sample from: the first one corresponding to $w$ parameters and the second one corresponding to $\beta = [\beta_1, \beta_2, \dots, \beta_M]$. Specifically, at every iteration, we toss an unfair coin. With probability $p_w = {\sqrt{L^w}}/{(\sqrt{L^w}+ \sqrt{L^\beta})}$, we compute $\nabla_w F(w,\beta)$ and update block $w$. Alternatively, with probability $p_\beta = 1-p_w$, we compute $\nabla_\beta F(w,\beta)$ and update block $\beta$. Plugging the described sampling of coordinate blocks into ACD, we arrive at Algorithm~\ref{Alg:acd}. Note that ACD from~\citet{allen2016even} only allows for subsampling individual coordinates and does not allow for ``blocks.'' A variant of ACD that provides the right convergence guarantees for block sampling was proposed in~\citet{nesterov2017efficiency} and~\citet{hanzely2019accelerated}.

\begin{algorithm}[t]
\caption{ACD-PFL}
\label{Alg:acd}
\begin{algorithmic}
\INPUT{$0< \theta <1$, $\eta, \nu > 0$, $w_y^0 = w_z^0 \in \R^{d_0}$, $\beta_{y,m}^0 = \beta_{z,m}^0 \in \R^{d_m}$ for $1\leq m \leq M$.}\;
\FOR{$k=0,1,2,\dots$}
    \STATE{$ w_x^{k+1} = (1-\theta)  w_y^k + \theta  w_z^k$}\;
    \FOR{$m=1, \dots , M$ in parallel}
    \STATE{$ \beta_{x,m}^{k+1} = (1-\theta)  \beta_{y,m}^k + \theta  \beta_{z,m}^k$}\,
    \ENDFOR
    \STATE{$\xi = \begin{cases}
    1, &  \text{ with probability } p_w = \frac{\sqrt{L^w}}{\sqrt{L^w}+ \sqrt{L^\beta}} \\
    0, & \text{ with probability } p_\beta = \frac{\sqrt{L^\beta}}{\sqrt{L^w}+ \sqrt{L^\beta}}
    \end{cases}$}
    \IF{$\xi=0$}
    \STATE{ $w_y^{k+1} = w_x^{k+1} - \frac{1}{L^w} \frac{1}{M}\sum_{m=1}^M \nabla_w f_m(w_x^{k+1}, \beta_{x,m}^{k+1})$}\\
    \STATE{$ w_z^{k+1} = \frac{1}{1+ \eta \nu} \left(w_z^{k}+ \eta \nu w_x^{k+1}   - \frac{\eta}{\sqrt{L^w}(\sqrt{L^w}+\sqrt{L^\beta})} \frac{1}{M}\sum_{m=1}^M \nabla_w f_m(w_x^{k+1}, \beta_{x,m}^{k+1}) \right)$}\;
    \FOR{$m=1, \dots , M$ in parallel}
    \STATE{$ \beta_{z,m}^{k+1} = \frac{1}{1+ \eta\nu} \left(\beta_{z,m}^{k}+ \eta \nu \beta_{x,m}^{k+1} \right)$\,}
    \ENDFOR
    \ELSE
    \FOR{$m=1, \dots , M$ in parallel}
    \STATE{$ \beta_{y,m}^{k+1} = \beta_{x,m}^{k+1} - \frac{1}{L^\beta} \nabla_\beta f_m(w_x^{k+1}, \beta_{x,m}^{k+1})$\,}
   \ENDFOR
    \STATE{$ \beta_{z,m}^{k+1} = \frac{1}{1+ \eta \nu} \left(\beta_{z,m}^{k}+ \eta \nu\beta_{x,m}^{k+1} - \frac{\eta}{\sqrt{L^\beta}(\sqrt{L^w}+\sqrt{L^\beta})}  \nabla_\beta f_m(w_x^{k+1}, \beta_{x,m}^{k+1}) \right)$\,}
        \STATE{$ w_z^{k+1} = \frac{1}{1+ \eta \nu} \left(w_z^{k}+ \eta\nu w_x^{k+1} \right)$\,}
    \ENDIF
\ENDFOR
\end{algorithmic}
\end{algorithm}

We provide an optimization guarantee for Algorithm~\ref{Alg:acd} in the following theorem.

\begin{theorem}
\label{thm:ACD-PFL}
Suppose that Assumption~\ref{as:smooth_sc} holds.
Let 
\[
\nu = \frac{\mu}{(\sqrt{L^w} + \sqrt{L^\beta})^2},
\, 
\theta = \frac{\sqrt{\nu^2 + 4\nu}-\nu}{2},
\text{ and }
\eta = \theta^{-1}.
\]
The iteration complexity of ACD-PFL is 
\[
{\cal O} \left(\sqrt{{(L^w+ L^\beta)}/{ \mu}}\log{\epsilon^{-1}}\right).
\]
\end{theorem}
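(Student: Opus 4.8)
The plan is to recognize Algorithm~\ref{Alg:acd} as nothing but accelerated block coordinate descent (in the sense of \citet{nesterov2017efficiency,hanzely2019accelerated}) applied to $\min_{x} F(x)$, where $x=(w,\beta)\in\R^{d_0+\sum_m d_m}$ is split into two coordinate blocks, the $w$-block and the $\beta=(\beta_1,\dots,\beta_M)$-block, and then to read off the rate from the known ACD convergence theorem. First I would pin down the block smoothness constants. Since each $f_m$ is $L^w$-smooth in $w$ (Assumption~\ref{as:smooth_sc}), the average $\nabla_w F=\frac1M\sum_m\nabla_w f_m$ is $L^w$-Lipschitz in $w$, so $L^w$ is the Lipschitz constant of the $w$-block gradient. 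For the $\beta$-block, $\nabla_\beta F$ is block-diagonal with $m$-th component $\frac1M\nabla_{\beta_m}f_m(w,\beta_m)$; because $f_m$ is $(ML^\beta)$-smooth in $\beta_m$, this component is $\frac1M\cdot ML^\beta=L^\beta$-Lipschitz in $\beta_m$, hence $\nabla_\beta F$ is $L^\beta$-Lipschitz in $\beta$ in the Euclidean norm. Joint $\mu$-strong convexity is given directly by Assumption~\ref{as:smooth_sc}, and restricting strong convexity and block smoothness to a line inside the $w$-block yields $\mu\le L^w$, which I will use later.

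Next I would verify that the algorithmic parameters are exactly the optimal choices prescribed by the reference ACD. The sampling probabilities $p_w=\sqrt{L^w}/(\sqrt{L^w}+\sqrt{L^\beta})$ and $p_\beta=\sqrt{L^\beta}/(\sqrt{L^w}+\sqrt{L^\beta})$ are the importance-sampling probabilities proportional to the square roots of the block smoothness constants; the $y$-updates with block stepsizes $1/L^w$ and $1/L^\beta$ are the standard block gradient steps; and the proximal (averaging) $z$-updates together with $\nu=\mu/(\sqrt{L^w}+\sqrt{L^\beta})^2$, $\theta=(\sqrt{\nu^2+4\nu}-\nu)/2$, $\eta=\theta^{-1}$ coincide with the momentum parameters of accelerated coordinate descent. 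The parallel-over-$m$ description of the $\beta$-updates is just the coordinatewise form of a single block update, valid because $F$ is separable across $\beta_1,\dots,\beta_M$ for fixed $w$ and $\nabla_{\beta_m}F$ depends only on $(w,\beta_m)$.

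Having identified the instance, the ACD convergence theorem of \citet{hanzely2019accelerated} supplies a Lyapunov function $\Phi^k$ — a suitable combination of $F(w_y^k,\beta_y^k)-F(w^\star,\beta^\star)$ and $\|w_z^k-w^\star\|^2+\sum_m\|\beta_{z,m}^k-\beta_m^\star\|^2$ — with $\E{\Phi^{k+1}}\le(1-\theta)\E{\Phi^k}$, hence $\E{F(w_y^k,\beta_y^k)}-F(w^\star,\beta^\star)\le(1-\theta)^k\Phi^0$. To reach an $\epsilon$-solution it therefore suffices to run $k=\cO(\theta^{-1}\log\epsilon^{-1})$ iterations. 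It remains to bound $\theta^{-1}$: since $\mu\le L^w$ gives $\nu=\mu/(\sqrt{L^w}+\sqrt{L^\beta})^2\le L^w/(\sqrt{L^w}+\sqrt{L^\beta})^2\le1$, one checks $\theta=(\sqrt{\nu^2+4\nu}-\nu)/2=\Theta(\sqrt\nu)$ on $(0,1]$, so $\theta^{-1}=\Theta\big((\sqrt{L^w}+\sqrt{L^\beta})/\sqrt\mu\big)=\cO\big(\sqrt{(L^w+L^\beta)/\mu}\big)$ using $\sqrt{L^w}+\sqrt{L^\beta}\le\sqrt2\,\sqrt{L^w+L^\beta}$. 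Combining gives the claimed $\cO\big(\sqrt{(L^w+L^\beta)/\mu}\,\log\epsilon^{-1}\big)$ complexity.

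The main obstacle is bookkeeping rather than conceptual: once the reduction is set up, the rate is a black-box consequence of the block-ACD theorem. The delicate point is to confirm that Algorithm~\ref{Alg:acd}, including the $1/M$ factors hidden in $\nabla_w F=\frac1M\sum_m\nabla_w f_m$ and in the $\beta$-block gradient, together with the parallel $w$- and $\beta$-updates, is literally the template of \citet{nesterov2017efficiency,hanzely2019accelerated} with the Euclidean block norms and the stated parameters. The factor-$M$ mismatch between $f_m$ being $(ML^\beta)$-smooth in $\beta_m$ and $F$ being only $L^\beta$-smooth in $\beta$ is the one spot where a scaling error could enter, so I would carefully check that the $\beta$-block stepsize, the $\beta$-contribution to $\nu$, and the $\beta$-terms in the Lyapunov function are all consistent with the $L^\beta$ block constant.
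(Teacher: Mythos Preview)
Your proposal is correct and takes essentially the same approach as the paper: the paper's entire proof is the single sentence ``The proof follows directly from Theorem~4.2 of~\cite{hanzely2019accelerated},'' and you have simply spelled out the reduction---identifying the two coordinate blocks, verifying the block-smoothness constants $L^w$ and $L^\beta$ from Assumption~\ref{as:smooth_sc}, matching the sampling probabilities and momentum parameters to the template, and translating the $\Theta(\sqrt{\nu})$ contraction factor into the stated complexity. Nothing is missing or different in substance.
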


The proof follows directly from Theorem 4.2 of~\cite{hanzely2019accelerated}. Since $\nabla_w F(w,\beta)$ is evaluated on average once every ${1}/{p_w}$ iterations, ACD-PFL requires ${\cal O} \left(\sqrt{{L^w}/{ \mu}}\log{\epsilon^{-1}}\right)$ communication rounds and ${\cal O} \left(\sqrt{{L^w}/{ \mu}}\log{\epsilon^{-1}}\right)$ gradient evaluations with respect to $w$, thus matching the lower bound. Similarly, as $\nabla_\beta F(w,\beta)$ is evaluated on average once every ${1}/{p_\beta}$ iterations, we require ${\cal O} \left(\sqrt{{L^\beta}/{ \mu}}\log{\epsilon^{-1}}\right)$ evaluations of $\nabla_\beta F(w,\beta)$ to reach an $\epsilon$-solution; again matching the lower bound. Consequently, ACD-PFL is minimax optimal in terms of all three quantities of interest simultaneously.

We are not the first to propose a variant of coordinate descent~\citep{nesterov2012efficiency} for personalized FL. \citet{wu2020federated} introduced block coordinate descent to solve a variant of~\eqref{eq:personalized_mixture2} formulated over a network. However, they do not argue about any form of optimality for their approach, which is also less general as it only covers a single personalized FL objective.

\subsection{Accelerated SVRCD for PFL} \label{sec:asvrcd}

Despite being minimax optimal, the main drawback of ACD-PFL is the necessity of having access to the full gradient of local loss $f_m$ with respect to either $w$ or $\beta$ at each iteration. Specifically, computing the full gradient with respect to $f_m$ might be very expensive when $f_m$ is a finite sum~\eqref{eq:fm_finitesum}. Ideally, one would desire an algorithm that is i) subsampling the global/local variables $w$ and $\beta$ just as ACD-PFL, ii) subsampling the local finite sum, iii) employing control variates to reduce the variance of the local stochastic gradient~\citep{johnson2013accelerating, defazio2014saga}, and iv) accelerated in the sense of~\citet{nesterov1983method}.

We propose a method -- ASVRCD-PFL -- that satisfies all four conditions above, by carefully designing an instance of ASVRCD (Accelerated proximal Stochastic Variance Reduced Coordinate Descent)~\citep{hanzely2020variance} applied to minimize an objective in a lifted space that is equivalent to~\eqref{eq:PrimProblem}. We are not aware of any other algorithm capable of satisfying i)-iv) simultaneously.

The construction of ASVRCD-PFL involves four main ingredients. First, we rewrite the original problem in a lifted space, which corresponds to the problem form discussed in~\cite{hanzely2020variance}. Second, we construct an unbiased stochastic gradient estimator by sampling coordinate blocks. Next, we enrich the stochastic gradient by control variates as in SVRG. Finally, we incorporate Nesterov's momentum. We explain the construction of ASVRCD-PFL in detail below.

\textbf{Lifting the problem space.} ASVRCD-PFL is an instance of ASVRCD applied to an objective~\eqref{eq:PrimProblem} in a lifted space. We have that
\begin{align*}
\min_{w \in \R^d_0,\beta_m\in \R^{d_m}, \forall m\in [M]} F(w,\beta) 
 = \min_{
 \text{
 \begin{tabular}{c}
 $X[1,:,:]\in \R^{M\times n\times d_0}$ \\
 $X[2,m,:]\in \R^{n\times d_m}, \forall m\in M$
 \end{tabular}
 }
 }  \left \{ \PP(X) \eqdef \FF(X) + \ppsi(X)\right\},
\end{align*}
where
\[
\FF(X) \eqdef \frac{1}{M} \sum_{m=1}^M \left(\frac1n \sum_{j=1}^n f_{m,j}(X[1,m,j], X[2,m,j])\right)
\]
and 
\[
\ppsi(X) \eqdef \begin{cases}
0  & 
\text{if } m,m'\in [M], j,j' \in [n] : X[1,m,j] = X[1,m',j'], \  X[2,m,j] = X[2,m,j'] \\
\infty & \text{otherwise}.
\end{cases}
\]
Variables $X[1,m,j]$ correspond to $w$ for all $m \in [M]$ and $j \in [n]$, while variables $X[2,m,j]$ correspond to $\beta_m$ for all $j \in [n]$. The equivalence between the objective \eqref{eq:PrimProblem} and the objective in the lifted space is ensured with the indicator function $\ppsi(X)$, which forces different $X$ variables to take the same values. We apply ASVRCD with a carefully chosen non-uniform sampling of coordinate blocks to minimize $\PP(X)$.

\textbf{Sampling of coordinate blocks.} The key component of ASVRCD-PFL is the construction of the unbiased stochastic gradient estimator of $\nabla \FF(X)$, which we describe here. We consider two independent sources of randomness when sampling the coordinate blocks.

First, we toss an unfair coin $\zeta$. With probability $p_w$, we have $\zeta=1$. In such a case, we ignore the local variables and update the global variables only, corresponding to $w$ or $X[1]$ in our current notation. Alternatively, $\zeta=2$ with probability $p_\beta\eqdef 1-p_w$. In such a case, we ignore the global variables and update local variables only, corresponding to $\beta$ or $X[2]$ in our current notation.

Second, we consider local subsampling. At each iteration, the stochastic gradient is constructed using $\nabla F_{j}$ only, where $F_j(w,\beta) \eqdef \frac{1}{M} \sum_{m=1}^M f_{m,j}(w,\beta_m)$ and $j$ is selected uniformly at random from $[n]$. For the sake of simplicity, we assume that all clients sample the same index, i.e., the randomness is synchronized. A similar rate can be obtained without shared randomness.

With these sources of randomness, we arrive at the following construction of $\GG(X)$, which is an unbiased stochastic estimator of $\nabla \FF(X)$: 
\begin{align*}
 \GG(X)[1,m,j'] 
&= 
\begin{cases}
\frac{1}{p^w} \nabla_w f_{j',m}(X[1,m,j'],X[2,m,j']) & 
\text{if  }\zeta = 1 \text{ and } j'=j; \\
0\in \R^{d_0} & \text{otherwise};
\end{cases} 
\\
 \GG(X)[2,m,j'] 
& = 
\begin{cases}
\frac{1}{p^\beta} \nabla_\beta f_{j',m}(X[1,m,j'],X[2,m,j']) & 
\text{if }  \zeta = 2 \text{ and } j'=j; \\
0\in \R^{d_m} & \text{otherwise}.
\end{cases}
\end{align*}

\textbf{Control variates and acceleration.} We enrich the stochastic gradient by incorporating control variates, resulting in an SVRG-style stochastic gradient estimator. In particular, the resulting stochastic gradient will take the form of $\GG(X) - \GG(Y) + \nabla \FF(Y)$, where $Y$ is another point that is updated upon a successful toss of a $\rho$-coin. The last ingredient of the method is to incorporate Nesterov's momentum.

Combining the above ingredients, we arrive at the ASVRCD-PFL procedure, which is detailed in Algorithm~\ref{Alg:ascrcd} in the lifted notation. Algorithm~\ref{Alg:ascrcd_normal} details ASVRCD-PFL in the notation consistent with the rest of the paper.

The following theorem provides convergence guarantees for ASVRCD-PFL.
\begin{theorem}\label{thm:asvrcd}
Suppose
Assumptions~\ref{as:smooth_sc} and~\ref{as:smooth_summands} hold. 
The iteration complexity of ASVRCD-PFL with
\begin{gather*}
\eta =  \frac{1}{4\cL},\quad
\theta_2 = \frac{1}{2},\quad
\gamma = \frac{1}{\max\{2\mu, 4\theta_1/\eta\}},\\
\nu = 1 - \gamma\mu, \quad
\theta_1 = \min\left\{\frac{1}{2},\sqrt{\eta\mu \max\left\{\frac{1}{2}, \frac{\theta_2}{\rho}\right\}}\right\}, \quad  \text{and} \quad p_w = \frac{\cL^w}{\cL^\beta + \cL^w}
\end{gather*}
is
\[\cO\left(
\left( {\rho^{-1}} + \sqrt{{(\cL^w+ \cL^\beta)}/{(\rho \mu)}} \right)\log{\epsilon^{-1}}\right),\]
where $\rho$ is the frequency of updating the control variates. 
\end{theorem}

The communication complexity and the local stochastic gradient complexity with respect to $w$-parameters of order ${\cal O} \left(\left( n + \sqrt{{  n\cL^w}/{\mu}} \right)\log{\epsilon^{-1}} \right)$, is obtained by setting $\rho = {\cL^w}/\left((\cL^w+\cL^\beta)n\right)$. Analogously, setting $\rho = {\cL^\beta}/((\cL^w+\cL^\beta)n)$ yields the local stochastic gradient complexity with respect to $\beta$-parameters of order ${\cal O} \left(\left( n + \sqrt{{  n\cL^\beta}/{\mu}} \right)\log{\epsilon^{-1}} \right)$. In contrast with Theorem~\ref{thm:lb}, this result shows that ASVRCD-PFL can be optimal in terms of the local computation either with respect to $\beta$-variables or in terms of the $w$-variables. Unfortunately, these bounds are not achieved simultaneously unless $\cL^w, \cL^\beta$ are of a similar order, which we leave for future research. The proof is given in Appendix~\ref{sec:asvrcd_appendix}. Additional discussion on how to choose the tuning parameters is given in Theorem~\ref{thm:e1}.

\begin{algorithm}[t]
\caption{ASVRCD-PFL (lifted notation)}
\label{Alg:ascrcd}
\begin{algorithmic}
\INPUT{$0< \theta_1, \theta_2 <1$, $\eta, \nu , \gamma > 0$, $\rho \in (0,1)$, $Y^0 = Z^0 = X^0$.}\;
\FOR{$k=0,1,2,\dots$}
    \STATE{$X^k = \theta_1 Z^k + \theta_2 V^k + ( 1 -\theta_1 -\theta_2) Y^k$}\;
    \STATE{$g^k = \GG(X^k) - \GG(V^k) + \nabla \FF(V^k)$}\;
    	\STATE{$Y^{k+1} = \prox_{\eta \psi} (X^k - \eta g^k)$}\;
    	\STATE{$Z^{k+1} = \nu Z^k + (1-\nu)X^k + \frac{\gamma}{\eta}(Y^{k+1} - Y^k)$}\;
    	\STATE{$V^{k+1} = \begin{cases}
    		Y^k, &\text{ with probability } \rho\\
    		V^k, &\text{ with probability } 1-\rho\\
    	\end{cases}$}\;
\ENDFOR
\end{algorithmic}
\end{algorithm}

\section{Simulations}
\label{sec:experiments}

We present an extensive numerical evaluation to verify and support the theoretical claims. We perform experiments on both synthetic and real data, with a range of different objectives and methods (both ours and the baselines from the literature). The experiments are designed to shed light on various aspects of the theory. In this section, we present the results on synthetic data, while in the next section, we illustrate the performance of different methods on real data. The code to reproduce the experiments is publicly available at \\
\centerline{ \url{https://github.com/boxinz17/PFL-Unified-Framework}. }
The experiments on synthetic data were conducted on a personal laptop with a CPU (Intel(R) Core(TM) i7-9750H CPU@2.60GHz). The results are summarized over 30 independent runs.

\begin{algorithm}[t]
\caption{ASVRCD-PFL}
\label{Alg:ascrcd_normal}
\begin{algorithmic}
\INPUT{$0< \theta_1, \theta_2 <1$, $\eta, \nu , \gamma > 0$, $\rho \in (0,1)$, $p_w \in (0,1)$, $p_{\beta}=1-p_w$, $w_y^0 = w_z^0 = w_v^0 \in \R^{d_0}$, $\beta_{y,m}^0 = \beta_{z,m}^0 = \beta_{v,m}^0 \in \R^{d_m}$ for $1\leq m \leq M$.}\;
\FOR{$k=0,1,2,\dots$}
    \STATE{$ w_x^k = \theta_1  w_z^k + \theta_2  w_v^k + ( 1 -\theta_1 -\theta_2)  w_y^k$}\;
    \FOR{$m=1, \dots , M$ in parallel}
    \STATE{$ \beta_{x,m}^k = \theta_1  \beta_{z,m}^k + \theta_2  \beta_{v,m}^k + ( 1 -\theta_1 -\theta_2)  \beta_{y,m}^k$}\,
    \ENDFOR
    \STATE{Sample random  $j \in  \{1,2,\dots,n\}$ and $\zeta =\begin{cases}
    1, & \text{ with probability } p_w \\
    2, & \text{ with probability } p_\beta
    \end{cases}$}\;
    \STATE{
    \begin{align*}
    g_w^k = 
    \begin{cases}
    \begin{aligned}
    \frac{1}{p_w} &\left( \frac{1}{M} \sum_{m=1}^M \nabla_w f_{m,j}(w_x^k, \beta_{x,m}^k) - \frac{1}{M} \sum_{m=1}^M \nabla_w f_{m,j}(w_v^k, \beta_{v,m}^k) \right) + \nabla_w F(w_v^k, \beta_{v}^k)
    \end{aligned}
    & \text{ if } \zeta =1  \\
    \\
    \nabla_w F(w_v^k, \beta_{v}^k) &\text{ if } \zeta =2
    \end{cases}
    \end{align*}
    }
    \STATE{$w_y^{k+1} =  w_x^{k} - \eta g_w^k$}\;
    \STATE{$w_z^{k+1} = \nu w_z^k + (1-\nu)w_x^k + \frac{\gamma}{\eta}(w_y^{k+1} - w_x^k)$}\;
    	\STATE{$w_v^{k+1} = \begin{cases}
    		w_y^k, &\text{ with probability } \rho\\
    		w_v^k, &\text{ with probability } 1-\rho\\
    	\end{cases}$}\;
    \FOR{$m=1, \dots , M$ in parallel}
    \STATE{
    \begin{equation*}
    g_{\beta,m}^k = 
    \begin{cases}
     \frac1M\nabla_{\beta} f_m(w_v^k, \beta_{v,m}^k) & \text{ if } \zeta=1 \\
     \\
    \begin{aligned}
    \frac{1}{p_{\beta} M}&\left(\nabla_{\beta} f_{m,j}(w_x^k, \beta_{x,m}^k)- \nabla_{\beta} f_{m,j}(w_v^k, \beta_{v,m}^k) \right) + \frac1M\nabla_{\beta} f_m(w_v^k, \beta_{v,m}^k)
    \end{aligned}
    & \text{ if } \zeta=2
    \end{cases} 
    \end{equation*}
    }\;
    \STATE{$ \beta_{y,m}^{k+1}  = \beta_{x,m}^{k} - \eta g_{\beta,m}^k $}\;
    \STATE{$\beta_{z,m}^{k+1} = \nu \beta_{z,m}^k + (1-\nu)\beta_{x,m}^k + \frac{\gamma}{\eta}(\beta_{y,m}^{k+1} - \beta_{x,m}^k)$}\;
        \STATE{$ \beta_{v,m}^{k+1} = \begin{cases}
    		\beta_{y,m}^k, &\text{ with probability } \rho\\
    		\beta_{v,m}^k, &\text{ with probability } 1-\rho\\
    	\end{cases} $}\,
    	\ENDFOR
    	\ENDFOR
\end{algorithmic}
\end{algorithm}

\subsection{Multi-Task Personalized FL and Implicit MAML Objective}
\label{sec:syn-exp-multi-maml}

In this section, we focus on the performance of different methods when solving the objective~\eqref{eq:personalized_mixture2}. We implement three proposed algorithms -- LSGD-PFL, ASCD-PFL\footnote{ASCD-PFL is ASVRCD-PFL without control variates. See the detailed description in Appendix~\ref{sec:more-algos}.}, and ASVRCD-PFL -- and compare them with two baselines -- L2SGD+~\citep{hanzely2020federated} and pFedMe~\citep{t2020personalized}. As both L2SGD+ and pFedMe were designed specifically to solve~\eqref{eq:personalized_mixture2}, the aim of this experiment is to demonstrate that our universal approach is competitive with these specifically designed methods.

\textbf{Data and model.}
We perform this experiment on synthetically generated data
which allows us to properly control the data heterogeneity level.
As a model, we choose logistic regression. 
We generate $w^* \in \mathbb{R}^{d}$ with i.i.d.~entries 
from $\text{Uniform}[0.49,0.51]$, and 
set $\beta^*_m = w^* + \Delta \beta^*_m \in \mathbb{R}^{d}$, 
where entries of $\Delta \beta^*_m$ are generated i.i.d.~from 
$\text{Uniform}[\mu_m-0.01,\mu_m+0.01]$ and 
$\mu_m \sim N(0, \sigma^2_h)$ for all $m=1,2,\dots,M$. 
Thus, $\sigma_h$ can be regarded as a measure of heterogeneity level, 
with a large $\sigma_h$ corresponding to
large heterogeneity. Finally, for each device $m=1,2,\dots,M$, 
we generate $\mathtt{x}_{m,i} \in \mathbb{R}^d$ with entries i.i.d.~from 
$\text{Uniform}[0.2, 0.5]$ for all $i=1,2,\dots,n$, and 
$y_{m,i} \sim \text{Bernoulli}(p_{m,i})$, 
where $p_{m,i}=1/(1+\exp(\beta^{*\top}_m \mathtt{x}_{m,i}))$.
We set $d=15$, $n=1000$, $M=20$, and
let $\sigma_h\in \{0.1,0.3,1.0\}$ to 
explore different levels of heterogeneity. 

\textbf{Objective function.} We use objective~\eqref{eq:personalized_mixture2} with $f^{\prime}_m(\beta_m)$ being the cross-entropy loss function. We set $\lambda=\sigma_h\cdot 10^{-2}$, so that larger heterogeneity level will induce a larger penalty, which will further encourage parameters on each device to be closer to their geometric center. In addition to the training loss, we also record the estimation error in the training process, defined as $\Vert \hat{w} - w^* \Vert^2 + \sum^M_{m=1} \Vert \hat{\beta}_m - \beta^*_m \Vert^2$.

\textbf{Tuning parameters of proposed algorithms.}
For LSGD-PFL (Algorithm~\ref{Alg:PersonalFL}), we set the batch size to compute the stochastic gradient $B=1$, the average period $\tau=5$, and the learning rate $\eta=0.01$. For $p_w$ and $p_{\beta}$ in ASCD-PFL (Algorithm~\ref{Alg:ASCD-PFL}) and ASVRCD-PFL (Algorithm~\ref{Alg:ascrcd_normal}), we set them as $p_w=\mathcal{L}^w / (\mathcal{L}^{\beta} + \mathcal{L}^w)$ and $p_{\beta}=1-p_w$, where $\mathcal{L}^w=\lambda / M$ and $\mathcal{L}^{\beta} = (\mathcal{L}^{\prime} + \lambda) / M$. We set $\mathcal{L}^{\prime}= \max_{1 \leq m \leq M, 1 \leq i \leq n} \Vert \mathtt{x}_{m,i} \Vert^2 / 4$. For $\eta,\theta_2,\gamma,\nu$ and $\theta_1$ in ASVRCD-PFL, we set them according to Theorem~\ref{thm:e1}, where $\mathcal{L}=2 \max \{ \mathcal{L}^w / p_w, \mathcal{L}^{\beta} / p_{\beta} \}$, $\rho=p_w / n$, and $\mu=\mu^{\prime} / (3M)$. We let $\mu^{\prime}$ be the smallest eigenvalue of $\frac{1}{n M} \sum^M_{m=1} \sum^n_{i=1} \exp (\beta^{*\top}_m \mathtt{x}_{m,i}) / (1 + \exp (\beta^{*\top}_m \mathtt{x}_{m,i}))^2 \mathtt{x}_{m,i} \mathtt{x}^{\top}_{m,i}$. The $\eta,\nu,\gamma,\rho$ in ASCD-PFL are the same as in ASVRCD-PFL, and we let $\theta=\min\{0.8, 1/\eta\}$. In addition, we initialize all iterates at zero for all algorithms.

\begin{figure}[t]
\centering
\includegraphics[width=0.7\columnwidth]{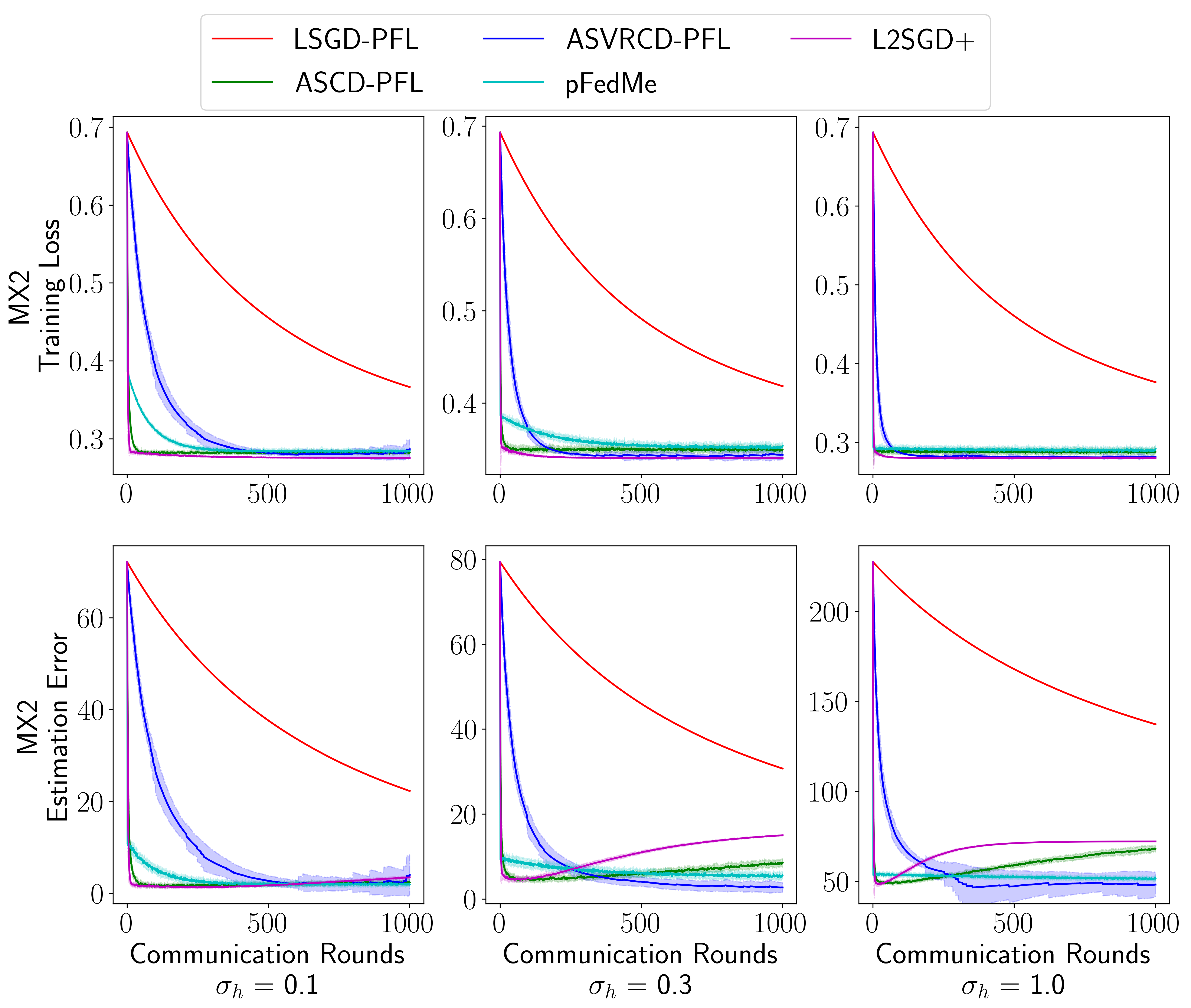}
\caption{Comparison of various methods for minimizing~\eqref{eq:personalized_mixture2}. Each experiment was repeated 30 times, and the solid line represents the average performance, while the shaded region indicates the mean $\pm$ standard error. We set the communication period of LSGD-PFL at 5, while other methods synchronize based on the corresponding theory. The first row shows the training loss, and the second row shows the estimation error. The different columns correspond to varying heterogeneity levels, which are parameterized by $\sigma_h$. As $\sigma_h$ increases, the heterogeneity level also increases.
} 
\label{fig:expsynthetic}
\end{figure}

\textbf{Tuning Parameters of pFedMe.} For pFedMe (Algorithm 1 in \cite{t2020personalized}), we set all parameters according to the suggestions in Section 5.2 of~\cite{t2020personalized}. Specifically, we set the local computation rounds to $R=20$, computation complexity to $K=5$, Mini-Batch size to $\vert \mathcal{D} \vert=5$, and $\eta=0.005$. We also set $S=M=20$. To solve the objective (7) in \cite{t2020personalized}, we use gradient descent, as suggested in the paper. Additionally, we initialize all iterates at zero.

\textbf{Tuning Parameters of L2SGD+.} For L2SGD+ \footnote{Algorithm 2 in \cite{hanzely2020federated}}, we set the stepsize $\eta$ (the parameter $\alpha$ in~\cite{hanzely2020federated}) and the probability of averaging $p_w$ to be the same as in ASRVCD-PFL and ASCD-PFL. We also initialize all iterates at zero.

\textbf{Results.} The results are summarized in Figure~\ref{fig:expsynthetic}. We observe that our general-purpose optimizers are competitive with L2SGD+ and pFedMe. In particular, both ASVRCD-PFL and L2SGD+ consistently achieve the same training error as the other methods, which is well predicted by our theory. Although L2SGD+ is slightly faster in terms of convergence due to the specific parameter setting, it is not as general as the methods we propose. Furthermore, we note that the widely used LSGD-PFL suffers from data heterogeneity on different devices, whereas ASVRCD-PFL is not affected by this heterogeneity, as predicted by our theory. The average running time over 30 independent runs is reported in Table~\ref{tab:time_syn_mx2}.

\begin{table}[t]
\centering
\begin{tabular}{lrrr}
\toprule
\diagbox{Algorithm}{$\sigma_h$} & 0.1 &  0.3 & 1.0 \\
\midrule
LSGD      &  260.98 &  256.16 &  237.18 \\
ASCD      &   28.79 &   51.71 &  142.59 \\
ASVRCD    &   47.94 &   97.61 &  274.60 \\
pFedMe    &  399.33 &  370.42 &  370.35 \\
L2SGDplus &   25.83 &   47.59 &  182.29 \\
\bottomrule
\end{tabular}
\caption{The average wall-clock running time in seconds over 30 independent runs when solving the objective~\eqref{eq:personalized_mixture2}. Each entry in the table reports the average time for 1,000 communication rounds. We ignore any additional communication costs that might occur in practice.}
\label{tab:time_syn_mx2}
\end{table}

\subsection{Explicit Weight Sharing Objective}
\label{sec:syn_exper_ews}

In this section, we present another experiment on synthetic data that aims to optimize the explicit weight sharing objective~\eqref{eq:weight_sharing2}. Since there is no good baseline algorithm for this objective, the purpose of this experiment is to compare the three proposed algorithms: LSGD-PFL, ASCD-PFL, and ASVRCD-PFL.

\textbf{Data and Model.} As a model, we choose logistic regression. We generate $w^* \in \mathbb{R}^{d_g}$ with i.i.d.~entries from $N(0,1)$, and $\beta^*_m \in \mathbb{R}^{d_l}$ with i.i.d.~entries from $\text{Uniform}[\mu_m-0.01,\mu_m+0.01]$, where $\mu_m \sim N(0, \sigma^2_h)$ for all $m=1,2,\dots,M$. Thus, $\sigma_h$ can be regarded as a measure of the heterogeneity level, with a large $\sigma_h$ corresponding to a high degree of heterogeneity. Finally, for each device $m=1,2,\dots,M$, we generate $\mathtt{x}_{m,i} \in \mathbb{R}^{d_g+d_l}$ with entries i.i.d.~from $\text{Uniform}[0.0, 0.1]$ for all $i=1,2,\dots,n$, and $y_{m,i} \sim \text{Bernoulli}(p_{m,i})$, where $p_{m,i}=1/(1+\exp( (w^{*\top},\beta^{*\top}_m) \mathtt{x}_{m,i}))$. We set $d_g=10$, $d_l=5$, $n=1000$, $M=20$, and let $\sigma_h\in \{5.0,10.0,15.0\}$ to explore different levels of heterogeneity.

\begin{figure}[t]
\centering
\includegraphics[width=0.7\columnwidth]{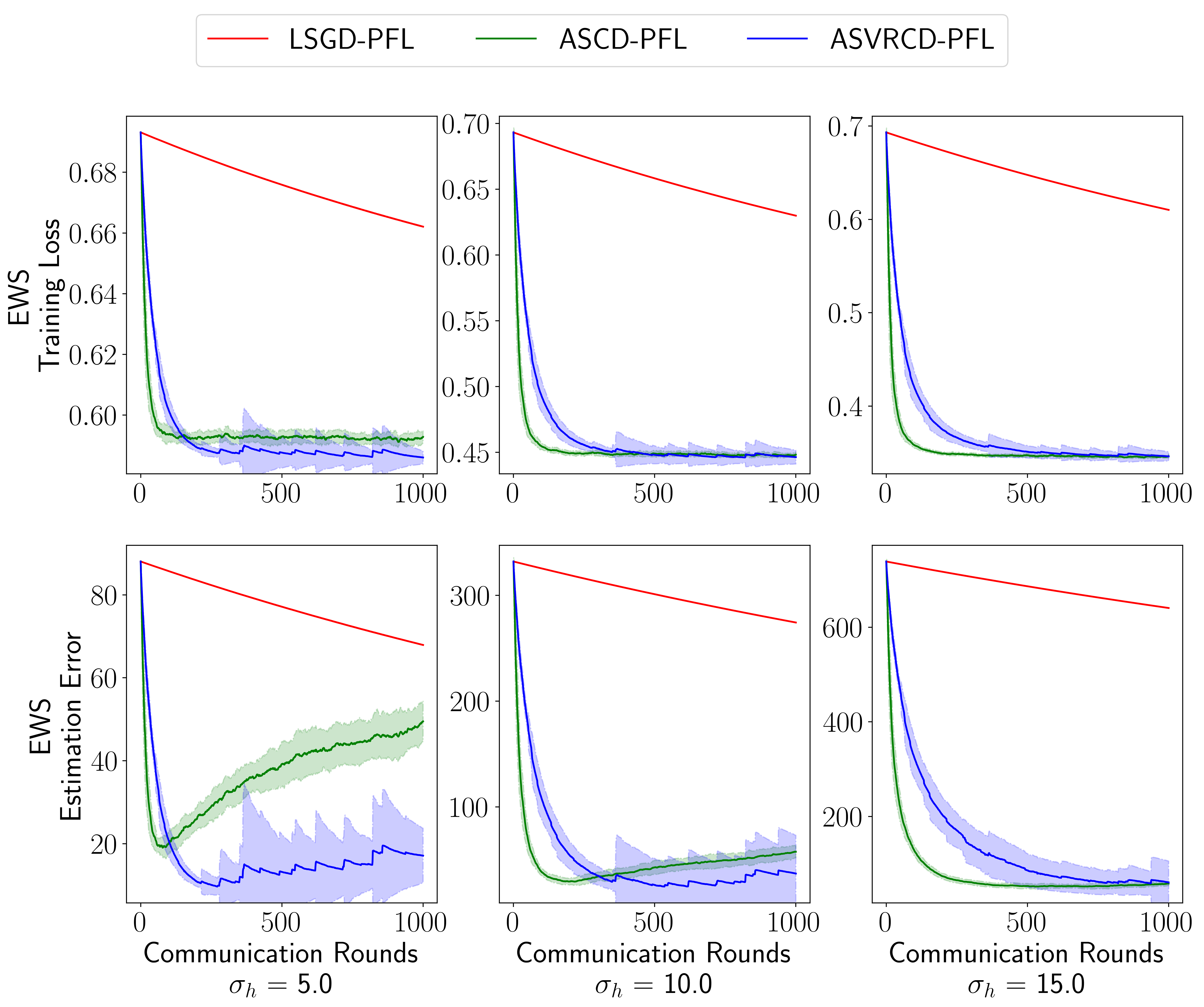}
\caption{Comparison of the three proposed algorithms -- LSGD-PFL, ASCD-PFL, and ASVRCD-PFL -- when minimizing~\eqref{eq:weight_sharing2}. Each experiment is repeated 30 times; the solid line represents the mean performance, and the shaded region covers the $\text{mean} \pm \text{standard error}$. We set the communication period of LSGD-PFL to $5$, while other methods synchronize according to their respective theories. The first row represents the training loss, and the second row corresponds to the estimation error. Different columns indicate various heterogeneity levels, parameterized by $\sigma_h$. The heterogeneity level increases with $\sigma_h$.
} 
\label{fig:synthetic_ews}
\end{figure}

\textbf{Objective function.} We use objective~\eqref{eq:personalized_mixture2} with $f^{\prime}_m(\beta_m)$ representing the cross-entropy loss function. We set $\lambda= \sigma_h \cdot 10^{-2}$, so smaller heterogeneity levels will induce a larger penalty, encouraging parameters on each device to be closer to their geometric center. In addition to the training loss, we also record the estimation error during the training process, defined as $\Vert \hat{w} - w^* \Vert^2 + \sum^M_{m=1} \Vert \hat{\beta}_m - \beta^*_m \Vert^2$.

\begin{table}[t]
\centering
\begin{tabular}{lrrr}
\toprule
\diagbox{Algorithms}{$\sigma_h$} & 5.0 &  10.0 & 15.0 \\
\midrule
LSGD   &  238.01 &  237.67 &  234.33 \\
ASCD   &   17.00 &   16.55 &   16.46 \\
ASVRCD &   23.35 &   22.12 &   23.51 \\
\bottomrule
\end{tabular}
\caption{The average wall-clock running time in seconds over 30 independent runs when solving the objective~\eqref{eq:weight_sharing2} is presented. Each entry of the table reports the average time for 1,000 communication rounds. We ignore any additional communication costs that might occur in practice.}
\label{tab:time_syn_ews}
\end{table}

\textbf{Results.} The results are summarized in Figure~\ref{fig:synthetic_ews}. When examining the training loss, we observe that ASCD-PFL drives the loss down quickly initially, while ASVRCD-PFL ultimately achieves a better optimum. This suggests that we can apply ASCD-PFL at the beginning of training and add control variates to reduce the variance at a later stage of training, thus combining the benefits of both algorithms. Both ASCD-PFL and ASVRCD-PFL perform much better than the widely used LSGD-PFL. When analyzing the estimation error, we observe that when the heterogeneity level is small, there is a tendency for overfitting, especially for ASCD-PFL; and when the heterogeneity level increases, there is less concern for overfitting. In general, however, ASCD-PFL and ASVRCD-PFL still achieve better estimation error than LSGD-PFL. The average running time over 30 independent runs is reported in Table~\ref{tab:time_syn_ews}.

\section{Real Data Experiment Results}
\label{sec:appendix_real_data}

In this section, we use real data to illustrate the performance and various properties of the proposed methods. In Section~\ref{sec:perf-prop-real}, we compare the performance of the three proposed algorithms. In Section~\ref{sec:subsampling-global-local}, we illustrate the effect of communication frequency of global parameters for ASCD-PFL and demonstrate that the theoretical choice based on Theorem~\ref{thm:e1} can generate the best communication complexity. Finally, in Section~\ref{sec:effect-reparam}, we show the effect of reparametrizing $w$ for ASCD-PFL.

\subsection{Performance of the Proposed Methods on Real Data}
\label{sec:perf-prop-real}

\begin{figure}[p]
\centering
\begin{subfigure}[b]{.65\textwidth}
  \centering
  \includegraphics[width=\linewidth]{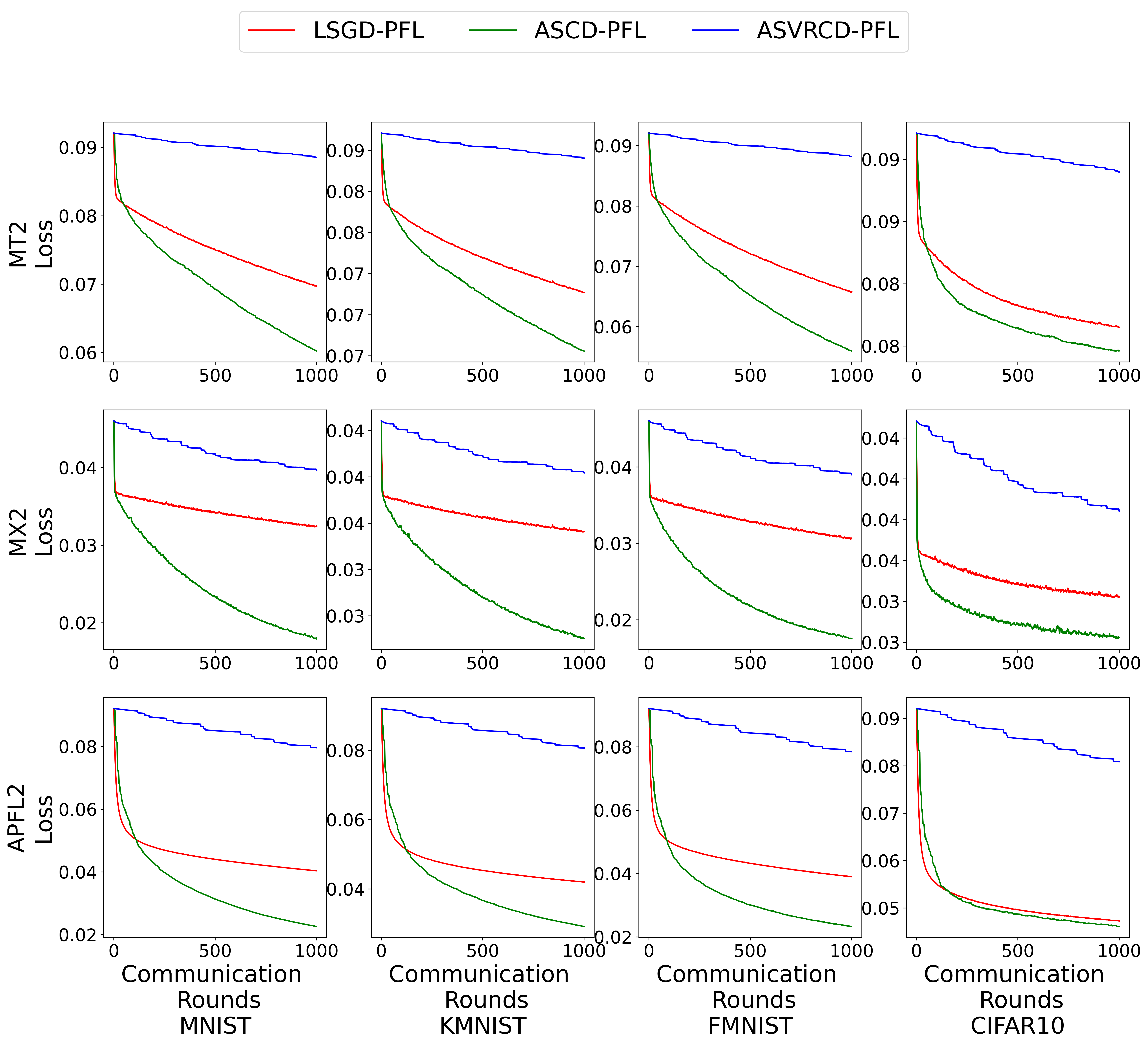}
  \caption{Training Loss}
\end{subfigure}
\begin{subfigure}[b]{.65\textwidth}
  \centering
  \includegraphics[width=\linewidth]{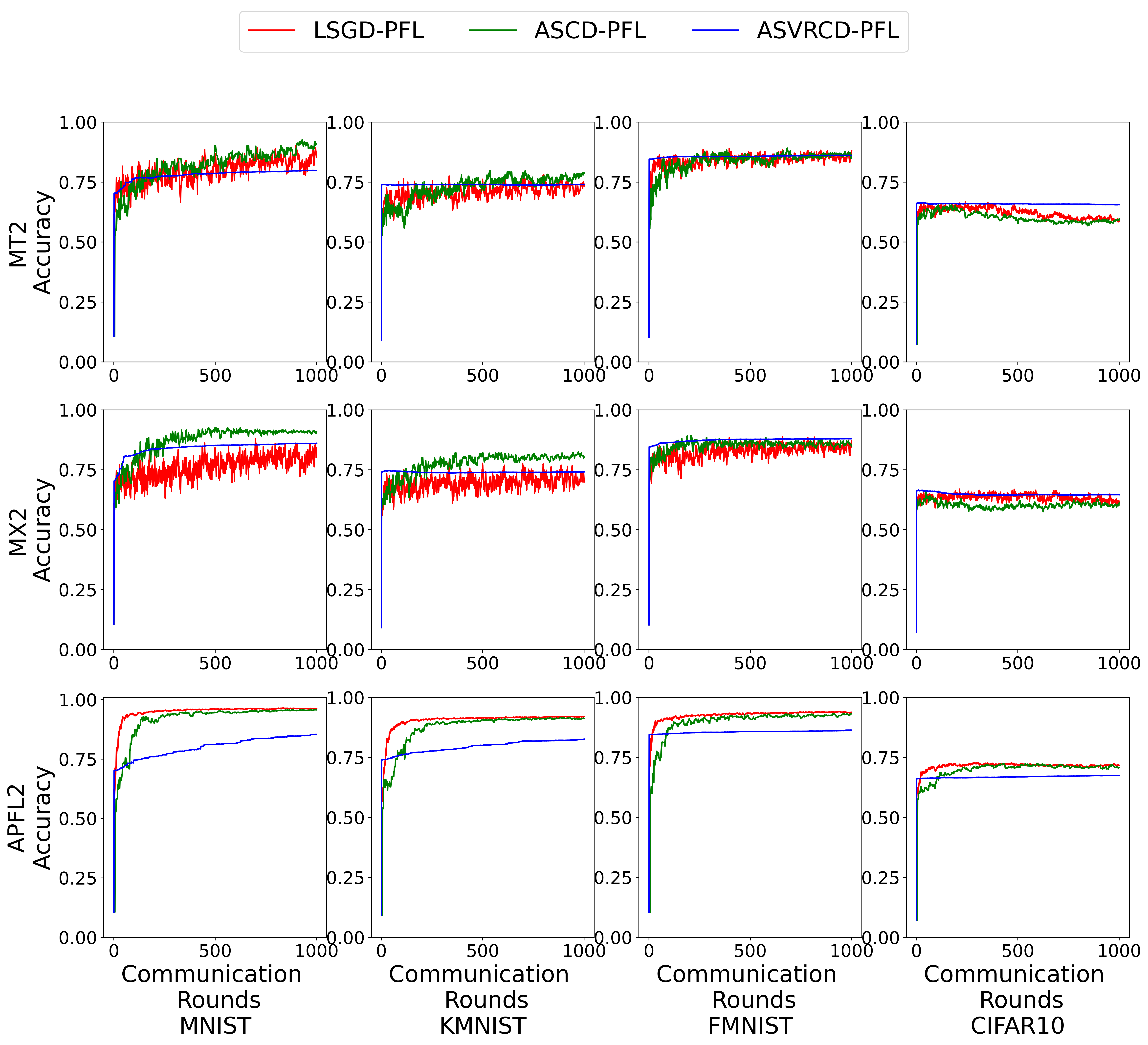}
  \caption{Validation Accuracy}
\end{subfigure}
\caption{The real data experiment results for $K=2$. Different rows correspond to different objective functions, and  columns correspond to different datasets.}
\label{fig:realdataexp_K2}
\end{figure}

\begin{figure}[p]
\centering
\begin{subfigure}{.65\textwidth}
  \centering
  \includegraphics[width=\linewidth]{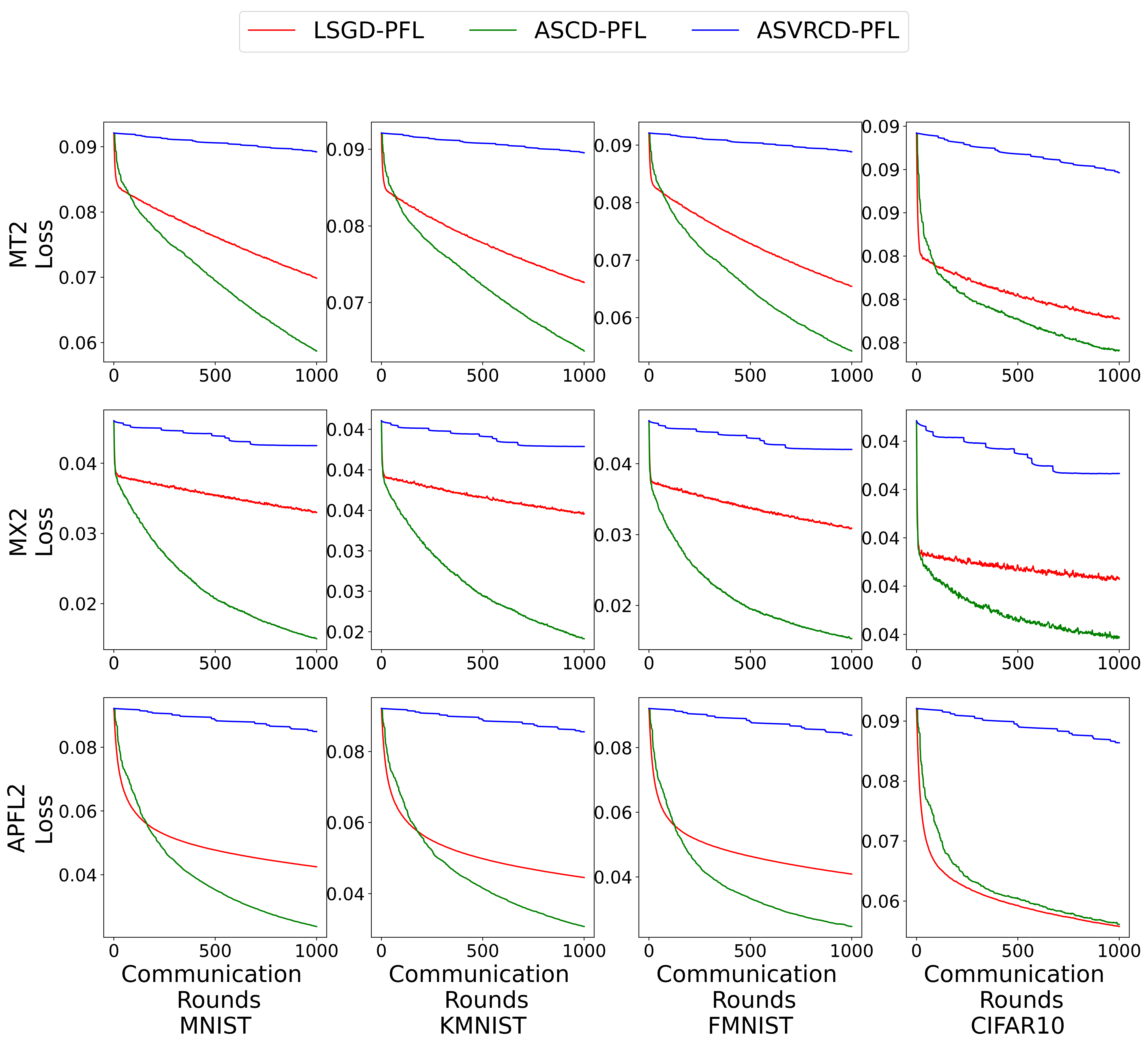}
  \caption{Training Loss}
\end{subfigure}
\begin{subfigure}{.65\textwidth}
  \centering
  \includegraphics[width=\linewidth]{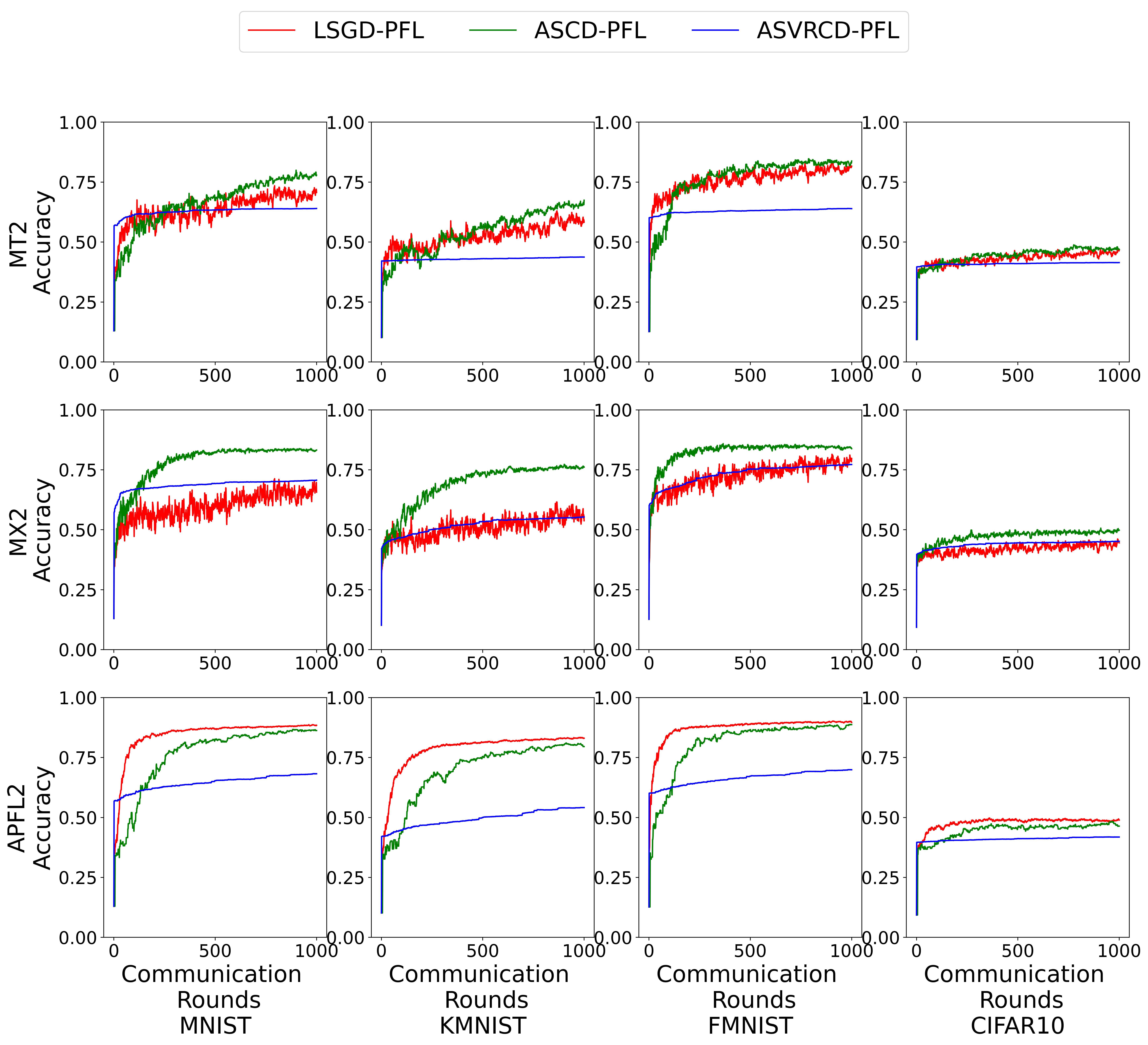}
  \caption{Validation Accuracy}
\end{subfigure}
\caption{The real data experiment results for $K=4$. Different rows orrespond to different objective functions, and columns correspond to different datasets.}
\label{fig:realdataexp_K4}
\end{figure}

\begin{figure}[p]
\centering
\begin{subfigure}{.65\textwidth}
  \centering
  \includegraphics[width=\linewidth]{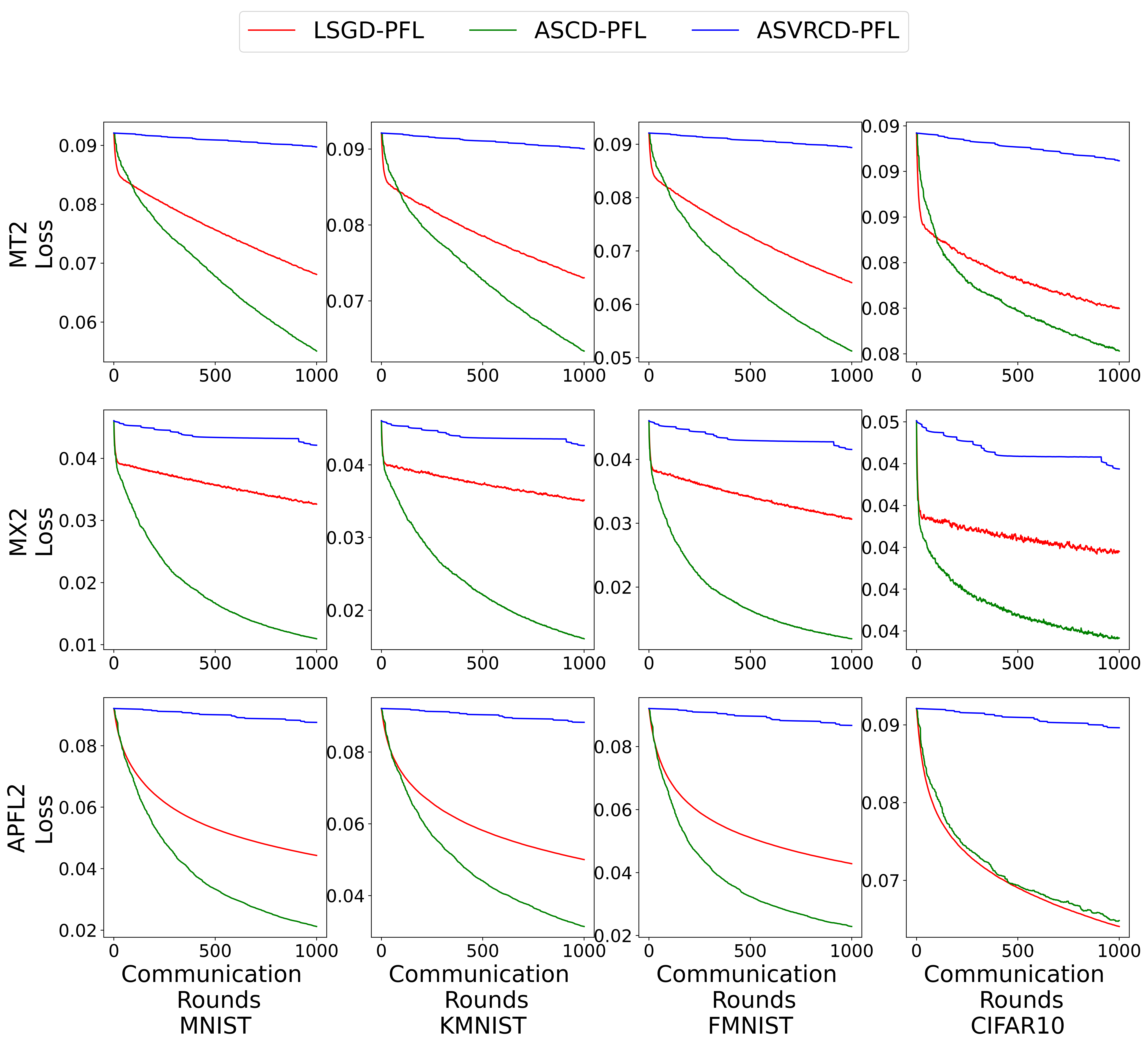}
  \caption{Training Loss}
\end{subfigure}
\begin{subfigure}{.65\textwidth}
  \centering
  \includegraphics[width=\linewidth]{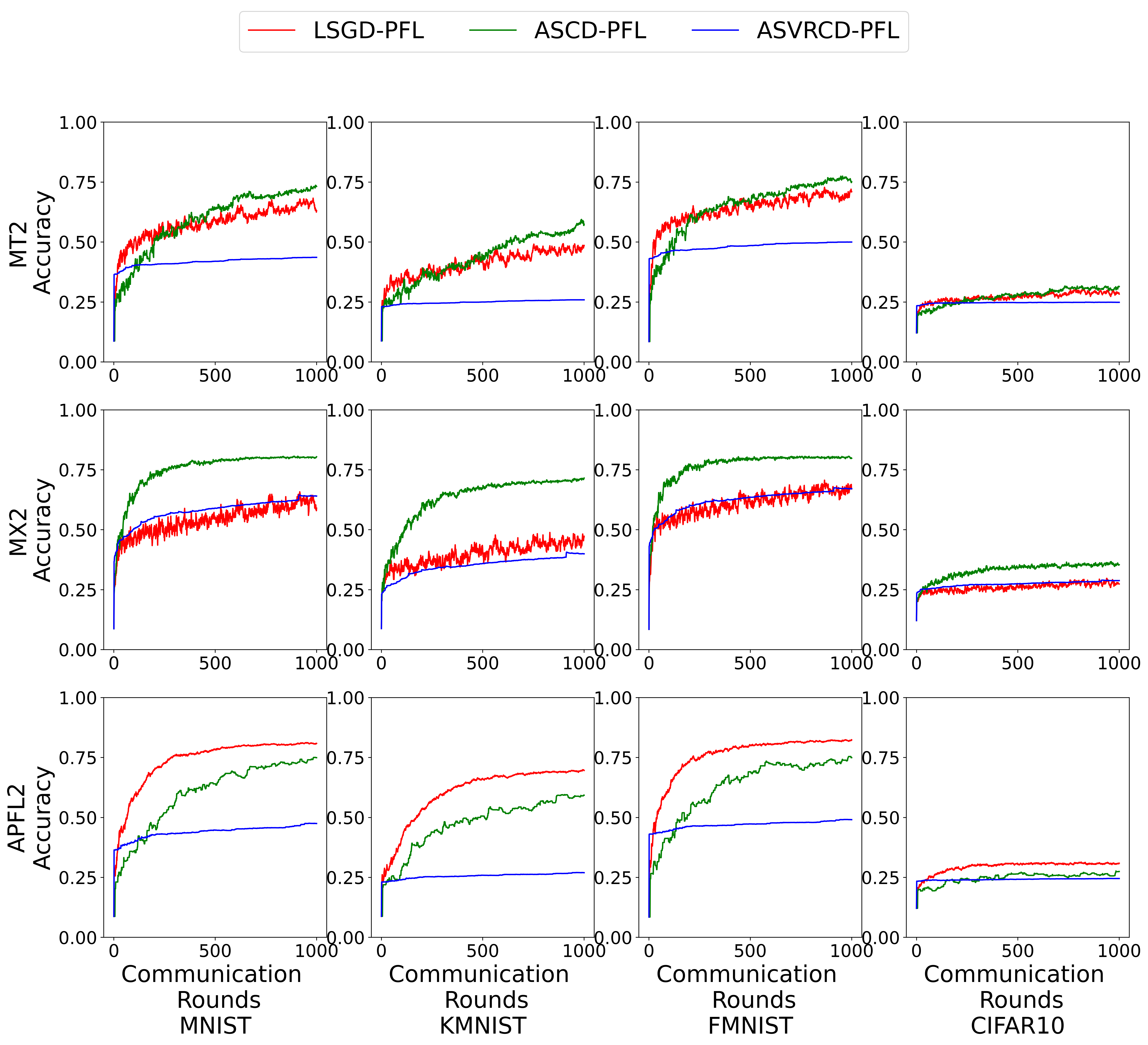}
  \caption{Validation Accuracy}
\end{subfigure}
\caption{The real data experiment results for $K=8$. Different rows correspond to different objective functions, and columns correspond to different datasets.}
\label{fig:realdataexp_K8}
\end{figure}

We compare the three proposed algorithms -- LSGD-PFL, ASCD-PFL (ASVRCD-PFL without control variates), and ASVRCD-PFL -- across four image classification datasets -- MNIST~\citep{deng2012mnist}, KMINIST~\citep{clanuwat2018deep}, FMINST~\citep{xiao2017/online}, and CIFAR-10~\citep{krizhevsky2009learning} with three objective functions \eqref{eq:multitaskFL2}, \eqref{eq:personalized_mixture2}, and \eqref{eq:APFL2}. As a model, we use a multiclass logistic regression, which is a single-layer fully connected neural network combined with a softmax function and cross-entropy loss. Experiments were conducted on a personal laptop (Intel(R) Core(TM) i7-9750H CPU@2.60GHz) with a GPU (NVIDIA GeForce RTX 2070 with Max-Q Design).

\textbf{Data preparation.} We set the number of devices $M=20$. We focus on a non-i.i.d. setting of \citet{mcmahan2017communication} and \citet{liang2020think} by assigning $K$ classes out of ten to each device. We let $K=2,4,8$ to generate different levels of heterogeneity. A larger $K$ means a more balanced data distribution and thus smaller data heterogeneity. We then randomly select $n=100$ samples for each device based on its class assignment for training and $n^{\prime}=300$ samples for testing. We normalize each dataset in two steps: first, we normalize the columns (features) to have mean zero and unit variance; next, we normalize the rows (samples) so that every input vector has a unit norm.

\textbf{Model.} Given a grayscale picture with a label $y \in \{1,2,\dots,C\}$, we unroll its pixel matrix into a vector $x \in \mathbb{R}^{p}$. Then, given a parameter matrix $\Theta \in \mathbb{R}^{p \times C}$, the function $f^{\prime}_m (\cdot)$ in \eqref{eq:multitaskFL2}, \eqref{eq:personalized_mixture2}, and \eqref{eq:APFL2} is defined as
\begin{equation*}
f^{\prime}_m (\Theta) \coloneqq l_\text{CE} \left(\varsigma (\Theta x) \, ; y \right),
\end{equation*}
where $\varsigma (\cdot) : \mathbb{R}^{K} \rightarrow \mathbb{R}^{K}$ is the softmax function and $l_\text{CE}(\cdot)$ is the cross-entropy loss function. In this setting, the function $f^{\prime}_m (\cdot)$ is convex.

\begin{minipage}{\linewidth}
\textbf{Personalized FL objectives.} We consider three different objectives:
\begin{enumerate}
    \item the multitask FL objective~\eqref{eq:multitaskFL2}  with $\Lambda=1.0$ and $\lambda=1.0 / K$;
    \item the mixture FL objective~\eqref{eq:personalized_mixture2}, with $\lambda=1.0 / K$; and
    \item the adaptive personalized FL objective~\eqref{eq:APFL2}, 
    with $\Lambda =1.0$ and $\alpha_m = 0.05 \times K$ for all $m\in [M]$,
\end{enumerate}
where $K$ is the number of labels for each device. When computing the testing accuracy, we only use the local parameters. Note that the choices of hyperparameters in the chosen objectives are purely heuristic. Our purpose is to demonstrate the convergence properties of the proposed algorithms on the training loss. Thus, it is possible that a smaller training loss does not necessarily imply better testing accuracy (or generalization ability). How to choose hyperparameters optimally is not the focus of this paper and requires further research.
\end{minipage}

\textbf{Tuning parameters of proposed algorithms.}
For LSGD-PFL (Algorithm~\ref{Alg:PersonalFL}), we set the batch size to compute the stochastic gradient $B=1$ and set the average period $\tau=5$. For $p_w$ in ASCD-PFL (Algorithm~\ref{Alg:scd_pfl}) and ASVRCD-PFL (Algorithm~\ref{Alg:svrcd_pfl}), we set it as $p_w=\mathcal{L}^w / (\mathcal{L}^{\beta} + \mathcal{L}^w)$. For the objective $F_{MT2}$ in \eqref{eq:multitaskFL2}, we set $\mathcal{L}^w = (\Lambda \mathcal{L}^{\prime} + \lambda) / M$ and $\mathcal{L}^{\beta}=\mathcal{L}^{\prime} + \lambda$; for the objective $F_{MX2}$ in \eqref{eq:personalized_mixture2}, we set $\mathcal{L}^w = \lambda / M$ and $\mathcal{L}^{\beta} = \mathcal{L}^{\prime} + \lambda$; for the objective $F_{APFL2}$ in \eqref{eq:APFL2}, we set $\mathcal{L}^w = (\Lambda + \max_{1 \leq m \leq M} \alpha_m^2) \mathcal{L}^{\prime} / M$ and $\mathcal{L}^{\beta}= (1 - \max_{1 \leq m \leq M} \alpha_m)^2 \mathcal{L}^{\prime} / M$. We set $\mathcal{L}^{\prime}=1.0$ for all objectives. We set $\rho=p_w/n$ for ASCD-PFL and ASVRCD-PFL. For $\eta,\theta_2,\gamma,\nu$ and $\theta_1$ in ASVRCD-PFL, we set them according to Theorem~\ref{thm:e1}, where $\mathcal{L}=2 \max \{ \mathcal{L}^w / p_w, \mathcal{L}^{\beta} / p_{\beta} \}$, $\rho=p_w / n$, and $\mu=\mu^{\prime} / (3M)$. We let $\mu^{\prime}=0.01$. Since the dimension of the iterates is larger than the sample size, the objective is weakly convex and, thus, $\mu^{\prime}=0$. Therefore, our choice of $\mu^{\prime}$ is aimed at improving the numerical behavior of algorithms. The $\eta,\nu,\gamma,\rho$ in ASCD-PFL are the same as in ASVRCD-PFL, and we let $\theta=\min\{0.8, 1/\eta\}$. In addition, we initialize all iterates at zero for all algorithms.

\textbf{Results.}
The results are summarized in Figure~\ref{fig:realdataexp_K2}, Figure~\ref{fig:realdataexp_K4}, and Figure~\ref{fig:realdataexp_K8} for $K=2,4,8$ respectively. We observe that ASCD-PFL outperforms the widely-used LSGD-PFL. We also observe that ASVRCD-PFL converges slowly when minimizing the training loss. As we are working in the over parametrization regimes, which makes $\mu^{\prime}=0$, the assumptions of our theory are violated. As a result, it is more advisable to use ASCD-PFL during the initial phase of training and use ASVRCD-PFL when the iterates get closer to the optimum.

\subsection{Subsampling of the Global and Local Parameters}
\label{sec:subsampling-global-local}

\begin{figure}[p]
\centering
\begin{subfigure}{.65\textwidth}
  \centering
  \includegraphics[width=\linewidth]{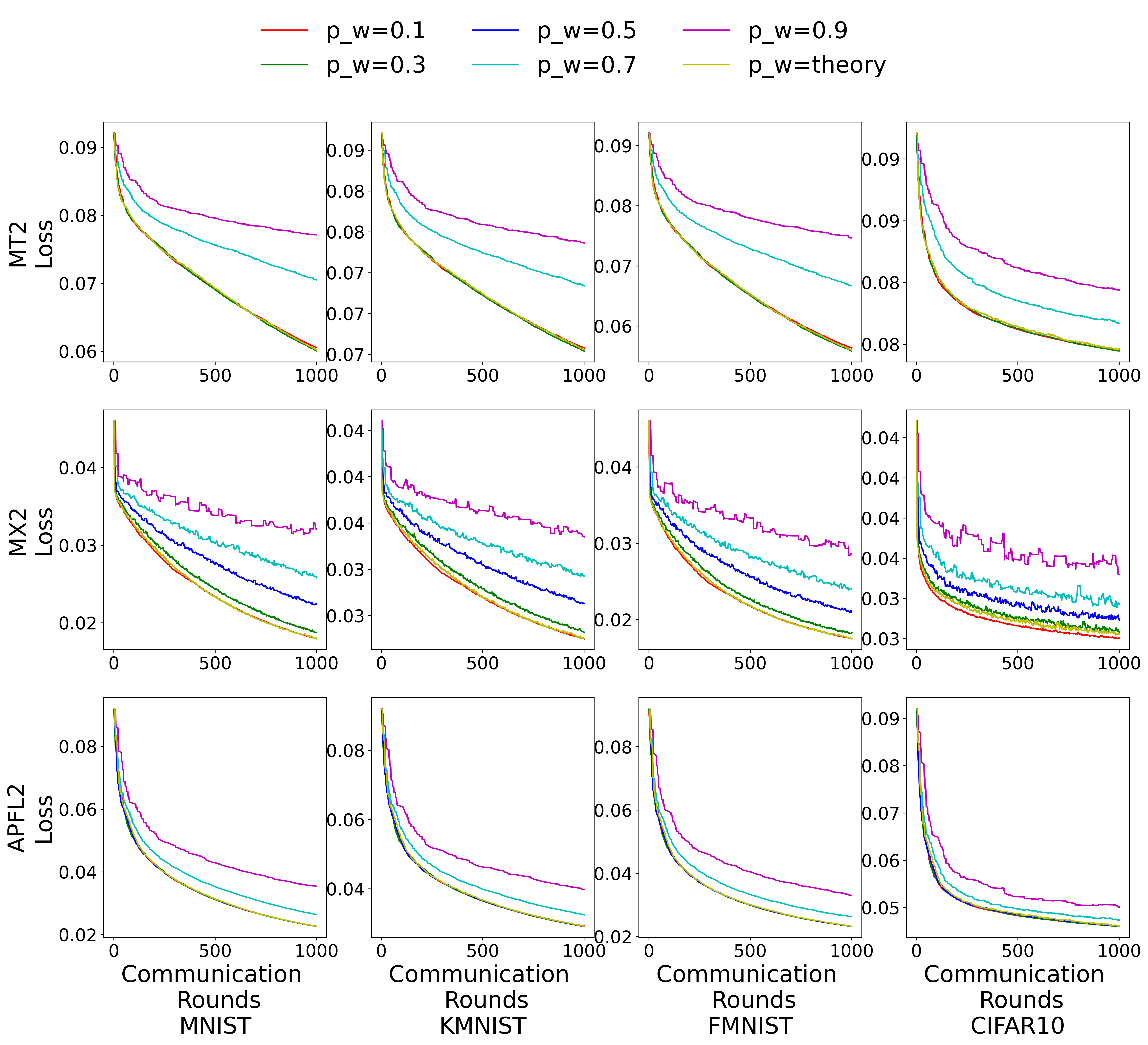}
  \caption{Training Loss}
\end{subfigure}
\begin{subfigure}{.65\textwidth}
  \centering
  \includegraphics[width=\linewidth]{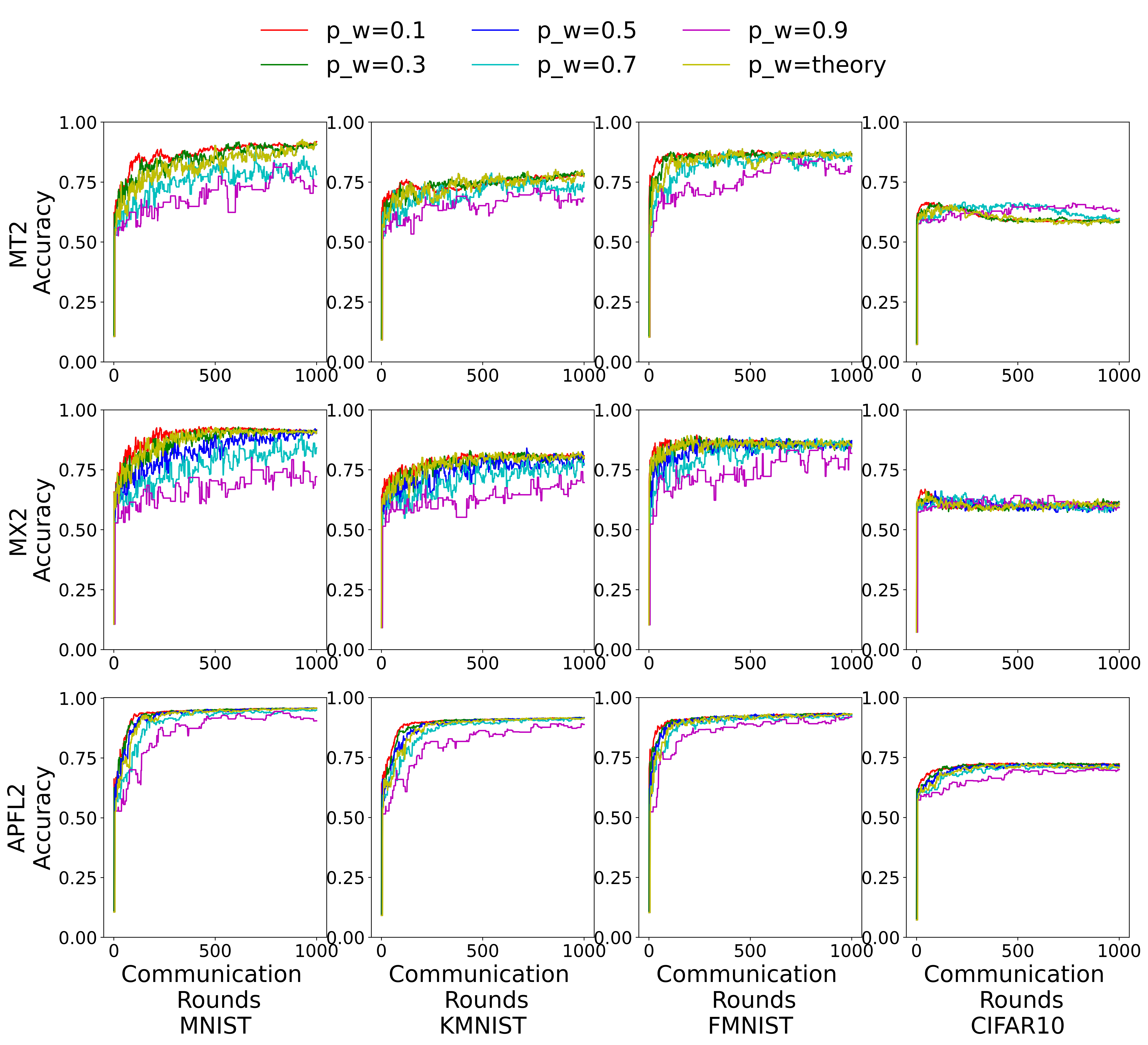}
  \caption{Validation Accuracy}
\end{subfigure}
\caption{Communication complexity for different choices of $p_w$. The theoretical choice based on Theorem~\ref{thm:e1} can provide the best communication complexity. Specifically, the theoretical choice of $p_w$ results in $p_w=0.5$ for $F_{MT2}$, $p_w=0.25$ for $F_{MX2}$, and $p_w=0.55$ for $F_{APFL2}$. Different rows correspond to different objective functions, and columns correspond to different datasets.}
\label{fig:expp_w}
\end{figure}

\begin{figure}[p]
\centering
\begin{subfigure}{.65\textwidth}
  \centering
  \includegraphics[width=\linewidth]{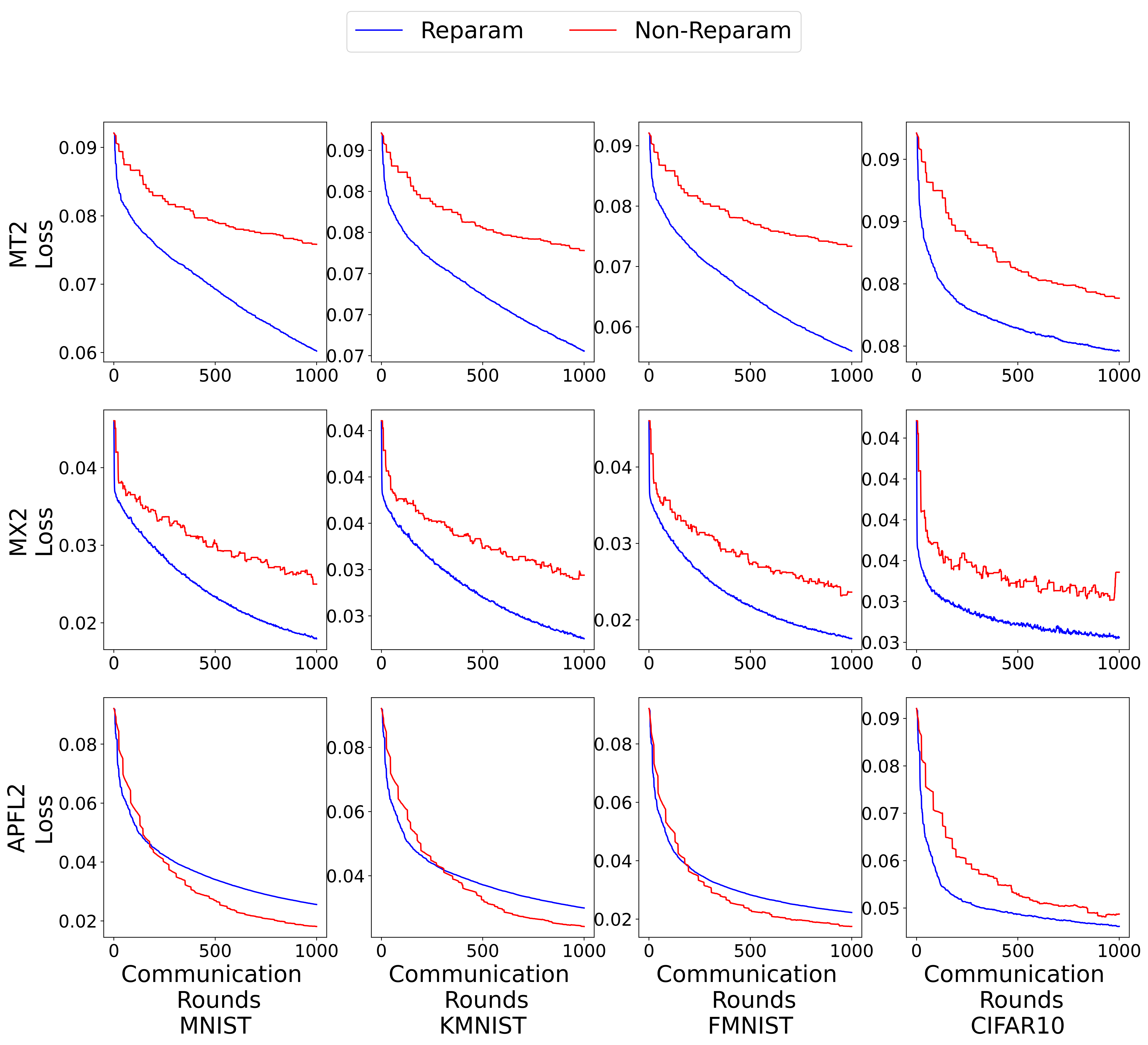}
  \caption{Training Loss}
\end{subfigure}
\begin{subfigure}{.65\textwidth}
  \centering
  \includegraphics[width=\linewidth]{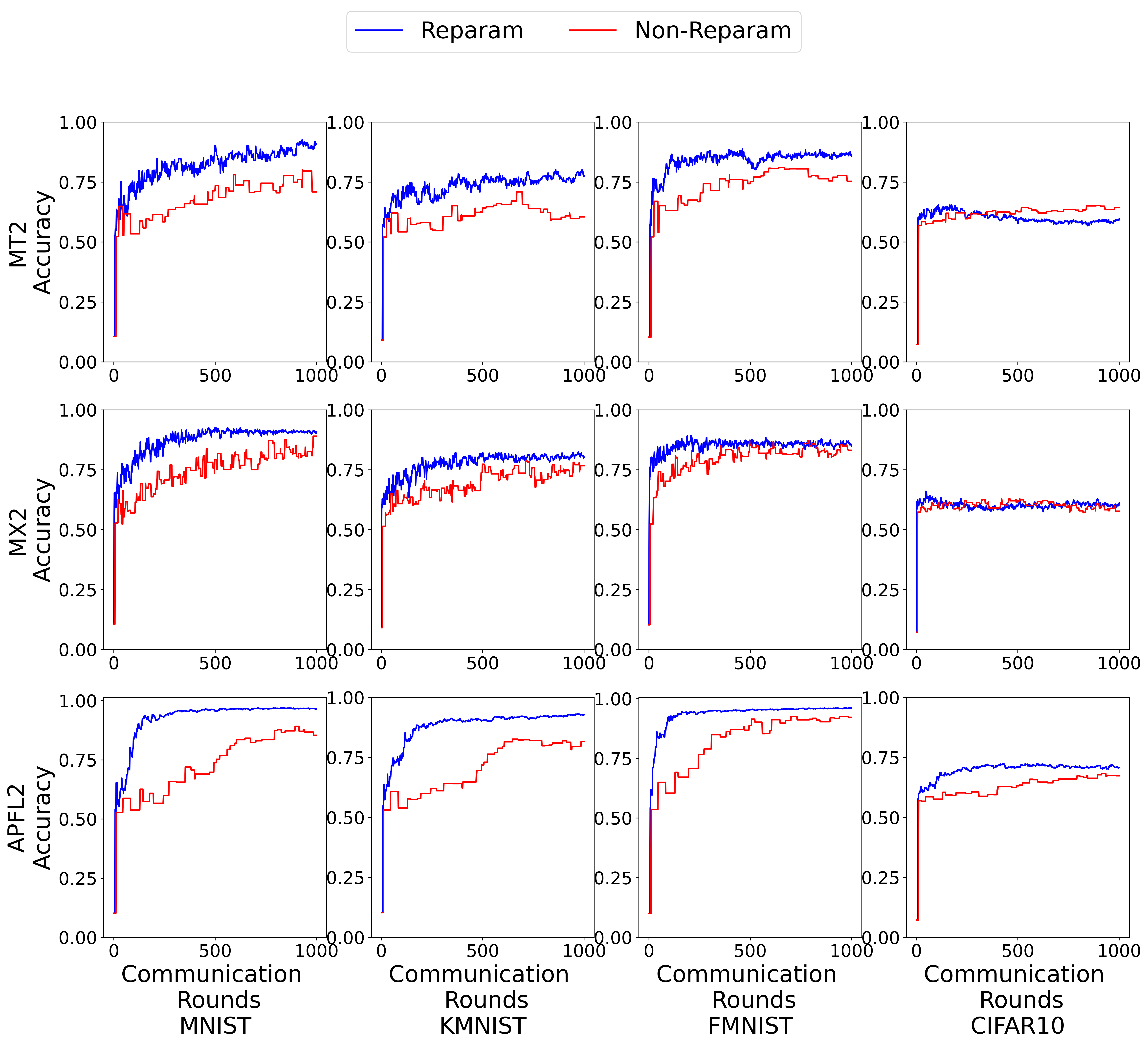}
  \caption{Validation Accuracy}
\end{subfigure}
\caption{Effect of reparametrization of global space in ASCD-PFL. Reparametrization generally helps achieve faster convergence in minimizing training loss and consistently improves testing accuracy. Different rows correspond to different objective functions, and columns correspond to different datasets.}
\label{fig:exp_reparam}
\end{figure}

We demonstrate that the choice of $p_w$ based on Theorem~\ref{thm:e1}, specifically setting $p_w = \cL^w / (\cL^w + \cL^{\beta})$, results in the best communication complexity of ASCD-PFL. More precisely, based on Theorem~\ref{thm:e1}, we set the learning rate $\eta=1/(4 \mathcal{L})$, where $\cL \eqdef 2 \max \left\{\cL^w/p_w, \cL^\beta /p_\beta \right\}$. The expressions of $\cL^w$ and $\cL^{\beta}$ for $F_{MT2}$, $F_{MX2}$, and $F_{APFL2}$ are stated in Lemma~\ref{lem:smooth_sc_multitaks}, Lemma~\ref{lem:mixture}, Lemma~\ref{lem:apfl}, and also restated in the previous section, where $\cL^{\prime}$ is $1$ after normalization. We set $\rho=p_w/n$. We compare the performance of ASCD-PFL using the theoretically suggested $p_w$ with other choices of $p_w \in \{0.1, 0.3, 0.5, 0.7, 0.9\}$. We fix those parameters that are independent of $p_w$. See more details about the choice of tuning parameters in the previous section.

We plot the loss against the number of communication rounds, which illustrates the communication complexity. The number of classes for each device is $K=2$. The results are summarized in Figure~\ref{fig:expp_w}, which also includes the test accuracy. We observe that choosing $p_w$ based on theoretical considerations leads to the best communication complexity.

\subsection{Effect of Reparametrization in ASCD-PFL.}
\label{sec:effect-reparam}

We demonstrate the importance of reparametrization of the global parameter $w$ by rescaling $w$ by a factor of $M^{-\frac12}$. We run reparameterized and non-reparameterized ASCD-PFL across different objectives and datasets. We set the number of classes for each device $K=2$. The results are summarized in Figure~\ref{fig:exp_reparam}. For the training loss, we observe that reparametrization improves the convergence of ASCD-PFL, except for the APFL2 objective \eqref{eq:APFL2}, where the non-reparametrized variant performs slightly better in three of the four datasets. On the other hand, when considering the testing accuracy, reparametrization always helps improve the results, indicating that reparametrization can help prevent overfitting. Based on the experiment, we suggest always using reparametrization to ensure the scale of the learning rate is appropriate for both global and local parameters.

\subsection{Potential Benefits of Extended Objectives}

We empirically justify the potential benefits of our extended objectives. More specifically, we vary the relaxation parameter $\Lambda$ in~\eqref{eq:multitaskFL_single} to show the change in performance. Since the multitask objective proposed by~\cite{li2020federated} is equivalent to setting $\Lambda \to \infty$, our main interest is to explore whether a larger $\Lambda$ always implies better performance. We set $K=4$, $n=30$, $n^{\prime}=100$, $M=20$, and $\lambda=0.1$. The remaining settings are the same as in Section~\ref{sec:perf-prop-real}. We vary $\Lambda$ over the values $\{1.0, 10.0, 100.0\}$, and the resulting performance is shown in Figure~\ref{fig:new_obj_supp}. The plot shows that the performance slightly improves as $\Lambda$ increases from $1.0$ to $10.0$, but then drops when $\Lambda=100.0$. This result suggests that by selecting an appropriate $\Lambda$, it is possible to achieve better empirical performance. Furthermore, although proposing new personalized FL objectives is not the main focus of this paper, the above empirical result suggests the potential benefits of a general framework.

\begin{figure}[t]
\centering
\includegraphics[width=\linewidth]{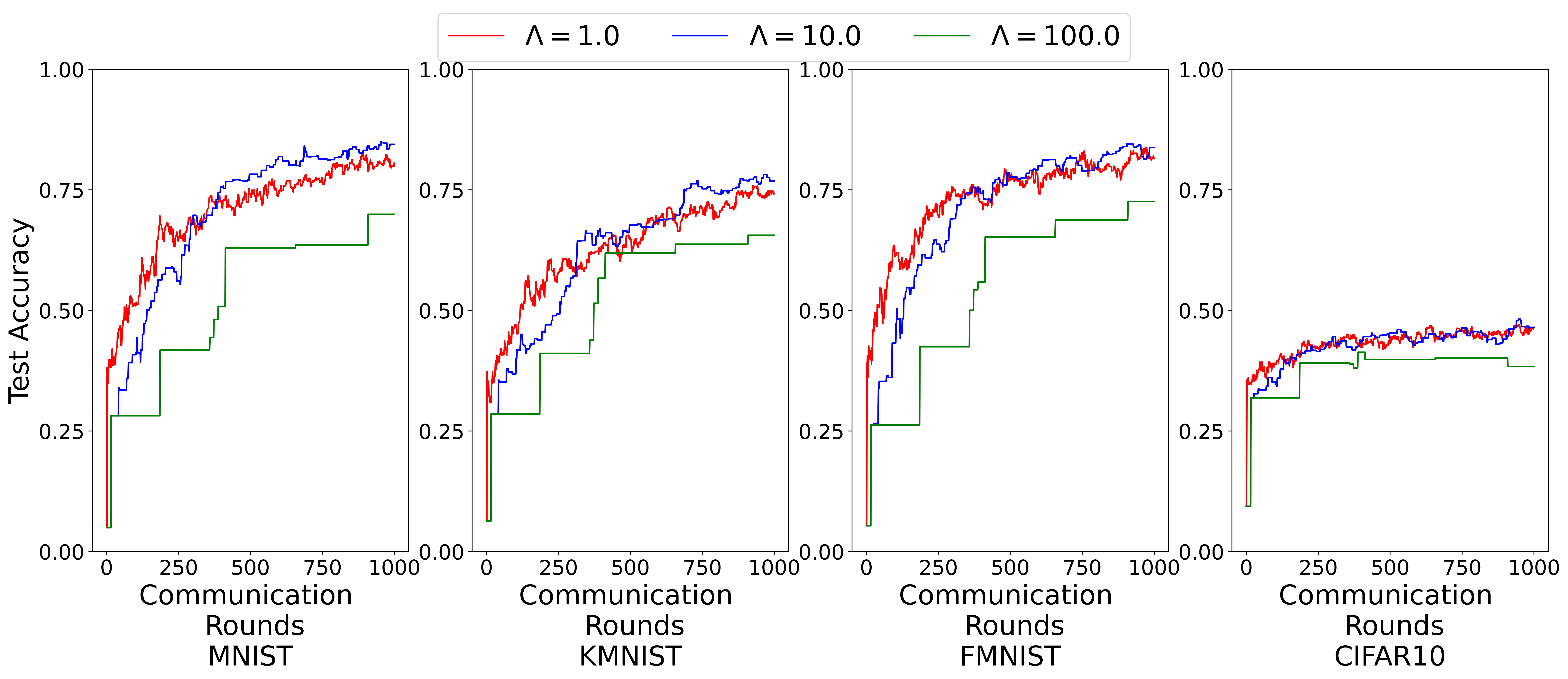}
\caption{Empirical exploration of the effect of $\Lambda$ on the performance of the objective in~\eqref{eq:multitaskFL_single}.}
\label{fig:new_obj_supp}
\end{figure}

\section{Conclusions and Directions for Future Research}\label{sec:extensions}

We proposed a general convex optimization theory for personalized FL. While our work answers a range of important questions, there are many directions in which our work can be extended in the future, such as partial participation, minimax optimal rates for specific personalized FL objectives, brand new personalized FL objectives, and non-convex theory.

\textbf{Partial participation and client sampling.} An essential aspect of FL that is not covered in this work is the partial participation or client sampling when one has access to only a subset of devices at each iteration. While we did not cover partial participation and focused on answering orthogonal questions, we believe that partial participation should be considered when extending our results in the future. Typically, when one chooses clients uniformly, the theorems in this paper should be extended easily; however, a more interesting question is how to sample clients with a non-uniform distribution to speed up the convergence. We leave this problem for future study.

\textbf{Minimax optimal rates for specific personalized FL objectives.} As outlined in Section~\ref{sec:assumptions}, one cannot hope for the general optimization framework to be minimax optimal in every single special case. Consequently, there is still a need to keep exploring the optimization aspects of individual personalized FL objectives as one might come up with a more efficient optimizer that exploits the specific structure not covered by Assumption~\ref{as:smooth_sc} or Assumption~\ref{as:smooth_summands}.

\textbf{Brand new personalized FL objectives.} While in this work we propose a couple of novel personalized FL objectives obtained as an extension of known objectives, we believe that seeing personalized FL as an instance of~\eqref{eq:PrimProblem} might lead to the development of brand new approaches for personalized FL.

\textbf{Non-convex theory.} In this work, we have focused on a general convex optimization theory for personalized FL. Our convex rates are meaningful -- they are minimax optimal and correspond to the empirical convergence. However, an inherent drawback of such an approach is the inability to cover non-convex FL approaches, such as MAML (see Section~\ref{sec:maml}), or non-convex FL models. We believe that obtaining minimax optimal rates in the non-convex world would be very valuable.

\FloatBarrier

\newpage

\bibliography{boxinz-papers}
\bibliographystyle{tmlr}

\newpage

\appendix

\section{Additional Algorithms Used in Simulations}
\label{sec:more-algos}

In this section, we detail algorithms
that are used in Section~\ref{sec:experiments}.
We detail ASCD-PFL in Algorithm~\ref{Alg:ASCD-PFL}.
ASCD-PFL is
a simplified version of ASVRCD-PFL that 
does not incorporate control variates.
SCD-PFL is detailed in Algorithm~\ref{Alg:scd_pfl}.
SCD-PFL is
a simplified version of ASCD-PFL that 
does not incorporate the control variates or Nesterov's acceleration.
SVRCD-PFL is detailed in Algorithm~\ref{Alg:svrcd_pfl}.
SVRCD-PFL is
a simplified version of ASVRCD-PFL that 
does not incorporate Nesterov's acceleration.

\begin{algorithm}[ht]
\caption{ASCD-PFL}
\label{Alg:ASCD-PFL}
\begin{algorithmic}
\INPUT{$0< \theta <1$, $\eta, \nu , \gamma > 0$, $\rho \in (0,1)$, $p_w \in (0,1)$, $p_{\beta}=1-p_w$, $w_y^0 = w_z^0 \in \R^{d_0}$, $\beta_{y,m}^0 = \beta_{z,m}^0 \in \R^{d_m}$ for $1\leq m \leq M$.}\;
\FOR{$k=0,1,2,\dots$}
    \STATE{$ w_x^k = \theta  w_z^k + ( 1 -\theta)  w_y^k$}\;
    \FOR{$m=1, \dots , M$ in parallel}
    \STATE{$ \beta_{x,m}^k = \theta  \beta_{z,m}^k + ( 1 -\theta)  \beta_{y,m}^k$}\,
    \ENDFOR
    \STATE{Sample random  $j \in  \{1,2,\dots,n\}$ and $\zeta =\begin{cases}
    1 & \text{ with probability } p_w \\
    2 & \text{ with probability } p_\beta
    \end{cases}$}\;
    \IF{$\zeta=1$}
      \STATE{$g_w^k = \frac{1}{p_w M} \sum_{m=1}^M \nabla_w f_{m,j}(w_x^k, \beta_{x,m}^k)$
      }\;
      \STATE{$w_y^{k+1} =  w_x^{k} - \eta g_w^k$}\;
      \STATE{$w_z^{k+1} = \nu w_z^k + (1-\nu)w_x^k + \frac{\gamma}{\eta}(w_y^{k+1} - w_x^k)$}\;
      \STATE{$w_v^{k+1} = 
      \begin{cases}
      w_y^k, &\text{ with probability } \rho\\
      w_v^k, &\text{ with probability } 1-\rho\\
      \end{cases}$}\;
    \ELSE
      \FOR{$m=1, \dots , M$ in parallel}
      \STATE{$g_{\beta,m}^k = \frac{1}{p_{\beta} M} \nabla_{\beta} f_{m,j}(w_x^k, \beta_{x,m}^k)$}\;
      \STATE{$ \beta_{y,m}^{k+1}  = \beta_{x,m}^{k} - \eta g_{\beta,m}^k $}\;
      \STATE{$\beta_{z,m}^{k+1} = \nu \beta_{z,m}^k + (1-\nu)\beta_{x,m}^k + \frac{\gamma}{\eta}(\beta_{y,m}^{k+1} - \beta_{x,m}^k)$}\;
      \STATE{$ \beta_{v,m}^{k+1} = \begin{cases}
    		\beta_{y,m}^k, &\text{ with probability } \rho\\
    		\beta_{v,m}^k, &\text{ with probability } 1-\rho\\
    	\end{cases} $}\,
      \ENDFOR
    \ENDIF
    \ENDFOR
\end{algorithmic}
\end{algorithm}

\begin{algorithm}[p]
\caption{SCD-PFL}
\label{Alg:scd_pfl}
\begin{algorithmic}
\INPUT{$\eta > 0$, $p_w \in (0,1)$, $p_\beta=1-p_w$, $w^0 \in \R^d$, $\beta_{m}^0 \in \R^d$ for $1\leq m \leq M$.}\;
\FOR{$k=0,1,2,\dots K-1$}
    \STATE{Sample random  $j_m \in  \{1,2,\dots,n_m\}$ for $1 \leq m \leq M$ and $\zeta =
    \begin{cases}
    1 & \text{ with probability } \, p_w \\
    2 & \text{ with probability } \, p_\beta
    \end{cases}$}\;
    \STATE{$g_w^k = 
    \begin{cases}
    \frac{1}{p_w M} \sum_{m=1}^M \nabla_w f_{m,j_m}(w^k, \beta_{m}^k)
    & \text{ if } \zeta =1  \\
    0 & \text{ if } \zeta =2
    \end{cases}$
    }
    \STATE{$w^{k+1} =  w^{k} - \eta g_w^k$}\;
    \FOR{$m=1, \dots , M$ in parallel}
    \STATE{$g_{\beta,m}^k = 
    \begin{cases}
    0 & \text{ if } \zeta=1 \\
    \frac{1}{p_\beta M}\nabla_{\beta} f_{m,j_m}(w^k, \beta_{m}^k) & \text{ if } \zeta=2
    \end{cases} $}\;
    \STATE{$ \beta_{m}^{k+1}  = \beta_{m}^{k} - \eta g_{\beta,m}^k $}\;
    	\ENDFOR
    	\ENDFOR
\OUTPUT{$w^{K}$, $\beta^K_{m}$ for $1 \leq m \leq M$.}
\end{algorithmic}
\end{algorithm}

\begin{algorithm}[p]
\caption{SVRCD-PFL}
\label{Alg:svrcd_pfl}
\begin{algorithmic}
\INPUT{$\eta > 0$, $p_w \in (0,1)$, $p_\beta=1-p_w$, $\rho \in (0,1)$, $w_y^0 = w_v^0 \in \R^d$, $\beta_{y,m}^0 = \beta_{v,m}^0 \in \R^d$ for $1\leq m \leq M$.}\;
\FOR{$k=0,1,2,\dots K-1$}
    \STATE{Sample random  $j_m \in  \{1,2,\dots,n_m\}$ for $1 \leq m \leq M$ and $\zeta =
    \begin{cases}
    1 & \text{ with probability } p_w \\
    2 & \text{ with probability } p_\beta
    \end{cases}$}\;
    \STATE{$g_w^k = \begin{cases}
    \frac{1}{p_w M} \sum_{m=1}^M \nabla_w f_{m,j_m}(w_y^k, \beta_{y,m}^k) + \nabla_w F(w_v^k, \beta_{v}^k)
    & \text{ if } \zeta =1  \\
    \nabla_w F(w_v^k, \beta_{v}^k) &\text{ if } \zeta =2
    \end{cases}$
    }
    \STATE{$w^{k+1}_y=w^{k}_y - \eta g_w^k$}
    \STATE{$w_v^{k+1} = 
    \begin{cases}
    	w_y^k, &\text{ with probability } \rho\\
    	w_v^k, &\text{ with probability } 1-\rho\\
    \end{cases}$}\;
    \FOR{$m=1, \dots , M$ in parallel}
    \STATE{$g_{\beta,m}^k = 
    \begin{cases}
     \frac1M\nabla_{\beta} f_m(w_v^k, \beta_{v,m}^k) & \text{ if } \zeta=1 \\
    \frac{1}{p_\beta M} \nabla_{\beta} f_{m,j_m}(w_y^k, \beta_{y,m}^k) + \frac1M\nabla_{\beta} f_m(w_v^k, \beta_{v,m}^k) & \text{ if } \zeta=2
    \end{cases} $}\;
    \STATE{$ \beta_{y,m}^{k+1}  = \beta_{y,m}^{k} - \eta g_{\beta,m}^k $}\;
    \STATE{$\beta_{v,m}^{k+1} = 
    \begin{cases}
    	\beta_{y,m}^k, &\text{ with probability } \rho\\
    	\beta_{v,m}^k, &\text{ with probability } 1-\rho\\
    \end{cases}$}\,
    \ENDFOR
    \ENDFOR
\OUTPUT{$w_y^{K}$, $\beta^K_{y,m}$ for $1 \leq m \leq M$.}
\end{algorithmic}
\end{algorithm}

\newpage

\section{Technical Proofs}

Throughout this section, we use
$\mI_{d'}$ to denote the $d'\times d'$ identity matrix, 
$0_{d'_1\times d'_2}$ to denote the $d'_1\times d'_2$ zero matrix,
and $\ones_d'\in \R^{d'}$ to denote the vector of ones.

\subsection{Proof of Lemma~\ref{lem:smooth_sc_multitaks}}
\label{sec:proof-lemma-smooth_sc_multitaks}

To show the strong convexity, 
we shall verify the positive definiteness of
\begin{align*}
&\nabla^2 F_{MT2}(w, \beta) - \frac{\lambda}{2M} \mI_{d(M+1)}
\\
& \quad =
\begin{pmatrix} \frac{\Lambda}{M} \nabla F'(w) +  \frac{\lambda}{M} I_d & -\frac{\lambda}{M^{\frac32}} (\ones_M^\top \otimes I_d) \\  -\frac{\lambda}{M^{\frac32}} (\ones_M \otimes I_d) & \frac{\lambda}{M} (I_m \otimes I_d) + \diag(\nabla^2 f'_1(\beta_1), \dots, \nabla^2 f'_M(\beta_M)) \end{pmatrix} -  \frac{\lambda}{2M} \mI_{d(M+1)}
\\
& \quad \succeq
 \begin{pmatrix}\left(\frac{\Lambda \mu'}{M}+\frac{\lambda}{2M}\right) I_d & -\frac{\lambda}{M^{\frac32}} (\ones_M^\top \otimes I_d) \\  -\frac{\lambda}{M^{\frac32}} (\ones_M \otimes I_d) & \left(\frac{\lambda}{2M} +\frac{\mu'}{M}\right) (I_m \otimes I_d) \end{pmatrix} 
 \\
& \quad =
\frac1M
\underbrace{ \begin{pmatrix}\Lambda \mu' + \frac{\lambda}{2} & -\frac{\lambda}{M^{\frac12}}\ones_M^\top \\  -\frac{\lambda}{M^{\frac12}} \ones_M & \left(\frac{\lambda}{2} +2\mu'\right) I_m  \end{pmatrix}}_{\eqdef \mM} \otimes I_d.
\end{align*}
Note that $\mM$ can be written as a sum of $M$ matrices, each of them having 
\[
\mM_m = 
\begin{pmatrix}\frac{\Lambda \mu' + \frac{\lambda}{2}}{M} & -\frac{\lambda}{M^{\frac12}} \\  -\frac{\lambda}{M^{\frac12}}  & \left(\frac{\lambda}{2} +2\mu'\right)  \end{pmatrix}
\]
as a $(m+1) \times (m+1)$ submatrix and zeros everywhere else. 
To verify positive semidefiniteness of $\mM_m$, 
we shall prove that the determinant is positive:
\begin{equation*}
\text{det}(\mM_m) = \frac1M \left( \left(\Lambda \mu' + \frac{\lambda}{2}\right) \left(\frac{\lambda}{2} +2\mu'\right) - \lambda^2 \right) \geq
\frac1M \left( \left(2\lambda\right) \left(\frac{\lambda}{2} +2\mu'\right) - \lambda^2 \right) \geq 0
\end{equation*}
as desired. Verifying the smoothness constants is straightforward.

\subsection{Proof of Lemma~\ref{lem:mixture}}
\label{sec:proof-lemma-mixture}

We have
\begin{align*}
\nabla^2 &F_{MFL2}(w, \beta) - \frac{\mu'}{3M} \mI_{d(M+1)}\\
&=
\begin{pmatrix} \frac{\lambda}{M} I_d & -\frac{\lambda}{M^{\frac32}} (\ones_M^\top \otimes I_d) \\  -\frac{\lambda}{M^{\frac32}} (\ones_M \otimes I_d) & \frac{\lambda}{M} (I_m \otimes I_d) + \diag(\nabla^2 f'_1(\beta_1), \dots, \nabla^2 f'_M(\beta_M)) \end{pmatrix} -  \frac{\mu'}{3M} \mI_{d(M+1)}
\\
&\succeq
 \begin{pmatrix}\left(\frac{\lambda}{M}-\frac{\mu'}{3M}\right) I_d & -\frac{\lambda}{M^{\frac32}} (\ones_M^\top \otimes I_d) \\  -\frac{\lambda}{M^{\frac32}} (\ones_M \otimes I_d) & \left(\frac{\lambda}{M} +\frac{2\mu'}{3M}\right) (I_m \otimes I_d) \end{pmatrix} 
 \\
&=
\frac1M
\underbrace{ \begin{pmatrix}\lambda-\frac{\mu'}{3} & -\frac{\lambda}{M^{\frac12}}\ones_M^\top \\  -\frac{\lambda}{M^{\frac12}} \ones_M & \left(\lambda +\frac{2\mu'}{3}\right) I_m  \end{pmatrix}}_{\eqdef \mM} \otimes I_d.
\end{align*}
Note that $\mM$ can be written as a sum of $M$ matrices, 
each of them having $\frac{\lambda}{M}-\frac{\mu'}{3M}$ at the 
position $(1,1)$,  
$-\frac{\lambda}{M^{\frac12}}$ at positions $(1,m), (m,1)$ 
and $\left(\frac{\lambda}{M} +\frac{2\mu'}{3M}\right)$ at the position $(m,m)$. 
Using the assumption $\mu'\leq \frac{\lambda}{2}$,
it is easy to see that each of these matrices is positive semidefinite,
and thus so is $\mM$. Consequently, 
$\nabla F_{MFL2}(w, \beta) - \frac{\mu'}{3M} \mI_{d(M+1)}$ 
is positive semidefinite and thus $F_{MFL2}$ is jointly 
$\frac{\mu'}{3M}$- strongly convex.
Verifying the smoothness constants is straightforward.

\subsection{Proof of Lemma~\ref{lem:apfl}}
\label{sec:proof-lemma-apfl}

Let $x_m = (1-\alpha_m)\beta_m + \alpha_m M^{-\frac12}w$ 
for notational simplicity. We have
\begin{align*}
\nabla^2 f_m(w, \beta_m) 
 &=
\begin{pmatrix} \frac{\Lambda}{M} \nabla^2 f'(M^{-\frac12}w) + \frac{\alpha^2_m}{M}\nabla^2 f_m'(x_m)  & \frac{\alpha_m(1-\alpha_m)}{M^{\frac12}} \nabla^2 f_m'(x_m)  \\ \frac{\alpha_m(1-\alpha_m)}{M^{\frac12}} \nabla^2 f_m'(x_m) & (1-\alpha_m)^2 \nabla^2 f_m'(x_m) \end{pmatrix}
\\
 &=
\begin{pmatrix}  \frac{\Lambda}{M} \nabla^2 f'(M^{-\frac12}w)  & 0_{d\times d}  \\  0_{d\times d} & 0_{d\times d}\end{pmatrix} 
+
\frac1M
\begin{pmatrix} \frac{\alpha_m^2}{M}  & \frac{\alpha_m(1-\alpha_m)}{M^{\frac12}}  \\  \frac{\alpha_m(1-\alpha_m)}{M^{\frac12}} & (1-\alpha_m)^2\end{pmatrix} \otimes \nabla^2 f_m'(x_m) 
 \\
&\succeq
\begin{pmatrix} \frac{\Lambda \mu'}{M} \mI_d & 0_{d\times d}  \\  0_{d\times d} & 0_{d\times d}\end{pmatrix} 
+
\begin{pmatrix} \frac{\alpha_m^2}{M}  & \frac{\alpha_m(1-\alpha_m)}{M^{\frac12}}  \\  \frac{\alpha_m(1-\alpha_m)}{M^{\frac12}} & (1-\alpha_m)^2\end{pmatrix} \otimes \left( \mu' \mI_d\right)
 \\
 &=
\mu'
\underbrace{\begin{pmatrix} \frac{\Lambda  + \alpha_m^2}{M} & \frac{\alpha_m(1-\alpha_m)}{M^{\frac12}} \\  \frac{\alpha_m(1-\alpha_m)}{M^{\frac12}} & (1-\alpha_m)^2\end{pmatrix} }_{\eqdef \mM_m}
\otimes \mI_d.
\end{align*}
Next, we show that 
\begin{equation}\label{eq:dbahbshid}
\mM_m \succeq \begin{pmatrix}
 \frac{(1-\alpha_m)^2}{2M} & 0 \\ 0 & \frac{(1-\alpha_m)^2}{2}
\end{pmatrix}.
\end{equation} 
For that, it suffices to show that 
\[
\text{det}\left(\mM_m  - \begin{pmatrix}
 \frac{(1-\alpha_m)^2}{2M} & 0 \\ 0 & \frac{(1-\alpha_m)^2}{2}
\end{pmatrix}\right) \geq 0,
\]
which holds since
\begin{align*}
\text{det}\left(\mM_m  - \begin{pmatrix}
 \frac{(1-\alpha_m)^2}{2M} & 0 \\ 0 & \frac{(1-\alpha_m)^2}{2}
\end{pmatrix}\right) 
&=
\left( \frac{\Lambda  + \alpha_m^2 - \frac{(1-\alpha_m)^2}{2}}{M} -\right)\frac{(1-\alpha_m)^2}{2} - \frac{\alpha_m^2(1-\alpha_m)^2}{M}
\\
&\geq
\left( 2\frac{\alpha_m^2}{M}\right)\frac{(1-\alpha_m)^2}{2} - \frac{\alpha_m^2(1-\alpha_m)^2}{M}
= 0.
\end{align*}
Finally, using \eqref{eq:dbahbshid} $M$ times, 
it is easy to see that 
\[
\nabla^2 F_{APFL2}(w,\beta) \succeq \mu'\frac{(1-\alpha_{\max} )^2}{M} \mI_{d(M+1)} 
\]
as desired. Verifying the smoothness constants is straightforward.

\subsection{Proof of Lemma~\ref{lem:dhkabdsbhja}}
\label{sec:proof-lemma-dhkabdsbhja}

We have
\begin{align*}
\frac{1}{M}\sum\limits_{m=1}^M\|\nabla_w f_m(w_m^k, \beta_m^k)\|^2 
&\leq
\frac{3}{M}\sum\limits_{m=1}^M\|\nabla_w f_m(w_m^k, \beta_m^k) - \nabla_w f_m(w^k, \beta_m^k)\|^2 \\
& \qquad + \frac{3}{M}\sum\limits_{m=1}^M\|\nabla_w f_m(w^k, \beta_m^k) - \nabla_w f_m(w^*, \beta^*)\|^2 \\
& \qquad + \frac{3}{M}\sum\limits_{m=1}^M\|\nabla_w f_m(w^*, \beta^*)\|^2.
\end{align*}
Then, using Assumption~\ref{as:smooth_sc},
the above display is bounded as 
\begin{multline*}
\frac{3L^2}{M}\sum\limits_{m=1}^M\|w_m^k - w^k\|^2+ \frac{6L}{M}\sum\limits_{m=1}^MD_{f_m}((w^k, \beta_m^k),(w^*, \beta^*)) + 3\zeta_*^2  \\
= 6L^w\left(f(w^k, \beta_m^k) - f(w^*, \beta^*)\right) + 3(L^w)^2 V_k + 3\zeta_*^2,
\end{multline*}
which shows~\eqref{eq:dnaossniadd}.

To establish~\eqref{eq:vdgasvgda}, we have
\begin{align*}
\left\|\frac{1}{M}\sum\limits_{m=1}^M\nabla_w f_m(w_m^k, \beta_m^k)\right\|^2 
& + \frac{1}{M^2} \sum\limits_{m=1}^M \left\|\nabla_\beta f_m(w_m^k, \beta_m^k)\right\|^2 
\\
&\leq \frac{2}{M}\sum\limits_{i=1}^M\|\nabla_w f_m(w_m^k, \beta_m^k) - \nabla_w f_m(w^k, \beta_m^k)\|^2 \\
& \qquad + \frac{2}{M}\sum\limits_{m=1}^M\|\nabla_\beta f_m(w_m^k, \beta_m^k) - \nabla_\beta f_m(w^*, \beta^*)\|^2\\
& \qquad  + \frac{2}{M^2} \sum\limits_{m=1}^M \left\|\nabla_\beta f_m(w_m^k, \beta_m^k)-\nabla_\beta f_m(w^*, \beta^*)\right\|^2  .
\end{align*}
Then, using Assumption~\ref{as:smooth_sc},
the above display is bounded as 
\begin{equation*}
\frac{2(L^w)^2}{M}\sum\limits_{m=1}^M\|w_m^k - w^k\|^2 + \frac{4L}{M}\sum\limits_{m=1}^M D_{f_m}((w^k, \beta_m^k),(w^*, \beta^*)) = 4L\left(f(w^k, \beta_m^k) - f(w^*, \beta^*)\right) + 2(L^w)^2 V_k .
\end{equation*}
This completes the proof.

\subsection{Proof of Lemma~\ref{lem:dhkabdsbhja_2}}
\label{sec:proof-lemma-dhkabdsbhja_2}

Let us start with establishing~\eqref{eq:dnaossniadd2}.
We have
\begin{align*}
\frac{1}{M}\sum\limits_{m=1}^M\E{\|g_{w,m}^k\|^2} &= \frac{1}{M}\sum\limits_{m=1}^M\left(\E{\|g_{w,m}^k - \nabla_w f_m(w_m^k, \beta_m^k)\|^2} + \| \nabla_w f_m(w_m^k, \beta_m^k)\|^2 \right) \\
&\leq 
\frac{\sigma^2}{B} +  \| \nabla_w f_m(w_m^k, \beta_m^k)\|^2.
\end{align*}
Now \eqref{eq:dnaossniadd2} follows from an application of~\eqref{eq:dnaossniadd}. 
Similarly, to show~\eqref{eq:vdgasvgda2}, we have
\begin{align*}
 & \E{\left\|\frac{1}{M}\sum\limits_{m=1}^M g_{w,m}^k \right\|^2 + \frac{1}{M^2} \sum\limits_{m=1}^M \left\|g_{\beta,m}^k\right\|^2 } 
 \\
 & \qquad  =  
\E{\left\|\frac{1}{M}\sum\limits_{m=1}^M(g_{w,m}^k - \nabla_w f_m(w_m^k, \beta_m^k)) \right\|^2} + 
 \left\|\frac{1}{M}\sum\limits_{m=1}^M \nabla_w f_m(w_m^k, \beta_m^k) \right\|^2 \\ 
 & \qquad \qquad +
 \frac{1}{M^2} \sum\limits_{m=1}^M \left(\E{\left\|g_{\beta,m}^k - \nabla_\beta f_m(w_m^k, \beta_m^k)\right\|^2}  + 
\left\| \nabla_\beta f_m(w_m^k, \beta_m^k)\right\|^2
 \right)
  \\
 & \qquad  \leq  
\frac{\sigma^2}{MB} + 
 \left\|\frac{1}{M}\sum\limits_{m=1}^M \nabla_w f_m(w_m^k, \beta_m^k) \right\|^2 \\ 
 & \qquad \qquad + \frac{\sigma^2}{MB} + 
 \frac{1}{M^2} \sum\limits_{m=1}^M  
\left\| \nabla_\beta f_m(w_m^k, \beta_m^k)\right\|^2.
\end{align*}
Now \eqref{eq:vdgasvgda2} follows from~\eqref{eq:vdgasvgda},
which completes the proof.

\subsection{Proof of Lemma~\ref{lem:dhkabdsbhja_4}}
\label{sec:proof-lemma-dhkabdsbhja_4}

The proof is identical to the proof of Lemma E.1
from~\citep{gorbunov2020local} with a single difference -- using 
inequality~\eqref{eq:dnaossniadd2} instead 
of Assumption E.1 from~\citep{gorbunov2020local}.
We omit the details.

\subsection{Proof of Theorem~\ref{thm:GeneralNonConvex} and Theorem~\ref{thm:PLNonConvex}}
\label{sec:proof-nonconvex-LSGD-PFL}

We start by introducing additional notation.
We set $k_p=p \cdot \tau$, where $\tau \in \mathbb{N}^{+}$ is 
the length of the averaging period. 
Let $k_p=p\tau+\tau-1=k_{p+1}-1=v_p$. 
Denote the total number of iterations as $K$
and assume that $K=k_{\bar{p}}$ for some $\bar{p}\in\mathbb{N}^{+}$. 
The final result is set to be that $\hat w=w^{K}$ 
and $\hat{\beta}_m=\beta^{K}_m$ for all $m \in [M]$. 
We assume that the solution to \eqref{eq:PrimProblem} is 
$w^{*}, \beta^{*}_1, \dots, \beta^{*}_M$ and that
the optimal value is $f^*$. 
Let $w^k=\frac{1}{M}\sum^{M}_{m=1} w^{k}_m$ for all $k$.
Note that this quantity will not be actually computed in
practice unless $k=k_p$ 
for some $p\in\mathbb{N}$, where 
we have $w^{k_p}=w^{k_p}_m$ for all $m \in [M]$. 
In addition, let $\xi^{k}_m=\{\xi^{k}_{1,m},\xi^{k}_{2,m}, \dots, \xi^{k}_{B,m}\}$
and $\xi^k=\{ \xi^{k}_1,\xi^{k}_2,\dots,\xi^{k}_M \}$.

Let $\theta_m=((w_m)^{\top},(\beta_m)^{\top})^{\top}$, $\theta^{k}_m=((w^{k}_m)^{\top},(\beta^{k}_m)^{\top})^{\top}$, $\theta^{*}_m=((w^{*})^{\top},(\beta^{*}_m)^{\top})^{\top}$ and $\hat{\theta}^{k}_m=((w^{k})^{\top},(\beta^{k}_m)^{\top})^{\top}$.
Let
\begin{equation}\label{eq:LocalGrad}
g^{k}_{m} = \frac{1}{B} \nabla \hat{f}_m (w^{k}_m, \beta^{k}_m ; \xi^{k}_m),
\end{equation}
where
\begin{equation*}
\nabla \hat{f}_m (w^{k}_m, \beta^{k}_m ; \xi^{k}_m) = \sum^{B}_{j=1} \nabla \hat{f}_m (w^{k}_m, \beta^{k}_m ; \xi^{k}_{j,m}).
\end{equation*}
We assume that the gradient is unbiased, that is,
\begin{equation*}
\mathbb{E} \left[ g^{k}_{m} \right] = \nabla f_m (w^{k}_m, \beta^{k}_m).
\end{equation*}
Let
\begin{equation}\label{eq:LocalGradCord}
\begin{aligned}
g^{k}_{m,1} = \frac{1}{B} \nabla_w \hat{f}_{m}(w^{k}_m,\beta^{k}_m;\xi^{k}_m), 
\qquad
g^{k}_{m,2} = \frac{1}{B} \nabla_{\beta_m} \hat{f}_{m}(w^{k}_m,\beta^{k}_m;\xi^{k}_m),
\end{aligned}
\end{equation}
so that $g^{k}_m = ((g^{k}_{m,1})^{\top}, (g^{k}_{m,2})^{\top})^{\top}$. 
We update the parameters by 
\[
(w^{k+1}_m,\beta^{k+1}_m)=(w^{k}_m,\beta^{k}_m)-\eta_k g^{k}_m.
\]
In addition, we define
\begin{equation*}
\begin{aligned}
h^{k} = \frac{1}{M} \sum^{M}_{m=1} g^{k}_{m,1}, \qquad
V^{k} = \frac{1}{M} \sum^{M}_{m=1} \Vert w^{k}_m - w^{k} \Vert^2.
\end{aligned}
\end{equation*}
Then $w^{k+1}=w^{k}-\eta_k h^{k}$ for all $k$.

We denote the Bregman divergence associated with $f_m$
for $\theta_m$ and $\bar{\theta}_m$ as
\begin{equation*}
D_{f_m}(\theta_m, \bar{\theta}_m ) \coloneqq f_m(\theta_m) - f(\bar{\theta}_m) - \langle \nabla f_m(\bar{\theta}_m), \theta_m - \bar{\theta}_m \rangle.
\end{equation*}
Finally, we define the sum of residuals as
\begin{equation}\label{eq:Residual}
r^{k} = \Vert w^{k} - w^{*} \Vert^2 + \frac{1}{M} \sum^{M}_{m=1} \Vert \beta^{k}_m - \beta^{*}_m \Vert^2 = \frac{1}{M} \sum^{M}_{m=1} \Vert \hat{\theta}^{k}_m - \theta^{*}_m \Vert^2
\end{equation}
and let 
$\sigma^2_{\text{dif}} = \frac{1}{M}\sum^{M}_{m=1}\Vert \nabla f_m (\theta^{*}_m) \Vert^2$.

The following proposition states some useful results that 
will be used in the proof below. The results are 
are standard and can be found in, for example, \cite{nesterov2018lectures}.

\begin{proposition}\label{prop:UsefulFacts}
If the function $f$ is differentiable and $L$-smooth, then
\begin{equation}\label{eq:prop1-3}
f(x) - f(y) - \langle \nabla f(y), x - y \rangle \leq \frac{L}{2} \Vert x - y \Vert^2.
\end{equation}
If $f$ is also convex, then 
\begin{equation}\label{eq:prop1-1}
\Vert \nabla f(x) - \nabla f(y) \Vert^2 \leq 2 L D_f(x,y)
\end{equation}
for all $x,y$. 

For all vectors $x,y$, we have
\begin{align*}
2\langle x, y \rangle & \leq \xi \Vert x \Vert^2 + \xi^{-1} \Vert y \Vert^2, \quad \forall \xi>0, \numberthis \label{eq:prop1-4} \\
-\langle x, y \rangle & = -\frac{1}{2} \Vert x \Vert^2 - \frac{1}{2} \Vert y \Vert^2 + \frac{1}{2} \Vert x - y \Vert^2. \numberthis \label{eq:prop1-4-2}
\end{align*}
For vectors $v_1, v_2,\dots,v_n$, 
by the Jensen's inequality and 
the convexity of the map: $x \mapsto \Vert x \Vert^2$, 
we have
\begin{equation}\label{eq:prop1-5}
\left\Vert \frac{1}{n} \sum^{n}_{i=1} v_i \right\Vert^2 \leq \frac{1}{n} \sum^{n}_{i=1} \left\Vert v_i \right\Vert^2.
\end{equation}
\end{proposition}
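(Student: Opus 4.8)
The plan is to verify the five inequalities one at a time; each is classical and follows from the definitions only, so I will indicate only the mechanism. For the descent inequality \eqref{eq:prop1-3}, I would apply the fundamental theorem of calculus to $t\mapsto f(y+t(x-y))$ on $[0,1]$ to write $f(x)-f(y)-\langle\nabla f(y),x-y\rangle=\int_0^1\langle\nabla f(y+t(x-y))-\nabla f(y),\,x-y\rangle\,dt$, bound the integrand by $\|\nabla f(y+t(x-y))-\nabla f(y)\|\,\|x-y\|\le tL\|x-y\|^2$ using $L$-smoothness together with Cauchy--Schwarz, and integrate $\int_0^1 t\,dt=\tfrac12$.

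For \eqref{eq:prop1-1}, the plan is to use the standard shifted-function trick: with $x$ fixed, set $\phi(z)\eqdef f(z)-\langle\nabla f(x),z\rangle$. Then $\phi$ is convex and $L$-smooth, and $\nabla\phi(x)=0$, so $x$ minimizes $\phi$. Evaluating \eqref{eq:prop1-3} for $\phi$ at the point $y-\tfrac1L\nabla\phi(y)$ gives $\phi(x)\le\phi\bigl(y-\tfrac1L\nabla\phi(y)\bigr)\le\phi(y)-\tfrac1{2L}\|\nabla\phi(y)\|^2$; since $\nabla\phi(y)=\nabla f(y)-\nabla f(x)$ and $\phi(y)-\phi(x)=D_f(y,x)$, rearranging yields $\tfrac1{2L}\|\nabla f(x)-\nabla f(y)\|^2\le D_f(y,x)$, and relabelling $x\leftrightarrow y$ (the inequality is symmetric in the norm term) gives the stated form. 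Alternatively, one may simply cite \citet{nesterov2018lectures}.

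The remaining three are immediate algebraic identities or inequalities: \eqref{eq:prop1-4} follows by expanding $0\le\|\sqrt{\xi}\,x-\xi^{-1/2}y\|^2=\xi\|x\|^2-2\langle x,y\rangle+\xi^{-1}\|y\|^2$; \eqref{eq:prop1-4-2} follows by expanding $\|x-y\|^2=\|x\|^2-2\langle x,y\rangle+\|y\|^2$ and solving for $-\langle x,y\rangle$; and \eqref{eq:prop1-5} is Jensen's inequality for the convex map $z\mapsto\|z\|^2$ applied to the uniform average of $v_1,\dots,v_n$. The only step carrying any content is the shifted-function device used for \eqref{eq:prop1-1}, and even that is textbook; accordingly I do not anticipate a genuine obstacle, and the whole proposition can be stated and dispatched in a few lines.
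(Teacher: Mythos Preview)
Your proposal is correct; in fact, it supplies more than the paper does. The paper gives no proof for this proposition at all: it simply states that the results are standard and refers to \citet{nesterov2018lectures}. Your sketched arguments (the integral-remainder bound for \eqref{eq:prop1-3}, the shifted-function trick for \eqref{eq:prop1-1}, and the one-line algebra for the remaining three) are exactly the standard derivations one would find there, so you are in full agreement with the intended reference.
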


Next, we establish a few technical results.

\begin{lemma}\label{lemma:NonCLemma1}
Suppose Assumption~\ref{assump:Smoothness} holds.
Given $\{\theta^{k}_m\}_{m \in [M]}$, we have
\begin{align*}
\mathbb{E}_{\xi^{k}} \left[ \frac{1}{M}\sum^{M}_{m=1}f_m (\hat{\theta}^{k+1}_m) \right] 
& - \frac{1}{M}\sum^{M}_{m=1}f_m (\hat{\theta}^{k}_m) \\
& \leq - \eta_k \left\langle \frac{1}{M}\sum^{M}_{m=1} \nabla_w f_m (\hat{\theta}^{k}_m), \frac{1}{M}\sum^{M}_{m=1} \nabla_w f_m (\theta^{k}_m) \right\rangle \\
& \qquad- \frac{\eta_k}{M} \sum^{M}_{m=1} \langle \nabla_{\beta_m} f_m (\hat{\theta}^{k}_m), \nabla_{\beta_m} f_m (\theta^{k}_m) \rangle \\
& \qquad + \frac{\eta^2_k L}{2} \mathbb{E}_{\xi^{k}} \left[ \Vert h^{k} \Vert^2 \right] + \frac{\eta^2_k L}{2M}\sum^{M}_{m=1} \mathbb{E}_{\xi_m^{k}} \left[ \Vert g^{k}_{m,2} \Vert^2 \right],
\end{align*}
where the expectation is taken only with respect to
 the randomness in $\xi^{k}$.
\end{lemma}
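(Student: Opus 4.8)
The plan is to start from the descent-lemma form of $L$-smoothness, namely inequality~\eqref{eq:prop1-3} in Proposition~\ref{prop:UsefulFacts}, applied to each $f_m$ with $x=\hat{\theta}^{k+1}_m$ and $y=\hat{\theta}^{k}_m$. Since $\hat{\theta}^{k}_m=((w^{k})^{\top},(\beta^{k}_m)^{\top})^{\top}$ while the updates give $w^{k+1}=w^{k}-\eta_k h^{k}$ and $\beta^{k+1}_m=\beta^{k}_m-\eta_k g^{k}_{m,2}$, the increment is $\hat{\theta}^{k+1}_m-\hat{\theta}^{k}_m=(-\eta_k (h^{k})^{\top},-\eta_k (g^{k}_{m,2})^{\top})^{\top}$. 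Plugging this in, splitting the inner product into its $w$- and $\beta_m$-blocks, and using that the two blocks are orthogonal so that $\Vert\hat{\theta}^{k+1}_m-\hat{\theta}^{k}_m\Vert^2=\eta_k^2\Vert h^{k}\Vert^2+\eta_k^2\Vert g^{k}_{m,2}\Vert^2$, I get for each $m$
\[
f_m(\hat{\theta}^{k+1}_m)-f_m(\hat{\theta}^{k}_m)\le -\eta_k\langle \nabla_w f_m(\hat{\theta}^{k}_m),h^{k}\rangle-\eta_k\langle \nabla_{\beta_m} f_m(\hat{\theta}^{k}_m),g^{k}_{m,2}\rangle+\frac{\eta_k^2 L}{2}\big(\Vert h^{k}\Vert^2+\Vert g^{k}_{m,2}\Vert^2\big).
\]

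Next I would average this inequality over $m\in[M]$ and take the conditional expectation $\mathbb{E}_{\xi^{k}}$ given $\{\theta^{k}_m\}_{m\in[M]}$. Two observations finish the argument. First, the gradients $\nabla_w f_m(\hat{\theta}^{k}_m)$ and $\nabla_{\beta_m} f_m(\hat{\theta}^{k}_m)$ are deterministic under this conditional expectation, because the virtual iterate $w^{k}=\frac{1}{M}\sum_m w^{k}_m$ is determined by $\{w^{k}_m\}$. Second, by the unbiasedness of the local stochastic gradients, $\mathbb{E}_{\xi^{k}}[g^{k}_{m,2}]=\nabla_{\beta_m}f_m(\theta^{k}_m)$ and hence $\mathbb{E}_{\xi^{k}}[h^{k}]=\frac{1}{M}\sum_m\nabla_w f_m(\theta^{k}_m)$. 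Combining these, the two terms that are linear in the noise collapse into the inner products $-\eta_k\langle\frac{1}{M}\sum_m\nabla_w f_m(\hat{\theta}^{k}_m),\frac{1}{M}\sum_m\nabla_w f_m(\theta^{k}_m)\rangle$ and $-\frac{\eta_k}{M}\sum_m\langle\nabla_{\beta_m}f_m(\hat{\theta}^{k}_m),\nabla_{\beta_m}f_m(\theta^{k}_m)\rangle$ exactly as in the statement, while the quadratic terms remain inside the expectation, producing $\frac{\eta_k^2 L}{2}\mathbb{E}_{\xi^{k}}[\Vert h^{k}\Vert^2]+\frac{\eta_k^2 L}{2M}\sum_m\mathbb{E}_{\xi^{k}_m}[\Vert g^{k}_{m,2}\Vert^2]$ (the last term depending on $\xi^{k}_m$ only, since $g^{k}_{m,2}$ does).

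There is no real obstacle here; the one place to be careful — rather than hard — is keeping the distinction between the virtual average iterate $w^{k}$, which appears inside $\hat{\theta}^{k}_m$ (hence in the gradients on the left-hand side of each smoothness bound), and the actual local iterates $w^{k}_m$, which appear inside $\theta^{k}_m$ (hence in the gradients that are the expectations of the stochastic gradients). These two are never identified at this stage, so no bound on the consensus error $V^{k}$ or any other proximity assumption is needed in this lemma; that reconciliation is deferred to the later steps of the convergence proof. Because the $w$- and $\beta_m$-coordinate blocks are orthogonal, no Cauchy--Schwarz or Young-type splitting is required when expanding the squared increment, which keeps the bound clean.
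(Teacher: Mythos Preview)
Your proposal is correct and follows essentially the same route as the paper's proof: apply the descent lemma \eqref{eq:prop1-3} to each $f_m$ at the virtual iterate $\hat{\theta}^{k}_m$, split the increment into its $w$- and $\beta_m$-blocks, average over $m$, and take the conditional expectation using unbiasedness of the stochastic gradients. Your write-up is in fact more explicit than the paper's about why the linear terms collapse to the stated inner products (deterministic factors pulled out, $\mathbb{E}_{\xi^{k}}[h^{k}]$ and $\mathbb{E}_{\xi^{k}}[g^{k}_{m,2}]$ computed via unbiasedness), which is exactly the right thing to emphasize.
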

\begin{proof}
By the $L$-smoothness assumption on $f_m(\cdot)$ and \eqref{eq:prop1-3}, we have
\begin{equation*}
f_m (\hat{\theta}^{k+1}_m) - f_m (\hat{\theta}^{k}_m) - \langle \nabla f_m (\hat{\theta}^{k}_m), \hat{\theta}^{k+1}_m - \hat{\theta}^{k}_m \rangle \leq \frac{L}{2} \Vert \hat{\theta}^{k+1}_m - \hat{\theta}^{k}_m  \Vert^2.
\end{equation*}
Thus, we have
\begin{equation*}
f_m (\hat{\theta}^{k+1}_m) - f_m (\hat{\theta}^{k}_m) \leq -\eta_k \langle \nabla_w f_m (\hat{\theta}^{k}_m), h^{k} \rangle - \eta_k \langle \nabla_{\beta_m} f_m (\hat{\theta}^{k}_m), g^{k}_{m,2} \rangle + \frac{\eta^2_k L}{2} \Vert h^{k} \Vert^2 + \frac{\eta^2_k L}{2} \Vert g^{k}_{m,2} \Vert^2,
\end{equation*}
which further implies that
\begin{multline*}
 \frac{1}{M}\sum^{M}_{m=1}f_m (\hat{\theta}^{k+1}_m) - \frac{1}{M}\sum^{M}_{m=1}f_m (\hat{\theta}^{k}_m) \\
\leq -\eta_k \left\langle \frac{1}{M}\sum^{M}_{m=1}\nabla_w f_m (\hat{\theta}^{k}_m), h^{k} \right\rangle - \frac{\eta_k}{M} \sum^{M}_{m=1} \langle \nabla_{\beta_m} f_m (\hat{\theta}^{k}_m), g^{k}_{m,2} \rangle + \frac{\eta^2_k L}{2} \Vert h^{k} \Vert^2 \\
+  \frac{\eta^2_k L}{2M} \sum^{M}_{m=1} \Vert g^{k}_{m,2} \Vert^2.
\end{multline*}
The result follows by taking the expectation with respect to
the randomness in $\xi^{k}$, 
while keeping the other quantities fixed.
\end{proof}

\begin{lemma}\label{lemma:NonCLemma2}
Suppose Assumptions~\ref{assump:BoundedLocalVariance} 
and \ref{assump:BoundedDissimilarity} hold.
Given $\{\theta^{k}_m\}_{m \in [M]}$, we have
\begin{align*}
\mathbb{E}_{\xi^{k}} \left[ \Vert h^{k} \Vert^2 \right] 
&+ \frac{1}{M}\sum^{M}_{m=1} \mathbb{E}_{\xi_m^{k}} \left[ \Vert g^{k}_{m,2} \Vert^2 \right] \\
& \leq  \left( \frac{C_1}{M}+C_2+1 \right)\frac{1}{M}\sum^{M}_{m=1}\Vert \nabla f_m (\theta^{k}_m) \Vert^2 + \frac{\sigma^2_1}{MB} + \frac{\sigma^2_2}{B} \\
& \leq \lambda \left( \frac{C_1}{M}+C_2+1 \right) \left\Vert \frac{1}{M}\sum^{M}_{m=1} \nabla f_m (\theta^{k}_m) \right\Vert^2 + \left( \frac{C_1}{M}+C_2+1 \right)\sigma^2_{\text{dif}} + \frac{\sigma^2_1}{MB} + \frac{\sigma^2_2}{B},
\end{align*}
where the expectation is taken only with respect to
 the randomness in $\xi^{k}$.
\end{lemma}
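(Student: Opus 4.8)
The goal is to bound $\mathbb{E}_{\xi^k}[\|h^k\|^2] + \frac{1}{M}\sum_{m=1}^M \mathbb{E}_{\xi_m^k}[\|g_{m,2}^k\|^2]$ first by a ``deterministic gradient'' quantity plus variance terms, and then convert the deterministic gradient quantity into the dissimilarity form using Assumption~\ref{assump:BoundedDissimilarity}. First I would handle the $\|h^k\|^2$ term: since $h^k = \frac1M\sum_{m=1}^M g_{m,1}^k$ with each $g_{m,1}^k$ an unbiased estimate of $\nabla_w f_m(\theta_m^k)$ and the $\xi_m^k$ independent across $m$, I would write $\mathbb{E}_{\xi^k}\|h^k\|^2 = \|\frac1M\sum_m \nabla_w f_m(\theta_m^k)\|^2 + \frac{1}{M^2}\sum_m \mathbb{E}_{\xi_m^k}\|g_{m,1}^k - \nabla_w f_m(\theta_m^k)\|^2$, using that the cross terms of the centered noises vanish by independence. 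For the $\|g_{m,2}^k\|^2$ term I would likewise split $\mathbb{E}_{\xi_m^k}\|g_{m,2}^k\|^2 = \|\nabla_{\beta_m} f_m(\theta_m^k)\|^2 + \mathbb{E}_{\xi_m^k}\|g_{m,2}^k - \nabla_{\beta_m} f_m(\theta_m^k)\|^2$.

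Next I would invoke Assumption~\ref{assump:BoundedLocalVariance} to bound the two noise terms: the $w$-noise by $\frac{C_1}{M^2}\sum_m\|\nabla_w f_m(\theta_m^k)\|^2 + \frac{\sigma_1^2}{MB}$ (there is an extra $1/M$ from the averaging in $h^k$ versus the per-client factor), and the $\beta$-noise by $\frac{C_2}{M}\sum_m\|\nabla_{\beta_m} f_m(\theta_m^k)\|^2 + \frac{\sigma_2^2}{B}$. Then I would bound $\|\frac1M\sum_m \nabla_w f_m(\theta_m^k)\|^2 \le \frac1M\sum_m\|\nabla_w f_m(\theta_m^k)\|^2$ by Jensen \eqref{eq:prop1-5} (noting that $g_{m,1}^k$ has dimension $d_0$ for each $m$ so the sum is well defined). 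Collecting the $w$- and $\beta$-gradient pieces and using $\|\nabla_w f_m(\theta_m^k)\|^2 + \|\nabla_{\beta_m} f_m(\theta_m^k)\|^2 = \|\nabla f_m(\theta_m^k)\|^2$, the coefficient in front of $\frac1M\sum_m\|\nabla f_m(\theta_m^k)\|^2$ becomes $\frac{C_1}{M} + C_2 + 1$ (the $\frac{C_1}{M}$ absorbing the extra averaging factor, the $1$ coming from the two ``deterministic gradient'' norm-squared contributions bounded via Jensen), which gives the first inequality.

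For the second inequality I would simply apply Assumption~\ref{assump:BoundedDissimilarity}, which states $\frac1M\sum_m\|\nabla f_m(\theta_m^k)\|^2 \le \lambda\|\frac1M\sum_m\nabla f_m(\theta_m^k)\|^2 + \sigma_{\text{dif}}^2$; multiplying through by $\frac{C_1}{M}+C_2+1$ and adding the leftover variance terms $\frac{\sigma_1^2}{MB}+\frac{\sigma_2^2}{B}$ yields the claimed bound. The main subtlety — and the only place one has to be careful — is correctly tracking the extra $1/M$ factors that distinguish the averaged quantity $h^k = \frac1M\sum_m g_{m,1}^k$ (whose variance decays like $1/M$) from the per-client $g_{m,2}^k$ terms (which are themselves averaged with weight $1/M$ in the sum but individually have $O(1)$ variance); this is what produces the asymmetric coefficients $\frac{C_1}{M}$ versus $C_2$ and $\frac{\sigma_1^2}{MB}$ versus $\frac{\sigma_2^2}{B}$. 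Everything else is a routine bias--variance decomposition combined with Jensen's inequality.
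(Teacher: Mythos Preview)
Your proposal is correct and follows essentially the same route as the paper's proof: bias--variance decomposition for $h^k$ (exploiting independence across clients) and for each $g_{m,2}^k$, Assumption~\ref{assump:BoundedLocalVariance} on the noise terms, Jensen's inequality~\eqref{eq:prop1-5} on the squared-mean term, and then Assumption~\ref{assump:BoundedDissimilarity} for the second inequality. The only cosmetic difference is that the paper immediately bounds the partial-gradient norms $\|\nabla_w f_m\|^2$ and $\|\nabla_{\beta_m} f_m\|^2$ appearing in the variance bounds by the full $\|\nabla f_m\|^2$ before collecting terms, which is the trivial step implicit in your ``collecting the $w$- and $\beta$-gradient pieces.''
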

\begin{proof}
Note that
\begin{align*}
\mathbb{E}_{\xi^{k}} \left[ \Vert h^{k} \Vert^2 \right] 
& = \mathbb{E}_{\xi^{k}} \left[ \left\Vert \frac{1}{M}\sum^{M}_{m=1} g^{k}_{m,1} \right\Vert^2 \right] \\
& \overset{(i)}{=}  \mathbb{E}_{\xi^{k}} \left[ \left\Vert \frac{1}{M}\sum^{M}_{m=1} \left(g^{k}_{m,1} - \nabla_w f_m (\theta^{k}_m) \right) \right\Vert^2 \right] + \left\Vert \frac{1}{M}\sum^{M}_{m=1}\nabla_w f_m (\theta^{k}_m) \right\Vert^2 \\
& \overset{(ii)}{=} \frac{1}{M^2} \sum^{M}_{m=1} \mathbb{E}_{\xi^{k}_m} \left[ \left\Vert g^{k}_{m,1}-\nabla_w f_m(\theta^{k}_m) \right\Vert^2 \right] + \left\Vert \frac{1}{M}\sum^{M}_{m=1}\nabla_w f_m (\theta^{k}_m) \right\Vert^2 \\
& \overset{(iii)}{\leq} \frac{1}{M^2} \sum^{M}_{m=1} \left( C_1 \Vert \nabla f_m (\theta^{k}_m) \Vert^2 + \frac{\sigma^2_1}{B} \right) + \left\Vert \frac{1}{M}\sum^{M}_{m=1}\nabla_w f_m (\theta^{k}_m) \right\Vert^2 \\
& \overset{(iv)}{\leq} \frac{C_1}{M^2} \sum^{M}_{m=1} \Vert \nabla f_m (\theta^{k}_m) \Vert^2 + \frac{\sigma^2_1}{MB} + \frac{1}{M}\sum^{M}_{m=1} \Vert \nabla_w f_m (\theta^{k}_m) \Vert^2,
\end{align*}
where 
(i) is due to $g^{k}_{m,1}$ being unbiased, 
(ii) is by the fact that $\xi^{k}_1,\xi^{k}_2,\dots,\xi^{k}_M$ are independent, 
(iii) is by Assumption~\ref{assump:BoundedLocalVariance}, and 
(iv) is by \eqref{eq:prop1-5}.
Similarly, we have
\begin{align*}
\frac{1}{M} \sum^{M}_{m=1} \mathbb{E}_{\xi^{k}_m} \left[ \Vert g^{k}_{m,2} \Vert^2 \right] &= \frac{1}{M} \sum^{M}_{m=1} \mathbb{E}_{\xi^{k}_m} \left[ \Vert g^{k}_{m,2} - \nabla_{\beta_m} f_m (\theta^{k}_m) \Vert^2 \right] + \frac{1}{M} \sum^{M}_{m=1} \Vert \nabla_{\beta_m} f_m (\theta^{k}_m) \Vert^2 \\
&\leq \frac{C_2}{M} \sum^{M}_{m=1} \Vert \nabla f_m (\theta^{k}_m) \Vert^2 + \frac{\sigma^2_2}{B} + \frac{1}{M} \sum^{M}_{m=1} \Vert \nabla_{\beta_m} f_m (\theta^{k}_m) \Vert^2.
\end{align*}
The lemma follows by combining the two inequalities.
\end{proof}

\begin{lemma}\label{lemma:NonCLemma3}
Under Assumption~\ref{assump:Smoothness},
we have
\begin{multline*}
- \eta_k\left\langle \frac{1}{M}\sum^{M}_{m=1} \nabla_w f_m (\hat{\theta}^{k}_m), \frac{1}{M}\sum^{M}_{m=1} \nabla_w f_m (\theta^{k}_m) \right\rangle - \frac{\eta_k}{M} \sum^{M}_{m=1} \langle \nabla_{\beta_m} f_m (\hat{\theta}^{k}), \nabla_{\beta_m} f_m (\theta^{k}_m) \rangle \\
\leq -\frac{\eta_k}{2} \left\Vert \frac{1}{M}\sum^{M}_{m=1} \nabla f_m (\hat{\theta}^{k}_m) \right\Vert^2 -\frac{\eta_k}{2} \left\Vert \frac{1}{M}\sum^{M}_{m=1} \nabla f_m (\theta^{k}_m) \right\Vert^2 + \frac{\eta_k L^2}{2} V^{k}.
\end{multline*}
\end{lemma}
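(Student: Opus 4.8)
The plan is to apply the polarization identity~\eqref{eq:prop1-4-2} separately to the $w$-inner product and to each of the $M$ per-device $\beta$-inner products, then match the resulting squared-norm terms against the right-hand side and absorb the leftover ``cross'' terms using smoothness and Jensen's inequality.

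First I would write $a \eqdef \frac1M\sum_{m=1}^M \nabla_w f_m(\hat{\theta}^k_m)$, $b \eqdef \frac1M\sum_{m=1}^M \nabla_w f_m(\theta^k_m)$, $c_m \eqdef \nabla_{\beta_m} f_m(\hat{\theta}^k_m)$ and $d_m \eqdef \nabla_{\beta_m} f_m(\theta^k_m)$, and apply~\eqref{eq:prop1-4-2} to obtain $-\langle a,b\rangle = -\tfrac12\|a\|^2 - \tfrac12\|b\|^2 + \tfrac12\|a-b\|^2$ and $-\langle c_m, d_m\rangle = -\tfrac12\|c_m\|^2 - \tfrac12\|d_m\|^2 + \tfrac12\|c_m-d_m\|^2$ for each $m$. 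Multiplying the first identity by $\eta_k$ and the $m$-th of the others by $\eta_k/M$ and summing over $m$, the left-hand side of the lemma equals $-\tfrac{\eta_k}{2}\|a\|^2 - \tfrac{\eta_k}{2M}\sum_m\|c_m\|^2 - \tfrac{\eta_k}{2}\|b\|^2 - \tfrac{\eta_k}{2M}\sum_m\|d_m\|^2 + \tfrac{\eta_k}{2}\|a-b\|^2 + \tfrac{\eta_k}{2M}\sum_m\|c_m-d_m\|^2$.

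Next I would use $M \ge 1$, hence $\tfrac{1}{M} \ge \tfrac{1}{M^2}$, so that $-\tfrac{\eta_k}{2M}\sum_m\|c_m\|^2 \le -\tfrac{\eta_k}{2M^2}\sum_m\|c_m\|^2$ and likewise for $\{d_m\}$. Identifying $\tfrac1M\sum_m \nabla f_m(\cdot)$ with the full gradient whose $w$-component is $\tfrac1M\sum_m\nabla_w f_m(\cdot)$ and whose $\beta_m$-component is $\tfrac1M\nabla_{\beta_m} f_m(\cdot)$, we have $\big\|\tfrac1M\sum_m \nabla f_m(\hat{\theta}^k_m)\big\|^2 = \|a\|^2 + \tfrac{1}{M^2}\sum_m\|c_m\|^2$ and $\big\|\tfrac1M\sum_m \nabla f_m(\theta^k_m)\big\|^2 = \|b\|^2 + \tfrac{1}{M^2}\sum_m\|d_m\|^2$; hence the first four terms above are bounded by $-\tfrac{\eta_k}{2}\big\|\tfrac1M\sum_m \nabla f_m(\hat{\theta}^k_m)\big\|^2 - \tfrac{\eta_k}{2}\big\|\tfrac1M\sum_m \nabla f_m(\theta^k_m)\big\|^2$, which are precisely the two negative terms on the right-hand side.

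It remains to bound the cross terms $\tfrac{\eta_k}{2}\|a-b\|^2 + \tfrac{\eta_k}{2M}\sum_m\|c_m-d_m\|^2$ by $\tfrac{\eta_k L^2}{2}V^k$. By Jensen's inequality~\eqref{eq:prop1-5}, $\|a-b\|^2 = \big\|\tfrac1M\sum_m (\nabla_w f_m(\hat{\theta}^k_m) - \nabla_w f_m(\theta^k_m))\big\|^2 \le \tfrac1M\sum_m \|\nabla_w f_m(\hat{\theta}^k_m) - \nabla_w f_m(\theta^k_m)\|^2$, so the cross terms are at most $\tfrac{\eta_k}{2M}\sum_m \big(\|\nabla_w f_m(\hat{\theta}^k_m) - \nabla_w f_m(\theta^k_m)\|^2 + \|\nabla_{\beta_m} f_m(\hat{\theta}^k_m) - \nabla_{\beta_m} f_m(\theta^k_m)\|^2\big) = \tfrac{\eta_k}{2M}\sum_m \|\nabla f_m(\hat{\theta}^k_m) - \nabla f_m(\theta^k_m)\|^2$. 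Since $\hat{\theta}^k_m$ and $\theta^k_m$ share the same $\beta_m$-block and differ only in the $w$-block ($w^k$ versus $w^k_m$), Assumption~\ref{assump:Smoothness} gives $\|\nabla f_m(\hat{\theta}^k_m) - \nabla f_m(\theta^k_m)\| \le L\|w^k - w^k_m\|$, and substituting yields $\tfrac{\eta_k L^2}{2M}\sum_m\|w^k - w^k_m\|^2 = \tfrac{\eta_k L^2}{2}V^k$, which finishes the proof. The only genuine work here is bookkeeping: keeping straight the $1/M$ versus $1/M^2$ normalizations between the global $w$-block (an average over the devices) and each local $\beta_m$-block (which carries only a $1/M$ share of $\nabla f_m$), and checking that the needed inequality $1/M \ge 1/M^2$ points in the direction we need.
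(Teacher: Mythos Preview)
Your polarization-plus-smoothness outline matches the paper's proof almost step for step, and the cross-term bound via Jensen and $L$-smoothness is exactly what the paper does. The one genuine issue is your ``identification'' step.

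In this section the paper takes $\theta_m=(w_m,\beta_m)\in\R^{d_0+d_m}$ and $\nabla f_m(\theta_m)\in\R^{d_0+d_m}$, so $\tfrac1M\sum_m\nabla f_m$ is the literal average of these $(d_0{+}d)$-dimensional vectors (implicitly $d_m\equiv d$ here). Consequently
\[
\left\|\tfrac1M\sum_m\nabla f_m(\hat\theta^k_m)\right\|^2=\|a\|^2+\left\|\tfrac1M\sum_m c_m\right\|^2,
\]
not $\|a\|^2+\tfrac{1}{M^2}\sum_m\|c_m\|^2$. Your reading treats the $\beta_m$-blocks as mutually orthogonal (i.e., identifies $\tfrac1M\sum_m\nabla f_m$ with $\nabla F$ on the full product space), which is not the convention the paper uses in Assumptions~\ref{assump:BoundedDissimilarity}--\ref{assump:PL} or in its own proof of this lemma. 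With the paper's reading your $1/M\ge 1/M^2$ step gives $-\tfrac{1}{M^2}\sum_m\|c_m\|^2$, which is \emph{not} $\le -\|\tfrac1M\sum_m c_m\|^2$ in general (take $M=2$, $c_1=c_2$), so the chain of inequalities breaks.

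The fix is minor: drop the $1/M\ge 1/M^2$ move and instead apply Jensen~\eqref{eq:prop1-5} directly to the negative terms,
\[
-\tfrac{1}{M}\sum_m\|c_m\|^2\le -\Bigl\|\tfrac1M\sum_m c_m\Bigr\|^2,\qquad -\tfrac{1}{M}\sum_m\|d_m\|^2\le -\Bigl\|\tfrac1M\sum_m d_m\Bigr\|^2,
\]
which is precisely how the paper handles the $\beta$-part. Everything else in your argument then goes through unchanged.
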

\begin{proof}
By \eqref{eq:prop1-4-2}, we have
\begin{align*}
- \eta_k &\left\langle \frac{1}{M}\sum^{M}_{m=1} \nabla_w f_m (\hat{\theta}^{k}_m), \frac{1}{M}\sum^{M}_{m=1} \nabla_w f_m (\theta^{k}_m) \right\rangle \\
& = -\frac{\eta_k}{2} \left\Vert \frac{1}{M}\sum^{M}_{m=1} \nabla_w f_m (\hat{\theta}^{k}_m) \right\Vert^2 -\frac{\eta_k}{2} \left\Vert \frac{1}{M}\sum^{M}_{m=1} \nabla_w f_m (\theta^{k}_m) \right\Vert^2 \\
& \qquad\qquad
+ \frac{\eta_k}{2} \left\Vert \frac{1}{M} \sum^{M}_{m=1} \left( \nabla_w f_m (\hat{\theta}^{k}_m) - \nabla_w f_m (\theta^{k}_m) \right) \right\Vert^2 \\
& \leq -\frac{\eta_k}{2} \left\Vert \frac{1}{M}\sum^{M}_{m=1} \nabla_w f_m (\hat{\theta}^{k}_m) \right\Vert^2 -\frac{\eta_k}{2} \left\Vert \frac{1}{M}\sum^{M}_{m=1} \nabla_w f_m (\theta^{k}_m) \right\Vert^2 \\
&
\qquad \qquad
+ \frac{\eta_k}{2M} \sum^{M}_{m=1} \left\Vert \nabla_w f_m (\hat{\theta}^{k}_m) - \nabla_w f_m (\theta^{k}_m) \right\Vert^2,
\end{align*}
where the last inequality
follows from \eqref{eq:prop1-5}. 
We also have
\begin{multline*}
- \eta_k \langle \nabla_{\beta_m} f_m (\hat{\theta}^{k}), \nabla_{\beta_m} f_m (\theta^{k}_m) \rangle
\\
= -\frac{\eta_k}{2} \Vert \nabla_{\beta_m} f_m (\hat{\theta}^{k}_m) \Vert^2
- \frac{\eta_k}{2} \Vert \nabla_{\beta_m} f_m (\theta^{k}_m) \Vert^2 + \frac{\eta_k}{2} \Vert \nabla_{\beta_m} f_m (\hat{\theta}^{k}_m) - \nabla_{\beta_m} f_m (\theta^{k}_m) \Vert^2.
\end{multline*}
Thus,
\begin{align*}
- \frac{\eta_k}{M} \langle \nabla_{\beta_m} f_m (\hat{\theta}^{k}), \nabla_{\beta_m} f_m (\theta^{k}_m) \rangle 
&= -\frac{\eta_k}{2M} \sum^{M}_{m=1} \Vert \nabla_{\beta_m} f_m (\hat{\theta}^{k}_m) \Vert^2 - \frac{\eta_k}{2M} \sum^{M}_{m=1} \Vert \nabla_{\beta_m} f_m (\theta^{k}_m) \Vert^2 \\
& \qquad + \frac{\eta_k}{2M} \sum^{M}_{m=1} \Vert \nabla_{\beta_m} f_m (\hat{\theta}^{k}_m) - \nabla_{\beta_m} f_m (\theta^{k}_m) \Vert^2 \\
& \leq -\frac{\eta_k}{2} \left\Vert \frac{1}{M} \sum^{M}_{m=1} \nabla_{\beta_m} f_m (\hat{\theta}^{k}_m) \right\Vert^2 - \frac{\eta_k}{2} \left\Vert  \frac{1}{M} \sum^{M}_{m=1}  \nabla_{\beta_m} f_m (\theta^{k}_m) \right\Vert^2 \\
& \qquad + \frac{\eta_k}{2M} \sum^{M}_{m=1} \Vert \nabla_{\beta_m} f_m (\hat{\theta}^{k}_m) - \nabla_{\beta_m} f_m (\theta^{k}_m) \Vert^2.
\end{align*}
Combining the above equations, we have
\begin{align*}
- \eta_k &\left\langle \frac{1}{M}\sum^{M}_{m=1} \nabla_w f_m (\hat{\theta}^{k}_m), \frac{1}{M}\sum^{M}_{m=1} \nabla_w f_m (\theta^{k}_m) \right\rangle - \frac{\eta_k}{M} \sum^{M}_{m=1} \langle \nabla_{\beta_m} f_m (\hat{\theta}^{k}_m), \nabla_{\beta_m} f_m (\theta^{k}_m) \rangle \\
& \leq -\frac{\eta_k}{2} \left\Vert \frac{1}{M}\sum^{M}_{m=1} \nabla f_m (\hat{\theta}^{k}_m) \right\Vert^2 -\frac{\eta_k}{2} \left\Vert \frac{1}{M}\sum^{M}_{m=1} \nabla f_m (\theta^{k}_m) \right\Vert^2 \\
& \qquad + \frac{\eta_k}{2M} \sum^{M}_{m=1} \left\Vert \nabla f_m (\hat{\theta}^{k}_m) - \nabla f_m (\theta^{k}_m) \right\Vert^2 \\
& \overset{(i)}{\leq}
-\frac{\eta_k}{2} \left\Vert \frac{1}{M}\sum^{M}_{m=1} \nabla f_m (\hat{\theta}^{k}_m) \right\Vert^2 -\frac{\eta_k}{2} \left\Vert \frac{1}{M}\sum^{M}_{m=1} \nabla f_m (\theta^{k}_m) \right\Vert^2 
+ \frac{\eta_k L^2}{2M} \sum^{M}_{m=1} \left\Vert w^{k}_m - w^k \right\Vert^2 \\
& = -\frac{\eta_k}{2} \left\Vert \frac{1}{M}\sum^{M}_{m=1} \nabla f_m (\hat{\theta}^{k}_m) \right\Vert^2 -\frac{\eta_k}{2} \left\Vert \frac{1}{M}\sum^{M}_{m=1} \nabla f_m (\theta^{k}_m) \right\Vert^2 + \frac{\eta_k L^2}{2} V^{k},
\end{align*}
where $(i)$ is by Assumption~\ref{assump:Smoothness}.
\end{proof}

\begin{lemma}\label{lemma:NonCLemma4}
Under Assumptions~\ref{assump:Smoothness} and \ref{assump:PL}, we have
\begin{multline*}
- \eta_k\left\langle \frac{1}{M}\sum^{M}_{m=1} \nabla_w f_m (\hat{\theta}^{k}_m), \frac{1}{M}\sum^{M}_{m=1} \nabla_w f_m (\theta^{k}_m) \right\rangle - \frac{\eta_k}{M} \sum^{M}_{m=1} \langle \nabla_{\beta_m} f_m (\hat{\theta}^{k}), \nabla_{\beta_m} f_m (\theta^{k}_m) \rangle \\
\leq  -\eta_k \mu \left( \frac{1}{M}\sum^{M}_{m=1} \nabla f_m (\hat{\theta}^{k}_m) - f^* \right) -\frac{\eta_k}{2} \left\Vert \frac{1}{M}\sum^{M}_{m=1} \nabla f_m (\theta^{k}_m) \right\Vert^2 + \frac{\eta_k L^2}{2} V^{k}.
\end{multline*}
\end{lemma}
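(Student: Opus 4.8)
The plan is to derive this bound as an immediate corollary of Lemma~\ref{lemma:NonCLemma3} together with the Polyak-Łojasiewicz inequality (Assumption~\ref{assump:PL}). Recall that Lemma~\ref{lemma:NonCLemma3} already establishes, using only the smoothness of each $f_m$ (Assumption~\ref{assump:Smoothness}), that the left-hand side is bounded above by
\[
-\frac{\eta_k}{2}\left\Vert \frac{1}{M}\sum_{m=1}^M \nabla f_m(\hat{\theta}^{k}_m)\right\Vert^2 - \frac{\eta_k}{2}\left\Vert \frac{1}{M}\sum_{m=1}^M \nabla f_m(\theta^{k}_m)\right\Vert^2 + \frac{\eta_k L^2}{2}V^{k}.
\]
Thus the statement to be proved differs from this one only in that the first squared-gradient term is replaced by the suboptimality gap $-\eta_k\mu\big(\tfrac{1}{M}\sum_m f_m(\hat{\theta}^{k}_m) - f^*\big)$, and it suffices to show that the latter is an upper bound for $-\tfrac{\eta_k}{2}\Vert\tfrac{1}{M}\sum_m \nabla f_m(\hat{\theta}^{k}_m)\Vert^2$.

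First I would invoke Assumption~\ref{assump:PL} at the averaged iterate, i.e.\ with the global parameter set to the virtual iterate $w^{k}=\tfrac{1}{M}\sum_m w^{k}_m$ and the local parameters set to $\beta^{k}_1,\dots,\beta^{k}_M$. Since in the notation of the proof $\nabla f_m(w^{k},\beta^{k}_m) = \nabla f_m(\hat{\theta}^{k}_m)$ and $f_m(w^{k},\beta^{k}_m) = f_m(\hat{\theta}^{k}_m)$, the PL inequality reads $\tfrac{1}{2}\Vert\tfrac{1}{M}\sum_m \nabla f_m(\hat{\theta}^{k}_m)\Vert^2 \ge \mu\big(\tfrac{1}{M}\sum_m f_m(\hat{\theta}^{k}_m) - f^*\big)$. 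Multiplying both sides by $-\eta_k<0$ and substituting into the bound from Lemma~\ref{lemma:NonCLemma3} yields the claimed inequality.

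There is essentially no obstacle here: the argument is a one-line substitution, and the only point to keep track of is that the PL condition must be applied at the averaged (virtual) iterate $\hat{\theta}^{k}_m$ rather than at the local $\theta^{k}_m$, which is legitimate since Assumption~\ref{assump:PL} holds for all $(w,\beta_1,\dots,\beta_M)$ and presupposes $\mu>0$. (I also note that the term $\tfrac{1}{M}\sum_m \nabla f_m(\hat{\theta}^{k}_m)$ appearing inside the parentheses on the right-hand side of the statement should read $\tfrac{1}{M}\sum_m f_m(\hat{\theta}^{k}_m)$, the averaged function value, consistent with the PL definition; the substitution above produces exactly that term.)
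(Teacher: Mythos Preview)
Your proposal is correct and matches the paper's own proof, which simply states that the result follows directly from Lemma~\ref{lemma:NonCLemma3} and Assumption~\ref{assump:PL}. You have spelled out the one-line substitution in full and correctly identified the typo in the statement (the right-hand side should contain $\tfrac{1}{M}\sum_m f_m(\hat{\theta}^{k}_m)$ rather than $\tfrac{1}{M}\sum_m \nabla f_m(\hat{\theta}^{k}_m)$).
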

\begin{proof}
The proof follows directly from Lemma~\ref{lemma:NonCLemma3} and Assumption~\ref{assump:PL}.
\end{proof}

\begin{lemma}\label{lemma:NonCLemma5}
Suppose 
Assumptions~\ref{assump:BoundedLocalVariance} and \ref{assump:BoundedDissimilarity} hold.
For $k_p+1\leq k \leq v_p$, we have
\begin{multline*}
\mathbb{E}\left[ V^{k} \right] \leq \lambda (\tau-1)(C_1+1) \sum^{k-1}_{t=k_p} \eta^2_t \mathbb{E} \left[ \left\Vert \frac{1}{M}\sum^{M}_{m=1} \nabla f_m (\theta^{t}_m) \right\Vert^2 \right] \\
+ \sigma^2_{\text{dif}}(\tau-1)(C_1+1)\sum^{k-1}_{t=k_p} \eta^2_t + \frac{\sigma^2_1 (\tau-1)}{B} \sum^{k-1}_{t=k_p} \eta^2_t. 
\end{multline*}
Note that $V^{k_p}=0$.
\end{lemma}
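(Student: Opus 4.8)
The quantity $V^k$ measures the consensus error among the local copies $w_m^k$ of the global parameter, so the plan is the standard one for bounding such a drift term: unroll the local updates over a single communication block $k_p,\dots,v_p$ and control the accumulated stochastic gradients via Assumptions~\ref{assump:BoundedLocalVariance} and~\ref{assump:BoundedDissimilarity}.

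First I would note that for $k_p \le k \le v_p$ no averaging takes place, hence $w_m^k = w_m^{k_p} - \sum_{t=k_p}^{k-1}\eta_t g_{m,1}^t$; since the averaging step at iteration $k_p$ sets $w_m^{k_p} = w^{k_p}$ for every $m$, averaging over $m$ gives $w^k = w^{k_p} - \sum_{t=k_p}^{k-1}\eta_t h^t$, and therefore $w_m^k - w^k = -\sum_{t=k_p}^{k-1}\eta_t\bigl(g_{m,1}^t - h^t\bigr)$. In particular $V^{k_p}=0$, which is the base case.

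Next I would bound $\|w_m^k - w^k\|^2$ by the Cauchy--Schwarz inequality (i.e.~\eqref{eq:prop1-5} applied to the $k - k_p \le \tau - 1$ summands), obtaining $\|w_m^k - w^k\|^2 \le (\tau-1)\sum_{t=k_p}^{k-1}\eta_t^2\,\|g_{m,1}^t - h^t\|^2$. Averaging over $m$ and using that $h^t$ is the mean of $\{g_{m,1}^t\}_{m}$ (so that $\frac1M\sum_m\|g_{m,1}^t - h^t\|^2 = \frac1M\sum_m\|g_{m,1}^t\|^2 - \|h^t\|^2 \le \frac1M\sum_m\|g_{m,1}^t\|^2$) yields $V^k \le (\tau-1)\sum_{t=k_p}^{k-1}\eta_t^2\cdot\frac1M\sum_m\|g_{m,1}^t\|^2$.

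Then I would take expectations and conclude. Conditioning on the randomness up to iteration $t$, unbiasedness of $g_{m,1}^t$ together with Assumption~\ref{assump:BoundedLocalVariance} and $\|\nabla_w f_m\| \le \|\nabla f_m\|$ give $\mathbb{E}_\xi\|g_{m,1}^t\|^2 \le (C_1+1)\|\nabla f_m(\theta_m^t)\|^2 + \sigma_1^2/B$; averaging over $m$ and invoking Assumption~\ref{assump:BoundedDissimilarity} replaces $\frac1M\sum_m\|\nabla f_m(\theta_m^t)\|^2$ by $\lambda\bigl\|\frac1M\sum_m\nabla f_m(\theta_m^t)\bigr\|^2 + \sigma^2_{\text{dif}}$. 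Substituting back, applying the tower property, and distributing the three resulting terms over $\sum_{t=k_p}^{k-1}\eta_t^2$ gives exactly the claimed inequality. The computation is routine; the only points needing care are the bookkeeping of the number of summands (so that the prefactor is $(\tau-1)$ while the inner sum still ranges over $k - k_p$ indices) and the use of the tower property, which is what lets the pointwise-in-$(w,\beta)$ Assumptions~\ref{assump:BoundedLocalVariance}--\ref{assump:BoundedDissimilarity} be applied at the random iterate $\theta_m^t$ before taking the outer expectation.
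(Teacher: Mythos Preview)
Your proposal is correct and follows essentially the same approach as the paper's proof. The only cosmetic difference is the order of two steps: the paper first applies the variance decomposition $\frac{1}{M}\sum_m\|X_m-\bar X\|^2=\frac{1}{M}\sum_m\|X_m\|^2-\|\bar X\|^2$ to the aggregated sums $\sum_{t}\eta_t g_{m,1}^t$ and then uses Cauchy--Schwarz, whereas you apply Cauchy--Schwarz to each $\|w_m^k-w^k\|^2$ first and then the variance decomposition termwise in $t$; both routes yield the identical intermediate bound $V^k\le(\tau-1)\sum_{t=k_p}^{k-1}\eta_t^2\cdot\frac{1}{M}\sum_m\|g_{m,1}^t\|^2$, after which the use of Assumptions~\ref{assump:BoundedLocalVariance} and~\ref{assump:BoundedDissimilarity} is the same.
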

\begin{proof}
Note that $w^{k_p}=w^{k_p}_m$ for all $m \in [M]$.
Thus, for $k_p+1\leq k \leq v_p$, we have
\begin{align*}
\Vert w^{k}_m - w^k \Vert^2 = \left\Vert w^{k_p}_m - \sum^{k-1}_{t=k_p}\eta_t g^{t}_{m,1} - w^{k_p} - \sum^{k-1}_{t=k_p} \eta_t h^{t} \right\Vert^2 
 = \left\Vert \sum^{k-1}_{t=k_p}\eta_t g^{t}_{m,1} - \sum^{k-1}_{t=k_p} \eta_t h^{t} \right\Vert^2.
\end{align*}
Since
\begin{equation*}
\frac{1}{M} \sum^M_{m=1} \sum^{k-1}_{t=k_p}\eta_t g^{t}_{m,1} = \sum^{k-1}_{t=k_p} \eta_t h^{t},
\end{equation*}
we have
\begin{equation}
\begin{aligned}\label{eq:NonCLemma5-1}
\frac{1}{M} \sum^M_{m=1} \Vert w^{k}_m - w^k \Vert^2 
&= \frac{1}{M} \sum^M_{m=1} \left\Vert \sum^{k-1}_{t=k_p}\eta_t g^{t}_{m,1} - \sum^{k-1}_{t=k_p} \eta_t h^{t} \right\Vert^2 \\
&= \frac{1}{M} \sum^M_{m=1} \left\Vert \sum^{k-1}_{t=k_p}\eta_t g^{t}_{m,1} \right\Vert^2 - \left\Vert \sum^{k-1}_{t=k_p} \eta_t h^{t} \right\Vert^2 
\leq \frac{1}{M} \sum^M_{m=1} \left\Vert \sum^{k-1}_{t=k_p}\eta_t g^{t}_{m,1} \right\Vert^2 \\
&\leq  \frac{k-k_p}{M} \sum^M_{m=1} \sum^{k-1}_{t=k_p}\eta^2_t \Vert g^{t}_{m,1} \Vert^2 
\leq  \frac{\tau-1}{M} \sum^M_{m=1} \sum^{k-1}_{t=k_p}\eta^2_t \Vert g^{t}_{m,1} \Vert^2.
\end{aligned}
\end{equation}
Given $\{\theta^{k}_m\}_{m\in [M]}$, we have
\begin{align*}
\mathbb{E}_{\xi^{k}} \left[ \frac{1}{M} \sum^M_{m=1} \Vert g^{k}_{m,1} \Vert^2 \right] 
& = \frac{1}{M} \sum^M_{m=1} \mathbb{E}_{\xi^{k}_m} \left[ \Vert g^{k}_{m,1} \Vert^2 \right] \\
& = \frac{1}{M} \sum^M_{m=1} \mathbb{E}_{\xi^{k}_m} \left[ \Vert g^{k}_{m,1} - \nabla_w f_m (\theta^{k}_m) \Vert^2 \right] + \frac{1}{M} \sum^M_{m=1} \Vert \nabla_w f_m (\theta^{k}_m) \Vert^2 \\
& \leq \frac{1}{M} \sum^M_{m=1} \left[ (C_1+1)\nabla \Vert f_m (\theta^{k}_m) \Vert^2 + \frac{\sigma^2_1}{B} \right] + \frac{1}{M} \sum^M_{m=1} \Vert \nabla f_m (\theta^{k}_m) \Vert^2 \\
& =  \frac{C_1+1}{M} \sum^M_{m=1} \Vert \nabla f_m (\theta^{k}_m) \Vert^2 + \frac{\sigma^2_1}{B},
\end{align*}
where the expectation is taken with respect to the randomness
in $\xi^{k}$.
Thus, by the independence of $\xi^{(1)},\xi^{(2)},\dots,\xi^{k}$
and taking an unconditional expectation on both sides of 
\eqref{eq:NonCLemma5-1}, we have
\begin{align*}
\mathbb{E} \left[ V^{k} \right] 
& = (\tau-1)\sum^{k-1}_{t=k_p} \eta^2_t \mathbb{E} \left[ \mathbb{E}_{\xi^{t}} \left[ \frac{1}{M} \sum^M_{m=1} \Vert g^{t}_{m,1} \Vert^2 \right] \right] \\
& \leq  (\tau-1) (C_1+1) \sum^{k-1}_{t=k_p} \eta^2_t \mathbb{E} \left[ \frac{1}{M} \sum^M_{m=1} \Vert \nabla f_m (\theta^{t}_m) \Vert^2 \right] + \frac{(\tau-1)\sigma^2_1}{B}\sum^{k-1}_{t=k_p} \eta^2_t \\
& \leq  \lambda (\tau-1) (C_1+1) \sum^{k-1}_{t=k_p} \eta^2_t \mathbb{E} \left[ \left\Vert \frac{1}{M} \sum^M_{m=1}  \nabla f_m (\theta^{t}_m) \right\Vert^2 \right] \\
& \qquad \qquad + \sigma^2_{\text{dif}}(\tau-1)(C_1+1) \sum^{k-1}_{t=k_p} \eta^2_t + \frac{(\tau-1)\sigma^2_1}{B}\sum^{k-1}_{t=k_p} \eta^2_t,
\end{align*}
where the last inequality follows Assumption~\ref{assump:BoundedDissimilarity}.
\end{proof}

With these preliminaries, we are ready to prove 
Theorem~\ref{thm:GeneralNonConvex} and Theorem~\ref{thm:PLNonConvex}.

\subsubsection{Proof of Theorem~\ref{thm:GeneralNonConvex}}

Under Assumptions~\ref{assump:Smoothness}-\ref{assump:BoundedDissimilarity},
given $\{\theta^{k}_m\}_{m \in [M]}$,
it follows from Lemmas~\ref{lemma:NonCLemma1}-\ref{lemma:NonCLemma3}
that 
\begin{equation*}
\begin{aligned}
\mathbb{E}_{\xi^{k}} \left[ \frac{1}{M} \sum^{M}_{m=1} f_m (\hat{\theta}^{k+1}_m) \right] &- \frac{1}{M} \sum^{M}_{m=1} f_m (\hat{\theta}^{k}_m) \\
&\leq -\frac{\eta}{2} \left\Vert \frac{1}{M} \sum^{M}_{m=1} f_m (\hat{\theta}^{k}_m) \right\Vert^2 - \frac{\eta}{2} \left\Vert \frac{1}{M} \sum^{M}_{m=1} f_m (\theta^{k}_m) \right\Vert^2 + \frac{\eta L^2}{2} V^{k} \\
& \qquad + \frac{1}{2} \eta^2 L \lambda \left( \frac{C_1}{M} + C_2 + 1 \right) \left\Vert \frac{1}{M} \sum^{M}_{m=1} f_m (\theta^{k}_m) \right\Vert^2 \\
& \qquad + \frac{1}{2} \eta^2 L \lambda \left\{\left( \frac{C_1}{M} + C_2 + 1 \right)\sigma^2_{\text{dif}} + \frac{\sigma^2_1}{MB} + \frac{\sigma^2_2}{B} \right\},
\end{aligned}
\end{equation*}
where the expectation is taken 
with respect to the randomness in $\xi^{k}$.
Thus, taking the unconditional expectation on both
sides of the equation  above, we have
\begin{equation*}
\begin{aligned}
\mathbb{E} &\left[ \frac{1}{M} \sum^{M}_{m=1} f_m (\hat{\theta}^{k+1}_m)  
 - \frac{1}{M} \sum^{M}_{m=1} f_m (\hat{\theta}^{k}_m) \right] \\
& \qquad \qquad \leq -\frac{\eta}{2} \mathbb{E} \left[ \left\Vert \frac{1}{M} \sum^{M}_{m=1} f_m (\hat{\theta}^{k}_m) \right\Vert^2 \right] - \frac{\eta}{2}  \mathbb{E} \left[ \left\Vert \frac{1}{M} \sum^{M}_{m=1} f_m (\theta^{k}_m) \right\Vert^2 \right] + \frac{\eta L^2}{2} \mathbb{E} \left[ V^{k} \right] \\
& \qquad \qquad \qquad + \frac{1}{2} \eta^2 L \lambda \left( \frac{C_1}{M} + C_2 + 1 \right) \mathbb{E} \left[ \left\Vert \frac{1}{M} \sum^{M}_{m=1} f_m (\theta^{k}_m) \right\Vert^2 \right] \\
& \qquad \qquad \qquad + \frac{1}{2} \eta^2 L \lambda \left\{\left( \frac{C_1}{M} + C_2 + 1 \right)\sigma^2_{\text{dif}} + \frac{\sigma^2_1}{MB} + \frac{\sigma^2_2}{B} \right\},
\end{aligned}
\end{equation*}
which implies that
\begin{multline}\label{eq:GeneralNonConvexProof-1}
\begin{aligned}
\mathbb{E} &\left[ \frac{1}{M} \sum^{M}_{m=1} f_m (\hat{\theta}^{k_{p+1}}_m)  - \frac{1}{M} \sum^{M}_{m=1} f_m (\hat{\theta}^{k_p}_m) \right]\\
& \quad \quad= \sum^{v_p}_{k=k_p} \mathbb{E} \left[ \frac{1}{M} \sum^{M}_{m=1} f_m (\hat{\theta}^{k+1}_m)  - \frac{1}{M} \sum^{M}_{m=1} f_m (\hat{\theta}^{k}_m) \right] \\
& \quad \quad\leq -\frac{\eta}{2} \sum^{v_p}_{k=k_p} \mathbb{E} \left[ \left\Vert \frac{1}{M} \sum^{M}_{m=1} f_m (\hat{\theta}^{k}_m) \right\Vert^2 \right] \\
& \quad \quad \quad + \frac{\eta}{2} \left\{ -1 + \eta L \lambda \left( \frac{C_1}{M} + C_2 + 1 \right) \right\}  \sum^{v_p}_{k=k_p} \mathbb{E} \left[ \left\Vert \frac{1}{M} \sum^{M}_{m=1} f_m (\theta^{k}_m) \right\Vert^2 \right] \\
& \quad \quad \quad + \frac{\eta L^2}{2} \sum^{v_p}_{k=k_p} \mathbb{E} \left[ V^{k} \right] + \frac{1}{2} \eta^2 L \lambda \left\{\left( \frac{C_1}{M} + C_2 + 1 \right)\sigma^2_{\text{dif}} + \frac{\sigma^2_1}{MB} + \frac{\sigma^2_2}{B} \right\} \sum^{v_p}_{k=k_p} 1.
\end{aligned}
\end{multline}
By Lemma~\ref{lemma:NonCLemma5}, 
for all $k_p \leq k \leq v_p$,
we have that 
\begin{align*}
\mathbb{E}\left[ V^{k} \right] 
& \leq \lambda \eta^2 (\tau-1)(C_1+1) \sum^{k-1}_{k=k_p}  \mathbb{E} \left[ \left\Vert \frac{1}{M}\sum^{M}_{m=1} \nabla f_m (\theta^{k}_m) \right\Vert^2 \right] \\
& \qquad + \eta^2 \sigma^2_{\text{dif}}(\tau-1)(C_1+1)(k-k_p) + \frac{\eta^2 \sigma^2_1 (\tau-1)}{B} (k-k_p) \\
& \leq \lambda \eta^2 (\tau-1)(C_1+1) \sum^{v_p}_{k=k_p}  \mathbb{E} \left[ \left\Vert \frac{1}{M}\sum^{M}_{m=1} \nabla f_m (\theta^{k}_m) \right\Vert^2 \right] \\
& \qquad + \eta^2 \sigma^2_{\text{dif}}(\tau-1)^2(C_1+1) + \frac{\eta^2 \sigma^2_1 (\tau-1)^2}{B}.
\end{align*}
Therefore, we have
\begin{multline*}
\frac{\eta L^2}{2} \sum^{v_p}_{k=k_p} \mathbb{E}\left[ V^{k} \right] \leq  \frac{1}{2} \lambda \eta^3 L^2 (\tau-1) \tau (C_1+1) \sum^{v_p}_{k=k_p}  \mathbb{E} \left[ \left\Vert \frac{1}{M}\sum^{M}_{m=1} \nabla f_m (\theta^{k}_m) \right\Vert^2 \right] \\
+ \frac{1}{2} \eta^3 L^2 \sigma^2_{\text{dif}}(\tau-1)^2 (C_1+1) \sum^{v_p}_{k=k_p} 1 + \frac{\eta^3 L^2 \sigma^2_1 (\tau-1)^2}{2B}\sum^{v_p}_{k=k_p} 1.
\end{multline*}
Combined with \eqref{eq:GeneralNonConvexProof-1}, we have
\begin{align*}
\mathbb{E} & \left[ \frac{1}{M} \sum^{M}_{m=1} f_m (\hat{\theta}^{k_{p+1}}_m)  - \frac{1}{M} \sum^{M}_{m=1} f_m (\hat{\theta}^{k_p}_m) \right] \\
& \leq -\frac{\eta}{2} \sum^{v_p}_{k=k_p} \mathbb{E} \left[ \left\Vert \frac{1}{M} \sum^{M}_{m=1} f_m (\hat{\theta}^{k}_m) \right\Vert^2 \right] \\
& \ +  \frac{\eta}{2} \left\{ -1 + \eta L \lambda \left( \frac{C_1}{M} + C_2 + 1 \right) + \lambda \eta^2 L^2 (\tau-1) \tau (C_1+1) \right\}  \sum^{v_p}_{k=k_p} \mathbb{E} \left[ \left\Vert \frac{1}{M} \sum^{M}_{m=1} f_m (\theta^{k}_m) \right\Vert^2 \right] \\
& \ + \frac{1}{2} \eta^2 L \lambda \left\{\left( \frac{C_1}{M} + C_2 + 1 \right)\sigma^2_{\text{dif}} + \frac{\sigma^2_1}{MB} + \frac{\sigma^2_2}{B} \right\} \sum^{v_p}_{k=k_p} 1 \\
& \ + \frac{1}{2} \eta^3 L^2 \sigma^2_{\text{dif}}(\tau-1)^2 (C_1+1) \sum^{v_p}_{k=k_p} 1 + \frac{\eta^3 L^2 \sigma^2_1 (\tau-1)^2}{2B}\sum^{v_p}_{k=k_p} 1.
\end{align*}
Since we require that
\begin{equation*}
-1 + \eta L \lambda \left(\frac{C_1}{M} + C_2 + 1 \right) + \eta^2 L^2 (\tau-1)\tau (C_1+1) \leq 0,
\end{equation*}
the equation above implies that
\begin{align*}
\mathbb{E} &\left[ \frac{1}{M} \sum^{M}_{m=1} f_m (\hat{\theta}^{k_{p+1}}_m)  - \frac{1}{M} \sum^{M}_{m=1} f_m (\hat{\theta}^{k_p}_m) \right] \\
& \leq -\frac{\eta}{2} \sum^{v_p}_{k=k_p} \mathbb{E} \left[ \left\Vert \frac{1}{M} \sum^{M}_{m=1} f_m (\hat{\theta}^{k}_m) \right\Vert^2 \right] + \frac{1}{2} \eta^2 L \lambda \left\{\left( \frac{C_1}{M} + C_2 + 1 \right)\sigma^2_{\text{dif}} + \frac{\sigma^2_1}{MB} + \frac{\sigma^2_2}{B} \right\} \sum^{v_p}_{k=k_p} 1 \\
& \quad + \frac{1}{2} \eta^3 L^2 \sigma^2_{\text{dif}}(\tau-1)^2 (C_1+1) \sum^{v_p}_{k=k_p} 1 + \frac{\eta^3 L^2 \sigma^2_1 (\tau-1)^2}{2B}\sum^{v_p}_{k=k_p} 1.
\end{align*}
Since we have assumed that $K=k_{\bar{p}}$ for
some $\bar{p} \in \mathbb{N}^{+}$,
we further have
\begin{align*}
\frac{1}{K} & \mathbb{E} \left[ \left(\frac{1}{M} \sum^{M}_{m=1} f_m (\hat{\theta}^{K}_m) - f^* \right)  - \left( \frac{1}{M} \sum^{M}_{m=1} f_m (\hat{\theta}^{0}_m) - f^* \right) \right] \\
& = \frac{1}{K} \mathbb{E} \left[ \frac{1}{M} \sum^{M}_{m=1} f_m (\hat{\theta}^{K}_m)  - \frac{1}{M} \sum^{M}_{m=1} f_m (\hat{\theta}^{0}_m) \right] \\
& = \frac{1}{K} \sum^{\bar{p}-1}_{p=0} \mathbb{E} \left[ \frac{1}{M} \sum^{M}_{m=1} f_m (\hat{\theta}^{k_{p+1}}_m)  - \frac{1}{M} \sum^{M}_{m=1} f_m (\hat{\theta}^{k_p}_m) \right] \\
& \leq
-\frac{\eta}{2K} \sum^{\bar{p}-1}_{p=0} \sum^{v_p}_{k=k_p} \mathbb{E} \left[ \left\Vert \frac{1}{M} \sum^{M}_{m=1} f_m (\hat{\theta}^{k}_m) \right\Vert^2 \right] \\
& \quad + \frac{1}{2} \eta^2 L \lambda \left\{\left( \frac{C_1}{M} + C_2 + 1 \right)\sigma^2_{\text{dif}} + \frac{\sigma^2_1}{MB} + \frac{\sigma^2_2}{B} \right\}  \frac{1}{K} \sum^{\bar{p}-1}_{p=0} \sum^{v_p}_{k=k_p} 1 \\
& \quad + \frac{1}{2} \eta^3 L^2 \sigma^2_{\text{dif}}(\tau-1)^2 (C_1+1) \frac{1}{K} \sum^{\bar{p}-1}_{p=0} \sum^{v_p}_{k=k_p} 1 + \frac{\eta^3 L^2 \sigma^2_1 (\tau-1)^2}{2B} \frac{1}{K} \sum^{\bar{p}-1}_{p=0} \sum^{v_p}_{k=k_p} 1 \\
& = -\frac{\eta}{2K} \sum^{K-1}_{k=0} \mathbb{E} \left[ \left\Vert \frac{1}{M} \sum^{M}_{m=1} f_m (\hat{\theta}^{k}_m) \right\Vert^2 \right] + \frac{1}{2} \eta^2 L \lambda \left\{\left( \frac{C_1}{M} + C_2 + 1 \right)\sigma^2_{\text{dif}} + \frac{\sigma^2_1}{MB} + \frac{\sigma^2_2}{B} \right\} \\
& \quad + \frac{1}{2} \eta^3 L^2 \sigma^2_{\text{dif}}(\tau-1)^2 (C_1+1) + \frac{\eta^3 L^2 \sigma^2_1 (\tau-1)^2}{2B}.
\end{align*}
This implies that
\begin{multline*}
\frac{1}{K} \sum^{K-1}_{k=0} \mathbb{E} \left[ \left\Vert \frac{1}{M} \sum^{M}_{m=1} f_m (\hat{\theta}^{k}_m) \right\Vert^2 \right] \\
\leq \frac{2 \mathbb{E} \left[ \frac{1}{M} \sum^{M}_{m=1} f_m (\hat{\theta}^{0}_m) - f^* \right]}{\eta K} + \eta L \lambda \left\{\left( \frac{C_1}{M} + C_2 + 1 \right)\sigma^2_{\text{dif}} + \frac{\sigma^2_1}{MB} + \frac{\sigma^2_2}{B} \right\} \\
+ \eta^2 L^2 \sigma^2_{\text{dif}}(\tau-1)^2 (C_1+1) + \frac{\eta^2 L^2 \sigma^2_1 (\tau-1)^2}{B}
\end{multline*}
and the proof is complete.

\subsubsection{Proof of Theorem~\ref{thm:PLNonConvex}}

By Lemmas \ref{lemma:NonCLemma1}, \ref{lemma:NonCLemma2}, \ref{lemma:NonCLemma4} and \ref{lemma:NonCLemma5}, for $k_p+1 \leq k \leq v_p$, we have
\begin{align*}
\mathbb{E} & \left[ \frac{1}{M} \sum^{M}_{m=1} f_m (\hat{\theta}^{k+1}_m) - f^* \right] 
\\
& \qquad \leq  \Delta_k \mathbb{E} \left[ \frac{1}{M} \sum^{M}_{m=1} f_m (\hat{\theta}^{k}_m) - f^* \right] \\
& \qquad \qquad + \frac{\eta_k}{2}\left\{ -1 + \eta_k \lambda L \left( \frac{C_1}{M} + C_2 + 1 \right) \right\} \mathbb{E} \left[ \left\Vert \frac{1}{M} \sum^{M}_{m=1} \nabla f_m (\theta^{k}_m) \right\Vert^2 \right] \\
& \qquad \qquad + B_k \sum^{k-1}_{t=k_p} \eta^2_t \mathbb{E} \left[ \left\Vert \frac{1}{M} \sum^{M}_{m=1} \nabla f_m (\theta^{(t)}_m) \right\Vert^2 \right] + c_k,
\end{align*}
where
\begin{equation}\label{eq:proof-thm4-eq1}
\begin{aligned}
\Delta_k & = 1- \eta_k \mu, \\
B_k & = \frac{1}{2} \eta_k L^2 \lambda (\tau-1) (C_1 + 1), 
\text{ and}\\
c_k & = \frac{\eta_k L^2}{2} \left\{ \sigma^2_{\text{dif}} (\tau-1) (C_1 + 1) \sum^{k-1}_{t=k_p} \eta^2_t + \frac{\sigma^2_1 (\tau-1)}{B} \sum^{k-1}_{t=k_p}
\eta^2_t \right\} \\
& \qquad\qquad\qquad\qquad
+ \frac{\eta^2_k L}{2} \left\{ \sigma^2_{\text{dif}} \left( \frac{C_1}{M} + C_2 + 1 \right) + \frac{\sigma^2_1}{MB} + \frac{\sigma^2_2}{B} \right\}.
\end{aligned}
\end{equation}
Let
\begin{align*}
a_k  &= \mathbb{E} \left[ \frac{1}{M} \sum^{M}_{m=1} f_m (\hat{\theta}^{k}_m) - f^* \right],  \\
D & = \lambda L \left( \frac{C_1}{M} + C_2 + 1 \right), 
\text{ and} \\
e_k & = \mathbb{E} \left[ \left\Vert \frac{1}{M} \sum^{M}_{m=1} \nabla f_m (\theta^{k}_m) \right\Vert^2 \right],
\end{align*}
and denote
\begin{align*}
& \sum^{k_p-1}_{k=k_p} \eta^2_k e_t = 0, \\
& c_{k_p} = \frac{\alpha^2_{k_p} L}{2} \left\{ \sigma^2_{\text{dif}} \left( \frac{C_1}{M} + C_2 + 1 \right) + \frac{\sigma^2_1}{MB} + \frac{\sigma^2_2}{B} \right\}.
\end{align*}
Then 
\begin{equation*}
a_{k+1} \leq \Delta_k a_k + \frac{\eta_k}{2} (-1 + D \eta_k) e_k + B_k \sum^{k-1}_{t=k_p} \eta^2_k e_t + c_k,
\end{equation*}
for all $k_p \leq k \leq v_p$. 
Under the conditions on $\beta$ and $\tau$, 
by Lemmas~\ref{lemma:NonCLemma6} and \ref{lemma:NonCLemma7}, we have
\begin{equation}\label{eq:PLNonConvexProof-1}
a_{v_p+1} \leq \left( \prod^{v_p}_{k=k_p} \Delta_k \right) a_{k_p} + \sum^{v_p-1}_{k=k_p} \left( \prod^{v_p}_{i=k+1} \Delta_i \right) c_k + c_{v_p}.
\end{equation}
Let $z_k=(k+b)^2$, where $b=\beta \tau + 1$.
Then 
\begin{multline*}
\Delta_k \frac{z_k}{\eta_k} = (1-\mu \eta_k) \mu (k+b)^3 
= (1-\frac{1}{k+b}) \mu (k+b)^3 = \mu (k+b-1) (k+b)^2 
\leq \mu (k+b-1)^3 
= \frac{z_{k-1}}{\eta_{k-1}}
\end{multline*}
and, thus,
\begin{equation*}
\frac{z_{v_p}}{\eta_{v_p}} \left( \prod^{v_p}_{i=k+1} \Delta_i \right) 
= \frac{z_{v_p}}{\eta_{v_p}} \Delta_{v_p} \left( \prod^{v_p-1}_{i=k+1} \Delta_i \right) 
\leq \frac{z_{v_p-1}}{\eta_{v_p-1}} \left( \prod^{v_p-1}_{i=k+1} \Delta_i \right) 
\ldots 
\leq \frac{z_k}{\eta_k}.
\end{equation*}
Note that $v_p+1=k_{p+1}$.
Plugging  the above inequality into 
\eqref{eq:PLNonConvexProof-1}, we then get
\begin{equation*}
\frac{z_{v_p}}{\eta_{v_p}} a_{k_{p+1}} \leq \frac{z_{k_p}}{\eta_{k_p}} a_{k_p} + \sum^{v_p}_{k=k_p} \frac{z_k}{\eta_k} c_{k}.
\end{equation*}
Since we have assumed that $K=k_{\bar{p}}$, we thus have
\begin{equation}
\frac{z_{K-1}}{\eta_{K-1}} a_{K} = 
\frac{z_{v_{\bar{p}-1}}}{\eta_{v_{\bar{p}-1}}} a_{k_{\bar{p}}} 
\leq \frac{z_{k_{\bar{p}-1}}}{\eta_{k_{\bar{p}-1}}} a_{k_{\bar{p}-1}} + \sum^{v_{\bar{p}-1}}_{t=k_{\bar{p}-1}} \frac{z_t}{\eta_t} c_{t}
\ldots 
\leq \frac{z_0}{\eta_0} a_0 + \sum^{K-1}_{k=0} \frac{z_k}{\eta_k} c_k.  \label{eq:PLNonConvexProof-2}
\end{equation}
Since, for $k_p \leq k \leq v_p$, we have
\begin{align*}
c_k 
& = \frac{\eta_k L^2}{2} \left\{ \sigma^2_{\text{dif}} (\tau-1) (C_1 + 1) \sum^{t-1}_{k=k_p} \eta^2_k + \frac{\sigma^2_1 (\tau-1)}{B} \sum^{t-1}_{k=k_p}
\eta^2_k \right\} \\
& \qquad\qquad\qquad\qquad
+ \frac{\eta^2_k L}{2} \left\{ \sigma^2_{\text{dif}} \left( \frac{C_1}{M} + C_2 + 1 \right) + \frac{\sigma^2_1}{MB} + \frac{\sigma^2_2}{B} \right\} \\
& \leq
\frac{\eta_k \eta^2_{\left\lfloor \frac{k}{\tau} \right\rfloor \tau } L^2 (\tau-1)^2 }{2} \left\{ \sigma^2_{\text{dif}} (C_1 + 1) + \frac{\sigma^2_1 }{B} \right\} \\
& \qquad\qquad\qquad\qquad
+
\frac{\eta^2_k L}{2} \left\{ \sigma^2_{\text{dif}} \left( \frac{C_1}{M} + C_2 + 1 \right) + \frac{\sigma^2_1}{MB} + \frac{\sigma^2_2}{B} \right\},
\end{align*}
we also have
\begin{multline}\label{eq:PLNonConvexProof-3}
\sum^{K-1}_{k=0} \frac{z_k}{\eta_k} c_k \leq \frac{ L^2 (\tau-1)^2 }{2} \left\{ \sigma^2_{\text{dif}} (C_1 + 1) + \frac{\sigma^2_1 }{B} \right\} \sum^{K-1}_{k=0} z_k \eta^2_{\left\lfloor \frac{k}{\tau} \right\rfloor \tau } \\
+ \frac{ L}{2} \left\{ \sigma^2_{\text{dif}} \left( \frac{C_1}{M} + C_2 + 1 \right) + \frac{\sigma^2_1}{MB} + \frac{\sigma^2_2}{B} \right\} \sum^{K-1}_{k=0} z_k \eta_k. 
\end{multline}

Assume that $k=p \tau + r$, where $0 \leq r \leq \tau-1$.
Then
\begin{align*}
\left\lfloor \frac{t}{\tau} \right\rfloor \tau + b = p\tau + \beta \tau + 1 = (p+\beta) \tau + 1 \geq \beta \tau \geq r,
\end{align*}
as we have assumed that $\beta>1$. Thus
\begin{align*}
2 \left( \left\lfloor \frac{k}{\tau} \right\rfloor \tau + b \right) \geq (p+\beta) \tau + 1 + r = k + b
\end{align*}
and
\begin{align*}
\sum^{K-1}_{k=0} z_k \eta^2_{\left\lfloor \frac{k}{\tau} \right\rfloor \tau } 
= \frac{1}{\mu^2} \sum^{K-1}_{k=0} \left( \frac{k+b}{\left\lfloor \frac{k}{\tau} \right\rfloor \tau + b} \right)^2 
\leq \frac{4K}{\mu^2}.
\end{align*}
Next, note that
\begin{equation}\label{eq:PLNonConvexProof-4}
\sum^{K-1}_{k=0} z_k \eta_k = \frac{1}{\mu} \sum^{K-1}_{k=0} (k+b) \leq \frac{K(K+2b)}{2\mu}.
\end{equation}

Combining equations 
\eqref{eq:PLNonConvexProof-2}-\eqref{eq:PLNonConvexProof-4}, we have
\begin{multline*}
\mathbb{E} \left[ \frac{1}{M} \sum^{M}_{m=1} f_m \left( \hat{\theta}^{K}_m \right) - f^* \right]\\
\leq \frac{b^3}{(K+\beta \tau)^3} \mathbb{E} \left[ \frac{1}{M} \sum^{M}_{m=1} f_m \left( \hat{\theta}^{0}_m \right) - f^* \right] 
+ \frac{2L^2 (\tau-1)^2 K }{\mu^3 (K + \beta \tau)^3} \left\{ \sigma^2_{\text{dif}} (C_1 +1 ) + \frac{\sigma^2_1}{B} \right\} \\
+ \frac{L K (K + 2\beta \tau + 2)}{4 \mu^2 (K + \beta \tau )^3 } \left\{ \sigma^2_{\text{dif}} \left( \frac{C_1}{M} + C_2 + 1 \right) + \frac{\sigma^2_1}{MB} + \frac{\sigma^2_2}{B} \right\},
\end{multline*}
which completes the proof.

\subsubsection{Auxiliary Results}

We give two technical lemmas that are used to prove Theorem~\ref{thm:PLNonConvex}.

\begin{lemma}
\label{lemma:NonCLemma6}
Consider the sequence $\{a_k\}_{k_p \leq k \leq v_p}$ in 
the proof of Theorem~\ref{thm:PLNonConvex} that satisfies
\begin{equation*}
a_{k+1} \leq \Delta_k a_k + \frac{\eta_k}{2} (-1 + D \eta_k) e_k + B_k \sum^{k-1}_{t=k_p} \eta^2_t e_t + c_k,
\end{equation*}
where $\Delta_k$, $B_k$, and $c_k$ are defined in~\eqref{eq:proof-thm4-eq1}.
Suppose the sequence of learning rates $\{\eta_k\}$ satisfies
\begin{align}
\label{eq:NonCLemma6-1}
\eta_{v_p} & \leq  D^{-1}, \\
\label{eq:NonCLemma6-2}
\eta_{v_p - 1} & \leq  \left(D + \frac{2 B_{v_p}}{\Delta_{v_p}}\right)^{-1}, \\
\vdots & \nonumber\\
\label{eq:NonCLemma6-3}
\eta_{k_p} & \leq
\left(
  D + \frac
  {2\left(B_{v_p} + \sum_{j=k_p+1}^{v_p-1} B_j \prod_{i=j+1}^{v_p} \Delta_i\right)}
  {\prod^{v_p}_{i=k_p+1} \Delta_i}
\right)^{-1}.
\end{align}
Then
\begin{equation*}
a_{v_p+1} \leq \left( \prod^{v_p}_{k=k_p} \Delta_k \right) a_{k_p} + \sum^{v_p-1}_{k=k_p} \left( \prod^{v_p}_{i=k+1} \Delta_i \right) c_k + c_{v_p}.
\end{equation*}
\end{lemma}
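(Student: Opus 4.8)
The plan is to unroll the one-step recursion from $k=k_p$ up to $k=v_p$, and then argue that under the stepsize conditions \eqref{eq:NonCLemma6-1}--\eqref{eq:NonCLemma6-3} the total coefficient of each $e_\ell$ in the resulting bound is nonpositive, so that all $e$-terms may be discarded.

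Concretely, I would write the recursion as $a_{k+1}\le \Delta_k a_k + s_k$ with $s_k = \frac{\eta_k}{2}(-1+D\eta_k)e_k + B_k\sum_{t=k_p}^{k-1}\eta_t^2 e_t + c_k$, and iterate it (a routine induction) to obtain
\[
a_{v_p+1}\le \Big(\prod_{k=k_p}^{v_p}\Delta_k\Big)a_{k_p} + \sum_{k=k_p}^{v_p}\Big(\prod_{i=k+1}^{v_p}\Delta_i\Big)s_k,
\]
with the empty-product convention $\prod_{i=v_p+1}^{v_p}\Delta_i=1$. Substituting $s_k$, the $c$-part contributes $\sum_{k=k_p}^{v_p-1}\big(\prod_{i=k+1}^{v_p}\Delta_i\big)c_k + c_{v_p}$ (separating the $k=v_p$ term, whose weight is $1$), which is exactly what appears in the claim; it remains to control the $e$-part.

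For the $e$-part, I would swap the order of summation, $\sum_{k=k_p}^{v_p}\big(\prod_{i=k+1}^{v_p}\Delta_i\big)B_k\sum_{t=k_p}^{k-1}\eta_t^2 e_t = \sum_{\ell=k_p}^{v_p-1}\eta_\ell^2 e_\ell\sum_{k=\ell+1}^{v_p}\big(\prod_{i=k+1}^{v_p}\Delta_i\big)B_k$, so that the net coefficient of $e_\ell$ is
\[
\Big(\prod_{i=\ell+1}^{v_p}\Delta_i\Big)\frac{\eta_\ell}{2}(-1+D\eta_\ell) + \eta_\ell^2\sum_{k=\ell+1}^{v_p}\Big(\prod_{i=k+1}^{v_p}\Delta_i\Big)B_k .
\]
Each $\Delta_i=1-\eta_i\mu$ is positive (indeed in $(0,1)$), so dividing by $\tfrac{\eta_\ell}{2}\prod_{i=\ell+1}^{v_p}\Delta_i$ and using $\prod_{i=k+1}^{v_p}\Delta_i\big/\prod_{i=\ell+1}^{v_p}\Delta_i = 1\big/\prod_{i=\ell+1}^{k}\Delta_i$ shows this coefficient is $\le 0$ iff $\eta_\ell\big(D + 2\sum_{k=\ell+1}^{v_p}B_k/\prod_{i=\ell+1}^{k}\Delta_i\big)\le 1$ --- which is precisely condition \eqref{eq:NonCLemma6-1}--\eqref{eq:NonCLemma6-3} written for index $\ell$ (condition \eqref{eq:NonCLemma6-1} being the case $\ell=v_p$, with an empty inner sum). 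Since $e_\ell = \mathbb{E}\big[\big\|\tfrac1M\sum_m\nabla f_m(\theta^\ell_m)\big\|^2\big]\ge 0$, dropping every $e_\ell$ term only enlarges the right-hand side, and the asserted bound follows.

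The only delicate part is the bookkeeping: keeping the empty-product conventions straight at the endpoints $k=v_p$, $\ell=v_p$; performing the double-sum interchange correctly; and recognising that the coefficient of $e_\ell$ produced by the telescoping matches the slightly opaque form of \eqref{eq:NonCLemma6-3} after the cancellation $\prod_{i=j+1}^{v_p}\Delta_i\big/\prod_{i=k_p+1}^{v_p}\Delta_i = 1\big/\prod_{i=k_p+1}^{j}\Delta_i$. Beyond this, the argument is mechanical and I do not expect a genuine obstacle.
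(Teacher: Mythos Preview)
Your proposal is correct and is essentially the same argument as the paper's, organized slightly differently. The paper unrolls the recursion one step at a time from $k=v_p$ downward, at each step using the corresponding stepsize condition to drop the newly isolated $e$-term before substituting again; you instead unroll fully in one shot and then verify that the accumulated coefficient of each $e_\ell$ is nonpositive. The coefficient you derive,
\[
\eta_\ell\Big(D + 2\sum_{k=\ell+1}^{v_p}\frac{B_k}{\prod_{i=\ell+1}^{k}\Delta_i}\Big)\le 1,
\]
is exactly the condition the paper imposes at step $\ell$ once one rewrites \eqref{eq:NonCLemma6-3} via the cancellation $\prod_{i=j+1}^{v_p}\Delta_i/\prod_{i=\ell+1}^{v_p}\Delta_i=1/\prod_{i=\ell+1}^{j}\Delta_i$, as you note. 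Your presentation is arguably tidier since it separates the telescoping from the sign analysis, but the mathematical content is identical.
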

\begin{proof}
We start by noting that
\begin{align*}
a_{v_p+1} & \leq \Delta_{v_p} a_{v_p} + \frac{\eta_{v_p}}{2} (-1 + D \eta_{v_p}) e_{v_p} + B_{v_p} \sum^{v_p-1}_{k=k_p} \eta^2_k e_k + c_{v_p} \\
& \leq \Delta_{v_p} a_{v_p} + B_{v_p} \sum^{v_p-1}_{k=k_p} \eta^2_k e_k + c_{v_p},
\end{align*}
where the last inequality is due to \eqref{eq:NonCLemma6-1}. Thus, we have
\begin{align*}
a_{v_p+1} 
& \leq  \Delta_{v_p} a_{v_p} + B_{v_p} \sum^{v_p-1}_{k=k_p} \eta^2_k e_k + c_{v_p} \\
& = \Delta_{v_p} a_{v_p} + B_{v_p} \left( \sum^{v_p-2}_{k=k_p} \eta^2_k e_k + \eta^2_{v_p-1} e_{v_p-1} \right) + c_{v_p} \\
& \leq  \Delta_{v_p} \left( \Delta_{v_p-1} a_{v_p-1} + \frac{\eta_{v_p-1}}{2} (-1 + D \eta_{v_p-1}) e_{v_p-1} + B_{v_p-1} \sum^{v_p-2}_{k=k_p} \eta^2_k e_k + c_{v_p-1} \right) \\ 
& \quad + B_{v_p} \left( \sum^{v_p-2}_{k=k_p} \eta^2_k e_k + \eta^2_{v_p-1} e_{v_p-1} \right) + c_{v_p} \\
& = \Delta_{v_p} \Delta_{v_{p-1}} a_{v_p-1} + \frac{\eta_{v_{p-1}} \Delta_{v_p} }{2} \left[ -1 + D \eta_{v_p-1} + \frac{2 B_{v_p} \eta_{v_p-1}}{\Delta_{v_p}} \right] e_{v_p-1} \\
& \quad  +  \left( \Delta_{v_p} B_{v_p-1} + B_{v_p} \right) \sum^{v_p-2}_{k=k_p} \eta^2_k e_k + \left( \Delta_{v_p} c_{v_p-1} + c_{v_p} \right).
\end{align*}
By \eqref{eq:NonCLemma6-2}, we have
\begin{equation*}
-1 + D \eta_{v_p-1} + \frac{2 B_{v_p} \eta_{v_p-1}}{\Delta_{v_p}} \leq 0.
\end{equation*}
Therefore,
\begin{equation*}
a_{v_p+1} \leq \Delta_{v_p} \Delta_{v_{p-1}} a_{v_p-1} + \left( \Delta_{v_p} B_{v_p-1} + B_{v_p} \right) \sum^{v_p-2}_{k=k_p} \eta^2_k e_k + \left( \Delta_{v_p} c_{v_p-1} + c_{v_p} \right).
\end{equation*}
Under the assumptions on $\eta_k$,
repeating the process above, we have
\begin{align*}
a_{v_p+1} & \leq  \left( \prod^{v_p}_{i=k_p+1} \Delta_i \right) a_{k_p+1} \\
& \quad + \left[ \left(\prod^{v_p}_{i=k_p+2} \Delta_i \right) B_{k_p+1} + \left(\prod^{v_p}_{i=k_p+3} \Delta_i\right) B_{k_p+2} + \dots + \Delta_{v_p} B_{v_p-1} + B_{v_p} \right] \eta^2_{k_p} e_{k_p} \\
& \quad + \sum^{v_p-1}_{k=k_p} \left( \prod^{v_p}_{i=k+1} \Delta_i \right) c_k. 
\end{align*}
Since
\begin{equation*}
a_{k_{p+1}} \leq \Delta_{k_p} a_{k_p} + \frac{\eta_{k_p}}{2} (-1 + D \eta_{k_p}) e_{k_p} + c_{k_p},
\end{equation*}
combining with \eqref{eq:NonCLemma6-3}, 
the final result follows.
\end{proof}

\begin{lemma}\label{lemma:NonCLemma7}
Let $\eta_k = (\mu(k+\beta \tau + 1))^{-1}$
where 
\begin{equation*}
\beta > \max \left\{ \frac{2 \lambda L}{\mu} \left( \frac{C_1}{M} + C_2 + 1 \right) - 2,  \frac{2 L^2 \lambda (C_1 + 1)}{\mu^2} \right\}.
\end{equation*}
and 
\begin{equation*}
\tau \geq \sqrt{\frac{\max\left\{ (2 L^2 \lambda (C_1 + 1) / \mu^2) e^{1/\beta} - 4, 0 \right\}}{\beta^2 - (2 L^2 \lambda (C_1 + 1) / \mu^2) e^{\frac{1}{\beta}}}}.
\end{equation*}
Then the conditions in Lemma~\ref{lemma:NonCLemma6} 
are satisfied for $\eta_k$ for all $k \geq 0$.
\end{lemma}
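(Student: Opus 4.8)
The plan is to verify the conditions \eqref{eq:NonCLemma6-1}--\eqref{eq:NonCLemma6-3} of Lemma~\ref{lemma:NonCLemma6} for every averaging block $[k_p,v_p]$ (where $v_p-k_p=\tau-1$) and every $p\ge 0$. The first observation is that all of these conditions are instances of a single template: writing $b\eqdef\beta\tau+1$ we have $\eta_k=1/(\mu(k+b))$, $\Delta_k=1-\eta_k\mu=(k+b-1)/(k+b)$, and $B_k=L^2\lambda(\tau-1)(C_1+1)/(2\mu(k+b))$ (with $D$, $B_k$, $\Delta_k$ as in \eqref{eq:proof-thm4-eq1}), and the condition on $\eta_k$ for $k_p\le k\le v_p$ is equivalent to
\[
\mu(k+b)\;\ge\;D+2\sum_{j=k+1}^{v_p}\frac{B_j}{\prod_{i=k+1}^{j}\Delta_i},
\]
the case $k=v_p$ being the empty-sum statement $\eta_{v_p}\le D^{-1}$. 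Since $\eta_i\mu=1/(i+b)<1$, all the $\Delta_i$ are positive and the products occurring in Lemma~\ref{lemma:NonCLemma6} are well defined.

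The next step is a uniform bound on the correction term. Using the elementary inequality $1-x\ge e^{-x/(1-x)}$ for $x\in[0,1)$ with $x=\eta_i\mu$, so that $x/(1-x)=1/(i+b-1)$, one gets $\prod_{i=k+1}^{j}\Delta_i^{-1}\le\exp(\sum_{i=k+1}^{j}1/(i+b-1))$; bounding the sum by an integral and using $k\ge k_p\ge0$, $j\le v_p=k_p+\tau-1$ (hence $i+b-1\ge\beta\tau$ on the block) yields $\sum_{i=k+1}^{j}1/(i+b-1)\le\ln(1+(\tau-1)/(\beta\tau))\le1/\beta$, so $\prod_{i=k+1}^{j}\Delta_i^{-1}\le e^{1/\beta}$. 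Since $B_j$ is decreasing and the sum has at most $\tau-1$ terms, the template bound is implied by the scalar requirement $\mu(k+b)\ge D+e^{1/\beta}L^2\lambda(\tau-1)^2(C_1+1)/(\mu(k+b))$.

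It then suffices to check this inequality for the smallest value of $k+b$ arising across all blocks, namely $p=0$, $k=k_p=0$, i.e.\ $k+b=\beta\tau+1$, since both sides are monotone in $k+b$. Multiplying through by $\mu(\beta\tau+1)$ and abbreviating $P\eqdef 2L^2\lambda(C_1+1)e^{1/\beta}/\mu^2$, the requirement becomes $2(\beta\tau+1)^2\ge(2D/\mu)(\beta\tau+1)+P(\tau-1)^2$. The first hypothesis $\beta>2D/\mu-2$ permits replacing $2D/\mu$ by $\beta+2$, and after the identity $2(\beta\tau+1)^2-(\beta+2)(\beta\tau+1)=\beta(2\tau-1)(\beta\tau+1)$ it remains to prove $\beta(2\tau-1)(\beta\tau+1)\ge P(\tau-1)^2$. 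This is where the two remaining hypotheses enter: when $P>4$ the lower bound on $\tau$ rearranges exactly to $\beta^2\tau^2+4\ge P(\tau^2+1)$ (well posed thanks to $\beta^2>P$), which together with the elementary estimate $(2\tau-1)(\beta\tau+1)\ge\beta\tau^2+1$ valid for $\tau\ge1$ gives the claim after expanding both sides; when $P\le4$ the $\tau$-bound is vacuous and one instead uses $\beta>2L^2\lambda(C_1+1)/\mu^2$, i.e.\ $P<\beta e^{1/\beta}$. Either way the final step is an elementary polynomial inequality in $\tau$ with parameter $\beta>1$.

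I expect the main obstacle to be precisely this last scalar verification: the hypotheses on $\beta$ and $\tau$ are sharply calibrated, so every preceding estimate---the $e^{1/\beta}$ product bound, the count of at most $\tau-1$ summands, and the passage from $2D/\mu$ to $\beta+2$---must be kept lossless enough that the two hypotheses together suffice, and the case split on the size of $P$ must be organized carefully. The remaining bookkeeping is routine: that $\eta_{k_p}$ in the first block is the binding instance (because $1/\eta_k$, the number of summands, the size of $B_j$, and the product bound are all extremal at $k=0$, $p=0$), and that the $\Delta_i$ are positive so that Lemma~\ref{lemma:NonCLemma6} is applicable.
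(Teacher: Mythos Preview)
Your proposal is correct and follows essentially the same approach as the paper: reduce all the stepsize conditions of Lemma~\ref{lemma:NonCLemma6} to a single worst-case scalar inequality at the smallest index, bound the product $\prod \Delta_i^{-1}$ by $e^{1/\beta}$, and then split the resulting inequality into a piece handled by the first hypothesis on $\beta$ (controlling $D$) and a quadratic-in-$\tau$ piece handled by the second hypothesis on $\beta$ together with the hypothesis on $\tau$.

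The differences from the paper are purely organizational. The paper first establishes a monotonicity chain among the right-hand sides of \eqref{eq:NonCLemma6-1}--\eqref{eq:NonCLemma6-3}, then bounds every $\Delta_i$ uniformly by $\Delta_1$ and every $B_j$ by $B_1$ to obtain the single sufficient condition $\eta_1\le\bigl(D+2B_1(\tau-1)/\Delta_1^{\tau-1}\bigr)^{-1}$, and finally splits this via $1/(2\eta_1)$ into the two hypotheses. You instead write a unified template in $k$, bound the product via an integral (yielding the same $e^{1/\beta}$), and rearrange the final scalar inequality through the identity $2(\beta\tau+1)^2-(\beta+2)(\beta\tau+1)=\beta(2\tau-1)(\beta\tau+1)$ before the case split on $P$. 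Your evaluation at $k=0$ (i.e.\ $k+b=\beta\tau+1$) rather than the paper's $\eta_1$ (i.e.\ $\beta\tau+2$) is in fact the more conservative and correct reading of ``for all $k\ge0$''. Your explicit flag that the final polynomial verification is the delicate step is accurate; the paper's treatment of that step is also somewhat terse, and both routes land on the same quadratic condition $(\beta^2-\nu e^{1/\beta})\tau^2+\cdots\ge0$ with $\nu=2L^2\lambda(C_1+1)/\mu^2$.
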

\begin{proof}
Let $\Delta_k$ and $B_k$ be defined as in~\eqref{eq:proof-thm4-eq1}.
Since $\Delta_k<1$ for all $k$, after $p$-th communication,
for the right hand side of~\eqref{eq:NonCLemma6-3},
we have
\begin{align*}
& \left(
  D + \frac
  {2\left(B_{v_p} + \sum_{j=k_p+1}^{v_p-1} B_j \prod_{i=j+1}^{v_p} \Delta_i\right)}
  {\prod^{v_p}_{i=k_p+1} \Delta_i}
\right)^{-1} \\
& \qquad \qquad \qquad \qquad \leq
\left(
  D + \frac
  {2\left(B_{v_p} + \sum_{j=k_p+2}^{v_p-1} B_j \prod_{i=j+1}^{v_p} \Delta_i\right)}
  {\prod^{v_p}_{i=k_p+1} \Delta_i}
\right)^{-1} \\
& \qquad \qquad \qquad \qquad \leq
\left(
  D + \frac
  {2\left(B_{v_p} + \sum_{j=k_p+2}^{v_p-1} B_j \prod_{i=j+1}^{v_p} \Delta_i\right)}
  {\prod^{v_p}_{i=k_p+2} \Delta_i}
\right)^{-1}.
\end{align*}
Thus, by induction, we have
\begin{equation}\label{eq:lemma14-eq1}
\begin{aligned}
& \left(
  D + \frac
  {2\left(B_{v_p} + \sum_{j=k_p+1}^{v_p-1} B_j \prod_{i=j+1}^{v_p} \Delta_i\right)}
  {\prod^{v_p}_{i=k_p+1} \Delta_i}
\right)^{-1} \\
& \qquad \qquad \qquad \qquad \leq
\left(
  D + \frac
  {2\left(B_j + \sum_{j=k_p+2}^{v_p-1} B_j \prod_{i=j+1}^{v_p} \Delta_i\right)}
  {\prod^{v_p}_{i=k_p+2} \Delta_i}
\right)^{-1} \\
& \qquad \qquad \qquad \qquad \leq
\ldots \\
& \qquad \qquad \qquad \qquad \leq    
\left(
  D + \frac{2B_{v_p}}{\Delta_{v_p}}
\right)^{-1} \\
& \qquad \qquad \qquad \qquad \leq    
D^{-1}.
\end{aligned}
\end{equation}
As $k$ increases, we have 
that 
$\eta_k$ decreases, 
$\Delta_k$ increases, and 
$B_k$ decreases. 
Thus, for $1 \leq k \leq K$, we have
$\eta_{K} \leq \eta_{K-1} \leq \dots \leq \eta_{1}$.
On the other hand, we can lower bound 
the right hand side of \eqref{eq:NonCLemma6-3} as
\begin{equation}\label{eq:lemma14-eq2}
\begin{aligned}
& \left(
  D + \frac
  {2\left(B_{v_p} + \sum_{j=k_p+1}^{v_p-1} B_j \prod_{i=j+1}^{v_p} \Delta_i\right)}
  {\prod^{v_p}_{i=k_p+1} \Delta_i}
\right)^{-1} \\
& \qquad \qquad \qquad \qquad \geq \left(
  D + \frac
  {2\left(B_1 + \sum_{j=k_p+1}^{v_p-1} B_1 \prod_{i=j+1}^{v_p} \Delta_K \right)}
  {\prod^{v_p}_{i=k_p+1} \Delta_1}
\right)^{-1} \\
 & \qquad \qquad \qquad \qquad \geq \left(
  D + \frac
  {2 B_1 \left(1 + \sum_{j=k_p+1}^{v_p-1}   \Delta^{v_p-j}_K \right)}
  { \Delta^{\tau-1}_1}
\right)^{-1} \\
 & \qquad \qquad \qquad \qquad \geq \left(
  D + \frac
  {2 B_1 \left(1 + \sum_{j=k_p+1}^{v_p-1}   1 \right)}
  { \Delta^{\tau-1}_1}
\right)^{-1} \\
& \qquad \qquad \qquad \qquad = \left(
  D + \frac{2 B_1 (\tau-1) }{ \Delta^{\tau-1}_1 }
\right)^{-1}.
\end{aligned}
\end{equation}
If
\begin{equation}\label{eq:lemma14-eq3}
\eta_1 \leq \frac{1}{D + \frac{2 B_1 (\tau-1) }{ \Delta^{\tau-1}_1 } },
\end{equation}
then the conditions on stepsizes
in Lemma~\ref{lemma:NonCLemma6} are satisfied 
for all $\eta_k$ by combining 
\eqref{eq:lemma14-eq1}-\eqref{eq:lemma14-eq3}.
Thus, we only need to show that \eqref{eq:lemma14-eq3} 
is satisfied to complete the proof.

To that end, we need to have
\begin{align*}
& \left( D + \frac{2 B_1 (\tau-1)}{\Delta^{\tau-1}_1} \right) \tau_1 \leq 1  \\
& \qquad\qquad \iff \left( \lambda L \left( \frac{C_1}{M} + C_2 + 1 \right) + \frac{\eta_1 L^2 \lambda (\tau-1)^2 (C_1 + 1)}{(1-\eta_1 \mu)^{\tau-1}} \right) \eta_1 \leq 1 \\
& \qquad\qquad  \iff \lambda L \left( \frac{C_1}{M} + C_2 + 1 \right) (1-\eta_1 \mu)^{\tau-1} + \eta_1 L^2 \lambda (\tau-1)^2 (C_1 + 1) \leq \frac{(1-\eta_1 \mu)^{\tau-1}}{\eta_1}.
\end{align*}
To satisfy the above equation, we need
\begin{equation}\label{eq:NonCLemma7-1}
\begin{cases}
\lambda L \left( \frac{C_1}{M} + C_2 + 1 \right) (1-\eta_1 \mu)^{\tau-1} 
&\leq (2 \eta_1)^{-1}{(1-\eta_1\mu)^{\tau-1} } \\
\eta_1 L^2 \lambda (\tau-1)^2 (C_1 + 1) &\leq 
(2 \eta_1)^{-1}{(1-\eta_1\mu)^{\tau-1} }.
\end{cases}
\end{equation}
Note that $\eta_1 = 1 / (\mu (\beta \tau + 2))$.
Thus, to satisfy the first inequality in \eqref{eq:NonCLemma7-1}, 
we need
\begin{equation*}
2 \lambda L \left( \frac{C_1}{M} + C_2 + 1 \right) 
\leq \frac{1}{\eta_1} = \mu (\beta \tau + 2).   
\end{equation*}
Since $\mu (\beta \tau + 2) \geq \mu (\beta + 2)$, 
the condition above follows if
\begin{equation}\label{eq:NonCLemma7-2}
\beta \geq \frac{2 \lambda L}{\mu} \left( \frac{C_1}{M} + C_2 + 1 \right) - 2.
\end{equation}
Next, to satisfy the second inequality
in \eqref{eq:NonCLemma7-1}, we need
\begin{multline*}
2 \eta^2_1 L^2 \lambda (\tau-1)^2 (C_1 + 1) \leq (1-\eta_1\mu)^{\tau-1} \\
\iff  \frac{2 L^2 \lambda (C_1 + 1)}{\mu^2} \left( \frac{\tau-1}{\beta \tau + 2} \right)^2 \left( \frac{\beta \tau + 2}{\beta \tau + 1} \right)^{\tau-1} \leq 1.
\end{multline*}
Since
\begin{equation*}
\left( \frac{\beta \tau + 2}{\beta \tau + 1} \right)^{\tau-1} = \left(1 + \frac{1}{\beta \tau + 1} \right)^{\tau-1} = \left(1 + \frac{(\tau-1) / (\beta \tau + 1)}{\tau-1} \right)^{\tau-1} \leq \exp\left\{ \frac{\tau-1}{\beta \tau + 1} \right\} \leq e^{\frac{1}{\beta}},
\end{equation*}
we need
\begin{equation*}
\frac{2 L^2 \lambda (C_1 + 1)}{\mu^2} \left( \frac{\tau-1}{\beta \tau + 2} \right)^2 e^{\frac{1}{\beta}} \leq 1.
\end{equation*}
Let $\nu=2 L^2 \lambda (C_1 + 1) / \mu^2$.
Then the above equation is equivalent to
\begin{equation*}
(\beta^2 - \nu e^{\frac{1}{\beta}}) \tau^2 + 2 (\beta + \nu e^{\frac{1}{\beta}})\tau + (4 - \nu e^{\frac{1}{\beta}}) \geq 0.
\end{equation*}
First, we let $\beta^2 - \nu e^{\frac{1}{\beta}} > 0$
or equivalently
\begin{equation}\label{eq:NonCLemma7-3}
\frac{\beta^2}{e^{\frac{1}{\beta}}} > \frac{2 L^2 \lambda (C_1 + 1)}{\mu^2}.
\end{equation}
Then we need $\tau$ to be large enough such that
\begin{equation*}
\tau \geq \frac{-2(\beta + \nu e^{\frac{1}{\beta}}) + \sqrt{4(\beta + \nu e^{\frac{1}{\beta}})^2 - \max\left\{4 (\beta^2 - \nu e^{\frac{1}{\beta}})(4 - \nu e^{\frac{1}{\beta}}), 0 \right\} }}{2(\beta^2 - \nu e^{\frac{1}{\beta}})}.
\end{equation*}
Since $\sqrt{a^2 + b} \leq \vert a \vert + \sqrt{\vert b \vert}$
for any $a,b \in \mathbb{R}$, 
the left hand side is smaller or equal to
\begin{equation*}
\sqrt{\frac{\max\left\{ \nu e^{1/\beta} - 4, 0 \right\}}{\beta^2 - \nu e^{\frac{1}{\beta}}}} = \sqrt{\frac{\max\left\{ (2 L^2 \lambda (C_1 + 1) / \mu^2) e^{1/\beta} - 4, 0 \right\}}{\beta^2 - (2 L^2 \lambda (C_1 + 1) / \mu^2) e^{\frac{1}{\beta}}}}.
\end{equation*}
Therefore, we need
\begin{equation}\label{eq:NonCLemma7-4}
\tau \geq \sqrt{\frac{\max\left\{ (2 L^2 \lambda (C_1 + 1) / \mu^2) e^{1/\beta} - 4, 0 \right\}}{\beta^2 - (2 L^2 \lambda (C_1 + 1) / \mu^2) e^{\frac{1}{\beta}}}}.
\end{equation}
The final result follows from the combination of \eqref{eq:NonCLemma7-2}-\eqref{eq:NonCLemma7-4}.
\end{proof}

\subsection{Proof of Theorem~\ref{thm:lb}}
\label{sec:proof-thm-1b}
\textbf{Nesterov's worst case objective.~\citep{nesterov2018lectures} } Let $h': \R^\infty\rightarrow \R$ be the Nesterov's worst case objective (see), i.e., $h'(y) = \frac12y^\top A y - e_1^\top y $ with tridiagonal $A$ having diagonal elements equal to $2+c$ (for some $c>0$) and offdiagonal elements equal to $1$.\footnote{This is for the strongly convex case; one can do convex similarly.} The proof rationale is to show that a $k$-th iterate of any first order method must satisfy $\|y^k\|_{0}\leq k$ and consequently
\begin{equation}\label{eq:lb_standard}
\|y^k-y^*\|^2 \geq \left(\frac{\sqrt{\kappa}-1}{\sqrt{\kappa}+1} \right)^{2k} \| y^*\|^2
\end{equation}
where $y^* \eqdef \argmin_{y\in \R^\infty} h'(y)$, $\kappa \eqdef \frac{\lambda_{\max}(A)}{\lambda_{\min}(A)}$. 

\textbf{Finite sum worst case objective.~\citep{lan2018optimal}} The construction of the worst case finite-sum objective\footnote{We have lifted their construction to the infinite-dimensional space for the sake of simplicity. One can get a similar finite-dimensional results.} $h: \R^\infty\rightarrow \R, h(z) = \frac1n \sum_{j=1}^n h_j(z)$ is such that $h_j$ corresponds only on a $j$-th block of the coordinates; in particular if $z = [z_1, z_2, \dots, z_n]$; $z_1, z_2, \dots, z_n\in \R^\infty$ we set $h_j(z) = h'(z_j)$. It was shown that to reach $\|z^k- z^*\|^2 \leq\epsilon$ one requires at least $\Omega\left( \left(n + \sqrt{\frac{n\cL}{\mu}}\right) \log\frac1\epsilon \right)$ iterations for $\cL$-smooth functions $h_j$ and $\mu-$strongly convex $h$.

\textbf{Distributed worst case objective.~\citep{scaman2018optimal}} Define

\begin{align*}
g'_{1} (z) & \eqdef \frac12 \left( c_1\| z\|^2  + c_2\left(  e_1^\top z + z^\top \mM_1 z \right)\right)
\\
g'_{2} (z) = g'_{3} (z) =\dots = g'_{M} (z)& \eqdef \frac1{2(M-1)} \left( c_1 \| z\|^2  + c_2 z^\top \mM_2 z \right)
\end{align*}
where $\mM_1$ is an infinite block diagonal matrix with blocks $\begin{pmatrix}
1 & 1& 0 & 0\\ 1 & 2& 1 & 0\\ 0 & 1& 1 & 0 \\ 0&0&0&0
\end{pmatrix}
$ and $\mM_2 \eqdef \begin{pmatrix}
1 & 0& \\ 0 & 0& \\  & & \mM_1
\end{pmatrix}$ and $c_1, c_2>0$ are some constants determining the smoothness and strong convexity of the objective. The worst case objective of~\citet{scaman2018optimal} is now $g(z) = \frac1M \sum_{m=1}^m g'_m(z)$.

\textbf{Distributed worst case objective with local finite sum.~\citep{hendrikx2020optimal}} The given construction is obtained from the one of~\citet{scaman2018optimal} in the same way as the worst case finite sum objective~\citep{lan2018optimal} was obtained from the construction of~\citet{nesterov2018lectures}. In particular, one would set $g_{m,j}(z) = g'_m(z_j)$ where $z = [z_1, z_2, \dots, z_n]$. Next, it was shown that such a construction with properly chosen $c_1, c_2$ yields a lower bound on the communication complexity of order $\Omega\left( \sqrt{\frac{L}{\mu}} \log\frac1\epsilon \right)$ and the lower bound on the local computation of order $\Omega\left( \left(n + \sqrt{\frac{n\cL}{\mu}}\right) \log\frac1\epsilon \right)$ where $\cL$ is a smoothness constant of $g_{m,j}$, $L$ is a smoothness constant of $g_{m}(z) = \frac1n \sum_{j=1}^n g_j(z)$ and $\mu$ is the strong convexity constant of $g(z) = \frac1M \sum_{m=1}^M g_m(z)$.

\textbf{Our construction and sketch of the proof.}
Now, our construction is straightforward -- we set $f_m(w, \beta_m) = g(w) + h(\beta_m)$ with $g$, $h$ scaled appropriately such that the strong convexity ratio is as per Assumption~\ref{as:smooth_sc}. Clearly, to minimize the global part $g(w)$, we require at least $\Omega\left( \sqrt{\frac{L^w}{\mu}} \log\frac1\epsilon \right)$ iterations and at least $\Omega\left( \left(n + \sqrt{\frac{n\cL^w}{\mu}}\right) \log\frac1\epsilon \right)$ stochastic gradients of $g$. Similarly, to minimize $h$, we require at least $\Omega\left( \left(n + \sqrt{\frac{n\cL^\beta}{\mu}}\right) \log\frac1\epsilon \right)$ stochastic gradients of $h$. Therefore, Theorem~\ref{thm:lb} is established.

\subsection{Proof of Theorem~\ref{thm:asvrcd}}
\label{sec:asvrcd_appendix}

Taking the stochastic gradient step followed by the proximal step with respect to $\psi$, both with stepsize $\eta$, is equivalent to~\citep{hanzely2020variance}:

\begin{equation}
\label{eq:acc_update_original}
\begin{aligned}
\text{w.p. } p_w: &&
\begin{cases} 
\begin{aligned}
w^+ = & w - \eta \left(\frac{1}{p_w}\left( \frac1M \sum_{m=1}^M \nabla_w f_{m,j}(w,\beta_{m}) - \frac1M \sum_{m=1}^M \nabla_w f_{m,j}(w',\beta'_{m}) \right) \right. \\
+ & \left. \nabla_w F(w',\beta')\right),
\end{aligned}
\\
\beta^+_m = \beta_m - \frac{\eta}{M} \nabla f_{m}(w', \beta'_m)
\end{cases}
\\
\text{w.p. } p_\beta: &&
\begin{cases}
w^+ = w - \eta \nabla_w F(w',\beta'),
\\
\beta_m^+ = \beta_m - \frac{\eta}{M}\left(\frac{1}{p_\beta} \left(\nabla_\beta f_{m,j}(w,\beta_{m}) -  \nabla_\beta f_{m,j}(w',\beta'_{m})\right)+ \nabla_\beta f_{m}(w',\beta'_m) \right).
\end{cases}
\end{aligned}
\end{equation}

Let $x=[w,\beta_1, \dots, \beta_M]$, 
$x'=[w',\beta_1', \dots, \beta_M']$.
The update rule~\eqref{eq:acc_update_original} can be
rewritten as
\[
x^ +  = x - \eta \left(g(x) - g(x') + \nabla F(x') \right),
\]
where $g(x)$ corresponds to the described
unbiased stochastic gradient obtained 
by subsampling both the space and the finite sum simultaneously.
To give the rate of the aforementioned method,
we shall determine the expected smoothness constant. To achieve that, we introduce the following two lemmas.

\begin{lemma}
\label{lem:asvrcd-smooth-const}
Suppose that Assumptions~\ref{as:smooth_sc} and~\ref{as:smooth_summands} hold. Then
\[
\E{\|(g(x) - g(x') + \nabla F(x')) -\nabla F(x) \|^2} \leq 2\cL D_{F}(x,y),
\]
where 
$\cL \eqdef 2 \max \left(\frac{\cL^w}{p_w}, \frac{\cL^\beta}{p_\beta}\right) $.
\end{lemma}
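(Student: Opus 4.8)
The plan is to unfold the variance–reduced estimator as a difference of the ``errors'' of the underlying coordinate/finite–sum estimator evaluated at $x$ and at $x'=y$, and then to bound each coordinate block of that difference by a Bregman divergence. Write $\delta_w\eqdef\nabla_w F(x)-\nabla_w F(x')$, $\delta_\beta\eqdef\nabla_\beta F(x)-\nabla_\beta F(x')$ and, for $j\in[n]$, $\delta_w^j\eqdef\nabla_w F_j(x)-\nabla_w F_j(x')$, $\delta_\beta^j\eqdef\nabla_\beta F_j(x)-\nabla_\beta F_j(x')$, where $F_j\eqdef\frac1M\sum_m f_{m,j}$. Because $g(x)$ and $g(x')$ are built from the \emph{same} draws of the coordinate–block indicator $\zeta$ ($=1$ w.p.\ $p_w$, $=2$ w.p.\ $p_\beta$) and of the finite–sum index $j$, the quantity $g(x)-g(x')+\nabla F(x')-\nabla F(x)$ equals $\bigl[\tfrac1{p_w}\delta_w^j-\delta_w;\,-\delta_\beta\bigr]$ on $\{\zeta=1\}$ and $\bigl[-\delta_w;\,\tfrac1{p_\beta}\delta_\beta^j-\delta_\beta\bigr]$ on $\{\zeta=2\}$. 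Taking the expectation over $j$ first (using $\mathbb{E}_j[\delta_w^j]=\delta_w$, $\mathbb{E}_j[\delta_\beta^j]=\delta_\beta$) and then over $\zeta$, the cross terms cancel exactly and one lands on the clean identity
\begin{multline*}
\E{\|(g(x)-g(x')+\nabla F(x'))-\nabla F(x)\|^2}\\
=\frac1{p_w}\mathbb{E}_j\|\delta_w^j\|^2+\frac1{p_\beta}\mathbb{E}_j\|\delta_\beta^j\|^2-\|\delta_w\|^2-\|\delta_\beta\|^2\\
\le\frac1{p_w}\mathbb{E}_j\|\delta_w^j\|^2+\frac1{p_\beta}\mathbb{E}_j\|\delta_\beta^j\|^2 .
\end{multline*}

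It then remains to bound each surviving term by $D_F(x,x')$. By Jensen over $m$, $\|\delta_w^j\|^2\le\frac1M\sum_m\|\nabla_w f_{m,j}(x)-\nabla_w f_{m,j}(x')\|^2$; and since each $f_{m,j}$ is jointly convex and $\cL^w$–smooth in $w$ (Assumption~\ref{as:smooth_summands}), working with the shifted function $\psi(z)=f_{m,j}(z)-\langle\nabla f_{m,j}(x'),z\rangle$ — which is convex, globally minimized at $x'$, and $\cL^w$–smooth in the $w$–block — and taking one $w$–block gradient step from $x$ gives the block analogue of~\eqref{eq:prop1-1}, namely $\|\nabla_w f_{m,j}(x)-\nabla_w f_{m,j}(x')\|^2\le 2\cL^w D_{f_{m,j}}(x,x')$. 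Averaging over $m$ and $j$ and using linearity of the Bregman divergence together with $F=\frac1{Mn}\sum_{m,j}f_{m,j}$ yields $\mathbb{E}_j\|\delta_w^j\|^2\le 2\cL^w D_F(x,x')$. For the $\beta$–block, the $m$–th component of $\nabla_\beta F_j$ is $\frac1M\nabla_\beta f_{m,j}$, so the $\frac1{M^2}$ scaling against the $(M\cL^\beta)$–smoothness of $f_{m,j}$ in $\beta_m$ gives $\|\nabla_\beta F_j(x)-\nabla_\beta F_j(x')\|^2\le\frac{2\cL^\beta}{M}\sum_m D_{f_{m,j}}(x,x')$, hence $\mathbb{E}_j\|\delta_\beta^j\|^2\le 2\cL^\beta D_F(x,x')$. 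Substituting both bounds and using $a+b\le2\max\{a,b\}$,
\[
\E{\|(g(x)-g(x')+\nabla F(x'))-\nabla F(x)\|^2}\le 2\Bigl(\tfrac{\cL^w}{p_w}+\tfrac{\cL^\beta}{p_\beta}\Bigr)D_F(x,x')\le 2\cL\,D_F(x,x'),
\]
with $\cL=2\max\{\cL^w/p_w,\cL^\beta/p_\beta\}$, which is the claim (here $y=x'$ is the SVRG reference point).

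The bookkeeping of which block is active and the Jensen steps are routine. The one genuinely non-trivial ingredient is the block-smoothness estimate $\|\nabla_w f_{m,j}(x)-\nabla_w f_{m,j}(x')\|^2\le 2\cL^w D_{f_{m,j}}(x,x')$ for a function that is smooth only in the $w$-coordinates yet convex jointly; the point to be careful about is that it is \emph{joint} convexity — not merely convexity in $w$ — that makes $x'$ a global minimizer of the shifted function $\psi$, which is precisely what lets the standard one-step descent argument go through. The other thing worth flagging is that it is the exact cancellation of the cross terms in the displayed identity — rather than crude triangle-inequality bounds — that produces the constant $2\cL$ instead of a worse multiple; a sloppier route (e.g.\ bounding $\|\tfrac1{p_w}\delta_w^j-\delta_w\|^2\le\tfrac1{p_w^2}\|\delta_w^j\|^2$ termwise) loses a factor of two and forces a larger $\cL$.
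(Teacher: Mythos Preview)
Your proof is correct and lands on the same constant as the paper, but the route to the final Bregman bound differs in one key place. Both you and the paper first reduce to bounding $\frac{1}{p_w}\mathbb{E}_j\|\delta_w^j\|^2+\frac{1}{p_\beta}\mathbb{E}_j\|\delta_\beta^j\|^2$ --- you by computing the expectation exactly and dropping the negative $-\|\delta_w\|^2-\|\delta_\beta\|^2$, the paper via the variance inequality $\E{\|X-\E{X}\|^2}\le\E{\|X\|^2}$ applied to $X=g(x)-g(x')$; these are literally the same step, since the dropped term in the paper's inequality is precisely $\|\nabla F(x)-\nabla F(x')\|^2=\|\delta_w\|^2+\|\delta_\beta\|^2$. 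The genuine difference is how the block--smoothness bound is obtained: the paper packages the two blocks into a single weighted quadratic form and invokes a Hessian lemma (Lemma~\ref{lem:upperbound}) asserting $\nabla^2 F_j\preceq 2\,\mathrm{diag}(\cL^w I,\cL^\beta I)$, from which the Bregman inequality~\eqref{eq:fbjsfbdjk} follows. You instead run the classical shifted--function descent argument on each coordinate block of each $f_{m,j}$ separately, which is more elementary (no second derivatives) and in fact yields the slightly sharper intermediate bound $2(\cL^w/p_w+\cL^\beta/p_\beta)D_F(x,x')$ before rounding up to $2\cL$. Your emphasis that \emph{joint} convexity is what makes $x'$ the global minimizer of the shifted function is exactly the right observation --- it is the same content as the off--diagonal handling in the paper's Hessian lemma.
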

\begin{proof}
Let $d_\beta\eqdef \sum_{m=1}^m d_m$. We have: 
\begin{align*}
&\E{\|(g(x) - g(x') + \nabla F(x')) -\nabla F(x) \|^2}
\\
& \qquad  \leq 
\E{\|g(x) - g(x') \|^2}
\\
& \qquad  = 
p_w \E{ \left\| p_w^{-1} \frac1M \sum_{m=1}^M \left(\nabla_w f_{m,j}(w,\beta_m)-\nabla_w f_{m,j}(w',\beta'_m)\right)\right \|^2 \mid \zeta = 1}
\\
& \qquad \qquad  + 
p_\beta \frac{1}{M^2} \sum_{m=1}^M \E{\| p_\beta^{-1}\nabla f_{m,j}(w,\beta_m) -  p_\beta^{-1}\nabla f_{m,j}(w',\beta'_m)  \|^2 \mid \zeta = 2}
\\
& \qquad =
p_w^{-1} \E{\left\|  \frac1M \sum_{m=1}^M \left(\nabla_w f_{m,j}(w,\beta_m)-\nabla_w f_{m,j}(w',\beta'_m)\right)\right \|^2 \mid \zeta = 1}
\\
& \qquad \qquad  + 
p_\beta^{-1} \frac{1}{M^2} \sum_{m=1}^M \E{\| \nabla f_{m,j}(w,\beta_m) -  \nabla f_{m,j}(w',\beta'_m)  \|^2 \mid \zeta = 2}
\\
& \qquad =
\E{ ( F_j(x) -  \nabla F_j(x') )^\top
\begin{pmatrix}
p_w^{-1}I^{d_0\times d_0} & 0 \\
0 & p_\beta^{-1}I^{d_\beta\times d_\beta}
\end{pmatrix}
( F_j(x) -  \nabla F_j(x') )
}
\\
& \qquad \stackrel{(*)}{\leq}
\E{4\max \left(\frac{\cL^w}{p_w}, \frac{L^\beta}{p_\beta}\right) D_{F_j}(x,x')}
\\
& \qquad =
4\max \left(\frac{\cL^w}{p_w}, \frac{L^\beta}{p_\beta}\right) D_{F}(x,x'),
\end{align*}
where $(*)$ holds due to the $(\cL^w, \cL^\beta)$-smoothness of $F_j$ (from Assumption~\ref{as:smooth_summands}) and Lemma~\ref{lem:upperbound}.
\end{proof}

\begin{lemma}\label{lem:upperbound}
Let $H(x,y): \R^{d_x+ d_y}\rightarrow \R$ be a (jointly) convex function 
such that 
\[
\nabla^2_x H(x,y) \leq L_x \mI
\quad \text{and} \quad
\nabla^2_y H(x,y) \leq L_y \mI.
\]
Then
\begin{equation}\label{eq:danjkdsabhjdsa}
\nabla^2 H(x,y)\leq 2 \begin{pmatrix}
L_x \mI & 0 \\
0 & L_y \mI
\end{pmatrix}
\end{equation}
and
 \begin{multline}\label{eq:fbjsfbdjk}
    D_H((x,y), (x'y')) \\
    \geq \frac12 \left(\nabla H(x,y) - \nabla H(x',y')\right)^\top \begin{pmatrix}
\frac12 L_x^{-1} \mI & 0 \\
0 & \frac12 L_y^{-1} \mI
\end{pmatrix}\left(\nabla H(x,y) - \nabla H(x',y')\right) . 
\end{multline}
\end{lemma}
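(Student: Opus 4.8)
The plan is to prove \eqref{eq:danjkdsabhjdsa} first, essentially by a symmetrization argument at the level of the Hessian, and then to obtain \eqref{eq:fbjsfbdjk} from it by invoking the standard ``smoothness implies a Bregman lower bound'' inequality in matrix form.

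\textbf{Step 1 (the Hessian bound).} I would write the Hessian in block form, $\nabla^2 H(x,y) = \begin{pmatrix} A & B \\ B^\top & C\end{pmatrix}$, where $A = \nabla^2_x H(x,y) \preceq L_x\mI$ and $C = \nabla^2_y H(x,y) \preceq L_y\mI$ by hypothesis, and $\begin{pmatrix} A & B \\ B^\top & C\end{pmatrix}\succeq 0$ because $H$ is jointly convex. Conjugating a positive semidefinite matrix by $\diag(\mI,-\mI)$ keeps it positive semidefinite, so $\begin{pmatrix} A & -B \\ -B^\top & C\end{pmatrix}\succeq 0$ as well, and
\[
\begin{pmatrix} A & B \\ B^\top & C\end{pmatrix} + \begin{pmatrix} A & -B \\ -B^\top & C\end{pmatrix} = 2\begin{pmatrix} A & 0 \\ 0 & C\end{pmatrix}.
\]
Since $\begin{pmatrix} A & -B \\ -B^\top & C\end{pmatrix}\succeq 0$, the identity above gives $\nabla^2 H(x,y) \preceq 2\,\diag(A,C) \preceq 2\,\diag(L_x\mI, L_y\mI)$, which is \eqref{eq:danjkdsabhjdsa}.

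\textbf{Step 2 (matrix smoothness gives a Bregman lower bound).} Next I would record the elementary fact that if $\Phi$ is convex and twice differentiable with $\nabla^2\Phi \preceq M$ for a fixed positive definite matrix $M$, then for all $a,b$ one has $D_\Phi(a,b) \geq \tfrac12(\nabla\Phi(a) - \nabla\Phi(b))^\top M^{-1}(\nabla\Phi(a) - \nabla\Phi(b))$. To see this, fix $b$ and set $\phi(z)\eqdef \Phi(z) - \langle \nabla\Phi(b), z\rangle$; then $\phi$ is convex, $\nabla\phi(b)=0$ so $b$ is a global minimizer of $\phi$, and $\nabla^2\phi = \nabla^2\Phi \preceq M$. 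Taylor expansion together with $\nabla^2\phi\preceq M$ gives $\phi(z+h)\leq \phi(z) + \langle\nabla\phi(z),h\rangle + \tfrac12 h^\top M h$; choosing $h=-M^{-1}\nabla\phi(z)$ yields $\phi(b)\leq \phi(z-M^{-1}\nabla\phi(z))\leq \phi(z) - \tfrac12\nabla\phi(z)^\top M^{-1}\nabla\phi(z)$. Taking $z=a$ and using $\phi(a)-\phi(b)=D_\Phi(a,b)$ and $\nabla\phi(a)=\nabla\Phi(a)-\nabla\Phi(b)$ proves the fact.

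\textbf{Step 3 (conclusion).} Finally, I would apply Step 2 with $\Phi=H$, $a=(x,y)$, $b=(x',y')$, and $M = 2\,\diag(L_x\mI, L_y\mI)$, which is admissible thanks to \eqref{eq:danjkdsabhjdsa}; since $M^{-1} = \begin{pmatrix} \tfrac12 L_x^{-1}\mI & 0 \\ 0 & \tfrac12 L_y^{-1}\mI\end{pmatrix}$, this is exactly \eqref{eq:fbjsfbdjk}. The argument is essentially routine; the one point needing a little care is Step 1, where the factor $2$ is forced by the off-diagonal block $B$ being as large as positive semidefiniteness allows, and the $\diag(\mI,-\mI)$-conjugation is what cleanly isolates the diagonal blocks. (If $H$ is not assumed $C^2$, one would first apply the argument to a mollification of $H$ and pass to the limit, but the lemma is stated with Hessians, so twice differentiability is taken as given.)
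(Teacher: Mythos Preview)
Your proof is correct and follows essentially the same approach as the paper: for \eqref{eq:danjkdsabhjdsa} both arguments hinge on the observation that conjugating the PSD Hessian by $\diag(\mI,-\mI)$ flips the sign of the off-diagonal block while preserving positive semidefiniteness, and for \eqref{eq:fbjsfbdjk} both deduce the Bregman lower bound from the matrix smoothness inequality just established together with convexity. Your Step~2 in fact supplies the details of the standard matrix-smoothness argument that the paper simply cites as a ``direct consequence,'' so your write-up is, if anything, more self-contained.
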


\begin{proof}
To show~\eqref{eq:danjkdsabhjdsa}, observe that 
\begin{align*}
2\begin{pmatrix}
L_x \mI & 0 \\
0 & L_y \mI
\end{pmatrix}
-
\nabla^2 H(x,y) 
&=
\begin{pmatrix}
2 L_x \mI - \nabla^2_{x,x} H(x,y)  & -\nabla^2_{x,y} H(x,y) \\
- \nabla^2_{y,x} H(x,y)& 2L_y \mI -\nabla^2_{y,y} H(x,y)
\end{pmatrix}
\\
&\succeq
\begin{pmatrix}
\nabla^2_{x,x} H(x,y) & -\nabla^2_{x,y} H(x,y) \\
- \nabla^2_{y,x} H(x,y)& \nabla^2_{y,y} H(x,y)
\end{pmatrix}
\\
&\succeq
\nabla^2_{x,x} H(x,-y)
\\
&\succeq
0.
\end{align*}
Finally, we note that~\eqref{eq:fbjsfbdjk} is a direct 
consequence of~\eqref{eq:danjkdsabhjdsa} and 
joint convexity of $H$.
\end{proof}

We are now ready to state the convergence rate of ASVRCD-PFL.

\begin{theorem}
\label{thm:e1}
Iteration complexity of Algorithm~\ref{Alg:ascrcd} with
\begin{gather*}
\eta =  \frac{1}{4\cL},\quad
\theta_2 = \frac{1}{2},\quad
\gamma = \frac{1}{\max\{2\mu, 4\theta_1/\eta\}},\\
\nu = 1 - \gamma\mu, \quad  \text{and} \quad
\theta_1 = \min\left\{\frac{1}{2},\sqrt{\eta\mu \max\left\{\frac{1}{2}, \frac{\theta_2}{\rho}\right\}}\right\}
\end{gather*}
is 
\[
\cO\left(
\left( \frac{1}{\rho} + \sqrt{\frac{\max \left(\frac{\cL^w}{p_w}, \frac{L^\beta}{p_\beta}\right)}{\rho \mu}} \right)\log\frac{1}{\epsilon}\right).
\] 
Setting $p_w = \frac{\cL^w}{\cL^\beta + \cL^w}$ 
yields the complexity 
\[
\cO\left(
\left( \frac{1}{\rho} + \sqrt{\frac{\cL^w+ \cL^\beta}{\rho \mu}} \right)\log\frac{1}{\epsilon}\right).
\]
\end{theorem}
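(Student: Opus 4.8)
The plan is to recognize Algorithm~\ref{Alg:ascrcd} as a verbatim instantiation of ASVRCD~\citep{hanzely2020variance} applied to the lifted composite problem $\min_X \PP(X) = \FF(X) + \ppsi(X)$, and to invoke its convergence theorem once its hypotheses are checked. First I would confirm the equivalence between one step of Algorithm~\ref{Alg:ascrcd} and the ASVRCD update $X^{+} = \prox_{\eta\ppsi}\!\left(X - \eta\left(\GG(X) - \GG(V) + \nabla\FF(V)\right)\right)$: this is exactly the content of the display~\eqref{eq:acc_update_original}, since the proximal operator of $\ppsi$ enforces agreement of the lifted blocks, i.e. averaging of the $w$-copies, which is the communication step. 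The remaining hypotheses to verify are: (i) $\ppsi$ is a proper closed convex function, being the indicator of a linear subspace; (ii) $\FF$ is convex and $\PP$ is $\mu$-strongly convex along the feasible set (inherited from Assumptions~\ref{as:smooth_sc} and~\ref{as:smooth_summands}); (iii) $\GG$ is an unbiased estimator of $\nabla\FF$, which holds by construction thanks to the two independent coin tosses and the $1/p_w$, $1/p_\beta$ rescalings; and (iv) the expected-smoothness bound.

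\textbf{Key step.} The crux is the expected-smoothness constant, which is already established in Lemma~\ref{lem:asvrcd-smooth-const}: the SVRG-type estimator satisfies $\E{\|(\GG(X) - \GG(V) + \nabla\FF(V)) - \nabla\FF(X)\|^2} \le 2\cL\, D_{\FF}(X,V)$ with $\cL$ as defined there, the proof combining the $(\cL^w,\cL^\beta)$-smoothness of $F_j$ with the block-to-joint smoothness comparison of Lemma~\ref{lem:upperbound}. Feeding this $\cL$, together with the prescribed parameters $\eta = 1/(4\cL)$, $\theta_2 = 1/2$, $\gamma = 1/\max\{2\mu, 4\theta_1/\eta\}$, $\nu = 1-\gamma\mu$, $\theta_1 = \min\{1/2, \sqrt{\eta\mu\max\{1/2, \theta_2/\rho\}}\}$ — which are exactly the choices mandated by the ASVRCD theorem — into the ASVRCD complexity bound yields the iteration complexity $\cO\!\left(\left(\rho^{-1} + \sqrt{\cL/(\rho\mu)}\right)\log\epsilon^{-1}\right)$, i.e. the first claimed bound.

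\textbf{Finishing.} It then remains to optimize over the block-sampling probability. Substituting $p_w = \cL^w/(\cL^w + \cL^\beta)$ and $p_\beta = 1 - p_w = \cL^\beta/(\cL^w + \cL^\beta)$ equalizes the two arguments of the maximum, $\cL^w/p_w = \cL^\beta/p_\beta = \cL^w + \cL^\beta$, so $\cL = 2(\cL^w + \cL^\beta)$ and the rate becomes $\cO\!\left(\left(\rho^{-1} + \sqrt{(\cL^w + \cL^\beta)/(\rho\mu)}\right)\log\epsilon^{-1}\right)$, as stated. The downstream corollaries about communication and $\nabla_w$/$\nabla_\beta$ complexities then follow by choosing $\rho$ appropriately, as described after the theorem.

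\textbf{Main obstacle.} The delicate point is the strong convexity input: $\FF$ on the full lifted space is degenerate (constant along the directions penalized by $\ppsi$), and is only $\mu$-strongly convex after restriction to $\{\ppsi = 0\}$. I would therefore need to be careful that the ASVRCD analysis only invokes strong convexity along feasible directions — which is the case because the prox step projects onto $\{\ppsi=0\}$ and the Lyapunov function in~\citep{hanzely2020variance} is built from distances that are effectively measured on that subspace — or, equivalently, re-derive the lifted problem with an explicit restriction so that the strong convexity parameter $\mu$ of Assumption~\ref{as:smooth_sc} is the one that enters. A secondary, purely bookkeeping obstacle is checking that the stated $\eta, \theta_1, \theta_2, \gamma, \nu$ satisfy the admissibility conditions of the ASVRCD theorem ($\theta_1 \le 1/2$, $\theta_1 + \theta_2 \le 1$, $\gamma \le \eta/(4\theta_1)$, $\gamma \le 1/(2\mu)$), which amounts to matching notation with the source.
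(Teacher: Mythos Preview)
Your proposal is correct and mirrors the paper's own proof, which simply states that the result follows from Lemma~\ref{lem:asvrcd-smooth-const} and Theorem~4.1 of~\citet{hanzely2020variance}. Your write-up is in fact more thorough than the paper's terse one-line justification, and your flagged obstacle about strong convexity only holding on the feasible subspace $\{\ppsi=0\}$ is a legitimate technical point that the paper glosses over.
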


\begin{proof}
The proof follows from Lemma~\ref{lem:asvrcd-smooth-const} and Theorem 4.1 of~\cite{hanzely2020variance}, thus is omitted.
\end{proof}

Overall, the algorithm requires
\[
\cO\left(
\left( \frac{1}{\rho} + \sqrt{\frac{\cL^w+ \cL^\beta}{\rho \mu}} \right)\left(\log\frac{1}{\epsilon} \right) (\rho n+ p_w)\right)
\]
communication rounds and 
the same number of gradient calls w.r.t.~parameter $w$. 
Setting $\rho = \frac{p_w}{n}$, we have
\begin{align*}
\left( \frac{1}{\rho} + \sqrt{\frac{\cL^w+ \cL^\beta}{\rho \mu}} \right)\left(\log\frac{1}{\epsilon} \right)  (\rho n+ p_w)
&=
2\left( \frac{1}{\rho} + \sqrt{\frac{\cL^w+ \cL^\beta}{\rho \mu}} \right)\left(\log\frac{1}{\epsilon} \right)  \rho n
\\
&=
2\left( n + \sqrt{\frac{ \rho n^2 (\cL^w+ \cL^\beta)}{\mu}} \right)\left(\log\frac{1}{\epsilon} \right) 
\\
&=
2\left( n + \sqrt{\frac{  n\cL^w}{\mu}} \right)\left(\log\frac{1}{\epsilon} \right),
\end{align*}
which shows that
Algorithm~\ref{Alg:ascrcd} enjoys both communication complexity 
and the global gradient complexity of order 
${\cal O} \left(\left( n + \sqrt{\frac{  n\cL^w}{\mu}}\right)\log\frac{1}{\epsilon} \right)$. 
Analogously, setting $\rho = \frac{p_\beta}{n}$ yields
personalized/local gradient complexity of order ${\cal O} \left(\left( n + \sqrt{\frac{  n\cL^\beta}{\mu}} \right)\log\frac{1}{\epsilon} \right)$.

\end{document}